\pdfoutput=1
\documentclass{article} %
\usepackage{iclr2027_conference,times}

\usepackage[utf8]{inputenc}
\usepackage[T1]{fontenc}
\usepackage{hyperref}
\usepackage{url}
\usepackage{latexml}
\usepackage{graphicx}
\usepackage{booktabs}
\usepackage{enumitem}
\usepackage{amsmath}
\usepackage{amssymb}
\usepackage{amsfonts}
\usepackage{amsthm}
\usepackage{mathtools}
\usepackage{microtype}
\usepackage{wrapfig}
\usepackage{longtable}
\usepackage{tikz}
\usetikzlibrary{positioning, arrows.meta, shapes.geometric, backgrounds, fit,
                calc, decorations.pathreplacing, patterns, matrix}
\usepackage{pgfplots}
\usepackage{pgfplotstable}
\pgfplotsset{compat=1.18}
\usepackage{xcolor}

\definecolor{cRoute}{HTML}{2C5F8A}   %
\definecolor{cRetr} {HTML}{C1590E}   %
\definecolor{cThird}{HTML}{2E7D6B}   %
\definecolor{cGrey} {HTML}{6B6B6B}   %
\definecolor{cLight}{HTML}{D8E2EC}   %
\tikzset{
  fignode/.style   = {font=\footnotesize, inner sep=2pt},
  figbox/.style    = {draw=cGrey, line width=.5pt, rounded corners=1.5pt,
                      font=\footnotesize, inner sep=3pt, align=center},
  figarrow/.style  = {-{Stealth[length=4pt,width=3pt]}, line width=.6pt,
                      draw=cGrey},
  cell/.style      = {draw=cGrey!55, line width=.35pt, minimum size=#1,
                      inner sep=0pt, outer sep=0pt},
  cell/.default    = 4.2mm,
}
\pgfplotsset{
  figaxis/.style = {
    width=\linewidth, font=\footnotesize,
    axis line style={draw=cGrey, line width=.5pt},
    tick align=outside, tick style={draw=cGrey, line width=.4pt},
    every axis label/.append style={font=\footnotesize},
    label style={font=\footnotesize}, tick label style={font=\scriptsize},
    title style={font=\footnotesize, yshift=-1pt},
    legend style={font=\scriptsize, draw=cGrey!60, line width=.4pt,
                  fill=white, fill opacity=.9, text opacity=1, inner sep=2pt},
    grid style={draw=cGrey!22, line width=.35pt},
  },
}

\theoremstyle{plain}
\newtheorem{theorem}{Theorem}
\newtheorem{proposition}[theorem]{Proposition}
\newtheorem{lemma}[theorem]{Lemma}
\newtheorem{corollary}[theorem]{Corollary}
\newtheorem{example}[theorem]{Example}
\theoremstyle{definition}
\newtheorem{definition}[theorem]{Definition}
\theoremstyle{remark}
\newtheorem{remark}[theorem]{Remark}
\newtheorem*{rmk}{Remark}
\newtheorem*{exmp}{Example}

\newcommand{\Grid}{\mathcal{G}}
\newcommand{\Gd}{\widehat{\Grid}}
\newcommand{\States}{\mathcal{S}}
\newcommand{\Nset}{\mathcal{N}}
\newcommand{\Uset}{\mathcal{U}}
\newcommand{\Wmax}{W_{\max}}
\newcommand{\usucc}{\hat u_{d}}
\newcommand{\dmodel}{d_{\mathrm{model}}}
\newcommand{\dS}{d_{S}}
\newcommand{\dpos}{d_{p}}
\newcommand{\dmlp}{d_{\mathrm{MLP}}}
\newcommand{\eret}{\varepsilon_{\mathrm{ret}}}
\newcommand{\eblock}{\bar\varepsilon}
\newcommand{\KL}{D_{\mathrm{KL}}}
\newcommand{\rank}{\operatorname{rank}}
\newcommand{\supp}{\operatorname{supp}}

\newcommand{\apppart}[2]{%
  \par\addvspace{1.4\baselineskip}%
  \addtocontents{toc}{%
    \protect\addvspace{5pt}%
    \protect\noindent\textbf{#1 --- #2}\protect\par\protect\addvspace{1pt}}%
  \noindent\rule{\linewidth}{0.7pt}\par\vspace{2pt}
  \noindent{\normalsize\textbf{\textsc{#1} --- \textsc{#2}}}\par\vspace{2pt}
  \noindent\rule{\linewidth}{0.7pt}%
  \par\addvspace{0.7\baselineskip}\nobreak}

\title{Spatial Induction Heads: In-Context Learning\\of Multidimensional
Cellular Automata\thanks{Project page: \url{https://kimzemian.github.io/spatial-induction-heads/}}}

\iflatexml
\author{Kimia Kazemian\thanks{Equal contribution.} \\ Cornell University \\ \texttt{kk983@cornell.edu}
\and Menghan Xu\thanks{Equal contribution.} \\ Cornell University \\ \texttt{mx253@cornell.edu}
\and John Thickstun \\ Cornell University \\ \texttt{jthickstun@cornell.edu}
\and Sarah Dean \\ Cornell University \\ \texttt{sdean@cornell.edu}}
\else
\author{Kimia Kazemian\thanks{Equal contribution.}, \; Menghan Xu\footnotemark[2], \; John Thickstun, \; Sarah Dean \\
Cornell University \\
\texttt{\{kk983,mx253,jthickstun,sdean\}@cornell.edu}}
\fi

\iclrfinalcopy
\begin{document}

\maketitle
\lhead{Under review as a conference paper at ICLR 2027}
\suppressfloats[t]
\addtocontents{toc}{\protect\setcounter{tocdepth}{-10}}

\begin{abstract}
Induction heads provide a mechanistic account of in-context learning in sequential data, but existing theory largely assumes that the context relevant to a prediction forms a contiguous block. In multidimensional data, serialization breaks this assumption by scattering spatial neighbors across distant positions in the token sequence. We study how transformers overcome this routing problem in multi-dimensional stochastic and deterministic cellular automata, where each trajectory is generated by an unknown local rule and presented as a flattened sequence without an explicit coordinate-based spatial inductive bias. We introduce \emph{spatial induction heads}, two-layer gather-and-match circuits in which the first layer reconstructs the relevant spatial neighborhood and the second matches the resulting configuration against earlier occurrences. We give two explicit realizations of the gather and show that the positional dimension required for spatial routing depends only on the local neighborhood and spatial dimension, not on grid volume or trajectory horizon. We further construct a matching layer which implements Bayesian counting.
The end-to-end circuit can approximate the Bayesian posterior arbitrarily closely for stochastic rules and can predict exactly for deterministic rules. Empirically, trained two-layer transformers generalize to unseen rules in one and two dimensional settings, achieving near-perfect deterministic rollouts and less than $0.005$ nats KL from the Bayes-optimal predictor on stochastic rules. Attention patterns and layerwise probes align with the predicted gather-and-match computation, providing mechanistic evidence for spatial induction in trained transformers.
\end{abstract}

\section{Introduction}
\label{sec:intro}

\begin{figure}[t]
\centering
\includegraphics[width=0.9\linewidth]{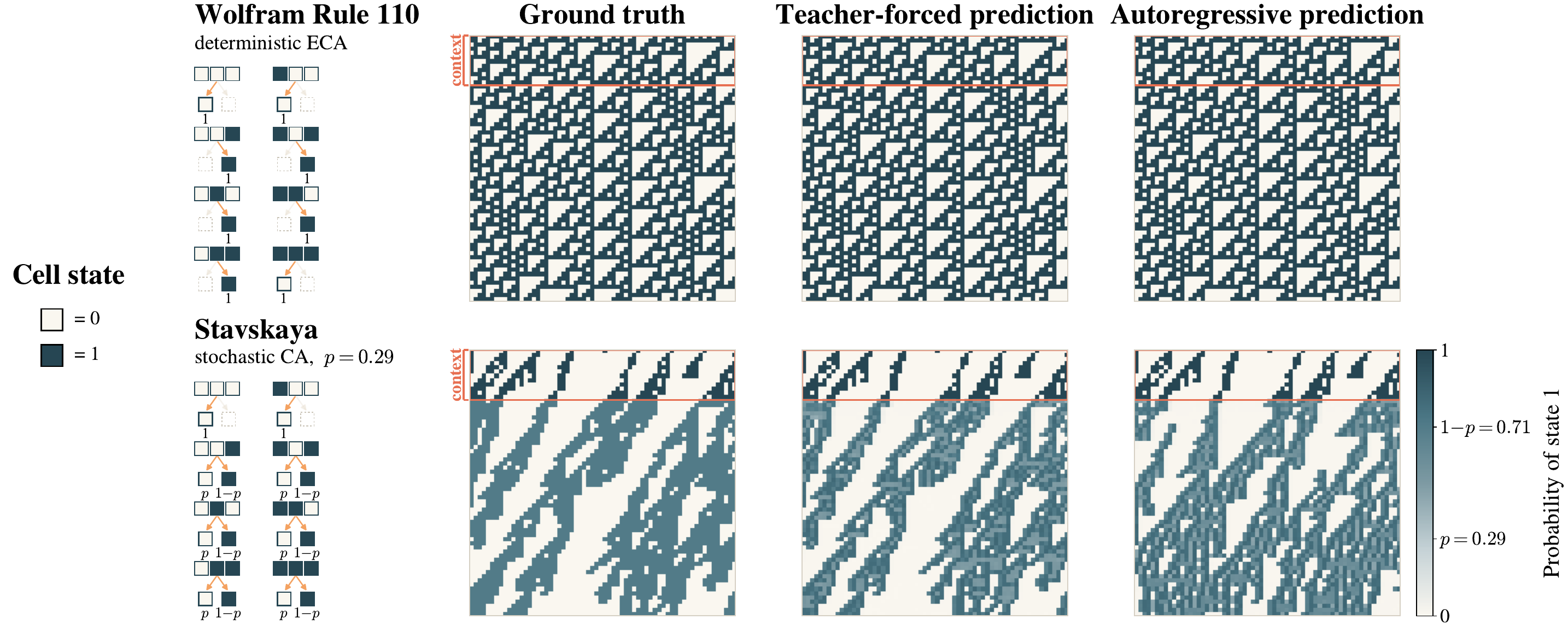}
\caption{\textbf{Trained-model predictions on two held-out rules.} 
Top row: Wolfram Rule~110, a Turing-complete deterministic ECA. Bottom row: Stavskaya, a stochastic CA with $p{=}0.29$; predictions are colored by the probability the model assigns to state~1. Teacher-forced predictions (middle column) condition on the ground-truth history at every step; autoregressive predictions (right column) are rolled out from the context alone, conditioning on the sample from the previous timestep.} 
\label{fig:rollout}
\vspace{-2mm}
\end{figure}

The \emph{induction head} provides one of the best-understood mechanistic accounts of in-context learning \citep{elhage2021mathematical,olsson2022context}. This two-layer circuit predicts novel sequences by completing patterns from precedent: the first layer records preceding tokens, while the second layer matches the current context to past records and copies the subsequent tokens. Induction heads have been used to explain the in-context behavior of large language models \citep{brown2020language,wei2022emergent}, and their theoretical foundations have been rigorously established for Markov chains \citep{bietti2023birth,edelman2024evolution}. In this setting, the \emph{statistical induction head} attends to the preceding $k$ tokens to retrieve the empirical transition law \citep{ekbote2025one}. However, this theory relies on a rarely emphasized assumption: the context block is a contiguous sequence immediately preceding the query, reachable by a single attention layer using relative position.

When transformers are applied to multidimensional spatiotemporal data, this assumption breaks down. Serializing a spatial grid into a one-dimensional token stream destroys local adjacency, scattering immediate neighbors across distant intervals in the sequence. 
The original Vision Transformer \citep{dosovitskiy2020image}
uses learned 1D absolute positional embeddings and shows that
these embeddings recover aspects of the underlying 2D grid
structure. Learned absolute positional embeddings also appear
in recent image generators \citep{li2024autoregressive,yu2024randomized,bachmann2025flextok}. Their success motivates studying
how Transformers learn structural relationships from token
sequences without explicit geometric inductive biases
on their positional embeddings.

Despite this empirical success, there is no mechanistic understanding of how a transformer leverages a learned positional table to gather a neighborhood fractured by serialization, nor how it performs \emph{spatial in-context learning}. Existing theories for $k$-order Markov chains assume contiguous context \citep{edelman2024evolution}, offering no analogue for sequential-to-spatial routing. This raises two fundamental questions: How does a transformer address and assemble scattered tokens to reconstruct a spatial neighborhood? And once assembled, how does it optimally infer the underlying dynamical rule without any coordinate-based spatial inductive bias?

We address these questions by studying transformers on multidimensional stochastic cellular automata, which are historically viewed as systems of \emph{locally interacting Markov chains} evolving on a multidimensional grid \citep{fernandez2018overview}. A stochastic cellular automaton consists of a grid of finite-state cells updated in parallel via a local kernel, mapping a cell's neighborhood at one step to a state distribution at the next (Definition~\ref{def:ca}). In our study, the kernel is sampled afresh for each trajectory from a Dirichlet prior and hidden from the model, requiring pure in-context inference. This provides an ideal testbed: simple local rules generate essentially unbounded complexity \citep{langton1990computation}, and the dependence structure is a spatial analogue to the Markov setting. We train two-layer transformers on this task. Figure~\ref{fig:rollout} shows them continuing a trajectory from a short context on two held-out rules, the deterministic Wolfram Rule~110 and the stochastic Stavskaya rule at $p = 0.29$.

\textbf{Contributions.}
We formalize spatial in-context learning as a problem where serialization hides the geometry that the model must exploit. Our contributions are as follows:
\begin{itemize}[noitemsep,topsep=0pt,parsep=0pt,partopsep=0pt,leftmargin=1.4em]
  \item \textbf{Task \& Theoretical Framework:} We consider multidimensional cellular automata and derive the closed-form Bayes-optimal predictor (Theorem~\ref{thm:bayes}). We then introduce the \emph{spatial induction head} (Definition~\ref{def:sih}), a two-layer circuit whose first layer \emph{gathers} the spatial dependencies needed at each token, and whose second \emph{matches} the resulting configuration against precedent.
  \item \textbf{Mechanistic Constructions:} We give two explicit realizations of the gather, one using a head per offset and one using a single content-routed head with a separating MLP (Theorem~\ref{thm:gather}, parts~\ref{thm:parallel} and~\ref{thm:onehead}). Both route scattered spatial neighbors with a compact set of positional frequencies whose number is independent of grid volume and horizon (Lemma~\ref{thm:addressing}).
  \item \textbf{Bayesian Matching:} We show the matching layer performs Bayesian counting: once the gathered representation is available, a single Layer-2 head 
  computes the posterior exactly under hard attention (Proposition~\ref{thm:sink}) and arbitrarily closely otherwise (Theorem~\ref{thm:main}).
  \item \textbf{Empirical Verification:} We validate the theory on 1D and 2D cellular automata. Two-layer transformers generalize to held-out rules (Figure~\ref{fig:rollout}), surpassing prior ICL results on cellular automata (Section~\ref{sec:related}), while attention maps (Figures~\ref{fig:gridattn} and~\ref{fig:attn_mass}) and layerwise probes (Figure~\ref{fig:probes}) give strong evidence for the predicted gather-and-match computation.
\end{itemize}

\section{Related work}
\label{sec:related}

\textbf{Transformers for automata and dynamical systems.}
Transformers have been studied on discrete automata both as diagnostic tasks and as synthetic sequence models. \citet{vafa2024evaluating} use deterministic finite automata to probe limitations of large models, while \citet{liu2022transformers,berkovich2025automatagpt,burtsev2024learning} train smaller transformers on cellular automata. Only \citet{burtsev2024learning} studies in-context generalization to unseen rules; \citet{liu2022transformers} train per automaton and \citet{berkovich2025automatagpt} provide the rule in the prompt. Related work finds in-context prediction in discretized chaotic systems and hidden Markov processes \citep{liu2024llms,daipre}, but does not identify a circuit for recovering spatially scattered dependencies.

\textbf{Induction heads and Markov-chain ICL.}
Induction heads originate in mechanistic studies of language models \citep{elhage2021mathematical,olsson2022context,conmy2023towards}. Theory has since characterized their operation on $k$-order Markov chains \citep{bietti2023birth,makkuva2024local,makkuva2024attention,nichani2024transformers,rajaraman2024transformers}, including statistical induction heads \citep{edelman2024evolution} and a two-layer construction for general $k$ \citep{ekbote2025one}. Our setting extends this line: probabilistic cellular automata are precisely systems of locally interacting Markov chains \citep{fernandez2018overview}. Serialization scatters the $k$ relevant neighbors through the token stream, so the model must first gather them before performing the usual 
induction head computation.

\textbf{Bayesian and algorithmic views of in-context learning.}
A complementary account interprets ICL as approximate posterior inference under a prior induced by pretraining \citep{xie2022explanation,muller2022transformers}. In our stochastic cellular-automaton setting, the Bayes predictor is a smoothed count, and we show that the matching head represents it. Another line shows that transformers can implement optimization procedures such as gradient descent and higher-order variants \citep{von2023transformers,zhang2024trained,ahn2023transformers,giannou2024well}; \citet{cole2025context} extends this view to linear dynamical systems. These works characterize an implemented learning algorithm, without considering spatial structure, whereas we characterize a circuit and the estimator it computes.

\textbf{Spatial structure in transformers.}
Architectures for grid-structured data often supply geometry explicitly, as in weather models using coordinate-dependent attention or mesh graphs \citep{bi2023accurate,lam2023learning}. Vision transformers instead flatten patches and can use learned one-dimensional absolute position embeddings, with learned embeddings recovering row--column structure \citep{dosovitskiy2020image}. We adopt this geometry-free setting and ask how a standard transformer uses learned positions to route a scattered neighborhood and perform spatial in-context inference.

\section{The task and the routing problem}
\label{sec:task}

In this section, we cover four key pieces of background. First, we
need well specified geometric data, so we consider the setting of stochastic
cellular automata (\S\ref{sec:task:ca}). Then, we describe how we flatten the
grid into a one-dimensional token stream, and we identify the set of offsets each prediction must reach
(\S\ref{sec:task:routing}). Furthermore, we describe the two-layer transformer
we study, which is given no geometric prior beyond a learned absolute positional
table (\S\ref{sec:task:model}). Finally, we identify the queries on which
spatial in-context learning is well posed, and give the Bayes-optimal
predictor for them (\S\ref{sec:task:correct}).

\subsection{Cellular automata}
\label{sec:task:ca}

\begin{definition}[Stochastic cellular automaton]
\label{def:ca}
Fix a torus $\Grid = [W_1] \times \cdots \times [W_d]$, writing $[n] \coloneqq \{0,\dots,n-1\}$, of volume $\lvert\Grid\rvert \coloneqq \prod_{j=1}^{d} W_j$, with $W_j \geq 2$. Cells of $\Grid$ are written $z = (z_1,\dots,z_d)$. Let $\States = [V]$ ($V \geq 2$) be a state set and $\delta_1,\dots,\delta_k \in \mathbb{Z}^d$ a \emph{neighborhood} of $k \geq 1$ offsets, distinct modulo $(W_1,\dots,W_d)$. A \emph{local kernel} $f : \States^{k} \to \Delta^{V-1}$, with $\Delta^{V-1}$ the probability simplex on $\States$, defines a trajectory over horizon $T \geq 2$ from initial states $(x_{0,z})_{z \in \Grid}$, drawn i.i.d.\ uniformly on $\States$, via transitions for $t \in \{1,\dots,T-1\}$, conditionally independent across $z$ given step $t-1$:
\[
  x_{t,z} \;\sim\; f(\,\cdot \mid P_{t,z}), \qquad
  P_{t,z} \;\coloneqq\; \big(x_{t-1,\,z+\delta_1},\,\dots,\, x_{t-1,\,z+\delta_k}\big) \in \States^{k},
\]
where addition is modulo $(W_1,\dots,W_d)$. We write $(t,z)$ for the cell $z$ at step $t$. $P_{t,z}$ is its \emph{parent configuration}, whose entries are ordered by $\delta_1,\dots,\delta_k$, and we write $D_K \coloneqq \{\delta_1,\dots,\delta_k\}$ for the set of \emph{key offsets}, so $\lvert D_K\rvert = k$. The kernel $f$ is drawn from the prior $\bigotimes_{c \in \States^{k}} \mathrm{Dir}(\alpha\mathbf{1}_V)$ with $\alpha > 0$ and $\mathbf{1}_V$ the all-ones vector, so the rows $f(\cdot \mid c)$, one per configuration $c$, are independent.
\end{definition}

This is a standard probabilistic cellular automaton on a torus
\citep{fernandez2018overview}. The neighborhood
$D_K$ is fixed, and we ask whether the model can recover it. The kernel $f$ is
drawn afresh for every trajectory and is never shown to the model, whose
parameters are fixed at test time. It observes only the flattened trajectory,
and must infer $f$ from earlier occurrences of the same neighborhood
configuration within that one trajectory. We also study purely deterministic
rules, which are the $\alpha \to 0$ limit of the prior, where every row
$f(\cdot \mid c)$ is a point mass.

\begin{exmp}
The classical \emph{elementary cellular automata} \citep{wolfram2003new} have $d = 1$, $V = 2$ and
$k = 3$, with offsets $(-1,0,1)$. In $d = 2$ the \emph{von Neumann}
neighborhood has $k = 5$, the offsets $(0,0)$, $(\pm 1, 0)$ and $(0, \pm 1)$,
while the \emph{Moore} neighborhood used in Conway's Game of Life
\citep{conway1970conway} has $k = 9$, the offsets $(a,b)$ with
$a, b \in \{-1,0,1\}$.
A neighborhood of size $k$ over $V$ states admits $V^{k}$ configurations and
therefore $V^{(V^{k})}$ deterministic rules, which is $256$ in the elementary
case, $2^{32}$ for von Neumann and $2^{512}$ for Moore. Stochastic rules  assign a
distribution over $\States$, so they form a continuum of
dimension $V^{k}(V-1)$.
\end{exmp}

\subsection{Sequence flattening and spatial routing}
\label{sec:task:routing}

A trajectory is presented to the model as a single token stream. Within a time
step the cells are emitted in row-major order, with the last coordinate varying
fastest, and a \texttt{ROW\_SEP} closes each innermost row, except that the last
row of a step is closed by a \texttt{TIME\_SEP} in place of its
\texttt{ROW\_SEP}. One step therefore reads $\lvert\Grid\rvert$ cell tokens and $\lvert\Grid\rvert/W_d$
separators:
\[
    \underbrace{x_{t,(0,0)},\,\dots,\,x_{t,(0,W_d-1)}}_{\text{row } 0},\;
    \texttt{ROW\_SEP},\;\dots,\;
    \underbrace{x_{t,(m,0)},\,\dots,\,x_{t,(m,W_d-1)}}_{\text{row } m = \lvert\Grid\rvert/W_d-1},\;
    \texttt{TIME\_SEP},
\]
for $t = 0,\dots,T-1$ in order, so the stream has length
$N = T(\lvert\Grid\rvert + \lvert\Grid\rvert/W_d)$. In $d = 1$ each time step is a single row, so the only
separator is the \texttt{TIME\_SEP}. The layout is an order-preserving injection
$\rho : [T] \times \Grid \to [N]$, and we give the stream indexing in full in
Appendix~\ref{app:serialization}.
The flattening preserves adjacency along the innermost
axis while obscuring adjacency along the others;
the $k$ tokens in the parent configuration need not form a contiguous block.

In the transformer architecture, the cell token $(t,z)$ predicts the cell that follows it
in the stream, one step along the row. We call such a token a \emph{query cell}
and the cell it predicts its \emph{target cell}. We write $\usucc$ for this step
and call it the \emph{successor offset}; it is the $d$-th coordinate vector of
the torus $\Grid$.
A token therefore plays two roles. To make its \emph{current prediction}, it
must access the $k$ parents of its target cell $z+\usucc$, which lie at offsets
$D_K+\usucc$ from $z$. To serve as a \emph{precedent} for later predictions,
it must also represent its own $k$ parents, which lie at offsets $D_K$.
We denote the former set by $D_Q$.

\begin{definition}[Routing set]
\label{def:routing}
For a query cell $(t,z)$, with target cell $z+\usucc$, the \emph{query offsets}
are $D_Q \coloneqq D_K+\usucc$, and the \emph{routing set} is
$
    \Uset
    \;\coloneqq\;
    D_K \cup D_Q
    \;=\;
    D_K \cup (D_K+\usucc)
    \;\subseteq\; \Grid 
$.
\end{definition}

\begin{exmp}
On grids large enough to avoid wraparound, the elementary $1$D neighborhood has $D_K = \{-1,0,1\}$ and $k = 3$, so
$D_Q = \{0,1,2\}$ and
$\Uset = \{-1,0,1,2\}$, with $\lvert\Uset\rvert = 4$.
The $2$D von Neumann neighborhood has $k = 5$ and $\lvert\Uset\rvert = 8$;
the $2$D Moore neighborhood has $k = 9$ and
$\Uset = \{-1,0,1\}\times\{-1,0,1,2\}$,
with $\lvert\Uset\rvert = 12$. 
\end{exmp}

Generally,
$k \leq \lvert\Uset\rvert \leq \min(2k, \lvert\Grid\rvert)$.
Figure~\ref{fig:gridattn} shows both offset sets in a trained model.
In these examples $D_K$ is symmetric but $\Uset$ is not, because $D_Q$ is
shifted by $\usucc$. Along the innermost axis, $\Uset$ extends one step further
in the positive direction than $D_K$, but is the same along other axes.
\subsection{The model}
\label{sec:task:model}

We employ a two-layer causal decoder-only transformer
\citep{vaswani2017attention} with learned absolute positional embeddings.
Position $n$ of the token stream has residual-stream vector $x^{(0)}_n = W_E e_n + W_P p_n \in \mathbb{R}^{\dmodel}$, where $e_n$ is the one-hot of the token at position $n$ and $W_P$ embeds the learned absolute position $p_n \in \mathbb{R}^{\dpos}$ (column $n$ of $P \in \mathbb{R}^{\dpos \times N}$) into the residual stream. Attention weights
\[
  \alpha^{(l)}_{n,n'} \;=\; \operatorname{softmax}_{n' \leq n}\!\Bigl(\bigl\langle W^{(l)}_Q x^{(l-1)}_n,\; W^{(l)}_K x^{(l-1)}_{n'} \bigr\rangle\Bigr).
\]
at layer $l \in \{1,2\}$ determine the output $\sum_{n' \leq n} \alpha^{(l)}_{n,n'} W^{(l)}_V x^{(l-1)}_{n'}$ which is projected back into the stream (optionally followed by a two-layer ReLU MLP of hidden width $\dmlp$). 
For simplicity of exposition, we absorb scaling factors into $W_Q$ and $W_K$.

Our analysis focuses on the learned positional table $P$, 
such as the one used in the original
Vision Transformer \citep{dosovitskiy2020image}. Its columns are free
parameters and supply no geometry of their own, so any spatial structure in a
trained model must have been inferred from data.
Appendix~\ref{app:serialization:seqrpe} 
compares alternative positional embedding schemes.

\subsection{Resolvable queries and the Bayes target}
\label{sec:task:correct}

The kernel is drawn from the Dirichlet prior of Definition~\ref{def:ca}, so the
best possible prediction at a token is based on the posterior, given
everything earlier in the stream. It has a closed form: count the earlier
occurrences of the query configuration, and how often each state followed them. Denote
\begin{figure}[t]
\centering
\includegraphics[width=\linewidth]
{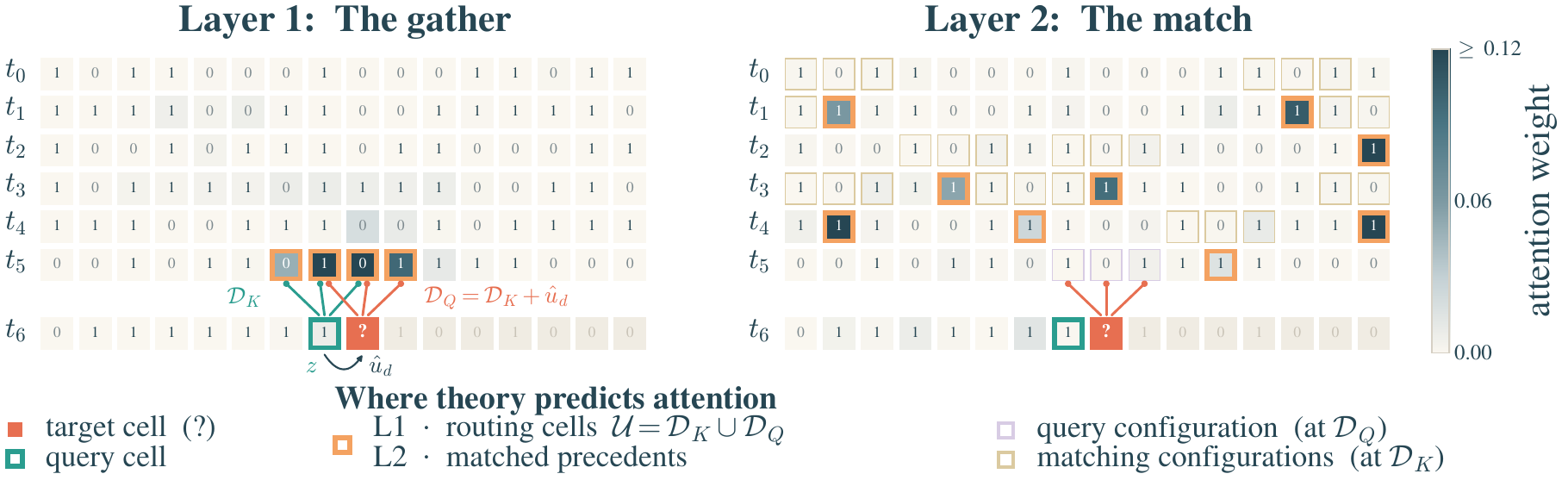}
\caption{\textbf{Layer~1 gathers the routing cells; Layer~2 retrieves matching precedents.}
Attention weights from the shared-head model on 1D $V{=}2, k{=}3$ (elementary cellular automata), Wolfram Rule~110.
The query cell at $z$ predicts the target cell at $z+\hat u_d$ (marked ``?'').
Layer~1 (left) concentrates on the four routing cells in the previous timestep, at offsets $\Uset=D_K\cup D_Q$ relative to $z$, where $D_Q=D_K+\hat u_d$ gives the target cell's parent offsets.
Layer~2 (right) concentrates on matching precedents: earlier cells whose parent configurations (at offsets $D_K$) match the query configuration (at offsets $D_Q$). Their states are used to predict the target cell.
}
\label{fig:gridattn}
\end{figure}

\[
  n_c \coloneqq \#\{(t',z') : t' \geq 1,\ \rho(t',z') \leq \rho(t,z),\ P_{t',z'} = c\}, \qquad
  n_{c,y} \coloneqq \#\{\text{those with } x_{t',z'} = y\}.
\]

\begin{theorem}[Bayes posterior predictive]
\label{thm:bayes}
For the Dirichlet prior and serialization (\S\ref{sec:task:routing}), the optimal predictive distribution at a token with query configuration $c$, given all earlier observations, is
\[
  \widehat{P}_\alpha(y \mid c) \;=\; \frac{n_{c,y} + \alpha}{n_c + V\alpha},\quad y \in \States.
\]
\end{theorem}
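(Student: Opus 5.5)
We prove the statement by computing the posterior over the kernel $f$ given the observed prefix of the stream, and then integrating out $f$. The observation set is every token strictly before the target cell $(t,z+\usucc)$. Under the row-major layout this contains all of step $t-1$ whenever $t\ge1$, so the query configuration $c = P_{t,z+\usucc}$ is a deterministic function of the observed history. The plan has three steps: write the joint likelihood of the prefix as a product over configurations, apply Dirichlet–multinomial conjugacy row by row, and take the posterior mean of the relevant row.

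\emph{Step 1 (likelihood factorization).} By Definition~\ref{def:ca}, the initial states $x_{0,\cdot}$ are uniform and independent of $f$. Each later cell $(t',z')$ is drawn from $f(\cdot\mid P_{t',z'})$, conditionally independently given step $t'-1$. Because $\rho$ is order-preserving and time steps are emitted in order, every observed cell with $t'\ge1$ has its parents among earlier observed tokens. Separators are deterministic and carry no information. The chain rule therefore gives
\[
  p(\text{prefix}\mid f)\;\propto\;\prod_{(t',z')\ \text{observed},\ t'\ge1} f(x_{t',z'}\mid P_{t',z'}) \;=\;\prod_{c'\in\States^k}\prod_{y\in\States} f(y\mid c')^{\,n_{c',y}},
\]
where the proportionality constant does not depend on $f$. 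The one subtle point is that a partially observed current step $t$ contributes only the cells already emitted. Conditional independence across $z$ within a step, together with the fact that their parents lie in step $t-1$, means this truncation introduces no extra dependence on $f$.

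The main obstacle is the bookkeeping of which cells are counted. We need the counts in the likelihood to coincide exactly with the paper's $n_c$, defined by $\rho(t',z')\le\rho(t,z)$: these are the cells up to and including the query token, which is precisely everything emitted before the target. We also need to confirm that conditioning on the observed pattern of cells, rather than on a fixed design, is harmless. It is, because the parents are observed, so the likelihood is exactly of multinomial form with data-dependent but observed counts.

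\emph{Step 2 (conjugacy).} The prior $\bigotimes_{c'}\mathrm{Dir}(\alpha\mathbf{1}_V)$ factorizes over rows, and so does the likelihood. The posterior is therefore $\bigotimes_{c'}\mathrm{Dir}\big((\alpha+n_{c',y})_{y}\big)$, with rows still independent.

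\emph{Step 3 (predictive).} The target satisfies $x_{t,z+\usucc}\sim f(\cdot\mid c)$ given the past, so
\[
  \Pr(y\mid\text{prefix})=\mathbb{E}\!\left[f(y\mid c)\mid\text{prefix}\right].
\]
The mean of the Dirichlet row for $c$ is $(n_{c,y}+\alpha)/(n_c+V\alpha)$, using $\sum_y n_{c,y}=n_c$. Optimality, whether under log-loss or any proper scoring rule, follows because the true conditional distribution minimizes expected loss.
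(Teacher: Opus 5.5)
Your proposal is correct and follows essentially the same route as the paper's proof: the likelihood factorizes into an $f$-independent initial-step term and one multinomial factor per configuration, Dirichlet conjugacy holds row by row, and the posterior mean of row $c$ gives the formula. Your version is more explicit than the paper's on two bookkeeping points, namely why the partially emitted current step adds no dependence on $f$ and why the observed counts coincide exactly with $n_c$.
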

The proof and further discussion are provided in Appendix~\ref{app:bayes}. 
\begin{definition}[Resolvable query, coverage]
  \label{def:resolvable}
  Configuration $c \in \States^{k}$ is \emph{covered} at $(t,z)$ if $P_{t',z'} = c$ for
  some $(t',z')$
  satisfying $\rho(t',z') \leq \rho(t,z)$, i.e. if $n_c>0$.
  A query cell $(t,z)$ defines the \emph{query configuration} $Q_{t,z} \coloneqq
  P_{t,\,z+\usucc}$, the parent configuration of its target cell,
  which determines the target state distribution $f(\cdot \mid Q_{t,z})$.
  A query is \emph{resolvable} if $Q_{t,z}$ is covered
  (Appendix~\ref{app:bayes:coverage}).
  \end{definition}

If a query
configuration $c$ has not occurred earlier in the stream, then $n_c = 0$ and
$\widehat{P}_\alpha(\cdot \mid c)$ reduces to the prior mean $1/V$, since there are no
observations on $f(\cdot \mid c)$. 
In other words, the context only determines the target distribution if resolvability holds.
In Section~\ref{sec:sih}, we focus on resolvable queries.

\section{Spatial Induction Heads: Construction and Optimality}
\label{sec:sih}

\subsection{The spatial induction head}
\label{sec:sih:def}

For spatial data, the model must construct the
relevant neighborhood at the query cell before matching. We call the resulting two-stage
circuit a \emph{spatial induction head}.

\begin{definition}[Spatial induction head]
\label{def:sih}
A two-layer circuit is a \emph{spatial induction head}, if at every resolvable query, it predicts in two stages:
\begin{enumerate}
[label=(\roman*),noitemsep,topsep=0pt,parsep=0pt,partopsep=0pt,leftmargin=2.2em]
  \item \emph{The gather.} Layer 1 assembles the spatial neighborhood
  information needed at the query cell.
  \item \emph{The match.} Layer 2 compares the resulting configuration
  against earlier configurations in the context and averages over matching
  precedents to predict the next state.
\end{enumerate}
\end{definition}

In our cellular-automaton setting, the gather makes the states of all
$\lvert\Uset\rvert$ routing cells, at offsets $\Uset = D_K \cup D_Q$, available at
the query cell. These states serve two roles in Layer~2:
the query reads the offsets $D_Q$, which form the query configuration, the
parent configuration of the target cell, while the key at an earlier token
reads the offsets $D_K$, which form that token's own parent configuration.
Layer~2 then performs the induction-style matching operation: the query
configuration is compared against earlier parent configurations. Those that
agree with it are the \emph{matching configurations}, the cells carrying them
are the \emph{matching precedents}, and the states that followed those
precedents are aggregated to form the prediction. 
Figure~\ref{fig:gridattn} shows the gather and match stages in a trained
model.

\subsection{Two constructions of the gather}
\label{sec:sih:constructions}

A standard transformer can realize the gather in two ways: it can use one
attention head per routed offset, so that attention itself keeps the
gathered states separate, or it can gather all offsets with a single head
and use an MLP to separate them afterward. Both constructions rely on positional addressing to locate the required cells using learned absolute positional embeddings.

\begin{lemma}[Addressing]
\label{thm:addressing}
For any nonempty target offset set $\mathcal N\subseteq \Grid$, a Layer-1
positional score which ranks the previous-step cells at offsets $\mathcal N$
from the query above all other visible cells can be realized with
$d_p=d_S+2$, where the spatial addressing width is
$d_S=O(|\mathcal N|\,d\log|\mathcal N|)$ in general or $d_S=2d$ when $|\mathcal N|=1$.
The positional dimension is therefore independent of $\lvert \Grid \rvert$ and $T$.
\end{lemma}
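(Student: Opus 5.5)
We will build the positional code as the concatenation of a \emph{spatial} part and a two-dimensional \emph{temporal} part, $p_n=(s_n,\tau_n)\in\mathbb R^{d_S}\times\mathbb R^{2}$. The score between query $n=\rho(t,z)$ and key $n'=\rho(t',z')$ is a sum $\langle A s_n, s_{n'}\rangle+\langle B\tau_n,\tau_{n'}\rangle$, with block matrices $A,B$ absorbed into $W_Q^{(1)},W_K^{(1)}$. The temporal part will isolate the previous step $t'=t-1$. The spatial part will rank cells $z'$ with $z'-z\in\mathcal N$ above all other cells of that step. We will then check that a large temporal gain together with a bounded spatial score gives the required total ranking over all visible cells. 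Separator tokens get spatial code $0$ and a temporal code that is never maximal, so they are dominated as well.

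\textbf{Temporal part.} Set $\tau_{\rho(t,z)}=(\cos\theta t,\sin\theta t)$ with $\theta$ small enough that $\theta T<\pi$. Let $B$ be a rotation by $-\theta$. Then $\langle B\tau_n,\tau_{n'}\rangle=\cos(\theta(t-1-t'))$. On the causally visible range $t'\le t$, this is uniquely maximized at $t'=t-1$, with a gap of at least $1-\cos\theta$. The gap depends on $T$, but scaling $B$ by a constant $\lambda$ costs no dimension. So the temporal block uses $2$ dimensions, and $\lambda$ is chosen so that $\lambda(1-\cos\theta)$ exceeds twice the maximal spatial score magnitude. Cells at $t'=t$ lie in the current step and are therefore also beaten.

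\textbf{Spatial part, $|\mathcal N|=1$.} Let $\mathcal N=\{\delta\}$. Use one circle per axis: $s$ has coordinates $(\cos(2\pi z_j/W_j),\sin(2\pi z_j/W_j))_{j\le d}$. Let $A$ be block-diagonal, with block $j$ a rotation by $2\pi\delta_j/W_j$. The resulting score is $\sum_j\cos(2\pi(z'_j-z_j-\delta_j)/W_j)$. It equals $d$ exactly when $z'=z+\delta$ modulo the torus, and is strictly smaller otherwise. This gives $d_S=2d$.

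\textbf{Spatial part, general $\mathcal N$.} This is the main obstacle. A single bilinear form in a fixed low-dimensional code must peak simultaneously on $|\mathcal N|$ translates, while the grid may be arbitrarily large. The plan is to project the torus to a small quotient. Choose per-axis moduli, or a random hash of residues, so that the offsets in $\mathcal N$ together with the relevant differences are distinguishable. Concretely, take $m=O(\log|\mathcal N|)$ independent random frequency choices per axis, which gives $O(d\log|\mathcal N|)$ circle features per offset. For each $\delta\in\mathcal N$, the rotated feature inner product serves as a ``match $\delta$'' indicator. Summing these indicators over $\delta\in\mathcal N$ (with block copies, hence the factor $|\mathcal N|$) yields a score that is large exactly on $z+\mathcal N$. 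A union bound over the $|\mathcal N|$ targets and the nontargets that could collide is the reason for the $\log|\mathcal N|$ factor. The delicate point is that nontarget cells far from $z$ must not accidentally score high. Sums of cosines of a shared frequency can alias. The strategy is to use the $d=1$-style exact circle features for coarse localization around $z$, and the extra hashed dimensions only to separate the finitely many offsets inside a bounded window. If this cannot be made rigorous, the fallback is an explicit Johnson–Lindenstrauss-style embedding of the indicator vectors of $\mathcal N$ into $O(|\mathcal N| d\log|\mathcal N|)$ dimensions composed with the per-axis circle code.

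None of the widths above involve $W_j$ or $T$; only the scalar gains ($\lambda$ and the spatial margins) depend on them. This gives the claimed independence of $|\Grid|$ and $T$.
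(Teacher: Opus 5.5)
Your temporal block and your singleton case are sound and essentially the paper's. The paper rotates by $2\pi/T$ on $\mathbb{Z}_T$ rather than by a small $\theta<\pi/T$, and it requires $\lambda\gamma_T\ge R+\gamma$, which is the quantitative form of your condition on $\lambda$. Giving separator keys a dominated positional code is harmless for the lemma as stated. The paper instead keeps virtual addresses at separators, so that they can pose queries, and suppresses them as keys with a content gate.

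The genuine gap is the general case, which carries the whole $O(|\mathcal N|\,d\log|\mathcal N|)$ claim. Your sketch fails there for three concrete reasons.

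\textbf{Block copies add no expressivity.} We have $\sum_i\phi(z)^\top A_i\phi(z')=\phi(z)^\top\bigl(\sum_iA_i\bigr)\phi(z')$, and the width a translation-invariant score needs is exactly the size of its spectral support. Suppose your ``match $\delta$'' indicators are rotations of one shared set of $O(d\log|\mathcal N|)$ frequencies. Then the score has only that many characters. A Sauer--Shelah count over supports shows that $\Omega(|\mathcal N|)$ characters are needed to isolate arbitrary $|\mathcal N|$-subsets of a long cycle. Suppose instead each $\delta$ gets its own $m=O(d\log|\mathcal N|)$ independent frequencies. Then the summed score at a non-target has variance of order $|\mathcal N|/m\gg1$, so it does not concentrate.

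\textbf{Per-axis frequencies are not enough.} Frequencies chosen per axis give additively separable scores $\sum_jF_j(x_j)$. These can never isolate $\{(0,0),(1,1)\}$ from $\{(0,1),(1,0)\}$, because $F(0,0)+F(1,1)=F(0,1)+F(1,0)$.

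\textbf{One circle per axis cannot localize.} A degree-one trigonometric polynomial in $x_j$ takes each value at most twice, so it cannot be constant on a projection $V_j$ with three or more points. Without exact flatness on the window, the large multiplier needed to push far cells down also scrambles the scores inside the window, since far cells' random-feature scores can reach $|\mathcal N|$. Moreover, $\mathcal N$ need not be local at all. Your Johnson--Lindenstrauss fallback embeds vectors indexed by all of $\Grid$, and so reintroduces $\log|\Grid|$.

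The missing idea is the paper's two-stage kernel.

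\emph{Box kernel.} Take $Q(x)=\sum_j\prod_{v\in V_j}\bigl(1-\cos(2\pi(x_j-v)/W_j)\bigr)$, where $V_j$ is the $j$-th coordinate projection of $\mathcal N$. It has degree $|V_j|\le|\mathcal N|$ in each coordinate, hence at most $2d|\mathcal N|$ characters. It vanishes \emph{exactly} on the box $B=V_1\times\cdots\times V_d\supseteq\mathcal N$, where $|B|\le|\mathcal N|^d$, and is positive elsewhere.

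\emph{Random characters inside the box.} Draw $\omega^{(1)},\dots,\omega^{(\nu)}$ uniformly from the whole dual group $\Gd$, so mixed characters are included. Share them across all $u\in\mathcal N$, and set $R(x)=\nu^{-1}\sum_i\sum_{u\in\mathcal N}\cos\varphi_{\omega^{(i)}}(x-u)$. Orthogonality of characters gives $\mathbb{E}R(x)=\mathbf 1[x\in\mathcal N]$, with per-sample variance at most $|\mathcal N|$ and range at most $2|\mathcal N|$. Bernstein's inequality plus a union bound over $B$ alone then yields $\nu=O\bigl(|\mathcal N|(d\log|\mathcal N|+1)\bigr)$ with $R\ge\tfrac34$ on $\mathcal N$ and $R\le\tfrac14$ on $B\setminus\mathcal N$.

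\emph{Synthesis.} Let $\varepsilon=\min_{x\notin B}Q(x)$ and $g_0=R-(|\mathcal N|/\varepsilon)Q-\tfrac12$. Then $g_0\ge\tfrac14$ on $\mathcal N$ and $g_0\le-\tfrac14$ elsewhere. Exact vanishing of $Q$ on $B$ is what lets the grid-dependent multiplier suppress every far cell without touching the scores in $B$. As a scale, that multiplier costs no width.

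So the factor $|\mathcal N|$ comes from the variance rather than from copies, and $d\log|\mathcal N|$ comes from $\log|B|$.
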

The two gather constructions use this addressing primitive differently:
Construction~I applies it separately to each singleton offset
$\{\delta\}$, while Construction~II applies it once to the full routing
set $\Uset$.

\begin{theorem}[Spatial gather constructions]
  \label{thm:gather}
    The Layer-1 gather of $\Uset$ has two realizations, both at widths
    independent of $\lvert\Grid\rvert$ and $T$:
    \begin{enumerate}[label=(\Roman*),leftmargin=2.2em,noitemsep,topsep=0pt,parsep=0pt,partopsep=0pt]
      \item \textbf{Dedicated-heads construction.} \label{thm:parallel}
      $\lvert\Uset\rvert$ heads, one per offset, and no Layer-1 MLP, with
      $\dpos = 2d+3$ and $\dmodel = (\lvert\Uset\rvert+2)V + 2d+6$.
      It recovers each routed state to any accuracy.
      \item \textbf{Shared-head construction.} \label{thm:onehead}
      One head and an MLP which separates the routed states by offset, with
      $\dmlp \le 2(V-1)\lvert\Uset\rvert^{3} + 3$ and
      $\dpos \le O(\lvert\Uset\rvert\,d\log\lvert\Uset\rvert) + \lvert\Uset\rvert^{3} + 3$.
      It recovers them exactly.
    \end{enumerate}
  \end{theorem}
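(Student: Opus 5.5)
Both parts reduce to Lemma~\ref{thm:addressing} plus a readout argument. We first fix the residual layout: a one-hot token block of width $V$ (plus separator flags), the positional block $W_P p_n$ of width $\dpos$, and output slots for the gathered states. Attention in Layer~1 uses only the positional block for queries and keys. The value map uses only the token block, so a head's output is a convex combination of one-hot state vectors of the attended cells.

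\textbf{Construction~I.} For each $\delta\in\Uset$ we apply Lemma~\ref{thm:addressing} with $\Nset=\{\delta\}$, so $d_S=2d$. We take one cosine/sine pair per axis at frequency $2\pi/W_j$, so that the inner product $\sum_j\cos(2\pi(z_j-z'_j-\delta_j)/W_j)$ is uniquely maximized at $z'=z+\delta$. We add two coordinates encoding the time step, so that the previous step is favored over every other visible step; this gives $d_p=2d+2$. A further coordinate acts as a constant or bias, giving $\dpos=2d+3$. Head $\delta$ applies a query rotation by $\delta$ to these coordinates. Scaling $W_Q$ by $\beta$ makes the softmax concentrate on the unique top-ranked cell up to error $Ne^{-\beta\gamma}$, where $\gamma$ is the score gap from the lemma. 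Its value writes $e_{x_{t-1,z+\delta}}$ into a private $V$-dimensional slot. This yields $|\Uset|V$ slots, plus the token's own state and its target copy ($2V$), the positional width, and a few flags, which gives the stated $\dmodel$. Accuracy is arbitrary as $\beta\to\infty$.

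\textbf{Construction~II.} We apply Lemma~\ref{thm:addressing} once with $\Nset=\Uset$. With hard attention, the single head averages uniformly over exactly the $|\Uset|$ routing cells and outputs $\frac1{|\Uset|}\sum_{\delta}e_{x_{t-1,z+\delta}}$. This loses the identity of which offset contributed which state. To restore it, we make the value also carry an offset tag. We append to $p_n$ extra coordinates so that, relative to the query, the value of the cell at offset $\delta$ has the form $e_x\otimes w_\delta$. The vectors $w_\delta$ are chosen so that the sum $\sum_\delta e_{x_\delta}\otimes w_\delta$ determines every $x_\delta$.

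The natural choice is a code on the gap between consecutive distinct offsets. However, absolute positions cannot encode relative offsets in a value linearly. We therefore let the value carry position features $q(n')$ and let the MLP combine them with the query's own features $q(n)$. This step, together with making the MLP decode exact states, is the main obstacle.

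For the MLP we plan to write the output as a sum of ReLU threshold pairs on linear functionals. The aim is to obtain exact indicators $\mathbf 1[x_\delta=v]$ for $v\in\{1,\dots,V-1\}$; the indicator for the remaining state follows by subtraction. Separation requires distinguishing $|\Uset|$ offsets. Doing so using pairwise position interactions costs on the order of $|\Uset|^2$ tag coordinates and $|\Uset|^3$ hidden units. This would account for the $|\Uset|^3$ terms in both $\dmlp\le2(V-1)|\Uset|^3+3$ and $\dpos$. Exactness follows because hard attention is exact and the attention averages take finitely many values with a positive gap, so piecewise-linear ReLU units separate them exactly.

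Finally, we would verify that the separator tokens and wraparound cases do not break the ranking. The first is handled by a penalizing flag coordinate and the second by the torus periodicity of the cosines.
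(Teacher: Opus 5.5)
Your Construction~I is essentially the paper's argument: one shared $2d$-coordinate Fourier table with a per-head rotation, a two-coordinate temporal block, a separator penalty, and values written into disjoint $V$-blocks with error at most $Ne^{-\Lambda\gamma}$. Two minor points. First, $\cos(2\pi(z_j-z'_j-\delta_j)/W_j)$ peaks at $z'=z-\delta$, so flip the sign. Second, the paper spends the extra positional coordinate on the initial-step indicator $\zeta_n$, which the Layer-2 key gate needs, and keeps the constant in the token-side gate block; the count is the same.

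Construction~II has a genuine gap, at exactly the step you call the main obstacle. In Layer~1 the value of a key $n'$ is linear in $x^{(0)}_{n'}=W_Ee_{n'}+W_Pp_{n'}$, so it has the form $Ae(x_{n'})+Bp_{n'}$. It can never be a product $e(x)\otimes w_\delta$, and it cannot depend on the query.

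Your scores are purely positional, so the attention weights do not depend on the states either. The head output is therefore $A\sum_\delta\alpha_\delta\,e(x_\delta)+B\sum_\delta\alpha_\delta\,p_{n_\delta}$ plus leakage. Its only state-dependent part is a weighted state histogram in $\mathbb{R}^V$. With your uniform hard attention, swapping the states of two routing cells leaves the residual stream unchanged, so no MLP of any width can recover the per-offset states. Feeding the query's own features $q(n)$ to the MLP does not help, because the pairing between each $q(n')$ and its state is already lost in the sum. Consequently the $\lvert\Uset\rvert^2$ tags and $\lvert\Uset\rvert^3$ units are fitted to the statement rather than derived. Exactness also cannot rest on hard attention in a finite-softmax model.

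The missing idea is to let the softmax exponential supply the product: put the state in the attention \emph{weight} and the identity in the \emph{value}.
\begin{itemize}
\item The paper adds a content logit $\log(x_{n'}+1)$. The routing cell at offset $\delta$ then gets weight $(x_\delta+1)w_\delta/Z$, with $w_\delta$ a fixed positional constant.
\item The value is the one-hot $e_{c(z')}$ of a positional color $c:\Grid\to[M]$ that is injective on every translate $z+\Uset$. This is a proper coloring of the Cayley graph with generators $(\Uset-\Uset)\setminus\{0\}$, so greedy coloring gives $M\le\lvert\Uset\rvert^2-\lvert\Uset\rvert+1$ on any grid.
\item This also settles your absolute-versus-relative issue. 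Each key emits its absolute color, while the query's positional entry stores the colors $e_{c(z+\delta)}$ it expects at each offset.
\item A reference token $\star$ with fixed logit and its own value coordinate lets the MLP cancel the unknown $Z$, by comparing $a[s]$ with $(v+\tfrac12)(w_\delta/w_\star)\,a[\star]$.
\item ReLU clamp pairs, gated by the query's color address, threshold each state against half-integer levels.
\item Recovery is exact because of the condition $\Lambda\gamma\ge\log\bigl(2V(n_{\max}-1)\bigr)$, where $n_{\max}$ bounds the number of visible cells sharing a color. It keeps same-colored non-routing cells below half the threshold margin.
\end{itemize}
The construction uses $2\lvert\Uset\rvert M(V-1)+M+2$ hidden units and $\dS+3+\lvert\Uset\rvert(M-1)+1+M$ positional coordinates. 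Substituting the bound on $M$ yields the two cubic bounds in the statement.
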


A related head/nonlinearity tradeoff appears for sequential $k$-gram induction
\citep{ekbote2025one}; here, the additional difficulty is preserving the
identities of spatially separated offsets after routing.

\paragraph{Proof sketch.}
In Construction~I, head $\delta$ spatially addresses the cell at offset
$\delta\in\Uset$ and writes its state directly into a dedicated
$V$-dimensional block of the residual stream. Since the heads write into
disjoint blocks, the offsets are already separated after attention and no
Layer-1 MLP is needed.
Construction~II uses nonlinear capacity instead of separate heads. A single
head attends jointly to the cells at all offsets in $\Uset$. The cell state is encoded in the attention weight, while the value carries a positional label identifying the routing cell. A single global labeling guarantees these $\lvert\Uset\rvert$ routing cells remain mutually distinguishable at any query.
Finally, the Layer-1 MLP uses the query's positional information to decode the aggregate into one state block per offset. 
The Fourier construction underlying the addressing bound,
the positional labeling scheme for the
  shared-head construction,
  and the exact
  transformer parameter constructions are given in Appendices~\ref{app:addressing},~\ref{app:colouring}, and~\ref{app:constructions}.

\subsection{Matching and Bayes-optimal prediction}
\label{sec:sih:matching}

The gathered routing set feeds into
a Layer-2 matching head. The query selects the gathered blocks
indexed by $D_Q$, while each earlier key selects the blocks indexed by
$D_K$. Their inner product, scaled by $\beta>0$, measures agreement between the
query configuration and an earlier parent configuration.

\begin{proposition}[Hard-attention matching as Bayesian counting]
  \label{thm:sink}
  Suppose a single Layer-2 attention head assigns the same score $a$ to every
  precedent whose parent configuration matches the current query, and $-\infty$ to
  all other positions. With no Layer-2 MLP, the head outputs the empirical
  conditional transition law $(n_{c,y}/n_c)_{y\in\States}$. Augmenting the context
  with a single \emph{prior token}, of value $\tfrac{1}{V}\mathbf 1$ and score
  $a+\log(V\alpha)$, makes the output exactly
  $\widehat P_\alpha(y\mid c)
    =
    \frac{n_{c,y}+\alpha}{n_c+V\alpha}$, the 
  posterior of Theorem~\ref{thm:bayes}.
  \end{proposition}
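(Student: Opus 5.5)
The argument is a direct softmax computation, once we fix what the value vector of a matching precedent carries. Let $\mathcal M$ be the set of matching precedents for query configuration $c$: earlier cell tokens $(t',z')$ with $t'\ge 1$, $\rho(t',z')\le\rho(t,z)$ and $P_{t',z'}=c$, so that $|\mathcal M|=n_c$. I will take the value at such a precedent to be the one-hot $e_{x_{t',z'}}\in\mathbb R^V$ of its own state. This is the state that "followed" configuration $c$, since $P_{t',z'}$ is exactly the parent configuration producing $x_{t',z'}$. The indexing convention needs care here. The key at $(t',z')$ reads the offsets $D_K$ (its own parents), and the value must read that token's own cell state, not its successor. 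Both are available at position $\rho(t',z')$ after the gather of Theorem~\ref{thm:gather}, because $0\in D_K$ is not assumed, but the token's own embedding $W_E e_n$ is always present in the residual stream. I would therefore take $W_V^{(2)}$ to project onto the token-embedding block restricted to the $V$ state coordinates.

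\emph{Step 1 (no prior token).} Every position outside $\mathcal M$ has score $-\infty$, so it gets softmax weight $e^{-\infty}=0$. Every position in $\mathcal M$ gets weight $e^{a}/(n_c e^{a})=1/n_c$. This is well defined because the query is resolvable, so $n_c\ge 1$ (Definition~\ref{def:resolvable}). The output is
\[
\frac{1}{n_c}\sum_{(t',z')\in\mathcal M} e_{x_{t',z'}} ,
\]
and its $y$-th coordinate counts the matches with $x_{t',z'}=y$. That coordinate is therefore $n_{c,y}/n_c$.

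\emph{Step 2 (prior token).} Add one position, for instance a sink at the start of the stream, which is visible to every query under the causal mask. Give it value $\tfrac1V\mathbf 1$ and score $a+\log(V\alpha)$, so its unnormalized weight is $V\alpha e^{a}$. The normalizer becomes $e^{a}(n_c+V\alpha)$, and the output's $y$-th coordinate is
\[
\frac{n_{c,y}e^{a}+V\alpha e^{a}\cdot\tfrac1V}{e^{a}(n_c+V\alpha)}=\frac{n_{c,y}+\alpha}{n_c+V\alpha}.
\]
By Theorem~\ref{thm:bayes} this is $\widehat P_\alpha(y\mid c)$. The same formula also covers $n_c=0$, where it returns the prior mean $1/V$.

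The main obstacle is bookkeeping, not analysis. First, the counting set $\mathcal M$ must coincide exactly with the one defining $n_c$, including the condition $\rho(t',z')\le\rho(t,z)$ and the exclusion of step $0$. Second, the causal mask and the query's own position must not introduce spurious weight. Since the proposition assumes the score pattern (equal scores on matches and $-\infty$ elsewhere), I only need to check that the value readout is consistent with that pattern: each matching token must contribute exactly its own state, and separators must carry no mass. The latter is handled by their $-\infty$ score.
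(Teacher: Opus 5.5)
Your proposal is correct and follows the paper's proof essentially step for step. The equally scored matches receive uniform softmax weight, so averaging their own-state one-hots (the paper's value $\Pi_{B_1}$) gives $n_{c,y}/n_c$. The prior token then contributes unnormalized weight $V\alpha e^{a}$ with value $\tfrac1V\mathbf 1$, which adds exactly $\alpha$ to each count and $V\alpha$ to the normalizer.
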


\paragraph{Proof sketch.}
For one-hot state blocks, the query--key inner product counts matching
offsets, so an exact configuration match attains the maximal score. If all
matches receive the same score and nonmatches are excluded, softmax is
uniform over the $n_c$ matching precedents. Since each value carries the
state that followed its configuration, their average is exactly the
empirical conditional transition law. The prior token contributes total
unnormalized weight $V\alpha$ with uniform value
$\frac{1}{V}\mathbf 1$, adding $\alpha$ to every state count and recovering
the posterior of Theorem~\ref{thm:bayes}.
\hfill \qed

A finite softmax need not realize this ideal matching exactly: nonmatching
positions retain some attention mass, and approximation error from Layer~1
can make the scores of matching positions slightly unequal. The
end-to-end result below controls both effects.

\begin{theorem}[End-to-end spatial induction; formally Theorem~\ref{thm:main-app}]
  \label{thm:main}
  Let the Layer-1 gather construction have per-block error $\bar\epsilon$, let
  $\beta$ denote the Layer-2 query--key scale, and write
  $\Delta \coloneqq \beta\bigl(1-(2k+2)\bar\epsilon\bigr)$ for the resulting
  match--nonmatch separation. Suppose the Layer-2 head also discounts
  separators and initial-step cells. At every resolvable query with $n_c$
  matching precedents:
  \begin{enumerate}[label=(\roman*),leftmargin=2.2em,itemsep=1pt,topsep=2pt]
    \item \emph{Stochastic rules.} Fix $\kappa\in(0,1]$, assume
    $N\ge V\alpha$, and include the prior token of
    Proposition~\ref{thm:sink}. If $\Delta \ge \log\bigl(4N/((n_c+V\alpha)\kappa)\bigr)$ and
    $\bar\epsilon \le \kappa/(16k\Delta)$, then the Layer-2 attention output
    $p$, read directly as a distribution, satisfies
    $\KL\bigl(\widehat P_\alpha(\cdot\mid c)\,\big\|\,p\bigr) \le \kappa$.

    \item \emph{Deterministic rules.} In the deterministic limit $\alpha\to0$, if
    $\bar\epsilon < 1/(2k+2)$ and $\Delta > \log\bigl(2\sqrt{2}\,(N-n_c)/n_c\bigr)$,
    then the predicted next state is exactly correct.
  \end{enumerate}
  \end{theorem}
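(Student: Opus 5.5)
The plan is to treat the Layer-2 output as a perturbation of the ideal hard-attention head of Proposition~\ref{thm:sink}, and bound two error sources separately: attention mass leaking to nonmatching positions, and unequal scores among matching positions caused by Layer-1 error.

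\emph{Score bounds.} First I bound the scores. With one-hot gathered blocks, the query--key inner product over the $k$ blocks counts agreeing offsets. If each block carries error at most $\bar\epsilon$ (in $\ell_1$ or $\ell_\infty$), then a matching precedent has score within $O(k\beta\bar\epsilon)$ of the ideal value $\beta k$. A nonmatching precedent disagrees in at least one block, so its score is at most $\beta(k-1)+O(k\beta\bar\epsilon)$. Tracking constants should give matches $\ge a$ and nonmatches $\le a-\Delta$ with $\Delta=\beta(1-(2k+2)\bar\epsilon)$. I take the ``$2k+2$'' to absorb query-side and key-side errors, plus value-side slack. Separators and initial-step cells are discounted by assumption, so they also score $\le a-\Delta$. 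The prior token is placed at score $a_{\max}+\log(V\alpha)$, where $a_{\max}$ is the ideal match score.

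\emph{Stochastic case.} I write the softmax output as
\[
p=(1-\lambda)\,\tilde p+\lambda\, q,
\]
where $\tilde p$ is the normalized average over the matches and the prior token, $q$ is the average over the nonmatches, and $\lambda$ is the leaked mass. At most $N$ nonmatches exist, each with relative weight $\le e^{-\Delta}$. Against a matching mass of at least $(n_c+V\alpha)e^{-O(k\beta\bar\epsilon)}$, this gives
\[
\lambda\le \frac{N e^{-\Delta}}{n_c+V\alpha}\le \frac{\kappa}{4}
\]
by the hypothesis on $\Delta$.

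For $\tilde p$, the match weights lie in $[e^{-2k\beta\bar\epsilon},1]$ relative to each other. The hypothesis $\bar\epsilon\le\kappa/(16k\Delta)$ makes $k\beta\bar\epsilon\lesssim\kappa/16$, so each coordinate ratio $\tilde p_y/\widehat P_\alpha(y\mid c)$ lies in $[e^{-\kappa/4},e^{\kappa/4}]$. This step uses $\widehat P_\alpha(y)\ge \alpha/(n_c+V\alpha)>0$, together with ratio bounds for weighted averages of counts.

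Finally I bound KL by the log-ratio. Convexity gives $p_y\ge(1-\lambda)\tilde p_y$, so
\[
\KL(\widehat P\,\|\,p)\le \max_y\log\frac{\widehat P_y}{p_y}\le \frac{\kappa}{4}-\log\Bigl(1-\frac{\kappa}{4}\Bigr)\le\kappa .
\]
The condition $N\ge V\alpha$ is used only to keep the logarithm in the $\Delta$ hypothesis nonnegative.

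\emph{Deterministic case.} Here all matches carry the same next state $y^*$. With $\alpha\to0$ the prior token has vanishing weight. The condition $\bar\epsilon<1/(2k+2)$ makes $\Delta>0$. Let $r=(N-n_c)e^{-\Delta}/n_c$ be the ratio of nonmatch mass to match mass. Then
\[
p_{y^*}\ge \frac{1}{1+r},\qquad p_y\le\frac{r}{1+r}\ \text{for } y\ne y^*,
\]
and correctness needs $r<1$ after slack from the unequal match weights. The factor $2\sqrt2$ suggests the authors also absorb an $e^{O(k\beta\bar\epsilon)}$ factor into the margin. I would either reproduce that or simply show that $r<1/(2\sqrt2)$ suffices with room to spare, so the argmax is $y^*$.

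\emph{Main obstacle.} I expect the hardest step to be the bookkeeping that turns per-block error $\bar\epsilon$ into score perturbations with exactly the constants $(2k+2)$ and $16k$. That requires a precise definition of per-block error from Theorem~\ref{thm:gather}: in Construction~II it is exact, and in Construction~I it comes from softmax leakage. It also requires care that the value vectors, which carry target states, are themselves slightly erroneous. I would first try to handle value error by adding it into the $\lambda$-style mixture bound.
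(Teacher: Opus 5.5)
Your overall route is the same as the paper's. You write $p=(1-w)p_\eta+wR$, bound the leaked mass by $w\le Ne^{-\Delta}/(n_c+V\alpha)\le\kappa/4$, bound $p_\eta$ coordinatewise below by a multiple of $\widehat P_\alpha$, and conclude through $\KL(\widehat P_\alpha\,\|\,p)\le\max_y\log(\widehat P_\alpha(y)/p_y)$.

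\textbf{The gap is in the dispersion step.} You say $\bar\epsilon\le\kappa/(16k\Delta)$ ``makes $k\beta\bar\epsilon\lesssim\kappa/16$''. This tacitly treats $\beta$ as comparable to $\Delta$. In fact $\beta=\Delta/(1-(2k+2)\bar\epsilon)\ge\Delta$, and $\beta/\Delta$ blows up as $\bar\epsilon\uparrow 1/(2k+2)$. The second hypothesis alone does not prevent this when $\Delta$ is small: with $k=1$, $\bar\epsilon=0.24$, $\beta=6\kappa$ it holds, yet $2\beta k\bar\epsilon\approx 2.9\kappa$.

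The missing idea is a lower bound on $\Delta$ from the first hypothesis:
\begin{itemize}
\item Since $n_c\le N$, $V\alpha\le N$ and $\kappa\le 1$, the argument of the logarithm is at least $2$, so $\Delta\ge\log 2$.
\item Then $\bar\epsilon\le 1/(16k\log 2)\le 1/(4k+4)$, so $1-(2k+2)\bar\epsilon\ge\tfrac12$ and $\beta\le 2\Delta$.
\item Hence $\eta=2\beta k\bar\epsilon\le 4k\Delta\bar\epsilon\le\kappa/4$.
\end{itemize}
This is the actual role of $N\ge V\alpha$; it does more than keep the logarithm nonnegative. With this in place your budget closes. The correct intermediate bound is $k\beta\bar\epsilon\le\kappa/8$, not $\kappa/16$. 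The paper's symmetric bound $p_\eta\ge e^{-2\eta}\widehat P_\alpha$ then gives $2\eta-\log(1-w)\le\kappa/2+2w\le\kappa$. Your top-anchored sink makes the dispersion one-sided, giving $p_\eta\ge e^{-\eta}\widehat P_\alpha$ and even more room.

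\textbf{Three smaller corrections.}

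First, the constant $2k+2$ has no value-side component, and there is no value error to absorb: Layer-2 values are the keys' own $B_1$ one-hots. The constant comes entirely from the sub-probability structure of Definition~\ref{def:eclose}.
\begin{itemize}
\item An agreeing block contributes between $(1-\bar\epsilon)^2\ge 1-2\bar\epsilon$ and $1$.
\item A disagreeing block contributes at most $2\bar\epsilon$, since one side puts at most $\bar\epsilon$ on the other's state and the other side's off-target mass is at most $\bar\epsilon$.
\item So matches score at least $\beta(k-2k\bar\epsilon)$ and non-matches at most $\beta(k-1+2\bar\epsilon)$.
\end{itemize}
Non-negativity and total mass at most $1$ are what cap an agreeing block at $1$. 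A bare $\ell_1$ or $\ell_\infty$ error model would not give these constants.

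Second, in the leakage bound measure everything against the matched floor $a=\beta(k-2k\bar\epsilon)$: matches are at least $e^a$, the sink at least $V\alpha e^a$, extraneous positions at most $e^{a-\Delta}$. Then no $e^{O(k\beta\bar\epsilon)}$ factor appears that must be silently dropped.

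Third, your deterministic argument is correct and simpler than the paper's. All matches carry $e(y^\star)$, so unequal match weights are irrelevant. Since extraneous values are probability vectors, $w\ge 1/(1+r)>1/2$ already forces $p_{y^\star}>1/2>p_y$. The factor $2\sqrt2$ is not dispersion slack. The paper bounds $\|\hat y-u^\star\|_2\le 2\eret$ and requires this to be below $1/\sqrt2$, the $\ell_2$ distance from a simplex vertex to the nearest argmax boundary. Your criterion $r<1$ is weaker and is implied by the hypothesis, which gives $r<1/(2\sqrt2)$.
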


The two cases place different demands on the matching scores. For a
deterministic rule, every matching precedent has the same next state, so a
sufficient gap between matches and nonmatches guarantees the correct
prediction. For a stochastic rule, the relative weights \emph{among}
matching precedents also matter: their uniform average is what recovers the
transition counts. Layer-1 error perturbs these matched scores by at most
$2\beta k\bar\epsilon$, so $\beta$ must be large enough to suppress
nonmatches without amplifying this dispersion too much. The two conditions
in Theorem~\ref{thm:main} balance these effects.

\section{Experiments}
\label{sec:experiments}

\begin{table}[!b]
\vspace{-5mm}
\centering
\small
\caption{\textbf{Test-set prediction accuracy on held-out rules.}
Deterministic metrics use the ground-truth history (teacher forcing, TF) or greedy autoregressive (AR) rollout.
Stochastic AR cell accuracy is reported for the model and the $k$-gram sampling baseline.
Stochastic AR KL is $D_{\mathrm{KL}}(q_{\text{kgram}} \| q_{\text{model}})$ between the $k$-gram and model predictive distributions on the model's rollout.
}
\begin{tabular}{@{}ll cc cc@{}}
\toprule
& & \multicolumn{2}{c}{Deterministic} & \multicolumn{2}{c}{Stochastic} \\
\cmidrule(lr){3-4} \cmidrule(lr){5-6}
Task & Arch.
  & TF cell acc & AR cell acc
  & AR cell acc ($k$-gram) & AR KL \\
\midrule
1D $V{=}2,\;k{=}3$
  & Shared-head & 1.0000 & 0.9999 & 0.5496\;(0.5485) & 0.0019 \\
  & Dedicated-heads & 1.0000 & 1.0000 & 0.5488\;(0.5485) & 0.0044 \\
\midrule
1D $V{=}3,\;k{=}3$
  & Shared-head & 1.0000 & 1.0000 & 0.3476\;(0.3475) & 0.0026 \\
  & Dedicated-heads & 1.0000 & 1.0000 & 0.3475\;(0.3475) & 0.0004 \\
\midrule
2D VN $V{=}2,\;k{=}5$
  & Shared-head & 1.0000 & 1.0000 & 0.5201\;(0.5197) & 0.0013 \\
  & Dedicated-heads & 1.0000 & 0.9999 & 0.5191\;(0.5197) & 0.0038 \\
\bottomrule

\end{tabular}

\label{tab:results}
\vspace{-3mm}
\end{table}

\subsection{Experimental setup}
\label{sec:setup}

We train two layer transformers, then evaluate them on rules held out from training to measure their ability for spatial in-context learning. We also examine whether the learned internal representations are consistent with the two-stage structure of the theoretical constructions.
We use three settings: 1D $V{=}2, k{=}3$ (elementary cellular automata), 1D $V{=}3, k{=}3$, and 2D von Neumann (VN) $V{=}2, k{=}5$.
The model receives only the first $T_0$ time steps of a trajectory generated under an unknown rule (not the rule table), and its weights are frozen at test time.
Predictions are scored on cells from time step $t = T_0$ through $t = T{-}1$.
Every evaluation trajectory is rejection-sampled so that all $V^k$ neighborhood configurations appear in the $T_0$ context rows, so every scored query is resolvable.
Full details of data generation, training, and evaluation are in Appendix~\ref{app:expdetails}.

\textbf{Rules.}
(1) \emph{Deterministic}: each trajectory draws one rule uniformly from the corresponding split's rule pool; training, validation, and test pools are mutually disjoint, so every val or test trajectory is generated by a rule the model has never seen during training.
(2) \emph{Stochastic}: each trajectory draws a fresh rule by independently sampling a $\text{Dirichlet}(1, \ldots, 1)$ transition row for every neighborhood configuration; because the Dirichlet is continuous, no two trajectories share the same rule.

\textbf{Architectures.}
We instantiate the two gather constructions of Theorem~\ref{thm:gather}: the \emph{Dedicated-heads} construction ($|\mathcal{U}|$ L1 attention heads with no L1 MLP) and the \emph{Shared-head} construction (one L1 attention head with L1 MLP).
Both share a single L2 attention head and a two-layer L2 MLP.

\textbf{Metrics.}
For deterministic rules, we report \emph{TF cell accuracy} (each predicted cell conditioned on the ground-truth history) and \emph{AR cell accuracy} (greedy autoregressive rollout scored against the ground-truth trajectory).
For stochastic rules, \emph{AR cell accuracy} is the fraction of cells in a temperature-1 rollout that match the ground truth. The ceiling is well below~1;
we also report a posterior-predictive sampling baseline, an independent rollout which samples each cell from the add-one-smoothed $k$-gram posterior predictive (its mode maximizes expected one-step accuracy).
We additionally report \emph{AR $D_{\mathrm{KL}}(q_{\text{kgram}} \| q_{\text{model}})$}, computed at each cell during autoregressive generation between the model's predictive distribution and the causal $k$-gram distribution on the model's own rollout.

\subsection{Results}
\label{sec:results} \label{sec:mechanism}

\textbf{Can the model recover unseen rules?} Table~\ref{tab:results} reports test-set performance across three settings and both architectures.
On deterministic rules, all models reach near-perfect cell accuracy under both teacher forcing and autoregressive rollout.
On stochastic rules, both architectures match the $k$-gram baseline in AR cell accuracy, and all AR KL values fall below $0.005$ nats.
Both architectures thus perform accurate in-context prediction for previously unseen rules across all three settings.

\begin{figure}[!b]

\centering
\includegraphics[width=\linewidth]{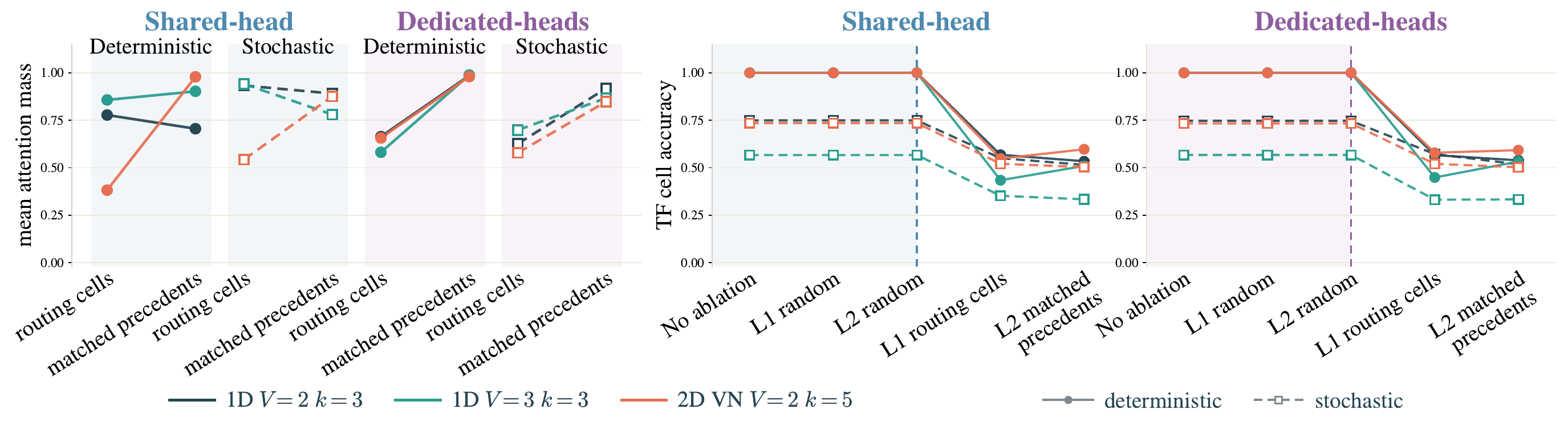}
\caption{\textbf{Attention alignment and ablation.}
\emph{Left:} Mean attention mass on the positions predicted by each construction: L1 routing cells at offsets $\mathcal{U}=D_K\cup D_Q$, and L2 matching precedents, earlier cells whose parent configurations match the query configuration.
\emph{Right:} Cell accuracy when attention to the L1 routing cells or L2 matching precedents is masked at evaluation time, compared with no ablation and controls that mask attention to the same number of randomly selected cells.}
\label{fig:attn_mass}
\end{figure}

\textbf{Do attention heads attend where each construction predicts?}
We next ask whether the model recovers the rule using the constructed two stage gather-and-match. 
We first examine the attention mass.
Figure~\ref{fig:attn_mass} (left) reports the mean attention mass on the construction-predicted positions.
Both architectures place substantial mass on these positions across all three settings.
To further test whether internal model computations rely on these positions, we examine what happens when we mask attention to these cells.
For both Shared-head and Dedicated-heads models, we mask L1 attention to routing cells or L2 attention to matching precedents at evaluation time, using an equal number of randomly selected cells as controls (Figure~\ref{fig:attn_mass}, right).
Across all three settings, masking attention to the theory-predicted cells reduces cell accuracy, while the random controls have little effect.
This supports the role of both the routing and matching stages in the trained models' predictions.

\textbf{Do trained models exhibit the predicted layerwise structure?}
We train linear probes on frozen post-residual representations after embedding, L1 attention, L1 MLP, L2 attention, and L2 MLP. For two targets: (1) the \emph{neighborhood configuration}, the $k$-cell neighborhood that determines the next cell value; (2) the \emph{$k$-gram predictive distribution}, the conditional distribution over the next cell value computed from causal $k$-gram counts in the trajectory prefix with add-one smoothing (the probe minimizes $D_{\mathrm{KL}}(q_{\text{kgram}} \| q_{\text{probe}})$).
Probe training, validation, and test rules are disjoint.
The constructions make the configuration linearly readable after the L1 MLP in Shared-head models and after L1 attention in Dedicated-heads models, and the predictive distribution after L2 attention in both via the simplex readout.
Our logistic probes are an empirical diagnostic of this structure, and both architectures match it across all three settings (Figure~\ref{fig:probes}). Figure~\ref{fig:tsne} visualizes the layerwise structure identified by probes, using t-SNE on the Shared-head model on deterministic 1D $V{=}2, k{=}3$.
After L1 MLP, representations cluster by neighborhood configuration; after L2 attention, by output value.

\begin{figure}[t]
\centering
\includegraphics[width=\linewidth]{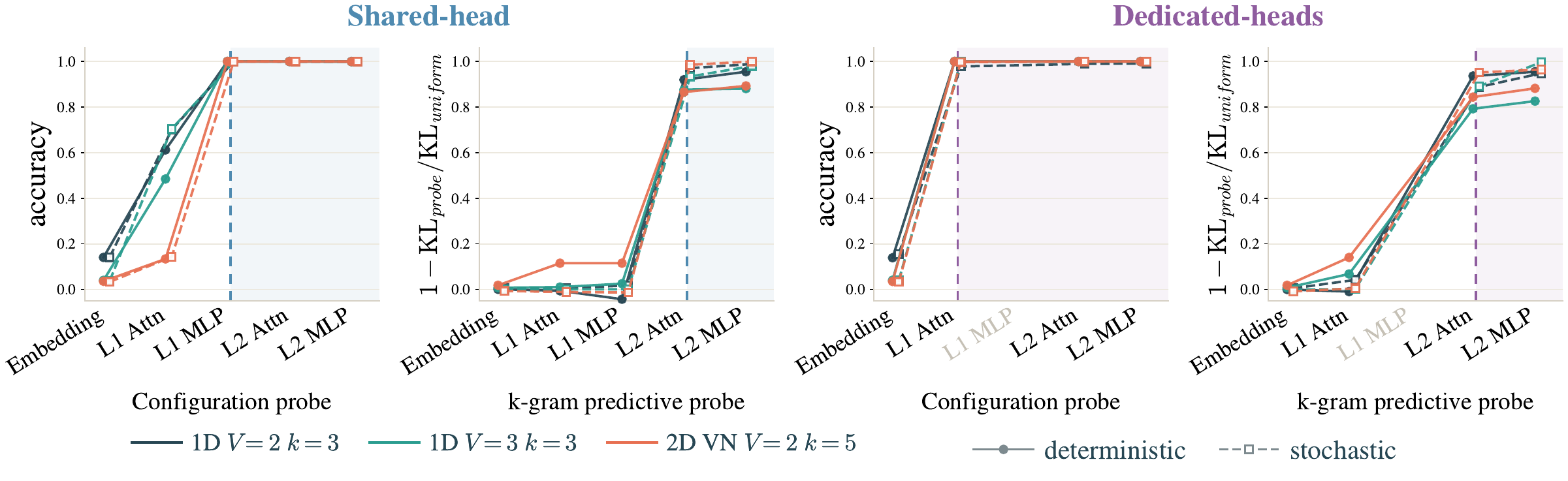}
\caption{\textbf{Linear probe results.} Higher is better.
The configuration probe measures classification accuracy for the $k$-cell neighborhood; the k-gram predictive probe measures $1 - \overline{D_{\mathrm{KL}}(q_{\text{kgram}} \| q_{\text{probe}})} \,/\, D_{\mathrm{KL}}(q_{\text{kgram}} \| \text{Uniform})$, which the axis label abbreviates as $1-\mathrm{KL}_{\mathrm{probe}}/\mathrm{KL}_{\mathrm{uniform}}$.
Solid lines: deterministic; dashed: stochastic.
Vertical lines mark the stage at which the theoretical construction places each target.
}
\vspace{-6mm}
\label{fig:probes}
\end{figure}

\begin{figure}[b]
\vspace{-4mm}
\centering
\includegraphics[width=0.9\linewidth]{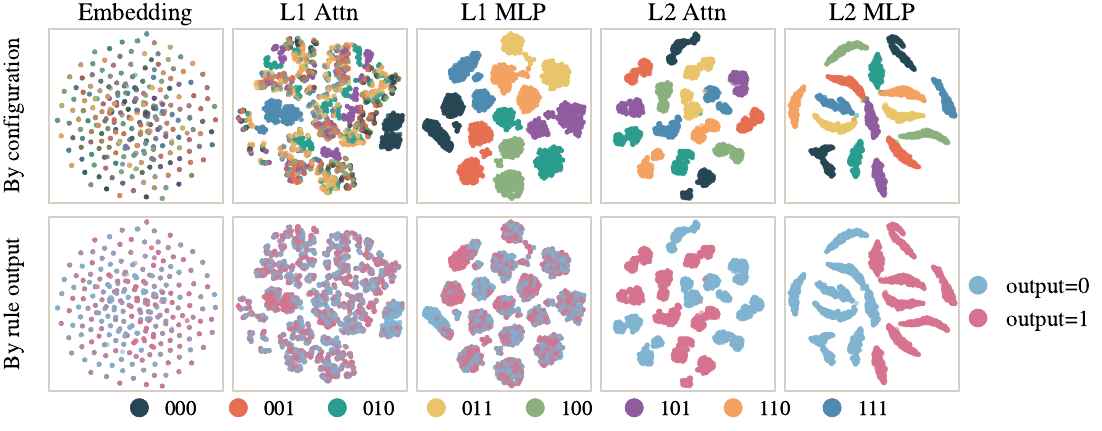}
\caption{\textbf{t-SNE visualization of frozen representations} from the Shared-head model on deterministic 1D $V{=}2, k{=}3$.
Top row (colored by neighborhood configuration): configurations cluster after the L1 MLP.
Bottom row (colored by output value): outputs cluster after L2 attention.
}
\label{fig:tsne}
\end{figure}

\section{Conclusion}
\label{sec:conclusion}

We studied how transformers perform in-context learning when the dependencies
required for prediction are spatial. We introduced the \emph{spatial induction
head}, a two-layer gather-and-match circuit in which the first layer assembles
the relevant spatial context from a serialized sequence and the second matches
that context against precedent to predict the next state. We gave two explicit
realizations of the gather and showed that the positional width required for
spatial routing depends only on the local neighborhood and spatial dimension,
not on the total grid volume or trajectory horizon. We further showed that the matching computation realizes Bayesian counting: the end-to-end circuit can approach the Bayes posterior predictive
arbitrarily closely for stochastic rules and predicts exactly for deterministic
rules. Experiments on 1D and 2D cellular automata demonstrate generalization to
held-out rules and provide mechanistic evidence consistent with the gather-and-match computation.

Our analysis focuses on cellular automata, whose finite-state local dynamics
provide a particularly tractable setting for isolating spatial induction. 
More broadly, by separating spatial routing
from precedent matching, our construction provides a framework for
understanding induction mechanisms in more general types of data with spatial
structure.

\subsubsection*{Acknowledgments}
We thank Kilian Weinberger for his support. S.D. was partly supported by NSF CCF-2312774, NSF OAC-2311521, NSF IIS-2442137, and an AI2050 Early Career Fellowship program at Schmidt Sciences. This work was also supported by the National Science Foundation (NSF) through the AI Research Institutes program Award No.~\mbox{DMR-2433348}.

\bibliographystyle{iclr2027_conference}
\bibliography{references}

\appendix
\numberwithin{theorem}{section}
\setcounter{theorem}{0}
\addtocontents{toc}{\protect\setcounter{tocdepth}{2}}
\tableofcontents
\section{Roadmap and reading paths}
\label{app:roadmap}

The appendices are organized thematically. Part~I formalizes the task, detailing grid serialization, prediction correctness, and the limitations of standard positional schemes. Part~II establishes grid-based mathematical foundations, yielding two interface corollaries. Parts~III and~IV describe Layers~1 and~2 of the architecture, respectively. Part~V outlines experimental methods.

\begin{itemize}[leftmargin=1.6em,itemsep=2pt,topsep=3pt]
  \item \textbf{Notation} (Appendix~\ref{app:notation}). Symbol reference and running example.
  \item \textbf{Serialization and limits of sequence-relative routing} (Appendix~\ref{app:serialization}), \emph{Part~I}. Formalizes flattening (Section~\ref{sec:task:routing}) and separator functions. Proves that sequence-relative biases (Definition~\ref{def:seqrpe}) cannot route local neighborhoods on a torus in a single layer, motivating the spatial addressing in Part~II.
  \item \textbf{Coverage and Bayes-optimal prediction} (Appendix~\ref{app:bayes}), \emph{Part~I}. Defines resolvable queries and the optimal predictor for empirical accuracy.
  \item \textbf{Spatial addressing with grid-independent width} (Appendix~\ref{app:addressing}), \emph{Part~II}. Demonstrates that $O(\lvert\Nset\rvert d\log\lvert\Nset\rvert)$ coordinates suffice to score routing cells at offsets $\Uset$ above other visible keys, independent of grid size.
  \item \textbf{Colorings for Shared-head decoding} (Appendix~\ref{app:colouring}), \emph{Part~II}. Formulates an injective grid labeling scheme for translates $z+\Uset$, corresponding to a Cayley graph proper coloring.
  \item \textbf{Layer 1: spatial gather constructions} (Appendix~\ref{app:constructions}), \emph{Part~III}. Details the forward pass, residual stream, and weights. Gathers are performed using either $\lvert\Uset\rvert$ Dedicated-heads or a single Shared-head with a separating MLP.
  \item \textbf{Layer 2: Bayesian matching and end-to-end guarantees} (Appendix~\ref{app:synthesis}), \emph{Part~IV}. Constructs the matching mechanism. Shows that softmax over an equally scored set implements a counter, yielding the Bayes predictive distribution. Bounds the required separation $\Delta$ against Layer~1's dispersion error $\eta$.
  \item \textbf{Experimental methods and additional results} (Appendix~\ref{app:expdetails}), \emph{Part~V}. Details data generation, training, and evaluation protocols, including complete metric tables and mechanistic analyses.
\end{itemize}

\begin{table}[t]
\centering
\caption{Dependencies between construction stages.}
\small
\begin{tabular}{@{}lp{0.60\linewidth}@{}}
\toprule
Stage result & Downstream usage \\
\midrule
Corollary~\ref{cor:head} & Positional features for offset selection in previous time steps \\
Corollary~\ref{cor:colour-interface} & Distinct color slots and storage costs (Construction~II) \\
Corollary~\ref{cor:gather-interface} & State blocks (approximate or exact) per routed offset \\
Theorem~\ref{thm:bayesmatch} & Equal-weight matching for empirical counts and Dirichlet smoothing \\
Lemmas~\ref{lem:noise}, \ref{lem:dispersion}, \ref{lem:biasbudget} & Bounds on separation, dispersion, and predictive cost \\
Theorem~\ref{thm:main-app} & Finite-parameter prediction guarantee \\
\bottomrule
\end{tabular}
\label{tab:dependencies}
\end{table}

\paragraph{Reading paths.}
The shortest sequence comprising the main result involves six statements. Read the Appendix~\ref{app:constructions} introduction, followed by the two gather theorems (Theorems~\ref{thm:parallel-app} and~\ref{thm:onehead-app}) and their Layer~2 interface (Corollary~\ref{cor:gather-interface}). Continue to the Appendix~\ref{app:synthesis} introduction, the counting identity (Theorem~\ref{thm:bayesmatch}), and the end-to-end guarantee (Theorem~\ref{thm:main-app}). Two prerequisites are referenced on demand: Corollary~\ref{cor:head} (positional addressing) and Corollary~\ref{cor:colour-interface} (colorings for Theorem~\ref{thm:onehead-app}).

Suggested reading paths by topic:

\begin{itemize}[leftmargin=1.6em,itemsep=2pt,topsep=3pt]
  \item \emph{Circuit construction:} Appendices~\ref{app:constructions} and~\ref{app:synthesis} (Parts~III and~IV).
  \item \emph{Mathematical foundations:} Part~II.
  \item \emph{Formal problem setting:} Part~I.
  \item \emph{Symbol reference:} Appendix~\ref{app:notation}.
\end{itemize}
\section{Notation and conventions}
\label{app:notation}

This section catalogs symbols defined throughout the text; Table~\ref{tab:notation} provides their definitions and points of introduction. Section~\ref{app:not:collisions} clarifies overloaded symbols, and Section~\ref{app:not:instance} instantiates task parameters for the three experimental settings.

\subsection{Conventions}
\label{app:not:conventions}

Coordinates, states, time steps, sequence positions, and residual blocks ($B_0,\dots,B_4$) are $0$-indexed; dimensions, offsets, and heads are $1$-indexed. Let $[n] \coloneqq \{0,\dots,n-1\}$. Grid coordinate arithmetic is modulo the respective side length, making $\Grid$ a torus such that $z + \delta$ always identifies a valid cell.

Grid side lengths are denoted $W_1,\dots,W_d$, and its \emph{volume} is $\lvert\Grid\rvert = \prod_j W_j$.

A \emph{row} designates an innermost-axis grid row, comprising the $W_d$ cells followed by a \texttt{ROW\_SEP}. A \emph{time step} $t$ corresponds to one automaton layer; a query at step $t$ accesses step $t-1$. The term \emph{row} strictly retains its spatial meaning.

Vectors are column vectors. The notation $\lvert\cdot\rvert$ denotes set cardinality or the distance to $0$ for group elements. The one-hot vector for coordinate $v$ is $e_v$; for a state $x$, this is written $e(x)$ to distinguish it from the dual-group generators $e_1,\dots,e_d$.

\subsection{Symbols}
\label{app:not:symbols}

\begin{longtable}{@{}l@{\hspace{1.2em}}p{0.46\linewidth}@{\hspace{1.2em}}p{0.24\linewidth}@{}}
\caption{Symbols, and where each is fixed.}
\label{tab:notation}\\
\toprule
symbol & meaning & fixed at \\
\midrule
\endfirsthead
\multicolumn{3}{@{}l}{\small\itshape Table~\ref{tab:notation}, continued}\\
\toprule
symbol & meaning & fixed at \\
\midrule
\endhead
\midrule
\multicolumn{3}{r@{}}{\small\itshape continued on the next page}\\
\endfoot
\bottomrule
\endlastfoot

\multicolumn{3}{@{}l}{\emph{Grid and dual group}}\\
$d$ & number of spatial dimensions & Def.~\ref{def:ca} \\
$W_j$ & side length of axis $j$ & Def.~\ref{def:ca} \\
$\Wmax$ & largest side length, $\max_j W_j$ & Lem.~\ref{lem:degreeone} \\
$\Grid$, $\lvert\Grid\rvert$ & the torus $[W_1]\times\cdots\times[W_d]$; its
  volume $\prod_j W_j$ & Def.~\ref{def:ca} \\
$z$, $z_j$, $\delta$ & a cell of $\Grid$; its $j$-th coordinate; an offset &
  Def.~\ref{def:ca} \\
$\Gd$ & dual group, the characters of $\Grid$ & App.~\ref{app:addr:fourier} \\
$\omega$, $e_j$ & a character; the $j$-th generator of $\Gd$ &
  App.~\ref{app:addr:fourier} \\
$\varphi_\omega$, $\chi_\omega$ & phase $2\pi\sum_j\omega_j\delta_j/W_j$; the
  character $e^{i\varphi_\omega}$ & App.~\ref{app:addr:fourier} \\
$\supp\hat g$ & spectrum of $g$, the $\omega$ with $\hat g(\omega)\neq0$ &
  App.~\ref{app:addr:fourier} \\
\addlinespace
\multicolumn{3}{@{}l}{\emph{States, rules and the prior}}\\
$\States$, $V$ & state set $[V]$, and its size & Def.~\ref{def:ca} \\
$T$, $t$ & horizon; a time step & Def.~\ref{def:ca} \\
$T_0$ & context length, the number of time steps shown before scoring
  begins & \S\ref{sec:experiments} \\
$(t,z)$ & the cell $z$ at step $t$ & Def.~\ref{def:ca} \\
$x_{t,z}$ & state of cell $z$ at step $t$ & Def.~\ref{def:ca} \\
$f$ & the local kernel $\States^k\to\Delta^{V-1}$ & Def.~\ref{def:ca} \\
$\alpha$ & Dirichlet concentration of the prior on $f$ & Def.~\ref{def:ca} \\
$P_{t,z}$ & parent configuration of $(t,z)$ & Def.~\ref{def:ca} \\
$Q_{t,z}$ & query configuration, $P_{t,z+\usucc}$ & Def.~\ref{def:resolvable} \\
$c$ & a \emph{matching configuration}: an earlier $P_{t',z'}$ equal to
  $Q_{t,z}$; its cell is a \emph{matching precedent} &
  Def.~\ref{def:resolvable} \\
$\widehat P_\alpha$ & Bayes posterior predictive & Thm.~\ref{thm:bayes} \\
\addlinespace
\multicolumn{3}{@{}l}{\emph{Neighborhood and routing}}\\
$(\delta_1,\dots,\delta_k$), $k$ & the neighborhood, a list of distinct offsets,
  and its size & Def.~\ref{def:ca} \\
$D_K$, $D_Q$ & key offsets; query offsets $D_K+\usucc$ &
  Def.~\ref{def:ca}, \ref{def:routing} \\
$\usucc$ & successor offset, the $d$-th coordinate vector &
  Def.~\ref{def:serialization} \\
$\Uset$ & routing set $D_K\cup D_Q$; the cells at these offsets are the
  \emph{routing cells} & Def.~\ref{def:routing} \\
$\Nset$ & a generic set of offsets to be isolated & Thm.~\ref{thm:isolation} \\
\addlinespace
\multicolumn{3}{@{}l}{\emph{Serialization}}\\
$\rho_\Grid$, $r(z)$ & row-major index of $z$; its row &
  Def.~\ref{def:serialization} \\
$\rho(t,z)$ & sequence position of the cell $(t,z)$ &
  Def.~\ref{def:serialization} \\
$N$ & stream length $T(\lvert\Grid\rvert+\lvert\Grid\rvert/W_d)$ &
  Def.~\ref{def:serialization} \\
$n$, $n'$ & a query position; a key position & \S\ref{sec:task:model} \\
$\tau(n)$, $\kappa(n)$ & virtual time step and virtual cell of position $n$ &
  App.~\ref{app:serialization:formal} \\
query cell & a cell token which poses a query & Def.~\ref{def:routing} \\
target cell & the cell $\kappa(n)+\usucc$ which $n$ predicts &
  Lem.~\ref{lem:successor} \\
\texttt{ROW\_SEP} & separator closing an innermost row &
  Def.~\ref{def:serialization} \\
\texttt{TIME\_SEP} & separator closing a time step &
  Def.~\ref{def:serialization} \\
\addlinespace
\multicolumn{3}{@{}l}{\emph{The model}}\\
$W_E$, $e_n$, $P$, $p_n$ & token embedding; one-hot of the token at $n$;
  positional table; its column $n$ & \S\ref{sec:task:model} \\
$W_Q^{(l)},W_K^{(l)},W_V^{(l)}$ & query, key and value projections at layer $l$
  & \S\ref{sec:task:model} \\
$g$, $\gamma$ & the positional kernel on $\Grid$; its isolation margin &
  Def.~\ref{def:isolating} \\
$\Lambda$ & Layer-1 logit scale & Cor.~\ref{cor:scale} \\
$\Gamma_1$ & Layer-1 separator gate strength &
  Prop.~\ref{prop:sepgate-l1} \\
$\Gamma_2$ & Layer-2 key-validity gate strength &
  Prop.~\ref{prop:keygate-l2} \\
$\nu_n$ & key-validity indicator $s_n+\zeta_n$ &
  Def.~\ref{def:l2match} \\
$\beta$ & Layer-2 query scaling & Def.~\ref{def:l2match} \\
\addlinespace
\multicolumn{3}{@{}l}{\emph{Widths}}\\
$\dmodel$ & residual-stream width & \S\ref{sec:task:model} \\
$\dpos$ & width of the positional table, $\dS+3$ for Construction~I
  & Def.~\ref{def:stream} \\
$\dS$ & addressing dimension, $\lvert\supp\hat g\setminus\{0\}\rvert$ &
  \S\ref{app:addr:fourier} \\
$\dmlp$ & hidden width of the Layer-1 MLP & \S\ref{sec:task:model} \\
\addlinespace
\multicolumn{3}{@{}l}{\emph{Residual blocks and projections}}\\
$B_0$ & separator indicator, auxiliary-token marker, and the constant $1$ & Def.~\ref{def:stream} \\
$\mu_n$ & auxiliary-token marker $\mathbf 1[n=\star]$, a coordinate of $B_0$
  & Def.~\ref{def:stream} \\
$\sigma_\star$ & Layer-2 score of the auxiliary token $\star$ &
  Def.~\ref{def:l2match} \\
$B_1$ & own state one-hot $e(x_{t,z})$ & Def.~\ref{def:stream} \\
$B_2$ & positional table entry $p_{t,z}$ & Def.~\ref{def:stream} \\
$\zeta_n$ & initial-step indicator $\mathbf 1[\tau(n)=0]$, a coordinate of $B_2$
  & Def.~\ref{def:stream} \\
$B_2^{\mathrm{col}}$ & own-color block of $B_2$, zero at separators
  & Def.~\ref{def:stream} \\
$B_3$ & pattern scratch, $B_3^{\mathrm{pat}} \oplus B_3^{\mathrm{agg}}$
  & Def.~\ref{def:stream} \\
$B_4$ & output slot, written by Layer 2 & Def.~\ref{def:stream} \\
$\Pi_{B_r}$ & projection onto block $B_r$ & Def.~\ref{def:stream} \\
$\iota_\delta$ & embedding of $\mathbb{R}^V$ into block $\delta$ of $B_3^{\mathrm{pat}}$ &
  Thm.~\ref{thm:parallel-app} \\
$\Sigma_Q$, $\Sigma_K$ & Layer-2 selection projections onto $\mathbb{R}^{kV}$ &
  Def.~\ref{def:l2match} \\
\addlinespace
\multicolumn{3}{@{}l}{\emph{Coloring}}\\
$c$ & a coloring $\Grid\to[M]$ & Def.~\ref{def:colouring} \\
$M$ & number of colors used & Def.~\ref{def:colouring} \\
$M^\star$ & chromatic number of $\Gamma_\Uset$ & Def.~\ref{def:colouring} \\
$\Gamma_\Uset$ & Cayley graph on $(\Uset-\Uset)\setminus\{0\}$ &
  Def.~\ref{def:colouring} \\
$\chi$, $K$, $M_K$ & periodic coloring; its subgroup; its color count
  $\lvert\Grid/K\rvert$ & Def.~\ref{def:colouring},
  Lem.~\ref{lem:periodic} \\
$n_{\max}$ & most visible cells sharing one color &
  Thm.~\ref{thm:onehead-app} \\
\addlinespace
\multicolumn{3}{@{}l}{\emph{Budgets, gaps and counts}}\\
$\eblock$ & per-block error, the routing budget &
  Def.~\ref{def:eclose}; discharged by Lem.~\ref{lem:blockerror} \\
$\eret$ & retrieval error at score gap $\Delta$ &
  Cor.~\ref{lem:det-retrieval} \\
$\Delta$ & Layer-2 separation $\beta(1-(2k+2)\eblock)$, the gap between
  matching and non-matching keys & Lem.~\ref{lem:noise} \\
$\eta$ & dispersion within the matched set, $2\beta k\eblock$ &
  Lem.~\ref{lem:dispersion} \\
$\kappa$ & divergence budget of the main theorem &
  Thm.~\ref{thm:main-app} \\
$\KL$ & Kullback--Leibler divergence & Thm.~\ref{thm:main} \\
$n_c$, $n_{c,y}$ & times $c$ has been seen; times it was followed by $y$ &
  Def.~\ref{def:resolvable} \\
$\mathcal{M}_{t,z}$ & the matched set & \S\ref{app:con:l2match} \\
\end{longtable}

\subsection{Overloaded notation}
\label{app:not:collisions}

This work integrates harmonic analysis on finite abelian groups, graph coloring, and transformer architectures. To adhere to established conventions in these fields, eight symbols are overloaded. The surrounding context unambiguously determines the active definition, and no single equation mixes distinct definitions of the same symbol.

\smallskip
\noindent\begin{tabular}{@{}lp{0.78\linewidth}@{}}
\toprule
symbol & readings \\
\midrule
$c$
  & a \emph{configuration} in $\States^{k}$ (Def.~\ref{def:resolvable}); a
    \emph{coloring} $\Grid\to[M]$ (Def.~\ref{def:colouring}). \\
$\kappa$
  & the \emph{virtual cell index} $\kappa(n)$, always applied to a position
    (App.~\ref{app:serialization:formal}); the \emph{divergence budget}
    $\kappa\in(0,1]$, always bare (Thm.~\ref{thm:main-app}). \\
$B$
  & a \emph{residual block} $B_0,\dots,B_4$, always subscripted
    (Def.~\ref{def:stream}); the \emph{bounding box} of a target set
    (Lem.~\ref{lem:degreeone}); the MLP \emph{gate constant}
    $B_\delta = G_\delta$ of \eqref{eq:gate}, subscripted by an offset. \\
$P$
  & the \emph{positional table} (\S\ref{sec:task:model}); the \emph{parent
    configuration} $P_{t,z}$ (Def.~\ref{def:ca}); the \emph{axis factor} $P_j$
    of a box kernel (Lem.~\ref{lem:degreeone}). \\
$Q$
  & the \emph{bilinear form} realizing a kernel (Lem.~\ref{lem:realization});
    the \emph{box kernel} $Q = \sum_j P_j$ (Lem.~\ref{lem:degreeone}); the
    \emph{query configuration} $Q_{t,z}$ (Def.~\ref{def:resolvable}). \\
$M$
  & the \emph{number of colors} of an admissible coloring
    (Def.~\ref{def:colouring}), subscripted $M_K$ for a periodic one and
    starred $M^\star$ for the chromatic number; the \emph{score matrix} in the
    proof of Lem.~\ref{lem:realization}, in an appendix which never colors
    anything; the \emph{matched set} $\mathcal{M}_{t,z}$, calligraphic and
    subscripted (\S\ref{app:con:l2match}). \\
$\Delta$
  & the \emph{logit separation} (Lem.~\ref{lem:noise}); the \emph{simplex}
    $\Delta^{V-1}$, always superscripted (Def.~\ref{def:ca}); the \emph{maximum
    degree} in the Hajnal--Szemer\'edi bound
    (proof of Lem.~\ref{lem:Mbounds}). \\
$G$
  & the \emph{spatiotemporal kernel} $G(\delta,s)$, always applied to an
    offset and a step displacement (proof of Cor.~\ref{cor:head}, Stage~6);
    the grid $\Grid$ and the Cayley graph $\Gamma_\Uset$, both distinguished
    by font; the MLP \emph{gain} $G_\delta$, always subscripted (proof of
    Thm.~\ref{thm:onehead-app}). \\
\bottomrule
\end{tabular}
\smallskip

\noindent
Additionally, $\alpha$ denotes the Dirichlet concentration (Definition~\ref{def:ca}), except when written as $\alpha^{(l)}_{n,n'}$, where it represents the attention weight (\S\ref{sec:task:model}) with a layer superscript and positional subscripts.

\subsection{The experimental settings}
\label{app:not:instance}

The experiments (Appendix~\ref{app:expdetails}) cover three neighborhood types: binary $1$D ($V=2,k=3$), ternary $1$D ($V=3,k=3$), and binary $2$D von Neumann ($V=2,k=5$). Table~\ref{tab:instance-task} specifies the deterministic instances considered below, on $\mathbb{Z}_{16}$, $\mathbb{Z}_{32}$, and $\mathbb{Z}_6^2$, respectively. The stochastic experiments use the same neighborhood types but have different task parameters where indicated in Table~\ref{tab:settings}; in particular, the stochastic ternary $1$D setting uses $\mathbb{Z}_{16}$. Table~\ref{tab:instance-colour} gives the colorings for the instances in Table~\ref{tab:instance-task} alongside the bounds from Appendix~\ref{app:colouring}. Table~\ref{tab:instance-sizes} (Appendix~\ref{app:constructions}) details the widths and scales required by the two constructions.

\begin{table}[t]
\centering
\caption{The three experimental settings as tasks.}
\label{tab:instance-task}
\small
\begin{tabular}{@{}lccc@{}}
\toprule
& 1D, $V{=}2$ & 1D, $V{=}3$ & 2D von Neumann \\
\midrule
$\Grid$                                 & $\mathbb{Z}_{16}$ & $\mathbb{Z}_{32}$ & $\mathbb{Z}_6^2$ \\
$d$, $\Wmax$, $\lvert\Grid\rvert$        & $1$, $16$, $16$ & $1$, $32$, $32$ & $2$, $6$, $36$ \\
$V$, $T$                                & $2$, $10$ & $3$, $12$ & $2$, $12$ \\
$D_K$, $k$                              & $\{-1,0,1\}$, $3$ & $\{-1,0,1\}$, $3$ & von Neumann, $5$ \\
$\usucc$, $\lvert\Uset\rvert$           & $1$, $4$ & $1$, $4$ & $(0,1)$, $8$ \\
tokens per step, $N$                    & $17$, $170$ & $33$, $396$ & $42$, $504$ \\
\bottomrule
\end{tabular}
\end{table}

\apppart{Part I}{The task}
\section{Serialization and limits of sequence-relative routing}
\label{app:serialization}
\label{app:seqrpe}

This section comprises three parts. Section~\ref{app:serialization:formal} specifies the mapping from cells to sequence positions, extending it to separators via a virtual cell index. Section~\ref{app:ser:separators} addresses boundary conditions introduced by row separators. Section~\ref{app:serialization:seqrpe} proves that sequence-relative positional biases cannot strictly isolate required routing cells. Appendix~\ref{app:addressing} constructs an absolute positional table for spatial addressing.

\subsection{Formal serialization and indexing}
\label{app:serialization:formal}

\begin{definition}[Row-terminated serialization]
\label{def:serialization}
Cells are emitted in row-major order:
\[
  \rho_{\Grid}(z)
  \;\coloneqq\;
  \sum_{j=1}^{d} z_j \prod_{l>j} W_l
  \;\in\; [\lvert\Grid\rvert],
  \qquad
  r(z)
  \;\coloneqq\;
  \bigl\lfloor \rho_{\Grid}(z)/W_d \bigr\rfloor
  \;\in\; [\lvert\Grid\rvert/W_d].
\]
Here, $\rho_{\Grid}(z)$ is the flattened grid index of cell $z$, and $r(z)$ is the global index of the innermost row containing $z$, such that $\rho_{\Grid}(z)=r(z)W_d+z_d$. Let
\[
  \usucc \;\coloneqq\; (0,\dots,0,1)\in\Grid
\]
denote the \emph{successor offset}, which corresponds to advancing one position within a row.

Every innermost row concludes with a separator: a \texttt{TIME\_SEP} for the final row of a time step, and a \texttt{ROW\_SEP} otherwise. Letting $(r,c)$ denote the cell in row $r$ with last coordinate $c$, one time step is serialized as:
\begin{equation}
\label{eq:stream}
  \underbrace{
    \underbrace{x_{t,(0,0)},\,\dots,\,x_{t,(0,W_d-1)}}_{\text{row }0},\;
    \texttt{ROW\_SEP},\;\dots,\;
    \underbrace{x_{t,(m,0)},\,\dots,\,x_{t,(m,W_d-1)}}_{\text{row }m=\lvert\Grid\rvert/W_d-1},\;
    \texttt{TIME\_SEP}
  }_{\lvert\Grid\rvert+\lvert\Grid\rvert/W_d\text{ tokens}}.
\end{equation}
The complete stream comprises \eqref{eq:stream} for $t=0,\dots,T-1$, with total length
\[
  N \;\coloneqq\; T(\lvert\Grid\rvert+\lvert\Grid\rvert/W_d).
\]
The cell $(t,z)$ occupies sequence position
\begin{equation}
\label{eq:rho}
  \rho(t,z)
  \;=\;
  t(\lvert\Grid\rvert+\lvert\Grid\rvert/W_d)+\rho_{\Grid}(z)+r(z),
\end{equation}
where the terms represent tokens in prior time steps, preceding cells in the current step, and preceding row separators. The map $\rho: [T]\times\Grid\to[N]$ is thus an order-preserving injection.
\end{definition}

\begin{exmp}[One dimension]
In $d=1$, each time step is one row, so $r(z)=0$ and $\lvert\Grid\rvert/W_d=1$. Writing
$W=W_1$, one time step is $W$ cell tokens followed by one
\texttt{TIME\_SEP}, and
\[
  \usucc=1,
  \qquad
  \rho(t,z)=(W+1)t+z.
\]
\end{exmp}

\begin{exmp}[Two dimensions]
In $d=2$, with grid dimensions $W_1 \times W_2$, the innermost row length is $W_2$. For a cell at $z = (z_1, z_2)$:
\[
  \rho_{\Grid}(z) = z_1 W_2 + z_2,
  \qquad
  r(z) = z_1.
\]
Here, $r(z)$ extracts the row coordinate $z_1$. One time step contains $W_1$ rows of length $W_2$, requiring $W_1$ separators. The sequence position is therefore:
\[
  \rho(t,z) = t(W_1 W_2 + W_1) + (z_1 W_2 + z_2) + z_1.
\]
\end{exmp}

\paragraph{Causal masking.}
The model employs a standard causal mask; a query at sequence position $n$ attends only to positions $n'\le n$. For cell tokens, $(t',z')$ is visible to $(t,z)$ if and only if $\rho(t',z')\le\rho(t,z)$.

\paragraph{Virtual cell index.}
We assign a virtual spatial index and time step to every sequence position except the final \texttt{TIME\_SEP}. For a cell token $n=\rho(t,z)$, define
\[
  (\tau(n),\kappa(n))\coloneqq(t,z).
\]
For a separator $n$ whose successor cell at $n+1$ is $(t',z')$, define
\[
  (\tau(n),\kappa(n))\coloneqq(t',z'-\usucc).
\]
Thus, a \texttt{ROW\_SEP} inherits its row's time step, while a \texttt{TIME\_SEP} assumes the subsequent time step. The final \texttt{TIME\_SEP}, at $n=N-1$, has no successor cell, so $\tau(N-1)$ and $\kappa(N-1)$ are left undefined. It poses no query, and as the last position it is visible as a key only to itself. We therefore exclude it from every requirement below which quantifies over virtual addresses, including the gather, and give its column of the positional table arbitrary features; no scored prediction depends on it.

\begin{lemma}[Constant successor offset]
\label{lem:successor}
Under Definition~\ref{def:serialization}, a cell token $(t,z)$ is followed by a
cell token exactly when $z_d<W_d-1$, in which case that successor is
$(t,z+\usucc)$. Every separator other than the final \texttt{TIME\_SEP} is also
followed by a cell token. Whenever position $n$ is followed by a cell token,
that token, which we call the \emph{target cell} of $n$, is
\[
  (\tau(n),\kappa(n)+\usucc).
\]
Hence every position which predicts a cell does so at the same spatial offset
$\usucc$ from its virtual cell index.
\end{lemma}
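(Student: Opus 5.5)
The plan is a direct case analysis on the type of the token at position $n$, using only the explicit position formula \eqref{eq:rho} and the virtual-index convention. The row-major index satisfies $\rho_\Grid(z)=r(z)W_d+z_d$. Combined with \eqref{eq:rho}, this means that within a time step the tokens of row $r$ occupy a block of $W_d$ consecutive positions. Each such block is followed immediately by exactly one separator.

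\emph{Case 1: a cell token $(t,z)$.} Suppose first that $z_d<W_d-1$. Then $z+\usucc$ differs from $z$ only in the last coordinate, with no wraparound. Hence $r(z+\usucc)=r(z)$ and $\rho_\Grid(z+\usucc)=\rho_\Grid(z)+1$, so \eqref{eq:rho} gives $\rho(t,z+\usucc)=\rho(t,z)+1$. Because $\rho$ is injective, this identifies the successor token as the cell $(t,z+\usucc)$. Now suppose $z_d=W_d-1$. Then $z$ is the last cell of its row, and the next position is that row's separator. This is the count of preceding separators in \eqref{eq:rho}: the next cell, whether in row $r(z)+1$ or in step $t+1$, sits two positions later. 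So a cell is followed by a cell exactly when $z_d<W_d-1$. In that case the target is $(t,z+\usucc)=(\tau(n),\kappa(n)+\usucc)$, since $(\tau(n),\kappa(n))=(t,z)$.

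\emph{Case 2: a separator $n\neq N-1$.} I will show that position $n+1$ is the first cell $(t',z')$ of the next row, with $z'_d=0$. For a \texttt{ROW\_SEP} this holds within the same step, where $r(z')=r+1$. For a non-final \texttt{TIME\_SEP} it holds at step $t+1$ with $z'=0$. In both cases the claim is again checked by evaluating \eqref{eq:rho}. By definition $(\tau(n),\kappa(n))=(t',z'-\usucc)$, so $(\tau(n),\kappa(n)+\usucc)=(t',z')$, which is exactly the target.

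Combining the two cases gives the final sentence of the statement. The only delicate point is the wraparound in $z'-\usucc$ on the torus. For a separator, $\kappa(n)$ has last coordinate $W_d-1$ and lies in row $r(z')$, not in the preceding row. The identity $\kappa(n)+\usucc=z'$ still holds modulo $W_d$, so the statement is unaffected, but I would note this explicitly. Beyond that, the argument is bookkeeping on the position formula.
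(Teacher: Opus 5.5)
Your proposal is correct and follows the paper's own two-case proof: a cell is followed either by $z+\usucc$ or by its row's separator, and a non-final separator is followed by the first cell of the next row, so that case reduces to the definition $\kappa(n)=z'-\usucc$. Your version additionally checks each adjacency explicitly from \eqref{eq:rho}; the one case your ``two positions later'' sentence skips, the last cell of step $T-1$, is immediate because that cell sits at position $N-2$ and is followed by the final \texttt{TIME\_SEP}.
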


\begin{proof}
Within an innermost row, a non-final cell $z$ is followed by $z+\usucc$, and a row-final cell by its separator. Non-final separators are followed by the first cell of the subsequent row or time step. For a separator with successor $(t',z')$, the virtual index yields $\tau(n)=t'$ and $\kappa(n)=z'-\usucc$; its target remains $\kappa(n)+\usucc=z'$.
\end{proof}

\begin{rmk}[Size of the routing set]
\label{rmk:usize}
Because $\Uset = D_K \cup (D_K+\usucc)$ is a union of two sets of size $k$, and
because the offsets are distinct modulo $(W_1,\dots,W_d)$ by
Definition~\ref{def:ca},
\[
  k \;\leq\; \lvert\Uset\rvert \;\leq\; \min(2k,\lvert\Grid\rvert),
\]
which is the bound quoted in \S\ref{sec:task:routing}. The lower bound is
attained exactly when $D_K+\usucc = D_K$, in which case $\Uset = D_K$. This
degenerate case is possible: on $\mathbb{Z}_3^2$ with
$D_K = \{(0,0),(0,1),(0,2)\}$ and $\usucc = (0,1)$, translation by $\usucc$
permutes $D_K$, so $k = \lvert\Uset\rvert = 3$. It requires $D_K$ to be a union
of cosets of $\langle\usucc\rangle$, and so to wrap the innermost axis
completely; every neighborhood we consider has $\lvert\Uset\rvert \geq k+1$.
\end{rmk}

\subsection{Boundary and separator queries}
\label{app:ser:separators}
\label{app:separators}
\label{app:sep:rowfinal}
\label{app:sep:sepquery}

Serialization introduces two boundary cases: a \emph{row-final} cell ($z_d=W_d-1$) predicts a separator, and a separator predicts the first cell of the next row. These cases, fixed by emission order, are summarized in Table~\ref{tab:duty}.

\begin{table}[t]
\centering
\caption{Prediction duties under Definition~\ref{def:serialization}. For a
non-row-final cell and for a separator, the predicted token is a cell and its
spatial location is expressed through the constant successor offset
$\usucc$. A row-final cell instead predicts a separator whose identity is fixed
by position.}
\label{tab:duty}
\small
\begin{tabular}{lll}
\toprule
token & predicts & spatial target \\
\midrule
cell $(t,z)$, $z_d<W_d-1$
  & cell $(t,z+\usucc)$
  & $z+\usucc$ \\
cell $(t,z)$, $z_d=W_d-1$
  & a separator
  & none \\
separator
  & first cell of the next row
  & $\kappa(n)+\usucc$ \\
\bottomrule
\end{tabular}
\end{table}

\begin{lemma}[Row-final targets are fixed by position]
\label{lem:rowfinal}
Let $n=\rho(t,z)$ hold a cell token with $z_d=W_d-1$ and $n<N-1$. Then the
token at $n+1$ is a \texttt{TIME\_SEP} if $r(z)=\lvert\Grid\rvert/W_d-1$ and a
\texttt{ROW\_SEP} otherwise. In particular, conditioned on the position $n$,
the next token is deterministic and independent of the cellular-automaton
kernel $f$.
\end{lemma}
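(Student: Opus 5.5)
The plan is to read the claim directly off the serialization formula \eqref{eq:rho} and the emission order in Definition~\ref{def:serialization}. We first locate the token at position $n+1$, and then note that this location depends only on $n$.

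\emph{Step 1: the next token is a separator.} Let $n=\rho(t,z)$ with $z_d=W_d-1$. Within a time step, the row $r(z)$ is emitted as the $W_d$ cells with last coordinates $0,\dots,W_d-1$, and is closed by exactly one separator. The cell $(t,z)$ is the last cell of its row, so position $n+1$ holds the separator closing row $r(z)$. This position is in range because $n<N-1$.

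\emph{Step 2: which separator.} By Definition~\ref{def:serialization}, the separator closing row $r$ is a \texttt{TIME\_SEP} exactly when $r$ is the final row of the step, namely $r=\lvert\Grid\rvert/W_d-1$, and a \texttt{ROW\_SEP} otherwise. This gives the case split in the statement.

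As a consistency check, \eqref{eq:rho} gives $\rho(t,z)+1 = t(\lvert\Grid\rvert+\lvert\Grid\rvert/W_d)+(r(z)+1)W_d+r(z)$. This lies strictly between $\rho$ of the last cell of row $r(z)$ and $\rho$ of the first cell of row $r(z)+1$, or of step $t+1$ in the final-row case. The computation uses $\rho_\Grid(z)=r(z)W_d+z_d$ with $z_d=W_d-1$.

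\emph{Step 3: independence from $f$.} The map $\rho$ is an injection fixed independently of the trajectory. Hence the position $n$ alone determines $(t,z)$, and therefore $r(z)$. The token at $n+1$ is then a deterministic function of $n$ that never involves any state $x_{t',z'}$. Since $f$ enters the stream only through those states, conditioning on the position makes the next token deterministic and independent of $f$.

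There is no real obstacle in this argument. The only care needed is bookkeeping at the step boundary: the \texttt{TIME\_SEP} replaces, rather than follows, the final \texttt{ROW\_SEP}. Because of this, the offset at $n+1$ is correct in both cases.
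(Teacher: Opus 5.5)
Your proposal is correct and follows the paper's own route. Each innermost row is closed by exactly one separator, which is a \texttt{TIME\_SEP} precisely for the final row of a step, and the emission order fixes this before any state is drawn, so \eqref{eq:rho} determines the next token from $n$ alone. Your explicit computation of $\rho(t,z)+1$ is a valid consistency check that the paper leaves implicit.
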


\begin{proof}
The emission order is fixed before any cell state is drawn. Every innermost row
is followed by exactly one separator, and the separator is a
\texttt{TIME\_SEP} exactly for the last row of a time step. Equation~\eqref{eq:rho}
therefore determines the next token type from $n$ alone.
\end{proof}

The experimental evaluation (Section~\ref{sec:experiments}) scores cell-value targets, excluding separators. Scored predictions include both non-row-final cell queries and separator queries targeting the prediction region. Row-final cell tokens (fraction $1/W_d$) predict separators whose identities are positionally fixed and independent of the rule $f$. However, they remain valid \emph{keys} for Layer~2, as their states are valid observations of $f$ (Remark~\ref{rmk:gather-scope}).

The remaining case is a separator followed by a cell. We chose the virtual
index so that this case has the same spatial target convention as an ordinary
query.

\begin{lemma}[Separator queries]
\label{lem:sepsucc}
\label{cor:sepvisible}
Let $n$ hold a separator with $n<N-1$, and let $(t',z')$ be the cell token at
$n+1$.
\begin{enumerate}[label=(\roman*),leftmargin=2.2em,itemsep=2pt,topsep=3pt]
  \item If $n$ is a \texttt{ROW\_SEP} closing row $r$ of time step $t$, then
        $t'=t$, $r(z')=r+1$, and $z'_d=0$. If $n$ is a
        \texttt{TIME\_SEP} closing time step $t$, then
        $(t',z')=(t+1,0)$. In either case,
        \[
          \tau(n)=t',
          \qquad
          \kappa(n)=z'-\usucc,
          \qquad
          \kappa(n)+\usucc=z'.
        \]
  \item If $t'\ge1$, then every cell in time step $t'-1=\tau(n)-1$ precedes
        $n$ in the stream. Hence all $k$ parents of the target cell $z'$ are
        causally visible when the separator predicts it.
  \item If $t'\ge1$, the parents of $z'$ occur at offsets $D_K$ from $z'$,
        equivalently at offsets
        \[
          D_K+\usucc=D_Q\subseteq\Uset
        \]
        from the virtual index $\kappa(n)$. Thus separator queries whose
        targets are CA-generated cells require no spatial offsets beyond the
        routing set already defined in Definition~\ref{def:routing}.
\end{enumerate}
\end{lemma}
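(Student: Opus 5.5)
The plan is to prove the three parts in order, working directly from the position formula \eqref{eq:rho} and the emission order of Definition~\ref{def:serialization}. Part~(ii) uses part~(i), and part~(iii) uses both.

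For part~(i), I split on the separator type. A \texttt{ROW\_SEP} closes row $r$ of step $t$ with $r<\lvert\Grid\rvert/W_d-1$. By the emission order, the next token is the first cell of row $r+1$ in the same step. In row-major order, the cell with $\rho_\Grid(z')=(r+1)W_d$ has $z'_d=0$ and $r(z')=r+1$, and it lies in step $t'=t$. A \texttt{TIME\_SEP} with $n<N-1$ closes the last row of step $t\le T-2$. The next token is therefore the first cell of step $t+1$, namely $z'=0$, since $\rho_\Grid(z')=0$ forces $z'=0$. The identities $\tau(n)=t'$, $\kappa(n)=z'-\usucc$, and hence $\kappa(n)+\usucc=z'$, then hold by the definition of the virtual cell index. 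This is also the content of Lemma~\ref{lem:successor}, which I would cite.

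For part~(ii), I compare stream positions using \eqref{eq:rho}. Every cell of step $t'-1$ has position at most $(t'-1)L+\lvert\Grid\rvert-1+\lvert\Grid\rvert/W_d-1<t'L$, where $L=\lvert\Grid\rvert+\lvert\Grid\rvert/W_d$. In the \texttt{TIME\_SEP} case, $n=t'L-1$ is the position that closes step $t'-1$. It therefore comes after every cell of that step, so the strict inequality above gives what we need. In the \texttt{ROW\_SEP} case, $t'=t$ and $n\ge t'L$. This already exceeds the bound for all of step $t'-1$. The parents $(t'-1,z'+\delta_i)$ of the target all lie in step $t'-1$, so they are all causally visible.

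For part~(iii), note that by Definition~\ref{def:ca} the parents of $(t',z')$ sit at spatial offsets $\delta_i\in D_K$ from $z'$. Substituting $z'=\kappa(n)+\usucc$ from part~(i) gives $z'+\delta_i=\kappa(n)+(\delta_i+\usucc)$. So, measured from $\kappa(n)$, the offsets are $D_K+\usucc=D_Q$, and $D_Q\subseteq\Uset$ by Definition~\ref{def:routing}. Every step here is routine bookkeeping. The only delicate point is part~(ii), where I must keep the two separator cases straight and confirm the strict inequality at the step boundary. I would settle that by writing out the last cell position of a step explicitly.
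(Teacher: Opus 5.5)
Your proposal is correct and follows the paper's proof essentially step for step: part~(i) comes from the emission order plus the definition of the virtual index, part~(ii) from the same two-case split on separator type, and part~(iii) from substituting $z'=\kappa(n)+\usucc$ into $z'+D_K$. The only difference is that you check part~(ii) with explicit position arithmetic from \eqref{eq:rho}, placing the last cell of step $t'-1$ at $t'L-2$, where the paper simply appeals to the emission order; both arguments are sound.
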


\begin{proof}
For part~(i), the token following the separator of row $r$ is the first cell of
row $r+1$, while the token following a \texttt{TIME\_SEP} is the first cell of
the next time step. The identities involving $(\tau,\kappa)$ then follow from
the definition of the virtual index.

For part~(ii), if $n$ is a \texttt{ROW\_SEP}, then $t'=t$, and all of time
step $t-1$ occurs before every token of time step $t$. If $n$ is a
\texttt{TIME\_SEP}, then $t'=t+1$, and every cell of time step $t$ appears
before its closing separator. The condition $t'\ge1$ excludes separators
inside the initial condition, whose successor cells have no CA parents.

For part~(iii), when $t'\ge1$, the parent set of the target
$z'=\kappa(n)+\usucc$ is
$z'+D_K=\kappa(n)+(D_K+\usucc)=\kappa(n)+D_Q$.
\end{proof}

The two separator types differ only in how their stream position relates to the target time step: a \texttt{ROW\_SEP} predicts a cell in its own time step (reading parents from $t-1$), whereas a \texttt{TIME\_SEP} predicts the first cell of the next step (reading parents from $t$). Indexing by $\tau$ unifies these cases. Initial-condition separators predicting initial states fall outside the transition-prediction task. The transformer constructions (Appendix~\ref{app:constructions}) use separators as queries when predicting CA cells, gating them out as keys via token content; this requires no extra spatial offsets or positional coordinates.

\subsection{Limits of displacement-only positional biases}
\label{app:serialization:seqrpe}

In this section, we bound what a positional bias depending only on sequence displacement can do. No fixed scalar bias scores every desired routing cell strictly above all other causally visible cells at every query position; periodic wraparound prevents this one-layer isolation (Theorem~\ref{thm:seqrpe}).

The absolute positional table (Section~\ref{sec:task:model}) lacks an explicit coordinate system, requiring spatial structure to be learned from data. A simpler alternative is a scalar additive bias depending solely on sequence displacement \citep{raffel2020exploring,press2022train}.

On an open grid, a fixed spatial offset yields a constant sequence displacement between any two valid cells, provided no boundary truncation or padding changes the row length. On a torus, wraparound disrupts this consistency. Theorem~\ref{thm:seqrpe} identifies two queries where conflicting displacements prevent any fixed displacement-only bias from isolating the desired cells.

For the formulas below, write
\[
  R\coloneqq W_d+1,
  \qquad
  S\coloneqq (\lvert\Grid\rvert/W_d)R,
  \qquad
  r(z)\coloneqq\sum_{j<d} z_j\prod_{j<l<d}W_l.
\]
Here $R$ is the row stride, $S$ is the time-step stride, and $r(z)$ agrees with
Definition~\ref{def:serialization}. Equation~\eqref{eq:rho} becomes
\begin{equation}
\label{eq:rho-strides}
  \rho(t,z)=tS+r(z)R+z_d.
\end{equation}

\begin{definition}[Sequence-relative bias]
\label{def:seqrpe}
A \emph{sequence-relative bias} is a function
\[
  b:\{0,\dots,N-1\}\to\mathbb{R}
\]
whose contribution to the attention score between a query at position $n$ and
a key at position $n'\le n$ depends only on the displacement $n-n'$. Separator
keys may additionally be gated out by token content.

For the isolation property below, ordinary cell keys are ranked solely by the
positional score $b(n-n')$. For a non-row-final cell query $(t,z)$ with
$1\le t<T$ and $z_d<W_d-1$, such a bias \emph{isolates} an offset set
$H\subseteq\Grid$ if every desired cell $(t-1,z+\delta)$ with $\delta\in H$
receives a strictly larger positional score than every causally visible cell
outside this desired set.
\end{definition}

Allowing separator keys to be gated only strengthens the class in
Definition~\ref{def:seqrpe}. The witnesses below compare cell tokens with cell
tokens and never use a separator as a competing key.

\begin{exmp}[The elementary one-dimensional contradiction]
\label{exmp:seqrpe-elementary}
For the elementary neighborhood,
$D_K=\{-1,0,1\}$ and
$\Uset=\{-1,0,1,2\}$. Let $W\ge4$. At the query $(2,0)$, the desired cell at
offset $-1$ wraps to the right edge of the previous time step and lies at
sequence displacement $2$. The cell at displacement $2W-1$ is visible but
lies two time steps back, so correct isolation requires
\[
  b(2)>b(2W-1).
\]
At the query $(t,W-2)$, the desired cell at offset $2$ wraps to the left edge
of the previous time step and lies at displacement $2W-1$, while displacement
$2$ points to a visible cell in the current time step. Correct isolation now
requires
\[
  b(2W-1)>b(2).
\]
The two inequalities are incompatible. Theorem~\ref{thm:seqrpe} generalizes
this argument.
\end{exmp}

\begin{rmk}[What the theorem does not cover]
\label{rmk:seqrpe-scope}
A bias which may also read the query position modulo the row stride is already
grid-indexed: $n\bmod R=z_d$ on cell tokens, which exposes exactly the boundary
information used by the contradiction. Standard one-dimensional rotary
embeddings \citep{su2024roformer} enter the logits through a
position-dependent bilinear form, so they too fall outside
Definition~\ref{def:seqrpe} and the theorem. We do not
analyze either class here.
\end{rmk}

\begin{theorem}[Sequence-relative obstruction]
\label{thm:seqrpe}
Fix the serialization of Definition~\ref{def:serialization} with $T\ge3$.
\begin{enumerate}[label=(\roman*),leftmargin=2.2em]
  \item Suppose $d=1$, $W\ge4$, and $\Uset$ contains some negative offset
        $-m$ with $1\le m\le W-3$ and some positive offset $p$ with
        $2\le p\le W-2$. Then no single sequence-relative bias isolates
        $\Uset$ for every query cell $z<W-1$.
  \item Suppose $d\ge2$, $W_d\ge2$, and some $\delta\in D_K$ has a nonzero
        component modulo the corresponding side length on an outer axis;
        that is, $\delta_j\not\equiv0\pmod{W_j}$ for some $j<d$. Then no
        partition
        \[
          \Uset=H_1\sqcup\cdots\sqcup H_h
        \]
        together with sequence-relative biases $b_1,\dots,b_h$ can make each
        $b_a$ isolate $H_a$ at every query. In particular, no single
        sequence-relative bias isolates $\Uset$.
\end{enumerate}
\end{theorem}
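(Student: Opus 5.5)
The plan is to argue by contradiction through a two-query cycle, generalizing Example~\ref{exmp:seqrpe-elementary}. Suppose a bias $b$ isolates an offset set $H$ at every query. Suppose also that some displacement $D$ is a desired displacement at a query $q_1$ and a visible non-desired displacement at a query $q_2$, while another displacement $D'$ plays the reverse roles. Isolation at $q_1$ then gives $b(D)>b(D')$, and isolation at $q_2$ gives $b(D')>b(D)$, which is impossible. So it suffices to exhibit such a pair of queries, always taking the query time step $t=2$. This uses $T\ge3$: the displacement from step $2$ back to step $0$ is then available, and it points to a cell that is visible but never desired. The desired cells always lie in step $t-1$, so any competing cell in step $t$ or step $t-2$ is automatically non-desired.

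For part~(i), write $\rho(t,z)=(W+1)t+z$. The desired cell $(t-1,(z+\delta)\bmod W)$ sits at displacement $W+1+z-((z+\delta)\bmod W)$.
\begin{itemize}
\item At the query $(2,0)$, the offset $-m$ wraps to cell $W-m$ of step $1$, at displacement $m+1$. Displacement $2W+1-p$ from this query points to position $p+1$, which is the cell $(0,p+1)$ because $p\le W-2$. That cell is visible and lies in step $0$, so it is not desired.
\item At the query $(2,W-2)$, which is non-row-final, the offset $p\ge2$ wraps to cell $p-2$ of step $1$, at displacement $2W+1-p$. Displacement $m+1$ from this query points to cell $(2,W-3-m)$. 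This is a valid current-step cell because $m\le W-3$, so it is not desired.
\end{itemize}
Together these give $b(m+1)>b(2W+1-p)>b(m+1)$, a contradiction.

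For part~(ii), it suffices to treat the single part $H_a$ containing the offset $\delta$ with $\delta_j\not\equiv0\pmod{W_j}$ for some $j<d$, using only the bias $b_a$; the other parts play no role. By \eqref{eq:rho-strides}, the displacement to $(t-1,z+\delta)$ is $S+r(z)R+z_d-r(z+\delta)R-(z+\delta)_d$. Varying $z_j$ across the wrap boundary of axis $j$ shifts $r(z)-r(z+\delta)$ by $\pm\prod_{j<l<d}W_l$, so one obtains two queries whose desired displacements $D\neq D'$ for $\delta$ differ by a nonzero multiple of $R\prod_{j<l<d}W_l$. I would choose $z_d=0$, which is non-row-final since $W_d\ge2$. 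I would pick the other coordinates so that at each query the opposite displacement lands in step $t$ or step $t-2$, giving automatic non-desiredness as in part~(i). Where it instead lands in step $t-1$, I need it to miss every translate $z+\delta'$ with $\delta'\in H_a$.

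That last point is the main obstacle. A wrap along an outer axis moves the displacement by less than a full time-step stride $S$, so the competing cell can stay in step $t-1$ and might coincide with another desired offset of $H_a$. I would first try choosing $\delta$ extremal in $H_a$ along axis $j$, say maximizing $\delta_j$ in a fixed representative range, so that the shifted cell falls strictly outside the bounding range of $z+H_a$ along that axis. If that fails, I would try replacing the two-query cycle by a longer cycle of inequalities over queries $z,z+e_j,\dots$ around axis $j$. Summing the displacement changes around the full loop returns to the start, and this again yields a cyclic chain of strict inequalities.
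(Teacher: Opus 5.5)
Your part~(i) is correct and is the paper's argument. The paper places the second query at $(t,W-2)$ for arbitrary $t$, and your $t=2$ is a special case.

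\textbf{Part~(ii) is not proved.} You describe the shape of a two-query cycle but never exhibit the two queries. The obstacle you flag is left open, and the remedies you offer (an extremal $\delta$ in $H_a$, a longer cycle) are speculative and not what is needed. Your worry is real for an arbitrary choice of axis. If $j$ is not the outermost nonzero outer coordinate of $\delta$, then at the query $(2,0)$ the desired cell $(1,\delta)$ sits in row $C\ge P_i\ge W_jP_j$ of step~$1$ (notation below). Shifting by the wrap gap $W_jP_jR$ then keeps the competitor inside step~$1$, where it may be another element of $z+H_a$. Note that the wrap gap carries the factor $W_j$, which your shift $\prod_{j<l<d}W_l$ omits.

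\textbf{The missing idea} is to choose the axis and both query cells so that each competitor falls outside the parent step. Let $i<d$ be the \emph{smallest} index with $\delta_i\neq0$, with residues taken in $[W_j]$. Put
\begin{itemize}
\item $P_j\coloneqq\prod_{j<l<d}W_l$ and $M_{\mathrm{row}}\coloneqq\lvert\Grid\rvert/W_d$;
\item $C\coloneqq\sum_{j<d}\delta_jP_j$ and $C'\coloneqq C-\delta_iP_i$;
\item $G\coloneqq W_iP_iR$ and $A\coloneqq S-CR-\delta_d$.
\end{itemize}
Minimality of $i$ gives the mixed-radix bounds $P_i\le C<W_iP_i\le M_{\mathrm{row}}$ and $C'<P_i$.

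\emph{First query, $(2,0)$.} The desired cell $(1,\delta)$ lies at displacement $A$. Displacement $A+G$ points to row $M_{\mathrm{row}}+C-W_iP_i$, column $\delta_d$, of step~$0$. That cell is visible but not desired, so $b_a(A)>b_a(A+G)$.

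\emph{Second query, $(2,z^\ast)$,} with $z^\ast=(W_1-1,\dots,W_{i-1}-1,\,W_i-\delta_i,\,0,\dots,0)$.
\begin{itemize}
\item Here $r(z^\ast)=M_{\mathrm{row}}-\delta_iP_i$.
\item The desired cell $(1,z^\ast+\delta)$ wraps on axis~$i$ and sits at displacement $A+G$.
\item Displacement $A$ now points to row $C'$, column $\delta_d$, of step~$2$ itself.
\item That cell precedes the query because $C'<P_i\le M_{\mathrm{row}}-\delta_iP_i$.
\end{itemize}
Hence $b_a(A+G)>b_a(A)$, contradicting the first inequality.

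Both competitors lie outside the desired step~$1$, so the other offsets of $H_a$ never enter, and the contradiction holds for every partition. The paper puts the second query at $t=1$; the computation is identical at $t=2$.
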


Part~(i) includes the elementary neighborhood by taking $m=1$ and $p=2$.
Part~(ii) applies whenever the neighborhood is not confined to one innermost
row, including the usual von Neumann and Moore neighborhoods in dimension at
least two. The result concerns one-layer isolation in the sense of
Definition~\ref{def:seqrpe}; it does not rule out a model which assembles the
routing set through several layers or which is supplied with additional
spatial information.

\begin{proof}
\emph{Part (i).}
Write $W=W_1$. Then $R=S=W+1$ and
$\rho(t,z)=(W+1)t+z$. For a query $(t,z)$, the displacement to the desired cell
$(t-1,z+\delta)$ is
\[
  \rho(t,z)-\rho(t-1,z+\delta)
  =
  \begin{cases}
    (W+1)-\delta,
      & 0\le z+\delta<W,\\
    (2W+1)-\delta,
      & z+\delta\ge W,\\
    1-\delta,
      & z+\delta<0,
  \end{cases}
\]
where the last two cases use toroidal wraparound.

Take first the query $(2,0)$, which $T\ge3$ and $W\ge4$ make a non-row-final
cell query in the sense of Definition~\ref{def:seqrpe}. The desired cell at
offset $-m$ wraps below and
therefore lies at displacement $m+1$. The position at displacement
$2W+1-p$ is
\[
  \rho(2,0)-(2W+1-p)=p+1,
\]
which is the visible cell $(0,p+1)$ because $p\le W-2$. It lies in time step
$0$, not in the desired parent step $1$, so it is not one of the routed cells.
Isolation therefore requires
\begin{equation}
\label{eq:seqrpe-1d-first}
  b(m+1)>b(2W+1-p).
\end{equation}

Now take the query $(t,W-2)$ for any $1\le t<T$, which is a non-row-final cell
query because $W\ge4$. The desired cell at offset $p$ wraps above and
lies at displacement $2W+1-p$. The position at displacement $m+1$ is
\[
  \rho(t,W-2)-(m+1)=\rho(t,W-3-m),
\]
which is a cell in the current time step because $m\le W-3$. It precedes the
query and is therefore visible, but it does not belong to the previous time
step. Hence isolation requires
\begin{equation}
\label{eq:seqrpe-1d-second}
  b(2W+1-p)>b(m+1),
\end{equation}
contradicting \eqref{eq:seqrpe-1d-first}. This proves part~(i).

\emph{Part (ii).}
Let $M_{\mathrm{row}}\coloneqq \lvert\Grid\rvert/W_d$ be the number of rows in one time step, so $S=M_{\mathrm{row}}R$, and
write
\[
  P_j\coloneqq\prod_{j<l<d}W_l,
  \qquad
  r(z)=\sum_{j<d}z_jP_j.
\]
Choose $\delta\in D_K$ such that $\delta_j\not\equiv0\pmod{W_j}$ for some
$j<d$, and represent each coordinate by its residue
$\delta_j\in\{0,\dots,W_j-1\}$. Let $i<d$ be the smallest
index with $\delta_i\ne0$, and define
\[
  C\coloneqq\sum_{j<d}\delta_jP_j,
  \qquad
  C'\coloneqq\sum_{i<j<d}\delta_jP_j,
  \qquad
  G\coloneqq W_iP_iR,
  \qquad
  A\coloneqq S-CR-\delta_d.
\]
Because $i$ is the first nonzero outer coordinate, mixed-radix bounds give
$C'<P_i$ and hence
\[
  P_i\le C<W_iP_i\le M_{\mathrm{row}}.
\]

Consider first the query $(2,0)$, which $T\ge3$ and $W_d\ge2$ make a
non-row-final cell query. The desired routed cell at offset $\delta$ is
$(1,\delta)$, whose sequence displacement is
\[
  \rho(2,0)-\rho(1,\delta)
  =2S-(S+CR+\delta_d)
  =A.
\]
The position at displacement $A+G$ is
\[
  2S-A-G
  =S+(C-W_iP_i)R+\delta_d
  =(M_{\mathrm{row}}+C-W_iP_i)R+\delta_d.
\]
Since $0\le M_{\mathrm{row}}+C-W_iP_i<M_{\mathrm{row}}$, this is a cell in time step $0$. It is visible, but
all desired routed cells for the query lie in time step $1$, so it is not a
desired key. Therefore the bias assigned to any part containing $\delta$ must
satisfy
\begin{equation}
\label{eq:seqrpe-highd-first}
  b(A)>b(A+G).
\end{equation}

For the reverse inequality, take
\[
  z^\ast
  \coloneqq
  (W_1-1,\dots,W_{i-1}-1,\,W_i-\delta_i,\,0,\dots,0)
\]
at time $t=1$. Since $z_d^\ast=0$ and $W_d\ge2$, this is a non-row-final cell
query. Since
\[
  \sum_{j<i}(W_j-1)P_j=M_{\mathrm{row}}-W_iP_i,
\]
we have $r(z^\ast)=M_{\mathrm{row}}-\delta_iP_i$. Adding $\delta$ wraps coordinate $i$ to
zero and gives
\[
  r(z^\ast+\delta)=M_{\mathrm{row}}-W_iP_i+C'.
\]
Thus the desired cell $(0,z^\ast+\delta)$ lies at displacement
\begin{align*}
  \rho(1,z^\ast)-\rho(0,z^\ast+\delta)
  &=S+\bigl(W_iP_i-\delta_iP_i-C'\bigr)R-\delta_d\\
  &=A+G.
\end{align*}
By contrast, the position at displacement $A$ is
\[
  \rho(1,z^\ast)-A=S+C'R+\delta_d,
\]
a cell in time step $1$, in row $C'$ and column $\delta_d$. It precedes the
query because
\[
  C'R+\delta_d
  <(C'+1)R
  \le P_iR
  \le(M_{\mathrm{row}}-\delta_iP_i)R,
\]
and it is not a desired routed cell because the desired cells lie in time step
$0$. Hence the same bias must satisfy
\begin{equation}
\label{eq:seqrpe-highd-second}
  b(A+G)>b(A),
\end{equation}
contradicting \eqref{eq:seqrpe-highd-first}.

This contradiction applies to whichever partition part contains $\delta$.
Therefore no family of sequence-relative biases, one per part of a partition
of $\Uset$, can isolate all parts. This proves part~(ii).
\end{proof}

\paragraph{Spatial relative positions and learned routing.}
A spatial relative-position bias in $d$ dimensions, augmented with temporal
displacement, can avoid this obstruction. For ordinary cell tokens, such a bias
can take the form
\[
  b_{\mathrm{sp}}(t-t',\,z'-z),
\]
where $z'-z\in\Grid$ is computed coordinatewise modulo $(W_1,\dots,W_d)$.
Assigning a higher score when $t-t'=1$ and $z'-z\in\Uset$ than otherwise
isolates the desired routing cells. The same spatial offset therefore remains
identifiable across periodic boundaries.

This parameterization explicitly supplies the mapping from sequence positions to
spatial and temporal coordinates, together with the periodic boundary structure,
even when the bias values themselves are learned. Our experimental models
instead use freely learned absolute position embeddings indexed only by sequence
position, without an explicit coordinate-based parameterization. The
serialization and separators supply structural cues, but the model must learn
how to use them for spatial routing. Our constructions establish that such
routing can be represented, and our experiments provide evidence that
gradient-based training learns routing behavior consistent with these
constructions.
\section{Coverage and Bayes-optimal prediction}
\label{app:bayes}

All distributional guarantees apply to \emph{resolvable} queries and are measured against the \emph{Bayes-optimal predictor}. Section~\ref{app:bayes:coverage} characterizes coverage and specifies the queries excluded by resolvability, where optimal prediction relies solely on position and the prior. Section~\ref{app:bayes:optimal} derives the Bayes-optimal predictor (Theorem~\ref{thm:bayes}). These results rely exclusively on Definition~\ref{def:ca} and the serialization from Appendix~\ref{app:serialization}, without reference to transformer architectures.

\subsection{Coverage and resolvable queries}
\label{app:bayes:coverage}

We adapt Definition~\ref{def:resolvable} to the serialized stream to address two issues: the parent configuration $P_{t,z}$ (Definition~\ref{def:ca}) is defined only for $t \geq 1$ (as the initial step is drawn independently), and the first cell of every row is predicted at a separator token (Lemma~\ref{lem:sepsucc}), which is scored alongside interior cells (Section~\ref{sec:experiments}).

\paragraph{Querying positions.}
A position $n$ \emph{poses a query} when it has a target cell in the sense of
Lemma~\ref{lem:successor} and that target cell lies in a time step
$\tau(n) \geq 1$. Two families qualify, and they are the first and third rows of
Table~\ref{tab:duty}:
\begin{enumerate}[label=(\alph*),leftmargin=2.2em,itemsep=1pt,topsep=2pt]
  \item a cell token $(t,z)$ with $t \geq 1$ and $z_d < W_d - 1$, whose target
        cell is $z + \usucc$ in its own time step;
  \item a separator at $n < N-1$ with $\tau(n) \geq 1$, whose target cell is
        $\kappa(n) + \usucc$, the first cell of the next row, again in step
        $\tau(n)$.
\end{enumerate}
In both cases the query configuration is
\[
  Q_n \;\coloneqq\; P_{\tau(n),\,\kappa(n)+\usucc} ,
\]
which is defined because $\tau(n) \geq 1$, and the target is a draw from
$f(\cdot \mid Q_n)$. For a cell token this is the $Q_{t,z}$ of
Definition~\ref{def:resolvable}. Theorem~\ref{thm:main-app} gives the
separator case of the guarantee, under the same definition.

\paragraph{Counting positions.}
The counts range over the cells generated by $f$:
\begin{equation}
\label{eq:counts}
  n_c \;\coloneqq\;
  \#\bigl\{(t',z') \;:\; t' \geq 1,\ \rho(t',z') \leq n,\ P_{t',z'} = c \bigr\},
  \qquad
  n_{c,y} \;\coloneqq\; \#\{\text{those with } x_{t',z'} = y\} .
\end{equation}
Above, $c \in \States^{k}$ is a configuration and $n$ is the sequence position
posing the query. We need the restriction $t' \geq 1$ for two reasons:
$P_{0,z'}$ is undefined, and $x_{0,z'}$ is an initial state, so it says nothing
about the kernel. A configuration $c$ is \emph{covered} at $n$ when
$n_c \geq 1$, and the query at $n$ is \emph{resolvable} when its configuration
$Q_n$ is covered.

Coverage is a property of the stream alone, and it is monotone in
position.

\begin{lemma}[Coverage is monotone]
\label{lem:coverage}
Fix a trajectory and a configuration $c \in \States^{k}$.
\begin{enumerate}[label=(\roman*),leftmargin=2.2em,itemsep=1pt,topsep=2pt]
  \item $n_c$, evaluated at $(t,z)$, is non-decreasing in $\rho(t,z)$.
  \item If a query posing $c$ is resolvable at $(t,z)$, then every query posing
        $c$ at a position $(t'',z'')$ with $\rho(t'',z'') \geq \rho(t,z)$ is
        resolvable.
  \item A query is resolvable if and only if $n_c \geq 1$ for its own
        configuration $c = Q_{t,z}$.
\end{enumerate}
\end{lemma}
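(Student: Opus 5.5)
The plan is to argue directly from the definition of the counts in \eqref{eq:counts}, treating the three parts in order. Each later part uses only the preceding ones.

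For part~(i), fix the trajectory and $c$. The count $n_c$ at a position $n$ is the cardinality of
\[
  A_n \coloneqq \{(t',z') : t'\ge1,\ \rho(t',z')\le n,\ P_{t',z'}=c\}.
\]
This set depends on the query only through the threshold $n$. The first and last conditions do not involve $n$ at all. So if $n\le n''$, then $A_n\subseteq A_{n''}$, because any $(t',z')$ with $\rho(t',z')\le n$ also satisfies $\rho(t',z')\le n''$. Taking $n=\rho(t,z)$ and $n''=\rho(t'',z'')$, monotonicity of cardinality under inclusion gives the claim. The trajectory, and hence every parent configuration $P_{t',z'}$, is fixed, so nothing random enters the comparison.

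For part~(ii), suppose a query posing $c$ is resolvable at $(t,z)$. By the definition of resolvability this means $n_c\ge1$ evaluated at $\rho(t,z)$. Now take any query posing the same configuration $c$ at $(t'',z'')$ with $\rho(t'',z'')\ge\rho(t,z)$. Part~(i) gives that $n_c$ evaluated at $\rho(t'',z'')$ is also at least $1$, so that query is resolvable too. The same argument applies verbatim to separator queries indexed by position $n$, since only the position threshold enters.

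Part~(iii) is essentially an unfolding of definitions. A query is resolvable exactly when its query configuration $Q_{t,z}$ (respectively $Q_n$) is covered. Coverage of $c$ at the querying position is defined as $n_c\ge1$ there, and Definition~\ref{def:resolvable} phrases coverage as existence of some earlier $(t',z')$ with $P_{t',z'}=c$. The one point to check is that the two formulations agree. The existence statement must be read with the restriction $t'\ge1$, since $P_{0,z'}$ is undefined. With that reading, the existence of such a $(t',z')$ is equivalent to $A_n\ne\emptyset$, that is, to $n_c\ge1$.

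I do not expect a real obstacle here. The only care needed is to state the $t'\ge1$ convention explicitly, and to note that the statements quantify over positions $n$ rather than only cell tokens, so that separator queries are covered as well.
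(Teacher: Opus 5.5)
Your proposal is correct and takes essentially the paper's route. The paper also treats the counts at position $n$ as coming from the nested sets $A(n)=\{(t',z') : t'\ge1,\ \rho(t',z')\le n\}$ (you fold the condition $P_{t',z'}=c$ into the set, which changes nothing), derives (ii) from (i) together with (iii), and reads (iii) off the definition of coverage, so your remarks on the $t'\ge1$ convention and on separator queries are harmless added clarifications.
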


\begin{proof}
Since $\rho$ is an order-preserving injection (Definition~\ref{def:serialization}), the sets $A(n) \coloneqq \{(t',z') : t' \geq 1,\ \rho(t',z') \leq n\}$ form a chain under inclusion. The condition $t' \geq 1$ excludes the initial time step uniformly, preserving the chain structure. Because $n_c$ at $(t,z)$ counts elements of $A(\rho(t,z))$ with parent configuration $c$, it is non-decreasing along this chain, proving (i). Statement (ii) follows from (i) and (iii). For (iii), $c$ is covered at $(t,z)$ if and only if $A(\rho(t,z))$ contains an element with parent configuration $c$, meaning $n_c \geq 1$.
\end{proof}

\noindent
Part (iii) gives a nonempty matched set. Both the deterministic and the
stochastic analysis divide by $n_c$, so both need resolvability.

Evaluating only on resolvable queries excludes three classes of cell token.

\begin{lemma}[What resolvability excludes]
\label{lem:coverage-excluded}
A cell token either poses a resolvable query or falls into exactly one of the
following three classes, tested in this order.
\begin{enumerate}[label=(\roman*),leftmargin=2.2em,itemsep=1pt,topsep=2pt]
  \item \emph{Initial-step cells}, those with $t = 0$. Their target is a
        separator when $z_d = W_d - 1$ and an initial state otherwise. The
        separator is determined by the position $n$, and the initial state is
        drawn from the initial-state distribution of Definition~\ref{def:ca},
        so neither depends on $f$.
  \item \emph{Row-final cells}, those with $t \geq 1$ and $z_d = W_d - 1$.
        Their target is a separator, determined by the position $n$ alone, so
        its conditional law given everything observed is a point mass and
        carries zero mutual information with $f$.
  \item \emph{Novel configurations}, those with $t \geq 1$ and $z_d < W_d-1$
        whose query configuration has $n_c = 0$. There the Bayes posterior
        predictive of Theorem~\ref{thm:bayes-app} is the uniform distribution
        $\widehat P_\alpha(y \mid c) = 1/V$ for every $y$, whatever $f$ is.
\end{enumerate}
In all three classes the optimal prediction depends on the position and the
prior alone, and the trajectory adds nothing.
\end{lemma}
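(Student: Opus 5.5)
The plan is a case split on the cell token $(t,z)$ in the order stated, so the four outcomes (resolvable, (i), (ii), (iii)) are mutually exclusive and exhaustive by construction. If $t=0$ the token is in class (i). Otherwise $t\ge1$. If then $z_d=W_d-1$ it is in class (ii). Otherwise $t\ge1$ and $z_d<W_d-1$, so by family (a) of ``Querying positions'' the token poses a query with configuration $c=Q_{t,z}$. By Lemma~\ref{lem:coverage}(iii), this query is resolvable exactly when $n_c\ge1$, and otherwise it lies in class (iii). Exclusivity is immediate, since each class is defined by the failure of the earlier tests and resolvability requires $n_c\ge1$.

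Next I would verify the stated target descriptions and the claim that the trajectory is uninformative, class by class.

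For (i), Lemma~\ref{lem:successor} says the successor of $(0,z)$ is a separator when $z_d=W_d-1$ and the cell $(0,z+\usucc)$ otherwise. In the separator case, Lemma~\ref{lem:rowfinal} makes the token a deterministic function of $n$. In the cell case, the target is an initial state. Initial states are i.i.d.\ uniform and independent of $f$ and of all earlier tokens by Definition~\ref{def:ca}, so the optimal prediction is uniform and depends on nothing else.

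For (ii), Lemma~\ref{lem:rowfinal} again gives a point mass determined by $n$. A point-mass conditional law has zero entropy, so its mutual information with $f$ vanishes.

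For (iii), Theorem~\ref{thm:bayes} with $n_c=n_{c,y}=0$ gives $\widehat P_\alpha(y\mid c)=\alpha/(V\alpha)=1/V$ for every $y$, independent of $f$ and of the trajectory.

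The only delicate point is the wording ``depends on the position and the prior alone'' in class (iii). The posterior over the row $f(\cdot\mid c)$ still equals its prior $\mathrm{Dir}(\alpha\mathbf 1_V)$, because the rows are a priori independent and no observation generated from row $c$ has occurred. I would justify this by the factorization of the likelihood across configurations already used in Theorem~\ref{thm:bayes}. Once that is in hand, the rest is bookkeeping.
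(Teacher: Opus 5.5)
Your proposal is correct and follows the paper's proof essentially step for step. Like the paper, you use the ordered case split with Lemma~\ref{lem:coverage}(iii) for class (iii), Lemma~\ref{lem:rowfinal} for the separator targets, the independence of initial states from $f$, and substitution of $n_c=n_{c,y}=0$ into the Bayes formula. Your extra justification via the likelihood factorization, that the posterior on the row $f(\cdot\mid c)$ equals its prior, is the same point the paper makes in the remark after Theorem~\ref{thm:bayes-app}, namely that the formula does not require resolvability.
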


\begin{proof}
Tokens with $t = 0$ fall into class (i); those with $t \geq 1$ and $z_d = W_d - 1$ into class (ii); and those with $t \geq 1$, $z_d < W_d - 1$, and an uncovered configuration into class (iii) (Lemma~\ref{lem:coverage}(iii)). The three tests are applied in that order, so the classes are disjoint, and all other cell tokens pose resolvable queries. For (i), an initial-step row-final cell predicts a positionally fixed separator (Lemma~\ref{lem:rowfinal}) and every other initial-step cell predicts an initial state, drawn independently of $f$. For (ii), a row-final cell predicts a deterministically positioned separator (Lemma~\ref{lem:rowfinal}), conveying zero mutual information about $f$. For (iii), substituting $n_c = n_{c,y} = 0$ into Theorem~\ref{thm:bayes-app} yields $\widehat P_\alpha(y \mid c) = 1/V$, the mean of the $\mathrm{Dir}(\alpha\mathbf 1_V)$ prior, independent of the trajectory.
\end{proof}

\begin{rmk}
The excluded classes differ in frequency and rationale. Initial-step cells (fraction $1/T$) and novel configurations are excluded because their Bayes-optimal prediction is uninformative about the inferred rule, scoring at $1/V$. Row-final cells (fraction $1/W_d$) are universally excluded because they deterministically predict a separator; scoring them would artificially inflate accuracy. The novel configurations become rare as $T$ grows. Once an uncovered configuration is queried, its target is observed and becomes a precedent, and by Lemma~\ref{lem:coverage}(ii) it stays covered. Each of the $V^k$ configurations therefore yields at most one uncovered query in the whole stream, so for fixed $V$, $k$ and $\Grid$ their number is bounded independently of $T$ and their fraction vanishes. Consequently, we rejection-sample evaluation trajectories to ensure every scored query is resolvable (Appendix~\ref{app:exp:eval}).
\end{rmk}

\subsection{Bayes-optimal posterior prediction}
\label{app:bayes:optimal}

Under the prior of Definition~\ref{def:ca}, no predictor attains lower expected log loss than the posterior predictive, which relies on counts (Theorem~\ref{thm:bayes-app}). We measure divergences against this Bayes-optimal predictor, and Theorem~\ref{thm:main} bounds the circuit's divergence from it. For next-cell accuracy the Bayes act is the $\arg\max$ of the posterior predictive. Sampling from it is a different predictor: at predictive distribution $(0.9,0.1)$, a sample matches the target with probability $0.82$, while the mode matches with probability $0.9$.

\begin{theorem}[Bayes posterior predictive; Theorem~\ref{thm:bayes} of the body]
\label{thm:bayes-app}
Let $f$ be drawn as in Definition~\ref{def:ca} and let a trajectory be
serialized as in Appendix~\ref{app:serialization}. For every token with
resolvable query $c$ and every $y \in \States$, the posterior predictive
distribution of the target given everything observed up to that token is exactly
\[
  \widehat{P}_\alpha(y \mid c) \;=\; \frac{n_{c,y} + \alpha}{n_c + V\alpha},
\]
and no predictor attains smaller expected log-loss.
\end{theorem}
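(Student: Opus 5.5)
The plan is to compute the posterior over $f$ given the observed prefix by a likelihood factorization. Then we integrate out $f$ to get the predictive, and finish with the standard proper-scoring argument for log loss.

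\emph{Likelihood.} Fix the query position $n$ and let $\mathcal F_n$ denote all tokens at positions $\le n$. Separators are deterministic functions of position (Lemma~\ref{lem:rowfinal}), so they contribute a factor independent of $f$. Initial states are i.i.d.\ uniform and also independent of $f$. Each generated cell $(t',z')$ with $t'\ge1$ and $\rho(t',z')\le n$ contributes a factor $f(x_{t',z'}\mid P_{t',z'})$.

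The conditional-independence structure of Definition~\ref{def:ca} justifies this even though cells of step $t'$ are only partially revealed. Cells in step $t'$ are conditionally independent given step $t'-1$. The parents of any revealed cell in step $t'$ lie in step $t'-1$, and step $t'-1$ is fully revealed before any cell of step $t'$ (Lemma~\ref{lem:sepsucc}(ii)). Hence the likelihood of the observed prefix is $\prod_{c}\prod_{y} f(y\mid c)^{n_{c,y}}$, times a factor free of $f$, with the counts of \eqref{eq:counts}.

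\emph{Posterior and predictive.} The prior is a product of independent $\mathrm{Dir}(\alpha\mathbf 1_V)$ rows, and the likelihood factorizes over rows. By Dirichlet--multinomial conjugacy, the rows remain independent a posteriori, with row $c$ distributed as $\mathrm{Dir}\bigl((\alpha+n_{c,y})_y\bigr)$.

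The target of position $n$ is a draw from $f(\cdot\mid Q_n)$, and its parents lie in $\mathcal F_n$. Its conditional law given $\mathcal F_n$ is therefore the posterior mean of row $c=Q_n$, which is $(n_{c,y}+\alpha)/(n_c+V\alpha)$. The target itself is at position $n+1$, so it is not counted in $n_{c,y}$; this matches the convention $\rho(t',z')\le n$. The separator-query case is handled identically via the virtual index of Lemma~\ref{lem:sepsucc}.

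\emph{Optimality.} For any predictor $q$ measurable with respect to $\mathcal F_n$, the expected log loss is
\[
\mathbb E[-\log q(Y)] = \mathbb E\bigl[H(\widehat P_\alpha)\bigr] + \mathbb E\bigl[\KL(\widehat P_\alpha\,\|\,q)\bigr],
\]
so it is minimized exactly at $q=\widehat P_\alpha$ by Gibbs' inequality.

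The main obstacle is the first step: justifying that the partial revelation of step $t'$ does not bias the likelihood. This is where causal ordering enters. Revealed cells of step $t'$ depend only on step $t'-1$, which is fully visible, and unrevealed cells are simply marginalized away by conditional independence.
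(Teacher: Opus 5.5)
Your proof is correct and follows the paper's own argument: factor the likelihood into an $f$-free part (initial states and separators) and one multinomial factor per configuration, apply Dirichlet conjugacy, read off the posterior mean of row $c$, and use the propriety of log loss. Your explicit treatment of the partially revealed current step, via conditional independence within step $t'$ given the fully visible step $t'-1$, and your entropy-plus-KL decomposition spell out steps that the paper's terse proof leaves implicit.
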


\begin{proof}
By Definition~\ref{def:ca}, cells at $t \geq 1$ are conditionally independent across $z$ given the previous step, and the initial time step is drawn independently of $f$. The trajectory likelihood factorizes into an initial-step term and one multinomial factor per configuration $c$, with counts $n_{c,y}$. The initial-step factor $p(x_{0,\cdot})$ is independent of $f$ and cancels from the posterior. The independent Dirichlet prior is conjugate, yielding the posterior $\bigotimes_c \mathrm{Dir}(\alpha\mathbf{1} + n_{c,\cdot})$. The mean of a $\mathrm{Dir}(\beta)$ distribution is $\beta_y/\sum_v \beta_v$, producing the stated formula. Under log-loss, the Bayes act is exactly the posterior predictive.
\end{proof}

\begin{rmk}
The proof does not use resolvability, and the formula remains valid at
$n_c = 0$, where Lemma~\ref{lem:coverage-excluded}(iii) shows it returns the
prior mean. We state resolvability in the theorem because it is the regime
where the formula is informative, and because every later result which uses
this predictor divides by $n_c$.
\end{rmk}

\begin{corollary}[Empirical-frequency limit and deterministic lookup]
\label{cor:bayes-lookup}
As $\alpha \to 0$ the predictor of Theorem~\ref{thm:bayes-app} tends to the
empirical frequency $n_{c,y}/n_c$. If in addition every $f(\cdot \mid c)$ is a
point mass, meaning the deterministic corner of Definition~\ref{def:ca}, then a
single observation of $c$ determines its image, so the predictor is the lookup
table and is \emph{exact} on every resolvable query.
\end{corollary}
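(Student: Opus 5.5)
The proof takes the limit $\alpha\to0$ in the closed form of Theorem~\ref{thm:bayes-app}, then specialises to point-mass rows. Fix a resolvable query with configuration $c$, so Lemma~\ref{lem:coverage}(iii) gives $n_c\ge1$. The counts $n_c$ and $n_{c,y}$ are functions of the observed stream alone and do not depend on $\alpha$. Hence
\[
  \widehat P_\alpha(y\mid c)=\frac{n_{c,y}+\alpha}{n_c+V\alpha}\;\longrightarrow\;\frac{n_{c,y}}{n_c}
  \qquad(\alpha\to0),
\]
pointwise in $y$. The limit is finite and nondegenerate precisely because the denominator stays at least $1$. This explains why resolvability is kept in the statement even though Theorem~\ref{thm:bayes-app} itself holds at $n_c=0$. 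At $n_c=0$ the limit is the uniform prior mean instead, as noted in Lemma~\ref{lem:coverage-excluded}(iii).

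For the deterministic part, suppose every row $f(\cdot\mid c)$ is a point mass $\delta_{g(c)}$ for some map $g:\States^k\to\States$. Every counted cell $(t',z')$ with $t'\ge1$ and $P_{t',z'}=c$ then has $x_{t',z'}=g(c)$ almost surely. It follows that $n_{c,g(c)}=n_c$ and $n_{c,y}=0$ for $y\ne g(c)$. The empirical frequency is therefore exactly $\delta_{g(c)}$. A single counted occurrence, the one witnessing coverage, already fixes $g(c)$. The predictor is thus the lookup table built from precedents.

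It remains to show the prediction is exact. The target at a resolvable query is drawn from $f(\cdot\mid Q_n)=\delta_{g(c)}$, so the predicted point mass coincides with the true conditional law. Its $\arg\max$ is therefore correct with probability one.

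The only point needing care is interpretive: what ``deterministic corner of Definition~\ref{def:ca}'' means, given that point-mass rows have prior probability zero under any $\alpha>0$. I would phrase the second claim as a statement about the limiting predictor $n_{c,y}/n_c$ evaluated on trajectories generated by a point-mass kernel. Read this way, no measure-theoretic conditioning on a null event is needed. The argument is then purely algebraic, using only the definition of the counts in \eqref{eq:counts}.
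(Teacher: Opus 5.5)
Your proof is correct and follows essentially the same route as the paper: the limit is read off the closed form with $n_c\ge 1$ supplied by Lemma~\ref{lem:coverage}(iii), and under a point-mass row every counted precedent of $c$ carries the same state, so $n_{c,y}/n_c$ is the indicator of that state. Your remark that the deterministic case should be read as the limiting predictor evaluated on point-mass trajectories, rather than as conditioning on a prior-null event, is a sensible clarification that the paper leaves implicit.
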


\begin{proof}
The limit is immediate from the displayed formula. Under a point mass
$f(\cdot\mid c) = \mathbf{1}_{y^\star}$ every observation of $c$ has target
$y^\star$, so $n_{c,y}/n_c = \mathbf{1}[y = y^\star]$ as soon as
$n_c \geq 1$, which Lemma~\ref{lem:coverage}(iii) supplies at a resolvable
query.
\end{proof}

\begin{rmk}
Corollary~\ref{cor:bayes-lookup} characterizes the \emph{predictor}. In the
deterministic setting the Bayes-optimal predictor is exact, so exact-match accuracy is a
meaningful metric and $100\%$ is attainable.
Theorem~\ref{thm:main-app}\emph{(ii)} characterizes the \emph{circuit}, and it
shows that the two-layer construction attains that floor once the score gap is
large enough.

Outside the deterministic regime the Bayes-optimal predictor sits strictly below
$100\%$ accuracy, because the query configuration determines only a distribution over
the target. Table~\ref{tab:results} therefore reports the Kullback--Leibler divergence from
the $k$-gram estimate of $\widehat P_\alpha$, and Theorem~\ref{thm:main} bounds
the divergence from $\widehat P_\alpha$ itself. A predictor which matches the
Bayes-optimal predictor still errs on a strictly positive fraction of stochastic
queries.
\end{rmk}

\apppart{Part II}{Spatial routing primitives}
\section{Spatial addressing with grid-independent width}
\label{app:addressing}

Appendix~\ref{app:seqrpe} demonstrates that a sequence-relative additive bias (Definition~\ref{def:seqrpe}) cannot route a toroidal neighborhood. Consequently, the Layer-1 positional table is indexed spatially. This section derives the required table width. Lemma~\ref{thm:addressing} establishes that a positional bias can isolate any target offset set using a grid-independent number of frequencies; the table width is this frequency count plus two.

The section is structured as follows. Section~\ref{app:addr:fourier} shows the coordinate dimension of a translation-invariant bias equals its Fourier spectral support size. Section~\ref{app:addr:isolation} bounds this support for isolating kernels, which strictly order a target set above its complement. Section~\ref{app:addr:scale} translates this ordering into an attention mass bound. Finally, Section~\ref{app:addr:interface} constructs the positional table and its projections, establishing the interface for Appendix~\ref{app:constructions}, while Section~\ref{app:addr:instance} provides a numerical example.

\subsection{Fourier representation of spatial attention scores}
\label{app:addr:fourier}

We apply harmonic analysis on finite abelian groups \citep[see][\S1.2 and \S1.10]{terras1999fourier} without proof.

The grid $\Grid = [W_1]\times\cdots\times[W_d]$ is the product of the cyclic
groups $\mathbb{Z}_{W_1},\dots,\mathbb{Z}_{W_d}$ under coordinatewise addition,
and has order $\lvert\Grid\rvert = \prod_j W_j$. Its dual $\Gd$ is the group of characters,
indexed by the same set: for $\omega \in \Gd$ and $\delta \in \Grid$ put
\begin{equation}
\label{eq:phase}
  \varphi_\omega(\delta) \;\coloneqq\; 2\pi\sum_{j=1}^{d}
    \frac{\omega_j\delta_j}{W_j},
  \qquad
  \chi_\omega(\delta) \;\coloneqq\; e^{i\varphi_\omega(\delta)} .
\end{equation}
Two consequences of \eqref{eq:phase} are frequently used. First, the phase is additive,
\begin{equation}
\label{eq:additive}
  \varphi_\omega(\delta + \delta') \;\equiv\;
  \varphi_\omega(\delta) + \varphi_\omega(\delta') \pmod{2\pi},
  \qquad\text{equivalently}\qquad
  \chi_\omega(\delta+\delta') \;=\;
  \chi_\omega(\delta)\,\chi_\omega(\delta') ,
\end{equation}
meaning a displacement in the argument rotates the value. The congruence is
needed because $\delta+\delta'$ is reduced modulo the side lengths; the
character identity, and every sine and cosine computed from it, hold exactly. Second, characters are orthonormal:
\begin{equation}
\label{eq:orthogonality}
  \mathop{\mathbb{E}}_{\omega \sim \mathrm{Unif}(\Gd)}\;
  \chi_\omega(y)
  \;=\;
  \frac{1}{\lvert\Grid\rvert}\sum_{\omega \in \Gd} \chi_\omega(y)
  \;=\;
  \begin{cases}
    1 & y = 0, \\
    0 & y \neq 0,
  \end{cases}
  \qquad y \in \Grid .
\end{equation}
Since the characters form an orthonormal basis of $\mathbb{C}^{\Grid}$, every
$g : \Grid \to \mathbb{C}$ has a unique expansion
$g = \sum_{\omega} \hat g(\omega)\chi_\omega$. The \emph{spectrum} of $g$ is
$\supp \hat g \coloneqq \{\omega : \hat g(\omega) \neq 0\}$, and it is
conjugate-symmetric, $\hat g(-\omega) = \overline{\hat g(\omega)}$, when $g$ is
real. We write $e_1,\dots,e_d$ for the standard generators of $\Gd$, and
\[
  \dS \;\coloneqq\; \bigl\lvert \supp\hat g \setminus \{0\} \bigr\rvert
\]
for the number of nontrivial characters in the spectrum, the \emph{addressing
dimension} of $g$.

A displacement-dependent positional bias $z - z'$ induces a $\Grid$-circulant score matrix. Lemma~\ref{lem:realization} establishes its minimal rank factorization, with transformer implications discussed in \S\ref{app:addr:interface}.

\begin{lemma}[Realization of translation-invariant spatial biases]
\label{lem:realization}
Let $g : \Grid \to \mathbb{R}$ and $\Omega \coloneqq \supp\hat g$. The least $n$
for which there exist $p : \Grid \to \mathbb{R}^n$ and $Q \in
\mathbb{R}^{n\times n}$ with
\[
  p_z^\top Q\, p_{z'} \;=\; g(z-z')
  \qquad\text{for all } z,z' \in \Grid
\]
is exactly $\lvert\Omega\rvert$. A $Q$ attaining it may be taken block diagonal
with blocks of size $1$ and $2$, the $2\times2$ blocks being scaled rotations.
\end{lemma}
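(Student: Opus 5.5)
The proof has two halves: a lower bound showing that no factorization can use fewer than $\lvert\Omega\rvert$ coordinates, and an explicit construction showing that $\lvert\Omega\rvert$ coordinates suffice. Both rest on the circulant score matrix $M \in \mathbb{R}^{\Grid\times\Grid}$ with entries $M_{z,z'} = g(z-z')$.

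\emph{Lower bound.} Any realization $p_z^\top Q\,p_{z'} = g(z-z')$ writes $M = P^\top Q P$, where $P\in\mathbb{R}^{n\times\lvert\Grid\rvert}$ has columns $p_z$. Hence $\rank M \le n$. It remains to compute $\rank M$. Because $M$ is $\Grid$-circulant, each character $\chi_\omega$ is an eigenvector of $M$. Applying $M$ to $\chi_\omega$ gives
\[
\sum_{z'} g(z-z')\chi_\omega(z') = \chi_\omega(z)\sum_{y} g(y)\chi_\omega(-y),
\]
using the additivity \eqref{eq:additive}. The eigenvalue is therefore $\lvert\Grid\rvert\,\hat g(\omega)$, up to the normalization convention for $\hat g$. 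By orthonormality \eqref{eq:orthogonality} the characters diagonalize $M$ over $\mathbb{C}$. Its rank, which is the same over $\mathbb{R}$ and over $\mathbb{C}$, is then the number of nonzero eigenvalues, namely $\lvert\Omega\rvert$. This gives $n\ge\lvert\Omega\rvert$.

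\emph{Upper bound.} Since $g$ is real, $\hat g(-\omega)=\overline{\hat g(\omega)}$, so $\Omega$ is closed under negation. Partition $\Omega$ into self-conjugate frequencies ($\omega=-\omega$, i.e.\ $2\omega=0$) and conjugate pairs $\{\omega,-\omega\}$ with $\omega\ne-\omega$, choosing one representative from each pair.

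For a self-conjugate $\omega$, the character $\chi_\omega$ takes only the values $\pm1$ and $\hat g(\omega)$ is real. Use the single coordinate $\chi_\omega(z)$ with the $1\times1$ block $\hat g(\omega)$. This works because
\[
\chi_\omega(z)\chi_\omega(z')=\chi_\omega(z-z'),
\]
as $\chi_\omega(z')=\chi_\omega(-z')$.

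For a pair, write $\hat g(\omega)=r e^{i\theta}$ and use the two coordinates $(\cos\varphi_\omega(z),\sin\varphi_\omega(z))$. The pair contributes
\[
2\,\mathrm{Re}\bigl(\hat g(\omega)\chi_\omega(z-z')\bigr)=2r\cos\bigl(\varphi_\omega(z)-\varphi_\omega(z')+\theta\bigr).
\]
Expanding with the angle-difference formula, this equals $u_z^\top\,(2r R_\theta)\,u_{z'}$ for a suitable rotation $R_{\pm\theta}$, where $u_z=(\cos\varphi_\omega(z),\sin\varphi_\omega(z))^\top$. The block is thus a scaled rotation.

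Stacking all blocks gives a block-diagonal $Q$ of size $\lvert\Omega\rvert$. The sum of the block contributions is $\sum_{\omega\in\Omega}\hat g(\omega)\chi_\omega(z-z') = g(z-z')$, which is the Fourier expansion of $g$.

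\emph{Main obstacle.} The substantive step is the rank identity: the eigenvalue of $M$ at $\chi_\omega$ must be computed with consistent sign and normalization conventions, so that its zero set is exactly the complement of $\supp\hat g$. The other point needing care is the sign and orientation bookkeeping in the $2\times2$ rotation blocks. Everything else is routine.
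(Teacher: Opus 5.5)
Your proposal is correct and follows the paper's proof essentially step for step. The lower bound uses the same rank argument for the circulant score matrix $M=P^\top QP$, whose eigenvalues are $\lvert\Grid\rvert\hat g(\omega)$, and the construction uses the same decomposition of $\Omega$ into orbits of $\omega\mapsto-\omega$, with one coordinate per self-paired character and a $(\cos,\sin)$ pair per conjugate pair. The only difference is cosmetic: you parametrize each $2\times2$ block by the polar form $\hat g(\omega)=re^{i\theta}$, so the block is $2r$ times the rotation by $-\theta$, whereas the paper uses the real coefficients $\alpha_\omega,\beta_\omega$ of the cosine--sine expansion.
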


\begin{proof}
\emph{The score matrix and its eigenvalues.}
Let $M$ be the matrix indexed by $\Grid\times\Grid$ with entries
$M_{z,z'} \coloneqq g(z-z')$. It is constant along the orbits of the diagonal
action of $\Grid$ on itself, making it a $\Grid$-circulant matrix, the
$d$-dimensional analogue of a circulant. Every character is an eigenvector of
$M$: substituting
$y \coloneqq z - z'$ and using \eqref{eq:additive},
\begin{equation}
\label{eq:eigen}
\begin{split}
  (M\chi_\omega)(z)
  &\;=\; \sum_{z' \in \Grid} g(z-z')\,\chi_\omega(z')
   \;=\; \sum_{y \in \Grid} g(y)\,\chi_\omega(z-y) \\[2pt]
  &\;=\; \chi_\omega(z) \sum_{y \in \Grid} g(y)\,\overline{\chi_\omega(y)}
   \;=\; \lvert\Grid\rvert\,\hat g(\omega)\,\chi_\omega(z) .
\end{split}
\end{equation}
The $\lvert\Grid\rvert$ characters are a basis, so \eqref{eq:eigen} diagonalizes $M$ and its
eigenvalues are $\{\lvert\Grid\rvert\hat g(\omega)\}_{\omega \in \Gd}$. Exactly
$\lvert\Omega\rvert$ of them are nonzero, so $\rank M = \lvert\Omega\rvert$.

\emph{Lower bound.}
Suppose $p$ and $Q$ are as in the statement and let $P$ be the $n \times \lvert\Grid\rvert$
matrix whose columns are the vectors $p_z$. Then $M = P^\top Q P$, so
\[
  \lvert\Omega\rvert \;=\; \rank M \;\leq\; \rank Q \;\leq\; n .
\]

\emph{Construction.}
For the converse, partition $\Omega$ into orbits of the involution
$\omega \mapsto -\omega$. Because $g$ is real its spectrum is
conjugate-symmetric, so collecting each orbit into a single real term gives
\begin{equation}
\label{eq:realform}
  g(\delta) \;=\; \sum_{\text{orbits } \{\omega,-\omega\}}
    \bigl[\alpha_\omega\cos\varphi_\omega(\delta)
        + \beta_\omega\sin\varphi_\omega(\delta)\bigr] ,
\end{equation}
with $\alpha_\omega, \beta_\omega \in \mathbb{R}$. An orbit is
\emph{self-paired} when $\omega = -\omega$, that is $2\omega = 0$; there
$\varphi_\omega$ takes values in $\{0,\pi\}$, so $\sin\varphi_\omega \equiv 0$
and $\beta_\omega = 0$.

Assign coordinates as follows: a self-paired orbit, which contributes one
element to $\Omega$, receives the single coordinate
$\cos\varphi_\omega(z)$; a conjugate pair, which contributes two, receives the
two coordinates
\begin{equation}
\label{eq:blockcoord}
  e_\omega(z) \;\coloneqq\;
  \begin{bmatrix}
    \cos\varphi_\omega(z) \\[2pt] \sin\varphi_\omega(z)
  \end{bmatrix}
  \;\in\; \mathbb{R}^2 .
\end{equation}
The total is $\lvert\Omega\rvert$ coordinates. Take $Q$ block diagonal with the
blocks
\begin{equation}
\label{eq:block}
  \bigl[\,\alpha_\omega\,\bigr]
  \quad\text{on a self-paired orbit,}
  \qquad
  M_\omega \;\coloneqq\;
  \begin{bmatrix}
    \alpha_\omega & -\beta_\omega \\[2pt]
    \beta_\omega & \phantom{-}\alpha_\omega
  \end{bmatrix}
  \quad\text{on a conjugate pair.}
\end{equation}

It remains to check that each block contributes its term of \eqref{eq:realform}.
Write $A \coloneqq \varphi_\omega(z)$ and $B \coloneqq \varphi_\omega(z')$.
Expanding the quadratic form and applying the angle-difference identities
$\cos(A-B) = \cos A\cos B + \sin A \sin B$ and
$\sin(A-B) = \sin A\cos B - \cos A\sin B$,
\begin{align}
  e_\omega(z)^\top M_\omega\, e_\omega(z')
  &=
  \begin{bmatrix} \cos A & \sin A \end{bmatrix}
  \begin{bmatrix}
    \alpha_\omega & -\beta_\omega \\ \beta_\omega & \phantom{-}\alpha_\omega
  \end{bmatrix}
  \begin{bmatrix} \cos B \\ \sin B \end{bmatrix}
  \notag\\[4pt]
  &= \alpha_\omega\bigl(\cos A\cos B + \sin A \sin B\bigr)
   + \beta_\omega \bigl(\sin A\cos B - \cos A \sin B\bigr)
  \notag\\[4pt]
  &= \alpha_\omega\cos(A-B) + \beta_\omega\sin(A-B) .
  \label{eq:blockexpand}
\end{align}
By \eqref{eq:additive}, $A - B = \varphi_\omega(z) - \varphi_\omega(z') =
\varphi_\omega(z-z')$, so \eqref{eq:blockexpand} is exactly the
$\omega$-term of \eqref{eq:realform} evaluated at $z-z'$. The self-paired blocks
reduce to the same computation with the sine coordinate absent. Summing over the
orbits gives $p_z^\top Q\, p_{z'} = g(z-z')$.

Finally, $M_\omega = r_\omega R(\theta_\omega)$ with
$r_\omega = \sqrt{\alpha_\omega^2 + \beta_\omega^2}$,
$\theta_\omega = \operatorname{atan2}(\beta_\omega, \alpha_\omega)$ and
$R(\theta)$ the rotation by $\theta$, so the $2\times2$ blocks are scaled
rotations, as claimed.
\end{proof}

\begin{rmk}
The lower bound establishes the addressing dimension $\dS$ (\S\ref{app:addr:fourier}) as an intrinsic property of the positional bias. In \S\ref{app:addr:interface}, the scaled-rotation construction \eqref{eq:block} allows an attention head to select its routing offset via a linear map applied to a shared positional table, minimizing parameter count.
\end{rmk}

\subsection{Isolating a set of offsets}
\label{app:addr:isolation}

\begin{definition}[Isolation]
\label{def:isolating}
Let $\emptyset \neq \Nset \subsetneq \Grid$ and $\gamma > 0$. A function
$g : \Grid \to \mathbb{R}$ \emph{isolates $\Nset$ with margin $\gamma$} if
\[
  \min_{\delta\in\Nset} g(\delta)
  \;-\; \max_{\delta\notin\Nset} g(\delta)
  \;\geq\; \gamma .
\]
\end{definition}

Isolation is an ordering condition. Since additive constants leave the softmax invariant, $g$ isolates $\Nset$ if and only if $g + \mathrm{const}$ does. We therefore discard the trivial character $\omega = 0$ and report counts for $\supp\hat g\setminus\{0\}$.

The elementary construction multiplies one factor per value the target set assumes on each axis. It vanishes exactly on the bounding box of $\Nset$ and is strictly positive elsewhere, reducing the problem to the box. When the box coincides with the target set, it yields an exact solution (e.g., Construction~I in Appendix~\ref{app:constructions}).

\begin{lemma}[Box kernels]
\label{lem:degreeone}
Let $\emptyset \neq \Nset \subseteq \Grid$, let
$V_j \coloneqq \{u_j : u \in \Nset\}$ be the projection of $\Nset$ to axis $j$,
and put
\[
  P_j(x) \;\coloneqq\; \prod_{v \in V_j}
    \Bigl(1 - \cos\bigl(2\pi (x_j - v)/W_j\bigr)\Bigr),
  \qquad
  Q \;\coloneqq\; \sum_{j=1}^{d} P_j ,
  \qquad
  B \;\coloneqq\; V_1 \times \dots \times V_d .
\]
Then $Q \geq 0$ vanishes exactly on $B$, which contains $\Nset$ and satisfies
$\lvert B\rvert \leq \lvert\Nset\rvert^{d}$, and
\[
  \supp\hat Q \setminus\{0\} \;\subseteq\;
  \{c\,e_j : j \leq d,\; 1 \leq \lvert c\rvert \leq \lvert\Nset\rvert\},
\]
at most $2d\lvert\Nset\rvert$ characters. Consequently:
\begin{enumerate}[label=(\roman*),leftmargin=2.2em,itemsep=1pt,topsep=2pt]
  \item if $B = \Nset \neq \Grid$, then $-Q$ isolates $\Nset$ with margin
        $\varepsilon \coloneqq \min_{x \notin B} Q(x) > 0$ at no more than
        $2d\lvert\Nset\rvert$ nontrivial frequencies;
  \item a single offset, $\Nset = \{\delta\}$, is that case: $B = \{\delta\}$,
        $Q(x) = \sum_j\bigl(1-\cos(2\pi(x_j-\delta_j)/W_j)\bigr)$ is of degree
        one on each axis, and $\{\delta\}$ is isolated at \emph{at most} $2d$
        frequencies, the pair $(\cos,\sin)$ on each axis, with margin
        $1 - \cos(2\pi/\Wmax)$, where $\Wmax \coloneqq \max_j W_j$ is the
        largest side length. The count is exactly $2d$ when every
        $W_j \geq 3$; an axis with $W_j = 2$ contributes one character instead
        of two, since there $e_j = -e_j$ and the orbit is self-paired.
\end{enumerate}
\end{lemma}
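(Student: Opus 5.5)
The plan is to treat $Q$ one axis at a time, since $Q=\sum_j P_j$ and each $P_j$ depends only on the coordinate $x_j$. There are four steps: nonnegativity and the zero set, the size of the box, the spectrum, and the two consequences.

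\emph{Nonnegativity and zero set.} Each factor $1-\cos(2\pi(x_j-v)/W_j)$ is $\ge 0$. It vanishes exactly when $x_j\equiv v \pmod{W_j}$, because $v$ and $x_j$ are residues in $[W_j]$. Hence $P_j\ge0$, with equality iff $x_j\in V_j$. A sum of nonnegative terms vanishes iff every term does, so $Q(x)=0$ iff $x_j\in V_j$ for all $j$, that is, iff $x\in B$.

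\emph{The box.} Every $u\in\Nset$ has $u_j\in V_j$ by definition, so $\Nset\subseteq B$. Since $\lvert V_j\rvert\le\lvert\Nset\rvert$, we get $\lvert B\rvert=\prod_j\lvert V_j\rvert\le\lvert\Nset\rvert^d$.

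\emph{Spectrum.} This is the only step needing care. Write each factor as
\[
1-\tfrac12\bigl(\chi_{e_j}(x-v e_j)+\chi_{-e_j}(x-v e_j)\bigr),
\]
a combination of the characters $\chi_0,\chi_{e_j},\chi_{-e_j}$ with constant coefficients. By the multiplicativity \eqref{eq:additive}, the product over $v\in V_j$ expands into a combination of characters $\chi_{c e_j}$ with $\lvert c\rvert\le\lvert V_j\rvert\le\lvert\Nset\rvert$. Here $c$ is an integer, which we reduce modulo $W_j$; reduction can only merge characters, so it only shrinks the support. Summing over $j$, the nontrivial support lies in $\{c e_j: 1\le\lvert c\rvert\le\lvert\Nset\rvert\}$. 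This set contains at most $2\lvert\Nset\rvert$ elements per axis, giving the bound $2d\lvert\Nset\rvert$.

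\emph{Consequence (i).} If $B=\Nset\ne\Grid$, the complement of $\Nset$ is nonempty and finite, so $\varepsilon=\min_{x\notin B}Q>0$ is attained. Then
\[
\min_{\Nset}(-Q)-\max_{\notin\Nset}(-Q)=0+\varepsilon,
\]
which is isolation with margin $\varepsilon$ in the sense of Definition~\ref{def:isolating}. The frequency count follows from the spectrum step, and it transfers to $-Q$ since negation does not change the support.

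\emph{Consequence (ii).} For $\Nset=\{\delta\}$ each $V_j=\{\delta_j\}$, so $B=\{\delta\}$ and $Q$ has the displayed degree-one form. The spectrum on axis $j$ is $\{\pm e_j\}$. This is two characters, unless $W_j=2$, where $e_j=-e_j$ is self-paired and contributes one. When $W_j\ge3$ the coefficient $-\tfrac12$ of $\chi_{\pm e_j}$ is nonzero, so the count is exactly $2d$ when all $W_j\ge3$.

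For the margin, any $x\ne\delta$ differs from $\delta$ on some axis $j$, say $x_j-\delta_j\equiv m\not\equiv0 \pmod{W_j}$. Then
\[
Q(x)\ge 1-\cos(2\pi m/W_j)\ge 1-\cos(2\pi/W_j)\ge 1-\cos(2\pi/\Wmax).
\]
The middle inequality holds because the nonzero residues closest to $0$ modulo $W_j$ are $\pm1$. The last holds because $1-\cos(2\pi/W)$ is decreasing in $W\ge2$. Hence $\varepsilon\ge1-\cos(2\pi/\Wmax)$.
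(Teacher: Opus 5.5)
Your proof is correct and follows the paper's argument step for step: the same zero-set and box-size arguments, the same spectrum bound, and the same angular-distance estimate for the margin in part (ii). For the spectrum you expand each factor in $\chi_0,\chi_{\pm e_j}$ and use multiplicativity, where the paper inducts on trigonometric degree via the product-to-sum identity; these are the same computation, and your remark that reducing $c$ modulo $W_j$ only merges characters makes explicit a point the paper leaves implicit.
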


\begin{proof}
\emph{Non-negativity and the zero set.}
Each factor of $P_j$ has the form $1 - \cos\theta \in [0,2]$ and vanishes exactly
when $\theta \in 2\pi\mathbb{Z}$, that is when $x_j = v$ in $\mathbb{Z}_{W_j}$.
So $P_j \geq 0$, and $P_j(x) = 0$ if and only if $x_j \in V_j$. A sum of
non-negative terms vanishes only where every term does, so $Q \geq 0$ and
\[
  Q(x) = 0
  \quad\Longleftrightarrow\quad
  x_j \in V_j \text{ for every } j
  \quad\Longleftrightarrow\quad
  x \in B .
\]

\emph{The box contains the target set, and is not much larger.}
If $u \in \Nset$ then $u_j \in V_j$ for every $j$ by the definition of $V_j$, so
$\Nset \subseteq B$. Each $V_j$ is the image of $\Nset$ under a coordinate
projection, so $\lvert V_j\rvert \leq \lvert\Nset\rvert$ and
$\lvert B\rvert = \prod_j \lvert V_j\rvert \leq \lvert\Nset\rvert^{d}$.

\emph{The spectrum.}
Fix $j$. Each factor of $P_j$ is a function of $x_j$ alone whose spectrum lies
in $\{0, \pm e_j\}$, a trigonometric polynomial of degree $1$ in $x_j$. The
product-to-sum identity
\[
  \cos(a)\cos(b) \;=\; \tfrac12\bigl[\cos(a-b) + \cos(a+b)\bigr]
\]
and its two companions show that multiplying a trigonometric polynomial of
degree $m$ by one of degree $1$ yields one of degree at most $m+1$. By induction
on the $\lvert V_j\rvert$ factors, $P_j$ is a trigonometric polynomial in $x_j$
of degree at most $\lvert V_j\rvert \leq \lvert\Nset\rvert$, so
\[
  \supp \hat P_j \;\subseteq\;
  \{c\,e_j : \lvert c\rvert \leq \lvert\Nset\rvert\} .
\]
Summing over $j$ gives the stated inclusion for $Q$. Discarding $\omega = 0$,
each axis contributes at most the $2\lvert\Nset\rvert$ characters
$\pm e_j, \pm 2e_j, \dots, \pm\lvert\Nset\rvert e_j$, for at most
$2d\lvert\Nset\rvert$ in all.

\emph{Part (i).}
Suppose $B = \Nset$. Then $Q$ vanishes on $\Nset$, and off $\Nset$ it is
positive; since $\Grid$ is finite, $\varepsilon = \min_{x\notin B} Q(x)$ is a
minimum over a finite set on which $Q$ does not vanish, hence
$\varepsilon > 0$. Therefore
\[
  -Q(x) = 0 \quad (x \in \Nset),
  \qquad
  -Q(x) \leq -\varepsilon \quad (x \notin \Nset),
\]
which is isolation with margin $\varepsilon$. The frequency count is the one
just established, and negation does not change a spectrum.

\emph{Part (ii).}
Let $\Nset = \{\delta\}$. Then $V_j = \{\delta_j\}$ is a single value for each
$j$, so each $P_j$ consists of one factor, $B = V_1\times\cdots\times V_d =
\{\delta\}$, and
\[
  Q(x) \;=\; \sum_{j=1}^{d}\Bigl(1 - \cos\bigl(2\pi(x_j-\delta_j)/W_j\bigr)\Bigr).
\]
Each summand has spectrum $\{0, \pm e_j\}$, so $\supp\hat Q\setminus\{0\}$ is
contained in $\{\pm e_j\}_{j\leq d}$, one $(\cos,\sin)$ pair per axis and hence
at most $2d$ elements. The containment is an equality when every $W_j \geq 3$.
If $W_j = 2$ then $2e_j = 0$ in $\Gd$, so $e_j$ and $-e_j$ are the same
character, $\varphi_{e_j}$ takes values in $\{0,\pi\}$, and that axis
contributes a single self-paired character, so the count drops by one.
Lemma~\ref{lem:realization} then assigns that axis one
coordinate, so the realized width matches the reduced count. Padding to
$2d$ recovers a single uniform formula where one is preferred.
Since $B = \Nset$, part~(i) applies, and it remains to bound $\varepsilon$ from
below. Let $x \neq \delta$. Then $x_{j} \neq \delta_{j}$ for some axis $j$, so
$x_{j} - \delta_{j}$ is a nonzero residue modulo $W_{j}$ and its phase
$2\pi(x_{j}-\delta_{j})/W_{j}$ is at angular distance at least $2\pi/W_{j}$ from
$0$. Cosine is decreasing in the distance to $0$ on $[0,\pi]$, so that summand
is at least $1 - \cos(2\pi/W_{j})$, and the remaining summands are non-negative:
\[
  Q(x) \;\geq\; 1 - \cos\bigl(2\pi/W_{j}\bigr)
        \;\geq\; 1 - \cos\bigl(2\pi/\Wmax\bigr) ,
\]
the second inequality because $W_{j} \leq \Wmax$ makes the angle $2\pi/W_{j}$ no
smaller than $2\pi/\Wmax$, both lying in $(0,\pi]$ whenever $W_j \geq 2$. Hence
$\varepsilon \geq 1 - \cos(2\pi/\Wmax)$.
\end{proof}

\begin{exmp}
Lemma~\ref{lem:degreeone} provides a worst-case upper bound. Numerical optimization yields much tighter results for standard neighborhoods. For example, on $\mathbb{Z}_6^2$, the von Neumann, Moore, and eight-offset routing sets $\Uset$ can each be isolated using only four characters $\{\pm e_1, \pm e_2\}$, significantly below the guaranteed $32$ characters. This setting is analyzed thoroughly in \S\ref{app:addr:instance}.
\end{exmp}

If the bounding box strictly contains the target set, the box kernel constitutes the first stage of a two-stage probabilistic construction.

\begin{theorem}[Grid-independent spatial isolation]
\label{thm:isolation}
Let $\emptyset \neq \Nset \subsetneq \Grid$ and $\gamma > 0$. There is a
$g : \Grid \to \mathbb{R}$ isolating $\Nset$ with margin $\gamma$ and
\[
  \lvert\supp\hat g \setminus \{0\}\rvert
  \;\leq\; 2d\,\lvert\Nset\rvert \;+\; 2\nu ,
  \qquad
  \nu \;\coloneqq\;
  \bigl\lceil 40\,\lvert\Nset\rvert\,(d\ln\lvert\Nset\rvert + 1)\bigr\rceil ,
\]
that is $O\bigl(\lvert\Nset\rvert\, d \log \lvert\Nset\rvert\bigr)$ frequencies.
The count is independent of $T$ and $\lvert \Grid \rvert$.
\end{theorem}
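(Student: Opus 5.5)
The plan is a two-stage construction, as announced before the statement. Stage one uses the box kernel $Q$ of Lemma~\ref{lem:degreeone}. It costs at most $2d\lvert\Nset\rvert$ nontrivial frequencies, vanishes exactly on the bounding box $B\supseteq\Nset$, and is bounded below by some $\varepsilon_B>0$ off $B$. Stage two is a correction $h$ with at most $2\nu$ nontrivial frequencies, required to separate $\Nset$ from the finite set $B\setminus\Nset$, where $\lvert B\rvert\le\lvert\Nset\rvert^d$. We then set $g\coloneqq -\lambda Q+h$ with $\lambda$ large. Since $h$ is bounded, choosing $\lambda\varepsilon_B$ larger than $2\lVert h\rVert_\infty+\gamma'$ makes every point off $B$ score below $\min_{\Nset}g$. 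Within $B$ the term $Q$ vanishes, so the ordering there is decided by $h$ alone. A final positive rescaling turns any positive margin into margin $\gamma$. Rescaling does not change the spectrum, and the frequency counts of $Q$ and $h$ add, giving the bound $2d\lvert\Nset\rvert+2\nu$.

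The crux is building $h$: a real trigonometric polynomial with about $\nu=O(\lvert\Nset\rvert(d\ln\lvert\Nset\rvert+1))$ conjugate pairs, satisfying $\min_{\Nset}h>\max_{B\setminus\Nset}h$. I would do this probabilistically. Let $\bar{\mathbf 1}_\Nset$ denote the normalized indicator, and consider
\[
  h(x)\;=\;\frac{1}{\nu}\sum_{i=1}^{\nu}\sum_{u\in\Nset}\cos\varphi_{\omega_i}(x-u),
\]
with $\omega_1,\dots,\omega_\nu$ i.i.d.\ uniform on $\Gd$. The sum over $u$ is a single character pair applied to the shift $x$, namely $\mathrm{Re}\bigl(\chi_{\omega_i}(x)\sum_u\overline{\chi_{\omega_i}(u)}\bigr)$. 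So each $\omega_i$ costs at most two nontrivial characters, and $h$ has at most $2\nu$. By orthogonality \eqref{eq:orthogonality}, the expectation is $\mathbb{E}h(x)=\#\{u\in\Nset:x=u\}=\mathbf 1[x\in\Nset]$. The expected gap between $\Nset$ and $B\setminus\Nset$ is therefore exactly $1$.

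Each summand lies in $[-\lvert\Nset\rvert,\lvert\Nset\rvert]$, so Hoeffding bounds the deviation at a fixed $x$. The probability that $\lvert h(x)-\mathbb{E}h(x)\rvert\ge 1/3$ is at most $2\exp\bigl(-\nu/(18\lvert\Nset\rvert^2)\bigr)$. This is the main obstacle: a union bound over $\lvert B\rvert\le\lvert\Nset\rvert^d$ points then forces $\nu\gtrsim\lvert\Nset\rvert^2 d\ln\lvert\Nset\rvert$, which is quadratic rather than the linear $\lvert\Nset\rvert$ in the statement.

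To recover linear dependence on $\lvert\Nset\rvert$, I would use a variance (Bernstein) bound instead of the range bound. For $x\in B$, the variance of one summand is
\[
  \mathbb{E}\Bigl[\bigl(\textstyle\sum_u\cos\varphi_\omega(x-u)\bigr)^2\Bigr]\;\le\;\tfrac12\,\#\{(u,u'):\ldots\}+\ldots\;=\;O(\lvert\Nset\rvert),
\]
again by orthogonality, since cross terms average to zero unless $u=u'$ or $2x=u+u'$. Bernstein with variance $O(\lvert\Nset\rvert)$ and range $\lvert\Nset\rvert$ gives a failure probability of $\exp(-c\nu/\lvert\Nset\rvert)$ at deviation $1/3$. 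The union bound over $\lvert\Nset\rvert^d$ points then needs $\nu\ge C\lvert\Nset\rvert(d\ln\lvert\Nset\rvert+1)$, and the constant $40$ should absorb $C$ together with the $+1$ (and $\ln 2$) from the union bound. With positive probability all points of $B$ deviate by less than $1/3$. Then $h\ge 2/3$ on $\Nset$ and $h\le 1/3$ on $B\setminus\Nset$, so $h$ has margin $1/3$ on $B$ and such an $h$ exists.

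Two details need checking. First, the self-paired characters $2\omega=0$ still count as at most $2$ per draw, so the count stays within $2\nu$. Second, the case $\lvert\Nset\rvert=1$ is already covered by Lemma~\ref{lem:degreeone}(ii), so no correction is needed there. Independence from $T$ and $\lvert\Grid\rvert$ is then immediate, because neither frequency count involves them.
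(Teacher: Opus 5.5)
Your proposal is correct and follows the paper's proof essentially step for step: the box kernel $Q$ of Lemma~\ref{lem:degreeone}, the averaged random-character correction $R=\nu^{-1}\sum_{i}\sum_{u\in\Nset}\cos\varphi_{\omega_i}(x-u)$ with mean $\mathbf 1[x\in\Nset]$ and variance at most $\lvert\Nset\rvert$, Bernstein plus a union bound over the at most $\lvert\Nset\rvert^{d}$ points of $B$, and a large multiple of $Q$ to push every point off $B$ below $\Nset$ before rescaling. The remaining differences are cosmetic: the paper uses deviation $\tfrac14$ and the explicit combination $R-(\lvert\Nset\rvert/\varepsilon)Q-\tfrac12$, and your deviation $\tfrac13$ is comfortably covered by the constant $40$.
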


\begin{proof}
\emph{Stage 1: a box around the target set.}
Let $Q$ and $B$ be the box kernel and box of Lemma~\ref{lem:degreeone}. Then
$Q \geq 0$ vanishes exactly on $B$, we have $\Nset \subseteq B$ and
$\lvert B\rvert \leq \lvert\Nset\rvert^{d}$, and $Q$ has at most
$2d\lvert\Nset\rvert$ nontrivial characters. It remains to separate $\Nset$
from $B \setminus \Nset$, a set of at most $\lvert\Nset\rvert^{d}$ points; this
is the only step at which the size of the grid could have entered, and Stage 3
is where it is kept out.

\emph{Stage 2: one random character.}
Fix $x \in B$, let $\omega$ be uniform on $\Gd$, and put
\begin{equation}
\label{eq:Zdef}
  Z(\omega) \;\coloneqq\; \sum_{u \in \Nset} \cos\varphi_\omega(x-u)
  \;=\; \Re \sum_{u\in\Nset} \chi_\omega(x-u) .
\end{equation}
By \eqref{eq:orthogonality} the term indexed by $u$ has expectation $1$ if
$x = u$ and $0$ otherwise, so
\begin{equation}
\label{eq:Zmean}
  \mathbb{E}Z \;=\; \lvert\{u \in \Nset : u = x\}\rvert
  \;=\;
  \begin{cases}
    1 & x \in \Nset, \\
    0 & x \in B \setminus \Nset .
  \end{cases}
\end{equation}
The two cases differ by $1$, and the rest of the proof turns that gap into a
deterministic one.

For the second moment, apply the product-to-sum identity
$\cos a \cos b = \tfrac12[\cos(a-b) + \cos(a+b)]$ to each pair of terms of
\eqref{eq:Zdef} and use $\varphi_\omega(x-u) \pm \varphi_\omega(x-v) =
\varphi_\omega\bigl((x-u)\pm(x-v)\bigr)$ from \eqref{eq:additive}:
\begin{equation}
\label{eq:Zsecond}
  \mathbb{E}[Z^{2}]
  \;=\;
  \tfrac12\sum_{u,v \in \Nset}
    \mathbb{E}\cos\varphi_\omega(v-u)
  \;+\;
  \tfrac12\sum_{u,v \in \Nset}
    \mathbb{E}\cos\varphi_\omega(2x-u-v) .
\end{equation}
Each expectation in \eqref{eq:Zsecond} is $1$ or $0$, again by
\eqref{eq:orthogonality}. In the first sum the surviving pairs are those with
$v = u$, of which there are $\lvert\Nset\rvert$. In the second the surviving
pairs are those with $u + v = 2x$; here $u$ determines $v$, so at most
$\lvert\Nset\rvert$ pairs survive. Hence
\begin{equation}
\label{eq:Zvar}
  \operatorname{Var} Z \;\leq\; \mathbb{E}[Z^{2}]
  \;\leq\; \tfrac12\lvert\Nset\rvert + \tfrac12\lvert\Nset\rvert
  \;=\; \lvert\Nset\rvert .
\end{equation}
Finally $\lvert Z\rvert \leq \lvert\Nset\rvert$ termwise, so with
\eqref{eq:Zmean},
\begin{equation}
\label{eq:Zrange}
  \lvert Z - \mathbb{E}Z\rvert \;\leq\; \lvert\Nset\rvert + 1
  \;\leq\; 2\lvert\Nset\rvert .
\end{equation}

\emph{Stage 3: sampling, and why the grid does not enter.}
Draw $\omega^{(1)},\dots,\omega^{(\nu)}$ independent and uniform on $\Gd$ and
put $R \coloneqq \nu^{-1}\sum_{i\leq\nu} Z(\omega^{(i)})$. Bernstein's
inequality for independent, mean-zero summands with variance proxy $\sigma^2$
and range bound $M$ reads
\[
  \mathbb{P}\bigl(\lvert R - \mathbb{E}R\rvert \geq t\bigr)
  \;\leq\;
  2\exp\!\left(-\frac{\nu t^{2}}{2\bigl(\sigma^{2} + Mt/3\bigr)}\right) .
\]
Substituting $\sigma^{2} = \lvert\Nset\rvert$ from \eqref{eq:Zvar},
$M = 2\lvert\Nset\rvert$ from \eqref{eq:Zrange} and $t = \tfrac14$,
\begin{equation}
\label{eq:bernstein}
\begin{split}
  \mathbb{P}\bigl(\lvert R - \mathbb{E}R\rvert \geq \tfrac14\bigr)
  &\;\leq\;
  2\exp\!\left(-\frac{\nu/16}
    {2\bigl(\lvert\Nset\rvert + \lvert\Nset\rvert/6\bigr)}\right)
  \;=\;
  2\exp\!\left(-\frac{\nu/16}{\tfrac73\lvert\Nset\rvert}\right) \\[4pt]
  &\;=\;
  2\exp\!\left(-\frac{3\nu}{112\,\lvert\Nset\rvert}\right).
\end{split}
\end{equation}
This bound holds for each fixed $x \in B$. Taking a union bound over the at most
$\lvert\Nset\rvert^{d}$ points of $B$, the probability that
$\lvert R - \mathbb{E}R\rvert \geq \tfrac14$ somewhere on $B$ is less than $1$
as soon as
\[
  2\,\lvert\Nset\rvert^{d}
  \exp\!\left(-\frac{3\nu}{112\lvert\Nset\rvert}\right) < 1
  \qquad\Longleftrightarrow\qquad
  \nu \;>\; \tfrac{112}{3}\,\lvert\Nset\rvert
    \bigl(d\ln\lvert\Nset\rvert + \ln 2\bigr) .
\]
The $\nu$ of the statement satisfies this, since $40 > 112/3 = 37.33\ldots$ and
$1 > \ln 2$. The grid enters neither \eqref{eq:bernstein}, whose
constants come from \eqref{eq:Zvar} and \eqref{eq:Zrange} alone, nor the union
bound, which ranges over $B$ alone; the final count is therefore independent of
$W_1,\dots,W_d$.

Some realization of $\omega^{(1)},\dots,\omega^{(\nu)}$ therefore has
\begin{equation}
\label{eq:Rprops}
  R \geq \tfrac34 \text{ on } \Nset,
  \qquad
  R \leq \tfrac14 \text{ on } B\setminus\Nset ,
\end{equation}
by \eqref{eq:Zmean}; fix it. Being an average of the real functions
\eqref{eq:Zdef}, $R$ is real, and $\supp\hat R \setminus\{0\}$ has at most
$2\nu$ elements: the sampled frequencies and their conjugates. Termwise,
$\lvert R\rvert \leq \lvert\Nset\rvert$ everywhere on $\Grid$.

\emph{Stage 4: synthesis.}
Let $\varepsilon \coloneqq \min_{x \notin B} Q(x) > 0$, a minimum over a finite
set on which $Q$ does not vanish, and put
\begin{equation}
\label{eq:g0}
  g_0 \;\coloneqq\; R \;-\; \frac{\lvert\Nset\rvert}{\varepsilon}\,Q
       \;-\; \tfrac12 .
\end{equation}
The three regions of $\Grid$ are handled in turn.
\begin{enumerate}[label=(\alph*),leftmargin=2.2em,itemsep=2pt,topsep=3pt]
  \item $x \in \Nset$. Then $Q(x) = 0$ because $\Nset \subseteq B$, so
        $g_0(x) = R(x) - \tfrac12 \geq \tfrac34 - \tfrac12 = \tfrac14$
        by \eqref{eq:Rprops}.
  \item $x \in B\setminus\Nset$. Again $Q(x) = 0$, so
        $g_0(x) = R(x) - \tfrac12 \leq \tfrac14 - \tfrac12 = -\tfrac14$.
  \item $x \notin B$. Then $Q(x) \geq \varepsilon$, so
        $(\lvert\Nset\rvert/\varepsilon)Q(x) \geq \lvert\Nset\rvert$, while
        $R(x) \leq \lvert\Nset\rvert$; hence
        $g_0(x) \leq \lvert\Nset\rvert - \lvert\Nset\rvert - \tfrac12
        = -\tfrac12 < -\tfrac14$.
\end{enumerate}
So $g_0 \geq \tfrac14$ on $\Nset$ and $g_0 \leq -\tfrac14$ off it, and
$g \coloneqq 4\gamma\,g_0$ isolates $\Nset$ with margin
$4\gamma(\tfrac14+\tfrac14) = 2\gamma \geq \gamma$. Its spectrum is contained in
$\supp\hat R \cup \supp\hat Q \cup \{0\}$, so
$\lvert\supp\hat g\setminus\{0\}\rvert \leq 2\nu + 2d\lvert\Nset\rvert$, which
is the stated count.

Two degenerate cases are worth recording. If $B = \Grid$, case (c) is vacuous,
no $x$ lies outside $B$, and any $\varepsilon > 0$ serves in \eqref{eq:g0}. If
$B = \Nset$, case (b) is vacuous and the random part is unnecessary altogether:
Lemma~\ref{lem:degreeone}(i) already isolates $\Nset$, at
$2d\lvert\Nset\rvert$ frequencies and with no $\nu$ term.
\end{proof}

\begin{rmk}
Because spectral support cannot exceed $\lvert\Grid\rvert$, Theorem~\ref{thm:isolation} is informative when $\lvert\Nset\rvert\,d\log\lvert\Nset\rvert \ll \lvert\Grid\rvert$, the regime where grid independence is critical. The theorem establishes a grid-independent scaling law irrespective of offset topology.
\end{rmk}

\subsection{Attention concentration from a score gap}
\label{app:addr:scale}

While isolation strictly orders target offsets above others, securing a quantitative guarantee on attention mass requires scaling. This is the only point in the analysis where the absolute scale of the positional kernel matters; previous results depend solely on the Fourier spectral support.

\begin{corollary}[Softmax concentration]
\label{cor:scale}
Let a softmax be taken over a finite set of keys scored $\Lambda g$, where
$\Lambda > 0$ and $g$ isolates a target subset of the keys with margin $\gamma$.
Suppose at least one target key is present and at most $N_{\neq}$ non-target
keys are. Then the attention mass placed off the target subset is at most
$N_{\neq}e^{-\Lambda\gamma}$, and is therefore at most $\varepsilon$ as soon as
\begin{equation}
\label{eq:scale}
  \Lambda\gamma \;\geq\; \log(N_{\neq}/\varepsilon),
  \qquad\text{equivalently}\qquad
  \Lambda \;\geq\; \gamma^{-1}\log(N_{\neq}/\varepsilon) .
\end{equation}
\end{corollary}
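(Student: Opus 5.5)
The plan is to write the softmax weights explicitly and bound the non-target mass by comparing it to a single target key. Let $s_j = \Lambda g_j$ denote the score of key $j$, let $\mathcal T$ be the target keys present (nonempty by hypothesis) and $\mathcal O$ the non-target keys, with $\lvert\mathcal O\rvert \le N_{\neq}$. Put $m \coloneqq \min_{j\in\mathcal T} g_j$. Isolation with margin $\gamma$ (Definition~\ref{def:isolating}), applied to the scores restricted to the present keys, gives $g_j \le m-\gamma$ for every $j \in \mathcal O$. Restricting to a subset of keys can only preserve this ordering, since the minimum over present targets is at least the global minimum over targets, and the maximum over present non-targets is at most the global maximum.

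The mass off the target set is
\[
  \frac{\sum_{j\in\mathcal O} e^{\Lambda g_j}}{\sum_{j\in\mathcal T} e^{\Lambda g_j} + \sum_{j\in\mathcal O} e^{\Lambda g_j}}
  \;\le\; \frac{\sum_{j\in\mathcal O} e^{\Lambda g_j}}{e^{\Lambda m}}
  \;\le\; N_{\neq}\, e^{\Lambda(m-\gamma)-\Lambda m}
  \;=\; N_{\neq} e^{-\Lambda\gamma}.
\]
The first inequality drops the non-target terms from the denominator and keeps one target term, namely the minimizing target key. This is where the presence of at least one target key is used. The second inequality uses the margin bound termwise together with the count $\lvert\mathcal O\rvert\le N_{\neq}$, and uses $\Lambda>0$ so that the direction of the inequalities is preserved.

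Finally, $N_{\neq}e^{-\Lambda\gamma}\le\varepsilon$ is equivalent, after taking logarithms, to $\Lambda\gamma\ge\log(N_{\neq}/\varepsilon)$, which is \eqref{eq:scale}. Dividing by $\gamma>0$ gives the second form. If $N_{\neq}=0$, the off-target mass is zero and the bound is trivial.

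There is no real obstacle. The only point needing care is that the margin is stated over all of $\Grid$, while only a subset of keys is present; the restriction argument above handles this.
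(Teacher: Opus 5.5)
Your proposal is correct and follows essentially the same route as the paper: bound the non-target mass by $B/A$, lower-bound $A$ by a single target term $e^{\Lambda m}$, and upper-bound each non-target term by $e^{\Lambda(m-\gamma)}$ using the margin. This gives $N_{\neq}e^{-\Lambda\gamma}$, and taking logarithms yields \eqref{eq:scale}. Your extra remark about restricting the isolation inequality to the keys actually present is a harmless tidying of a step the paper leaves implicit.
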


\begin{proof}
Write $s_\star \coloneqq \min_{\text{target}} g$ and
$s_{\neq} \coloneqq \max_{\text{non-target}} g$, so that isolation reads
$s_\star - s_{\neq} \geq \gamma$ and, after scaling,
$\Lambda s_\star - \Lambda s_{\neq} \geq \Lambda\gamma$. Split the partition
function into its target and non-target parts,
\[
  A \;\coloneqq\; \sum_{\text{target keys}} e^{\Lambda g},
  \qquad
  B \;\coloneqq\; \sum_{\text{non-target keys}} e^{\Lambda g} .
\]
At least one target key is present, so $A \geq e^{\Lambda s_\star}$. Every
non-target key contributes at most $e^{\Lambda s_{\neq}}$, and there are at most
$N_{\neq}$ of them, so
\[
\begin{aligned}
  B \;&\leq\; N_{\neq}e^{\Lambda s_{\neq}}
    \;=\; N_{\neq}e^{\Lambda s_\star}e^{-\Lambda(s_\star - s_{\neq})} \\[2pt]
    \;&\leq\; N_{\neq}e^{-\Lambda\gamma}\,e^{\Lambda s_\star}
    \;\leq\; N_{\neq}e^{-\Lambda\gamma}A .
\end{aligned}
\]
The mass off the target subset is $B/(A+B)$, and since $A, B \geq 0$,
\[
  \frac{B}{A+B} \;\leq\; \frac{B}{A} \;\leq\; N_{\neq}e^{-\Lambda\gamma} .
\]
This is at most $\varepsilon$ exactly when $e^{\Lambda\gamma} \geq
N_{\neq}/\varepsilon$, which is \eqref{eq:scale}.
\end{proof}

\subsection{Selecting spatial offsets in the preceding time step}
\label{app:addr:interface}

This section details the spatial positional block $B_2$ (Definition~\ref{def:stream}), establishing the interface for Appendix~\ref{app:constructions}.
Lemma~\ref{lem:realization} connects spectral counts to table width. A translation-invariant positional bias yields a score matrix $P^\top W_Q^{(1)\top}W_K^{(1)} P$ (where $P$ is the positional table). Setting $Q = W_Q^{(1)\top}W_K^{(1)}$ confirms the table requires exactly $\lvert\supp\hat g\rvert$ coordinates. Discarding the trivial character, this is the addressing dimension $\dS$. An attention head selects its routed offset by rotating the $2\times2$ blocks in \eqref{eq:block}.

\begin{corollary}[Spatiotemporal addressing interface]
\label{cor:head}
Let $\emptyset \neq \Nset \subseteq \Grid$ be a set of offsets and let a
Layer-1 head be required to score, from its query's own cell $\kappa(n)$, the
cells at $\kappa(n) + \delta$ for $\delta \in \Nset$, in the time step before
the target cell's, strictly above every other causally visible cell key. There
is a positional table $p_n \in \mathbb{R}^{\dpos}$, indexed
by sequence position, and projections $W_Q, W_K$ whose pre-softmax score does
so with a margin $\gamma > 0$, where
\[
  \dpos \;=\; \dS + 2,
  \qquad
  \dS \;\leq\; 2d\lvert\Nset\rvert + 2\nu
  \;=\; O\bigl(\lvert\Nset\rvert\,d\log\lvert\Nset\rvert\bigr),
\]
and $\dS = 2d$ when $\Nset$ is a single offset. The width is independent of the
volume and of the horizon.

The two temporal coordinates carry a scale $\lambda$, which must satisfy
\begin{equation}
\label{eq:lambda}
\begin{gathered}
  \lambda \;\geq\; \frac{R + \gamma}{\gamma_T},
  \\[4pt]
  R \;\coloneqq\; \max_{\delta\in\Grid} g(\delta)
                  - \min_{\delta\in\usucc-\Nset} g(\delta) \;\geq\; 0,
  \qquad
  \gamma_T \;\coloneqq\; 1 - \cos(2\pi/T) ,
\end{gathered}
\end{equation}
where $g$ is the spatial kernel. The scale $\lambda$ enters no width.
\end{corollary}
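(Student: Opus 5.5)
The plan is to build the positional table as a direct sum of a \emph{spatial} block, realized by Lemma~\ref{lem:realization} from an isolating kernel, and a two-dimensional \emph{temporal} block. The temporal block will encode $\tau(n)$ on a circle of circumference $T$ so that a single scaled rotation rewards exactly the step displacement $1$. The score is then a sum of two separately controlled terms, and the whole corollary reduces to choosing $\lambda$ large enough that the temporal term dominates across time steps while the spatial term decides within the previous time step.

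\emph{Spatial part.} I first pick $g$ on $\Grid$ isolating $\Nset$ with some margin $\gamma>0$. For general $\Nset$ this comes from Theorem~\ref{thm:isolation}, giving $\dS\le 2d\lvert\Nset\rvert+2\nu$. For a single offset it comes from Lemma~\ref{lem:degreeone}(ii), giving $\dS=2d$. The trivial character is dropped, since additive constants are softmax-invariant. By Lemma~\ref{lem:realization} there are features $s:\Grid\to\mathbb{R}^{\dS}$ and a block-diagonal matrix $Q_S$ with $s_{z}^\top Q_S s_{z'}=\tilde g(z'-z)$. I intend to index the query's features by its target cell $\kappa(n)+\usucc$, the key's by $\kappa(n')$, and take $\tilde g(x)\coloneqq g(x-\usucc)$ (equivalently, reflect appropriately). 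Then the spatial score for a key at offset $\delta$ from $\kappa(n)$ equals the value of $g$ at $\delta$, and the relevant minimum over desired keys is the quantity $\min_{\delta\in\usucc-\Nset}$ appearing in $R$; the reflection in \eqref{eq:lambda} is exactly this bookkeeping. Translation invariance holds exactly on the torus by \eqref{eq:additive}, so wraparound causes no trouble, which is the point of contrast with Theorem~\ref{thm:seqrpe}.

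\emph{Temporal part.} I append $u_n\coloneqq(\cos(2\pi\tau(n)/T),\sin(2\pi\tau(n)/T))$. With a $2\times2$ block $\lambda R(\theta)$, $\theta=2\pi/T$, placed in $W_Q^\top W_K$, the temporal score is $\lambda\cos\bigl(2\pi(\tau(n)-1-\tau(n'))/T\bigr)$, by the same angle-difference computation as \eqref{eq:blockexpand}. Causality gives $\tau(n')\le\tau(n)$, so the integer $\tau(n)-1-\tau(n')$ lies in $[-1,T-1]$. It is $\equiv0\pmod T$ only when $\tau(n')=\tau(n)-1$; here $T\ge2$ excludes $-1$. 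Hence desired-step keys score $\lambda$ and all other visible cell keys score at most $\lambda(1-\gamma_T)$. Note that this also penalizes cells of the query's own time step, which Theorem~\ref{thm:seqrpe} showed is the real danger. This gives $\dpos=\dS+2$, with $\lambda$ entering no width. Separator keys are not cell keys and are gated elsewhere, and the final \texttt{TIME\_SEP} is excluded as in Appendix~\ref{app:serialization:formal}.

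\emph{Margin check.} There are two cases of competing key.
\begin{itemize}
\item \emph{Same previous step.} The temporal terms are equal, so the isolation margin $\gamma$ separates desired from undesired offsets.
\item \emph{Different step.} A desired key scores at least $\min_{\usucc-\Nset}g+\lambda$, while a competitor scores at most $\max_\Grid g+\lambda(1-\gamma_T)$. The required gap $\gamma$ then holds precisely when $\lambda\gamma_T\ge R+\gamma$, which is \eqref{eq:lambda}.
\end{itemize}
I expect the main obstacle to be getting the sign and indexing conventions consistent. These are: query indexed by target versus own cell, $\Nset$ versus $\usucc-\Nset$, and separator queries using the virtual index from Lemma~\ref{lem:sepsucc}. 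Once those are fixed, the estimates are routine.
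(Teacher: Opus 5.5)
Your plan is essentially the paper's proof: a spatial block realized by Lemma~\ref{lem:realization} from a centered isolating kernel (Theorem~\ref{thm:isolation}, or Lemma~\ref{lem:degreeone}(ii) for a singleton), a temporal pair $(\cos,\sin)(2\pi\tau(n)/T)$ with block $\lambda R(2\pi/T)$, and the same right-step/wrong-step margin check giving $\lambda\gamma_T\ge R+\gamma$ (the paper does not need causality here, since $h$ isolates $1$ on all of $\mathbb{Z}_T$). The one thing to fix is the indexing you flagged: query features at $\kappa(n)+\usucc$ combined with $\tilde g(x)=g(x-\usucc)$ evaluate $g$ at $\delta-2\usucc$, so drop one of the two shifts. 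The paper's version keeps a single table indexed by $\kappa(n)$ and takes $g$ to isolate $\usucc-\Nset$ in the target-minus-key variable $\kappa(n)+\usucc-\kappa(n')$, which is why $R$ is stated with $\min_{\usucc-\Nset}g$. It then absorbs $\usucc$ at no extra width by rotating each block $M_\omega$ by $-\varphi_\omega(\usucc)$.
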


\begin{proof}
\emph{Stage 1: the group and the target set.}
A query at sequence position $n$ has virtual time step $\tau(n)$ and virtual
cell $\kappa(n)$, both defined in Appendix~\ref{app:serialization:formal}, and
its target cell is $\kappa(n) + \usucc$
(Lemma~\ref{lem:successor}). It must score the keys $n'$ with
\[
  \kappa(n') - \kappa(n) \in \Nset
  \qquad\text{and}\qquad
  \tau(n) - \tau(n') = 1
\]
above all others. Stage 5 realizes the spatial score in the
\emph{target-minus-key} variable
\[
  \delta \;\coloneqq\; \kappa(n) + \usucc - \kappa(n') ,
\]
and a key at $\kappa(n') = \kappa(n) + \delta'$ has $\delta = \usucc -
\delta'$, so the first condition reads $\delta \in \usucc - \Nset$. Above,
$\usucc - \Nset \coloneqq \{\usucc - \delta' : \delta' \in \Nset\}$, and
$\delta' \mapsto \usucc - \delta'$ is a bijection of $\Grid$, so
$\lvert\usucc - \Nset\rvert = \lvert\Nset\rvert$ and every width bound of the
statement, which is stated in $\lvert\Nset\rvert$, is unaffected. Each
condition constrains a difference, so the ambient group is
$\Grid \times \mathbb{Z}_T$ and the target set is
$(\usucc - \Nset) \times \{1\}$. We build a kernel on that product, one block
per factor.

\emph{Stage 2: the spatial block.}
Let $g$ isolate $\usucc - \Nset$ in $\Grid$ with margin $\gamma$, by
Theorem~\ref{thm:isolation}, and let $\Omega \coloneqq \supp\hat g\setminus\{0\}$
with $\lvert\Omega\rvert = \dS$. Removing the trivial character replaces $g$ by
the centered kernel $g - \hat g(0)$, which isolates the same set with the same
margin (Definition~\ref{def:isolating}); we write $g$ for it from here on. Let
$\Omega_+ \subseteq \Omega$ contain one representative of each orbit
$\{\omega,-\omega\}$. Write $g$ in the real form
\eqref{eq:realform} with coefficients $\alpha_\omega, \beta_\omega$, and take the
coordinates and blocks Lemma~\ref{lem:realization} supplies, indexed by the
representatives:
\begin{equation}
\label{eq:spatialtable}
  \phi(z) \;\coloneqq\;
  \bigl(e_\omega(z)\bigr)_{\omega \in \Omega_+}
  \;=\;
  \Bigl(
    \cos\varphi_\omega(z),\; \sin\varphi_\omega(z)
  \Bigr)_{\omega \in \Omega_+}
  \;\in\; \mathbb{R}^{\dS},
  \qquad
  M_\omega \;=\;
  \begin{bmatrix}
    \alpha_\omega & -\beta_\omega \\[2pt]
    \beta_\omega & \phantom{-}\alpha_\omega
  \end{bmatrix} ,
\end{equation}
with the sine coordinate omitted for a self-paired representative. This gives
one $2\times2$ block per conjugate pair and one $1\times1$ block per self-paired
character, for $\lvert\Omega\rvert=\dS$ coordinates in total. By \eqref{eq:blockexpand},
$\phi(z)^\top \operatorname{blkdiag}(M_\omega)_\omega\, \phi(z') = g(z - z')$.

\emph{Stage 3: the temporal block.}
Step selection is isolation of the single displacement $1$ in $\mathbb{Z}_T$,
which Lemma~\ref{lem:degreeone}(ii) covers with $d = 1$, $W_1 = T$ and
$\Nset = \{1\}$. Discarding the constant term, which
Definition~\ref{def:isolating} permits, the kernel is
\begin{equation}
\label{eq:timekernel}
  h(s) \;\coloneqq\; \cos\bigl(2\pi(s-1)/T\bigr),
  \qquad
  h(1) = 1,
  \qquad
  h(s) \leq \cos(2\pi/T) = 1 - \gamma_T \ \ (s \neq 1),
\end{equation}
so $h$ isolates $\{1\}$ with margin $\gamma_T = 1 - \cos(2\pi/T) > 0$, whatever
$T \geq 2$ is. Its spectrum is $\{\pm 1\} \subseteq \widehat{\mathbb{Z}_T}$,
at most two characters (one when $T=2$, where $1=-1$), so
Lemma~\ref{lem:realization} realizes it in at most two coordinates. We always
use two, padding with the identically zero sine coordinate when $T=2$:
\begin{equation}
\label{eq:timetable}
  \psi(\tau) \;\coloneqq\;
  \begin{bmatrix}
    \cos(2\pi\tau/T) \\[2pt] \sin(2\pi\tau/T)
  \end{bmatrix}
  \in \mathbb{R}^2,
  \qquad
  M_{\mathrm{time}} \;\coloneqq\;
  \begin{bmatrix}
    \cos(2\pi/T) & -\sin(2\pi/T) \\[2pt]
    \sin(2\pi/T) & \phantom{-}\cos(2\pi/T)
  \end{bmatrix}
  \;=\; R(2\pi/T),
\end{equation}
the rotation by one time step: reading \eqref{eq:blockexpand} with
$\alpha_\omega = \cos(2\pi/T)$ and $\beta_\omega = \sin(2\pi/T)$ gives
$\psi(\tau)^\top M_{\mathrm{time}}\psi(\tau') = \cos\bigl(2\pi(\tau-\tau'-1)/T\bigr)
= h(\tau-\tau')$.

\emph{Stage 4: the table and the bilinear form.}
Concatenate the two blocks. The table entry at sequence position $n$ is
\begin{equation}
\label{eq:table}
  p_n \;\coloneqq\;
  \begin{bmatrix}
    \psi(\tau(n)) \\[2pt] \phi(\kappa(n))
  \end{bmatrix}
  \;\in\; \mathbb{R}^{\dS + 2},
  \qquad
  W_Q^\top W_K \;\coloneqq\;
  \operatorname{blkdiag}\bigl(
    \lambda M_{\mathrm{time}},\,
    (M_\omega)_{\omega\in\Omega_+}
  \bigr),
\end{equation}
evaluated at the virtual time step and virtual cell of the position, which
Appendix~\ref{app:serialization:formal} defines for cell tokens and separators
alike. Because $W_Q^\top W_K$ is block diagonal with the blocks aligned to the
blocks of $p_n$, the quadratic form is the sum of the per-block forms of Stages
2 and 3:
\begin{equation}
\label{eq:score}
\begin{split}
  p_n^\top \bigl(W_Q^\top W_K\bigr) p_{n'}
  &\;=\;
  \lambda\, \psi(\tau(n))^\top M_{\mathrm{time}}\, \psi(\tau(n'))
  \;+\;
  \sum_{\omega\in\Omega_+} e_\omega(\kappa(n))^\top M_\omega\, e_\omega(\kappa(n'))
  \\[4pt]
  &\;=\;
  g\bigl(\kappa(n)-\kappa(n')\bigr)
  \;+\; \lambda\, h\bigl(\tau(n)-\tau(n')\bigr).
\end{split}
\end{equation}
This is the form Appendix~\ref{app:separators} quotes as
$p_n = (\phi(\kappa(n)), \psi(\tau(n)))$.

\emph{Stage 5: the successor shift costs no coordinate.}
The spatial argument used in Stage 1 is
$\kappa(n) + \usucc - \kappa(n')$, whereas \eqref{eq:score} delivers
$g(\kappa(n) - \kappa(n'))$. The discrepancy is the constant $\usucc$, and by
\eqref{eq:additive} a constant displacement is a fixed rotation of every
character:
\[
  \chi_\omega(z + \usucc) \;=\; \chi_\omega(\usucc)\,\chi_\omega(z),
  \qquad\text{that is}\qquad
  \varphi_\omega(z + \usucc) = \varphi_\omega(z) + \theta_\omega,
  \quad \theta_\omega \coloneqq \varphi_\omega(\usucc) .
\]
The product $M_\omega R(-\theta_\omega)$ is again a scaled rotation, by the
last paragraph of the proof of Lemma~\ref{lem:realization}. Replacing each
block $M_\omega$ of \eqref{eq:table} by it turns \eqref{eq:blockexpand}
into $\alpha_\omega\cos(A - B + \theta_\omega) + \beta_\omega\sin(A - B +
\theta_\omega)$, so the spatial term of \eqref{eq:score} becomes
$g(\kappa(n) + \usucc - \kappa(n'))$ as required. The table itself is unchanged
and no coordinate is added. By Lemma~\ref{lem:successor} the offset is the same
constant $\usucc$ for every position which has a target cell, separators included, so
one choice of blocks serves all of them; the separator case is recorded after
this proof.

\emph{Stage 6: the margin.}
Write the score of key $n'$ at query $n$ as $\Lambda\,G(\delta, s)$ with
$\delta \coloneqq \kappa(n) + \usucc - \kappa(n')$,
$s \coloneqq \tau(n) - \tau(n')$ and
\[
  G(\delta, s) \;\coloneqq\; g(\delta) \;+\; \lambda\,h(s) ,
\]
which Stages 4 and 5 realize. Put $m \coloneqq \min_{\delta\in\usucc-\Nset}
g(\delta)$, so that $\max_{\delta\notin\usucc-\Nset} g \leq m - \gamma$ by isolation
and $\max_{\delta\in\Grid} g = m + R$ by the definition of $R$ in
\eqref{eq:lambda}. Three cases exhaust the keys.
\begin{enumerate}[label=(\alph*),leftmargin=2.2em,itemsep=2pt,topsep=3pt]
  \item \emph{Target keys}, $\delta \in \usucc-\Nset$ and $s = 1$. Then $h(s) = 1$ and
        $G \geq m + \lambda$.
  \item \emph{Right step, wrong offset}, $\delta \notin \usucc-\Nset$ and $s = 1$. Then
        $G \leq (m - \gamma) + \lambda$, which is below case (a) by $\gamma$.
  \item \emph{Wrong step}, $s \neq 1$. Then $h(s) \leq 1 - \gamma_T$ by
        \eqref{eq:timekernel} and $g(\delta) \leq m + R$, so
        \[
          G \;\leq\; m + R + \lambda(1 - \gamma_T)
            \;=\; m + \lambda + \bigl(R - \lambda\gamma_T\bigr)
            \;\leq\; m + \lambda - \gamma ,
        \]
        the last step by the choice \eqref{eq:lambda} of $\lambda$, which is
        exactly $\lambda\gamma_T \geq R + \gamma$. Again this is below case (a)
        by $\gamma$.
\end{enumerate}
So $G$ isolates $(\usucc-\Nset)\times\{1\}$ with margin $\gamma$, and the pre-softmax
score $\Lambda G$ isolates it with margin $\Lambda\gamma$. Case (c) is the one
which forces the scaling: with $\lambda = 1$ the bound there is
$m + 1 + R - \gamma_T$, which exceeds the target floor $m + 1$ whenever
$R > \gamma_T$, that is, whenever the spatial kernel varies across $\usucc-\Nset$ by
more than one time step is worth. The causal mask is not needed for any of this,
since $h$ orders the whole of $\mathbb{Z}_T$, visible part included.

\emph{Widths.}
The two blocks of \eqref{eq:table} occupy orthogonal coordinates, so their ranks
add and $\dpos = \dS + 2$. The count $\dS \leq 2d\lvert\Nset\rvert + 2\nu$ is
Theorem~\ref{thm:isolation}, and $\dS = 2d$ for a single offset is
Lemma~\ref{lem:degreeone}(ii) on $\Grid$. Neither count involves the volume or
the horizon: the grid is absent from Theorem~\ref{thm:isolation} by Stage 3 of
its proof, and the horizon is absent because the temporal block has two
coordinates for every $T$.
The scale $\lambda$ of \eqref{eq:lambda} multiplies a block; it does not widen
one.
\end{proof}

\begin{remark}[Separator keys]
\label{rmk:head-sepkeys}
The guarantee above is stated over cell keys. A separator carries the virtual
address of the cell one step before its successor
(Lemma~\ref{lem:sepsucc}\emph{(i)}), so a separator key can hold the same
$(\tau,\kappa)$ as a target cell, and a score built from $\tau$ and $\kappa$
alone cannot order it below. Proposition~\ref{prop:sepgate-l1} adds a rank-one
gate of strength $\Gamma_1$ which places every separator key $\Lambda\gamma$
below the target floor, extending the corollary to all causally visible keys at
no positional width.
\end{remark}

\begin{rmk}[Boundary cases of the interface]
\label{rmk:head-degenerate}
We clarify three assumptions used above:
\emph{Horizon.} Stage~3 requires $T \geq 2$ so that $\gamma_T = 1-\cos(2\pi/T) > 0$, ensuring the temporal kernel has a nonempty complement. Definition~\ref{def:ca} imposes $T \geq 2$; at $T = 1$, the prediction task is empty.
\emph{Full target set.} While Definition~\ref{def:isolating} and Theorem~\ref{thm:isolation} require $\Nset \subsetneq \Grid$, the case $\Nset = \Grid$ is covered by setting $g \equiv 0$ (so $\dS = 0$ and $\dpos = 2$). The temporal block alone correctly identifies all keys in the previous time step with margin $\gamma = \lambda\gamma_T$, making the interface total in $\Nset$.
\emph{Degenerate axes.} If $W_j = 2$, Lemma~\ref{lem:degreeone}(ii) contributes one character instead of two on that axis, giving $\dS \leq 2d$ without increasing any stated width.
\end{rmk}

\noindent
On the query side, the positional table accommodates separator positions. By Lemma~\ref{lem:sepsucc}, a separator at $n$ carries virtual indices $\kappa(n)$ and $\tau(n)$, targets $\kappa(n) + \usucc$, and causally accesses all parents in step $\tau(n)-1$. Because the proof depends on the query solely through $\kappa$ and $\tau$, \eqref{eq:table} applies verbatim to separator queries without additional coordinates, maintaining the margin $\gamma$. Remark~\ref{rmk:head-sepkeys} covers the key side.

Subsequent sections depend on the spatial kernel $g$, its isolation margin $\gamma$, the positional table width $\dpos = \dS + 2$, and the separator gate of Proposition~\ref{prop:sepgate-l1}. Corollary~\ref{cor:scale} converts this margin and logit scale into an attention mass bound. A caller simply specifies the desired offsets $\Nset$ relative to the query; Stage~1 of the proof handles the translation to the kernel's variable.

\subsection{Example: addressing the 2D von Neumann routing set}
\label{app:addr:instance}

The preceding bounds are worst-case over all subsets $\Nset$. Here, we instantiate the derivation for a specific experimental setting with explicit numerical constants.

\begin{example}[The routing set on $\mathbb{Z}_6^2$]
\label{ex:union}
Let $\Grid = \mathbb{Z}_6^2$, let $D_K$ be the von Neumann neighborhood, let
$\usucc = (0,1)$ and let $T = 12$, so that the routing set
$\Uset = D_K \cup (D_K + \usucc)$ of Definition~\ref{def:routing} has the
$\lvert\Uset\rvert = 8$ elements
\[
  \Uset \;=\;
  \{(0,0),\,(0,1),\,(0,2),\,(0,5),\,(1,0),\,(1,1),\,(5,0),\,(5,1)\} .
\]
Then the anisotropic, half-shifted degree-one kernel
\begin{equation}
\label{eq:instkernel}
  g(\delta) \;=\;
  \tfrac32\cos\bigl(2\pi\delta_1/6\bigr)
  \;+\;
  \cos\bigl(2\pi(\delta_2 - \tfrac12)/6\bigr)
\end{equation}
isolates $\Uset$ with margin $\gamma = \tfrac34$ at the four characters
$\Omega = \{\pm e_1, \pm e_2\}$. Here $\usucc - \Uset = \Uset$, so this is
also the set Stage 2 of Corollary~\ref{cor:head} asks $g$ to isolate when the
corollary is invoked with $\Nset = \Uset$. Thus
\[
  \dS \;=\; 4, \qquad \dpos \;=\; 6 ,
\]
and Corollary~\ref{cor:head} holds at these widths with temporal scale
$\lambda = 13$. At the stream length $N = 504$ of this setting, a logit scale
$\Lambda = 12$ places mass at least $1 - \tfrac1{12}$ on $\Uset$.
\end{example}

\begin{proof}
\emph{The spectrum and the blocks.}
Expanding the half-step shift by the angle-difference identity, with
$2\pi\cdot\tfrac12/6 = \pi/6$,
\[
\begin{aligned}
  \cos\bigl(2\pi(\delta_2-\tfrac12)/6\bigr)
  \;&=\;
  \cos\bigl(\varphi_{e_2}(\delta) - \tfrac\pi6\bigr) \\[2pt]
  \;&=\;
  \cos\tfrac\pi6\,\cos\varphi_{e_2}(\delta)
  \;+\;
  \sin\tfrac\pi6\,\sin\varphi_{e_2}(\delta) ,
\end{aligned}
\]
so \eqref{eq:instkernel} is in the real form \eqref{eq:realform} with
\[
  \alpha_{e_1} = \tfrac32, \quad \beta_{e_1} = 0,
  \qquad
  \alpha_{e_2} = \tfrac{\sqrt3}{2}, \quad \beta_{e_2} = \tfrac12 .
\]
Both coefficients are nonzero, so $\supp\hat g\setminus\{0\} = \{\pm e_1,\pm e_2\}$
and $\dS = \lvert\Omega\rvert = 4$: one conjugate pair per axis, hence the two
$2\times2$ blocks of \eqref{eq:block},
\[
  M_{e_1} \;=\;
  \begin{bmatrix} \tfrac32 & 0 \\[2pt] 0 & \tfrac32 \end{bmatrix}
  \;=\; \tfrac32\,I_2 ,
  \qquad
  M_{e_2} \;=\;
  \begin{bmatrix}
    \tfrac{\sqrt3}{2} & -\tfrac12 \\[4pt]
    \tfrac12 & \phantom{-}\tfrac{\sqrt3}{2}
  \end{bmatrix}
  \;=\; R(\pi/6) .
\]
The shift costs no coordinate: it is absorbed into the rotation angle of the
second block, as Lemma~\ref{lem:realization} shows in general. With
the two temporal coordinates of Stage 3 of Corollary~\ref{cor:head},
$\dpos = \dS + 2 = 6$.

\emph{The margin.}
On $\mathbb{Z}_6$ the cosine takes the four values $1, \tfrac12, -\tfrac12, -1$,
so $g$ takes finitely many values which can be listed. On $\Uset$ it takes three:
\[
\begin{gathered}
  g(0,0) = g(0,1) = \tfrac32 + \tfrac{\sqrt3}{2},
  \qquad
  g(0,2) = g(0,5) = \tfrac32 ,
  \\[4pt]
  g(1,0) = g(1,1) = g(5,0) = g(5,1) = \tfrac34 + \tfrac{\sqrt3}{2} ,
\end{gathered}
\]
so $\min_{\Uset} g = \tfrac32$. Off $\Uset$ the largest value is
$\max_{\delta\notin\Uset} g = \tfrac34$, attained at the four offsets
$(\pm1, 2)$ and $(\pm1, 5)$. Hence
\[
  \gamma \;=\; \min_{\Uset} g - \max_{\notin\Uset} g
       \;=\; \tfrac32 - \tfrac34 \;=\; \tfrac34 .
\]
The half-step shift makes this work. $\Uset$ is asymmetric along the
innermost axis, containing $(0,2)$ but not $(0,4)$, and an unshifted
kernel, being even in $\delta_2$, scores those two offsets alike and cannot
separate them.

\emph{The temporal scale.}
The largest value of $g$ on the whole grid is
$\max_\Grid g = \tfrac32 + \tfrac{\sqrt3}{2}$, attained at $(0,0) \in \Uset$, so
\[
  R \;=\; \max_\Grid g - \min_{\Uset} g \;=\; \tfrac{\sqrt3}{2} \;\approx\; 0.866,
  \qquad
  \gamma_T \;=\; 1 - \cos(2\pi/12) \;=\; 1 - \tfrac{\sqrt3}{2} \;\approx\; 0.134 ,
\]
and \eqref{eq:lambda} requires
\[
  \lambda \;\geq\; \frac{R+\gamma}{\gamma_T}
         \;=\; \frac{\tfrac{\sqrt3}{2} + \tfrac34}{1 - \tfrac{\sqrt3}{2}}
         \;\approx\; 12.06 ,
\]
so $\lambda = 13$ suffices. The gap between $\gamma = \tfrac34$ and
$\gamma_T \approx 0.134$ makes the scaling necessary here: an unscaled sum of
the two kernels would score a key in the wrong step
at up to $\tfrac32 + \tfrac{\sqrt3}{2} + \cos(\pi/6) \approx 3.23$, above the
target floor $\tfrac32 + 1 = \tfrac52$.

\emph{The logit scale.}
The stream length for this setting is
$N = T(\lvert\Grid\rvert + \lvert\Grid\rvert/W_2) = 12\,(36+6) = 504$
(Definition~\ref{def:serialization}). Corollary~\ref{cor:scale} with
$N_{\neq} \leq N$ and off-target mass $\varepsilon = \tfrac1{12}$ gives
\[
  \Lambda \;\geq\; \gamma^{-1}\log(N/\varepsilon)
          \;=\; \tfrac43 \log(6048) \;\approx\; 11.61 ,
\]
so $\Lambda = 12$ places all but $\tfrac1{12}$ of the attention mass on
$\Uset$. We chose $\varepsilon = \tfrac1{12}$ to make the arithmetic concrete.
This $\Nset = \Uset$ addressing serves Construction~II, which decodes each
block exactly and so requires \eqref{eq:onehead-scale} of $\Lambda$ instead.
At $V = 2$, $M = 12$ colors and
$n_{\max} \leq T\lceil\lvert\Grid\rvert/M\rceil = 36$, that condition reads
\[
  \Lambda\gamma \;\geq\; \log\bigl(2V(n_{\max}-1)\bigr)
               \;=\; \log 140 \;\approx\; 4.94 ,
\]
so $\Lambda \geq 6.59$ and the same $\Lambda = 12$ satisfies it.
\end{proof}

\begin{rmk}
Two comparisons relate this example to the general statements.

First, compared to Lemma~\ref{lem:degreeone}: the same four characters isolate the von Neumann and Moore neighborhoods on this grid (margin $\tfrac12$) via the isotropic unshifted kernel $g(\delta) = \cos\varphi_{e_1}(\delta) + \cos\varphi_{e_2}(\delta)$ ($M_{e_1} = M_{e_2} = I_2$).

Second, the theorem becomes informative only when $\lvert\Nset\rvert d\log\lvert\Nset\rvert$ is much less than the volume, the regime where grid independence is critical. For the small grids in Table~\ref{tab:instance-task}, explicit numerical search provides tighter bounds. In $1$D, no search is required: the bounding box equals the set itself, so Lemma~\ref{lem:degreeone}(i) applies directly, avoiding Theorem~\ref{thm:isolation}'s probabilistic bounds, and yielding $\dS \leq 2d\lvert\Uset\rvert = 8$.
\end{rmk}
\section{Colorings for Shared-head decoding}
\label{app:colouring}

Construction~II (Appendix~\ref{app:constructions}) employs a shared Layer-1 head. A key at cell $z'$ projects a one-hot \emph{color code} $\varphi(z') = e_{c(z')} \in \mathbb{R}^{M}$ via a map $c : \Grid \to [M]$. The head's output at query $z$ is a vector $a \in \mathbb{R}^{M+1}$ (comprising $M$ color coordinates and one reference coordinate) which superposes all $\lvert\Uset\rvert$ gathered cells. The cell at offset $\delta$ writes to coordinate $c(z+\delta)$ with a state-dependent weight. For the MLP to recover this cell, no other gathered cell may occupy the same coordinate, requiring:
\[
  \delta \;\longmapsto\; c(z+\delta)
  \qquad\text{is injective on } \Uset, \text{ for every } z \in \Grid .
\]
This condition defines an admissible coloring (Definition~\ref{def:colouring}). This section characterizes these maps and bounds the required number of colors $M$ as a function of $\Grid$ and $\Uset$.

A cell's color is a function of its position, so the key and the query each read it from the positional table. The key reads the color of its own cell, and the query reads the color it expects at each routed offset. The two uses occupy separate coordinates, which Corollary~\ref{cor:colour-interface} counts in parts~(e) and~(b). Appendix~\ref{app:constructions} assigns those coordinates to residual blocks and wires them into the value projection and the MLP.

\subsection{Admissible colorings and the Cayley graph}
\label{app:col:admissible}

\begin{definition}[Admissible coloring]
\label{def:colouring}
A \emph{coloring} is a map $c : \Grid \to [M]$, and it is \emph{admissible for
$\Uset$} if it is injective on every translate $z + \Uset$. Equivalently, $c$ is
a proper coloring of the Cayley graph
\[
  \Gamma_{\Uset} \;\coloneqq\;
  \mathrm{Cay}\bigl(\Grid,\; (\Uset-\Uset)\setminus\{0\}\bigr) ,
\]
whose vertices are the cells and whose edges join cells differing by a nonzero
element of $\Uset - \Uset$. We write $M^\star$ for the least $M$ admitting an
admissible coloring, that is, for the chromatic number of $\Gamma_\Uset$. A
coloring is \emph{periodic} if it is the quotient map
$\chi : \Grid \to \Grid/K$ by a subgroup $K \leq \Grid$, composed with any
bijection $\Grid/K \cong [M_K]$, equivalently if its color classes are exactly
the cosets of $K$, and its number of colors is then
$M_K \coloneqq \lvert\Grid/K\rvert$. The choice of bijection is immaterial:
relabeling preserves both admissibility and the number of colors.
\end{definition}

Because a query gathers uniquely labeled cells, two cells may share a label only if their nonzero difference lies outside $\Uset - \Uset$, reducing the problem to graph coloring.

Periodic colorings are easily constructed, and their admissibility reduces to an algebraic condition:

\begin{lemma}[When a periodic coloring is admissible]
\label{lem:periodic}
A periodic coloring $\chi : \Grid \to \Grid/K$ is admissible for $\Uset$ if and
only if $K \cap (\Uset - \Uset) = \{0\}$.
\end{lemma}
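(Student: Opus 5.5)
We prove the two directions separately, in both cases working with the fact that the colour classes of $\chi$ are exactly the cosets of $K$. Thus two cells $a,b\in\Grid$ receive the same colour if and only if $a-b\in K$. Admissibility (Definition~\ref{def:colouring}) asks that $\chi$ be injective on every translate $z+\Uset$. Concretely: for all $z\in\Grid$ and all $\delta,\delta'\in\Uset$ with $\delta\neq\delta'$, we need $\chi(z+\delta)\neq\chi(z+\delta')$.

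\emph{Sufficiency.} Assume $K\cap(\Uset-\Uset)=\{0\}$. Fix $z$ and distinct $\delta,\delta'\in\Uset$. The difference is
\[
(z+\delta)-(z+\delta')=\delta-\delta',
\]
which is a nonzero element of $\Uset-\Uset$. Here nonzero holds because the offsets are distinct elements of the group $\Grid$, with all arithmetic taken modulo the side lengths. By the hypothesis, $\delta-\delta'$ therefore does not lie in $K$. So $z+\delta$ and $z+\delta'$ lie in different cosets of $K$ and receive different colours. Hence $\chi$ is injective on $z+\Uset$.

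\emph{Necessity.} We argue by contraposition. Suppose some nonzero $h\in K$ can be written as $h=\delta-\delta'$ with $\delta,\delta'\in\Uset$. Since $h\neq0$, we have $\delta\neq\delta'$. Take $z=0$. The cells $\delta$ and $\delta'$ both belong to the translate $0+\Uset=\Uset$, and their difference $h$ lies in $K$, so they share a colour. Thus $\chi$ is not injective on $\Uset$, and $\chi$ is not admissible.

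The argument uses only translation invariance of the coset partition, so no step is a genuine obstacle. The one point that needs care is the choice of bijection $\Grid/K\cong[M_K]$ in the definition of a periodic colouring. Relabelling colours is a bijection, so it preserves which pairs of cells share a colour, and the equivalence is unaffected by that choice. An equivalent route is through the Cayley-graph formulation: $\chi$ is a proper colouring of $\Gamma_\Uset$ exactly when no edge of $\Gamma_\Uset$ joins two cells of the same coset. Such an edge exists precisely when $K$ contains a nonzero element of $\Uset-\Uset$.
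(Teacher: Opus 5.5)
Your proof is correct and follows the same route as the paper. Both reduce admissibility to the observation that $\chi(z+\delta)=\chi(z+\delta')$ holds exactly when $\delta-\delta'\in K$, so injectivity fails on some translate precisely when $K$ contains a nonzero element of $\Uset-\Uset$; you just spell out the two directions (with the witness $z=0$ for necessity) that the paper compresses into a single sentence.
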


\begin{proof}
The equality $\chi(z+\delta) = \chi(z+\delta')$ holds exactly when $\delta - \delta' \in K$. As $\delta,\delta'$ range over $\Uset$, injectivity on every translate $z+\Uset$ fails if and only if a nonzero element of $\Uset-\Uset$ lies in $K$.
\end{proof}

\S\ref{app:col:instance} uses this criterion to construct admissible colorings for our experimental grids; these are significantly smaller than the worst-case bound below.

\subsection{Bounds on the number of colors}
\label{app:col:bounds}

\begin{lemma}[Bounds on the number of colors]
\label{lem:Mbounds}
\[
  \lvert\Uset\rvert \;\leq\; M^\star \;\leq\;
  \bigl\lvert(\Uset-\Uset)\setminus\{0\}\bigr\rvert + 1 \;\leq\;
  \lvert\Uset\rvert^{2} - \lvert\Uset\rvert + 1 ,
\]
and $M^\star \leq M_K$ for every periodic admissible coloring. Moreover an
admissible coloring with $\lvert(\Uset-\Uset)\setminus\{0\}\rvert + 1$ colors
may be taken \emph{equitable}, its color classes differing in size by at most
one. The outermost bound $\lvert\Uset\rvert^2 - \lvert\Uset\rvert + 1$ does not
mention the grid.
\end{lemma}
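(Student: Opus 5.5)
The plan is to prove the three displayed inequalities left to right, then treat the periodic comparison and the equitable refinement separately.

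\emph{Lower bound.} Fix any $z\in\Grid$. An admissible coloring is injective on the translate $z+\Uset$, which contains $\lvert\Uset\rvert$ distinct cells. These cells therefore receive $\lvert\Uset\rvert$ distinct colors, so $M\ge\lvert\Uset\rvert$ for every admissible coloring, and in particular $M^\star\ge\lvert\Uset\rvert$. In graph terms, $z+\Uset$ is a clique of $\Gamma_\Uset$, because any two of its elements differ by a nonzero element of $\Uset-\Uset$.

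\emph{Upper bounds.} The Cayley graph $\Gamma_\Uset$ is regular of degree $D\coloneqq\lvert(\Uset-\Uset)\setminus\{0\}\rvert$, since the generating set is symmetric and each vertex has exactly $D$ neighbours. The bound $M^\star\le D+1$ follows from greedy coloring, which colors every graph of maximum degree $D$ with $D+1$ colors. The third inequality is counting: the ordered pairs $(\delta,\delta')\in\Uset^2$ with $\delta\ne\delta'$ number $\lvert\Uset\rvert^2-\lvert\Uset\rvert$, so $D\le\lvert\Uset\rvert^2-\lvert\Uset\rvert$. This last quantity involves only $\Uset$, which gives the grid-independence remark. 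The inequality $M^\star\le M_K$ is immediate from the definition of $M^\star$ as a minimum, since a periodic admissible coloring is in particular admissible.

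\emph{Equitable refinement.} This is the only step that is not elementary, and I would not prove it from scratch. I would invoke the Hajnal--Szemer\'edi theorem: every graph of maximum degree $\Delta$ has an equitable proper coloring with $\Delta+1$ colors. Applied with $\Delta=D$ to $\Gamma_\Uset$, it yields an admissible coloring whose color classes differ in size by at most one. Two checks are needed. First, if $\lvert\Grid\rvert<D+1$, some color classes are empty; the theorem still applies in that case, or one can simply use injective colors. Second, properness of the resulting coloring is exactly admissibility, by the equivalence in Definition~\ref{def:colouring}.
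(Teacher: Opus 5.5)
Your proof is correct and follows the paper's own argument essentially step for step. The steps are the clique $z+\Uset$ for the lower bound, greedy coloring of the $D$-regular Cayley graph $\Gamma_\Uset$, the count of ordered pairs of distinct offsets, minimality of $M^\star$ for the periodic comparison, and Hajnal--Szemer\'edi for equitability; your check for $\lvert\Grid\rvert<D+1$ is unnecessary, since $(\Uset-\Uset)\setminus\{0\}$ consists of nonzero cells of the torus and so $D\le\lvert\Grid\rvert-1$.
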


\begin{remark}[Which bounds are grid-independent]
\label{rmk:degree-grid}
The middle quantity $\lvert(\Uset-\Uset)\setminus\{0\}\rvert$, the degree of $\Gamma_\Uset$, evaluates differences \emph{on the torus} and thus depends on side lengths (distinct differences in $\mathbb{Z}^d$ can collide modulo $W_j$, strictly decreasing the count). Evaluating this expression in $\mathbb{Z}^d$ yields a grid-independent upper bound; the figures $7$, $23$, and $35$ below derive from this grid-independent evaluation.

For the von Neumann routing set, the $\mathbb{Z}^2$ count is $\lvert(\Uset-\Uset)\setminus\{0\}\rvert = 22$ (degree bound $23$). On $\mathbb{Z}_6^2$, offsets $+3$ and $-3$ are identified, dropping the exact count to $21$ (degree bound $22$). While both bounds validly majorize $M^\star$, only the $\mathbb{Z}^d$ bound is grid-independent. Grid independence is a statement about widths. The loads and logit scales of Corollary~\ref{cor:colour-interface}\emph{(c)} and \eqref{eq:onehead-scale} do depend on the volume and the horizon.
\end{remark}

\begin{proof}
The lower bound holds because $\Uset$ forms a clique in $\Gamma_\Uset$. $\Gamma_\Uset$ is regular of degree $\Delta = \lvert(\Uset-\Uset)\setminus\{0\}\rvert$; a greedy coloring uses at most $\Delta+1$ labels. Every nonzero difference comes from one of the $\lvert\Uset\rvert(\lvert\Uset\rvert-1)$ ordered pairs of distinct offsets, so $\Delta \leq \lvert\Uset\rvert^2 - \lvert\Uset\rvert$. Equitability at $\Delta+1$ colors follows from the Hajnal--Szemer\'edi theorem \citep{hajnal1970proof}. Lemma~\ref{lem:periodic} confirms that a periodic admissible coloring is proper for $\Gamma_\Uset$, so $M^\star \leq M_K$.
\end{proof}

\begin{rmk}
The upper bound depends only on local neighborhood geometry. Evaluated in $\mathbb{Z}^d$ (Remark~\ref{rmk:degree-grid}), it requires at most $7$ colors for the elementary neighborhood, $23$ for von Neumann, and $35$ for Moore, independent of grid side length. Theorem~\ref{thm:onehead-app} leverages this grid-independent bound.
\end{rmk}

\subsection{Colorings for the experimental grids}
\label{app:col:instance}

Lemma~\ref{lem:periodic} supplies admissible colorings for the three
experimental settings of Appendix~\ref{app:not:instance}, and each uses far
fewer colors than the degree bound on the same line.
Table~\ref{tab:instance-colour} records them.

\begin{table}[t]
\centering
\caption{The coloring Construction~II uses in each experimental setting. Every
one is admissible by Lemma~\ref{lem:periodic}, and every one is far below the
degree bound on the same line.}
\label{tab:instance-colour}
\small
\begin{tabular}{@{}lccc@{}}
\toprule
& 1D, $V{=}2$ & 1D, $V{=}3$ & 2D von Neumann \\
\midrule
$\lvert(\Uset{-}\Uset)\setminus\{0\}\rvert$ & $6$ & $6$ & $21$ \\
degree bound on $M^\star$               & $7$ & $7$ & $22$ \\
subgroup $K$                            & $4\mathbb{Z}_{16}$ & $4\mathbb{Z}_{32}$ & $\langle(2,2)\rangle$ \\
colors $M = M_K$                        & $4$ & $4$ & $12$ \\
\bottomrule
\end{tabular}
\end{table}

Each coloring is the quotient by a subgroup intersecting $\Uset - \Uset$ only at $0$ (Lemma~\ref{lem:periodic}). For $d = 1$, $\Uset = \{-1,0,1,2\}$ yields $(\Uset-\Uset)\setminus\{0\} = \{\pm1,\pm2,\pm3\}$. Because this lacks nonzero multiples of $4$, reduction modulo $4$ is admissible on any $\mathbb{Z}_W$ where $4 \mid W$, yielding an optimal $M = 4$ (Lemma~\ref{lem:Mbounds} requires $M^\star \geq 4$). For $d = 2$, coordinatewise reduction requires $M_K = 18$, while the diagonal subgroup $\langle(2,2)\rangle = \{(0,0),(2,2),(4,4)\}$ avoids $\Uset-\Uset$ to achieve $M_K = 12$.

The degree row in Table~\ref{tab:instance-colour} counts $\lvert(\Uset-\Uset)\setminus\{0\}\rvert$ on the torus, yielding $21$ on $\mathbb{Z}_6^2$. Evaluating differences in $\mathbb{Z}^d$ yields the grid-independent count of $23$ (Lemma~\ref{lem:Mbounds}, Table~\ref{tab:colouring}). This discrepancy arises from identifying $+3$ and $-3$ on the torus (Remark~\ref{rmk:degree-grid}). Table~\ref{tab:colouring} contextualizes these grids.

\begin{table}[t]
\centering
\caption{Colorings across grids, for reference. $M_K$ is the least number of colors over periodic
admissible colorings and is arithmetic in the side length. The degree bound
$\lvert(\Uset-\Uset)\setminus\{0\}\rvert + 1$ of Lemma~\ref{lem:Mbounds} is
evaluated with the differences taken in $\mathbb{Z}^d$, so it is the same for
every grid and bounds $M^\star$ from above; the exact torus count can be
smaller, as Remark~\ref{rmk:degree-grid} shows on $\mathbb{Z}_6^2$. The bound
never degenerates and $M_K$ sometimes does: on $\mathbb{Z}_5^2$ and
$\mathbb{Z}_7^2$ every nontrivial subgroup has a nonzero intersection with $\Uset-\Uset$, so
$M_K = \lvert\Grid\rvert$ and every cell takes a color of its own.}
\label{tab:colouring}
\small
\begin{tabular}{llrrrr}
\toprule
neighborhood & grid & $\lvert\Uset\rvert$ & least periodic $M_K$ & degree bound &
$\lvert\Grid\rvert$ \\
\midrule
elementary  & $\mathbb{Z}_{5}$      &  4 &  5 &  7 &   5 \\
elementary  & $\mathbb{Z}_{16}$     &  4 &  4 &  7 &  16 \\
elementary  & $\mathbb{Z}_{32}$     &  4 &  4 &  7 &  32 \\
von Neumann & $\mathbb{Z}_{5}^{2}$  &  8 & 25 & 23 &  25 \\
von Neumann & $\mathbb{Z}_{6}^{2}$  &  8 & 12 & 23 &  36 \\
von Neumann & $\mathbb{Z}_{7}^{2}$  &  8 & 49 & 23 &  49 \\
von Neumann & $\mathbb{Z}_{8}^{2}$  &  8 &  8 & 23 &  64 \\
Moore       & $\mathbb{Z}_{6}^{2}$  & 12 & 18 & 35 &  36 \\
Moore       & $\mathbb{Z}_{8}^{2}$  & 12 & 16 & 35 &  64 \\
Moore       & $\mathbb{Z}_{12}^{2}$ & 12 & 12 & 35 & 144 \\
\bottomrule
\end{tabular}
\end{table}

\begin{rmk}
For grids in Table~\ref{tab:colouring} (excluding $\mathbb{Z}_5^2$ and $\mathbb{Z}_7^2$), periodic coloring requires fewer colors and is preferred in practice. The degree bound ensures grid independence (Theorem~\ref{thm:onehead-app}).
\end{rmk}

\subsection{Coloring interface for Shared-head decoding}
\label{app:col:interface}

The Shared-head construction relies on the following five guarantees. They fix the properties and the coordinate counts; Appendix~\ref{app:constructions} assigns the coordinates to residual blocks.

\begin{corollary}[Coloring interface for the Shared-head construction]
\label{cor:colour-interface}
Let $c$ be an admissible coloring of $\Grid$ for $\Uset$ with $M$ colors.
Then
\begin{enumerate}[label=(\alph*),leftmargin=2.2em,itemsep=1pt,topsep=2pt]
  \item \emph{Distinct slots.} For every $z$, the $\lvert\Uset\rvert$ cells
        $z+\delta$, $\delta \in \Uset$, carry pairwise distinct colors.
  \item \emph{Address width.} For every $z$ the family of addresses
        $\bigl(e_{c(z+\delta)}\bigr)_{\delta\in\Uset}
        \in \mathbb{R}^{\lvert\Uset\rvert M}$ is the image, under one fixed
        affine map independent of $z$, of $\lvert\Uset\rvert(M-1)+1$
        positional coordinates: one width-$M$ address per offset, each a one-hot
        and so determined by $M-1$ of its coordinates given a shared constant.
  \item \emph{Load.} The number $n_{\max}$ of causally visible cells
        sharing one color, over a trajectory of at most $T$ time steps,
        satisfies
        \[
          n_{\max}
          \;\leq\;
          T\max_{s\in[M]}\bigl\lvert c^{-1}(s)\bigr\rvert .
        \]
        If $c$ is equitable, then
        $\max_{s}\lvert c^{-1}(s)\rvert\leq\lceil\lvert\Grid\rvert/M\rceil$,
        and consequently
        \[
          n_{\max}
          \;\leq\;
          T\Bigl\lceil \tfrac{\lvert\Grid\rvert}{M} \Bigr\rceil .
        \]
  \item \emph{Existence.} An equitable admissible $c$ exists on every grid with
        $M \leq \lvert(\Uset-\Uset)\setminus\{0\}\rvert + 1 \leq
        \lvert\Uset\rvert^2 - \lvert\Uset\rvert + 1$, where the differences may
        be taken in $\mathbb{Z}^d$, so that the final bound depends on
        $\Uset$ and on nothing else.
  \item \emph{Own-color block.} For every $z$ the own-color address
        $e_{c(z)} \in \mathbb{R}^M$ is a function of position alone and is
        carried by $M$ positional coordinates, which may be set to zero at
        separator positions.
\end{enumerate}
\end{corollary}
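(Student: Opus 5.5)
The plan is to take the five parts in the order (a), (e), (b), (c), (d), since each is a short consequence of Definition~\ref{def:colouring} or of Lemma~\ref{lem:Mbounds}.

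\emph{Parts (a) and (e).} Part~(a) is admissibility restated: $c$ is injective on every translate $z+\Uset$, so the colors $c(z+\delta)$, $\delta\in\Uset$, are pairwise distinct. For (e), note that $e_{c(z)}$ depends only on the cell $z=\kappa(n)$, which is itself a function of the sequence position $n$. We therefore store these $M$ coordinates directly in the positional table at cell positions and set them to zero at separator positions. Nothing else reads that column, so the zeroing costs nothing.

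\emph{Part (b).} For each $\delta\in\Uset$, the address $e_{c(z+\delta)}$ is a one-hot vector in $\mathbb{R}^M$, so its last coordinate equals $1$ minus the sum of the others. We store the first $M-1$ coordinates per offset, giving $\lvert\Uset\rvert(M-1)$ coordinates, together with one shared constant coordinate equal to~$1$. The fixed affine map then copies the stored coordinates and reconstructs each missing one as (constant) minus (sum of the $M-1$ stored coordinates of that offset). This map does not depend on $z$, and the total is $\lvert\Uset\rvert(M-1)+1$.

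\emph{Part (c).} A causally visible cell with color $s$ is a pair $(t',z')$ with $c(z')=s$ and $t'<T$. There are at most $T$ time steps and at most $\lvert c^{-1}(s)\rvert$ cells per step, so $n_{\max}\le T\max_s\lvert c^{-1}(s)\rvert$. If $c$ is equitable, the class sizes differ by at most one and sum to $\lvert\Grid\rvert$, so the largest class has size at most $\lceil\lvert\Grid\rvert/M\rceil$.

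\emph{Part (d).} This is the only step needing care, because of the torus-versus-$\mathbb{Z}^d$ subtlety in Remark~\ref{rmk:degree-grid}. Lemma~\ref{lem:Mbounds} gives an equitable admissible coloring (via Hajnal--Szemer\'edi) with $\Delta+1$ colors, where $\Delta=\lvert(\Uset-\Uset)\setminus\{0\}\rvert$ is computed on the torus. Reduction modulo $(W_1,\dots,W_d)$ maps the nonzero differences in $\mathbb{Z}^d$ onto a superset of the nonzero torus differences, since every nonzero torus difference is the image of some nonzero integer difference. Hence the torus count is at most the $\mathbb{Z}^d$ count, and the bound may be evaluated in $\mathbb{Z}^d$. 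To reach exactly the $\mathbb{Z}^d$-based number of colors when it is larger, we would pad with unused colors. Padding could break equitability, so the cleaner route is to apply Hajnal--Szemer\'edi directly with the larger number of colors: the theorem yields an equitable proper coloring for any number of colors at least $\Delta+1$. Finally, the $\mathbb{Z}^d$ count is at most $\lvert\Uset\rvert^2-\lvert\Uset\rvert$, as in the proof of Lemma~\ref{lem:Mbounds}, so the final bound depends on $\Uset$ alone.
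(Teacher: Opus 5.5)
Your proposal is correct and follows the paper's proof essentially part by part. Part (a) is admissibility, (b) is the one-hot affine reduction to $M-1$ coordinates per offset plus one shared constant, (c) counts at most $T$ per-step copies of each color class, (d) comes from Lemma~\ref{lem:Mbounds}, and (e) holds because the positional table is indexed by sequence position. Your torus-versus-$\mathbb{Z}^d$ argument in (d) makes explicit what the paper leaves to Remark~\ref{rmk:degree-grid}, but the padding and re-application of Hajnal--Szemer\'edi are unnecessary: (d) only asks for $M \le \lvert(\Uset-\Uset)\setminus\{0\}\rvert+1$, so once you have shown the torus count is at most the $\mathbb{Z}^d$ count, the equitable $(\Delta+1)$-coloring on the torus already meets the $\mathbb{Z}^d$ bound.
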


\begin{proof}
(a) is admissibility, which is injectivity on $z+\Uset$. For (b), the offset
$\delta$ is fixed, so $z \mapsto e_{c(z+\delta)}$ depends on position alone and
the family is carried by $\lvert\Uset\rvert$ width-$M$ positional blocks. A
one-hot of width $M$ has coordinates summing to $1$, so it lies in an affine
subspace of
dimension $M-1$ and $M-1$ coordinates determine it given the constant; the
$\lvert\Uset\rvert$ addresses therefore take
$\lvert\Uset\rvert(M-1)$ coordinates together with one constant coordinate.

For (c), a causally visible prefix contains at most $T$ copies of the grid, one
per time step, and within a single time step the cells of color $s$ number at
most $\lvert c^{-1}(s)\rvert$; summing over steps gives the general bound. If
$c$ is equitable, the $M$ color classes of $\Grid$ differ in size by at most
one, so none exceeds $\lceil\lvert\Grid\rvert/M\rceil$.
(d) restates Lemma~\ref{lem:Mbounds}. For (e), $z \mapsto e_{c(z)}$ depends on
position alone, and the positional table is indexed by sequence position, so
its entries at separator positions may be prescribed independently of its
entries at cell positions.
\end{proof}

\begin{rmk}
Condition (a) ensures correctness. Conditions (b), (c), and (d) provide quantitative bounds on width, logit scale, and grid independence. One can verify Appendix~\ref{app:constructions} treating $M$ symbolically; substituting the colorings of \S\ref{app:col:instance} (used in Section~\ref{sec:experiments}) yields concrete widths.
\end{rmk}

\apppart{Part III}{Layer 1: the spatial gather}
\section{Layer 1: spatial gather constructions}
\label{app:constructions}

This section proves the gather component of Theorem~\ref{thm:gather} by constructing the Layer-1 weights. At any position $n$ with virtual time step $\tau(n)\geq1$, Layer~1 must recover the preceding states
\[
    \bigl\{x_{\tau(n)-1,\;\kappa(n)+\delta} : \delta\in\Uset\bigr\}
\]
and track their offsets $\delta$. Localizing these states in separate residual blocks allows Layer~2 to read $D_Q$ at the current token and $D_K$ at an earlier token. Formulating this requirement via $(\tau(n),\kappa(n))$ ensures it covers both cells and separators, and retains cells with unresolved queries as necessary keys (Remark~\ref{rmk:gather-scope}).

Two challenges arise: routing the cells in $\Uset$ to the query token, and preserving their spatial identities. We provide two constructions resolving both:
\begin{enumerate}[label=(\Roman*),leftmargin=2.2em,itemsep=2pt,topsep=3pt]
    \item \textbf{Dedicated-heads.}
    Each offset $\delta\in\Uset$ receives its own attention head. The head
    directly writes the state at offset $\delta$ into a dedicated residual
    block, so attention performs both routing and separation by itself.

    \item \textbf{Shared-head.}
    A single head attends jointly to all offsets in $\Uset$. Since the head
    returns only one weighted sum, the routed states would normally be mixed
    together. We avoid this by encoding the cell state in the attention
    \emph{weight} and the routed-cell identity as a distinct basis vector in the value projection using a
    positional coloring. A one-hidden-layer MLP then separates the aggregate
    and reconstructs one state block per offset.
\end{enumerate}

Both constructions rely on the spatial-temporal addressing primitive developed above; Construction~II also requires the coloring primitive. We first recall the necessary properties.

Throughout, $\Grid$ denotes the $d$-dimensional torus of volume $\lvert\Grid\rvert$
(Definition~\ref{def:ca}), evaluated for $T$ time steps. For a stream position
$n$, $\kappa(n)$ and $\tau(n)$ denote its virtual cell and virtual time step
(Appendix~\ref{app:serialization}), and $\usucc$ denotes the successor offset.
See Table~\ref{tab:notation} for further notation.

\paragraph{Spatial-temporal addressing.}
For any set of offsets $\Nset$, Corollary~\ref{cor:head} provides a positional
table $p_n\in\mathbb{R}^{\dpos}$ and projections $W_Q^{(1)},W_K^{(1)}$ which,
at unit logit scale, score the keys at cells $\kappa(n)+\delta$,
$\delta\in\Nset$, in the previous time step strictly above every other causally
visible cell key, with a margin $\gamma>0$. Separator keys are covered by the
gate of \S\ref{app:con:gather}. Its positional score is
\[
    \bigl\langle W_Q^{(1)}p_n,\;W_K^{(1)}p_{n'}\bigr\rangle
    =
    g\bigl(\kappa(n)+\usucc-\kappa(n')\bigr)
    +\lambda h\bigl(\tau(n)-\tau(n')\bigr),
\]
where $g$ favors the requested spatial offsets and $h$ selects temporal
displacement $1$. The spatial width is $\dS=2d$ for a single offset and
$\dS=O(|\Nset|\,d\log|\Nset|)$ in general, and it is independent of the grid
volume and the horizon. After multiplying the logits by $\Lambda>0$,
Corollary~\ref{cor:scale} bounds the attention mass outside the target set by
$N_{\neq}e^{-\Lambda\gamma}$, where $N_{\neq}$ is the number of visible
non-target keys.

\paragraph{Positional colors.}
Construction~II must additionally preserve the identities of the cells routed
by the Shared-head. Corollary~\ref{cor:colour-interface} provides a coloring
$c$ of $\Grid$ which is injective on every translate $z+\Uset$, so the routing
cells $\{z+\delta:\delta\in\Uset\}$ carry pairwise distinct colors at every
query location $z$. Writing $M$ for the number of colors, such a coloring
exists with
\[
    M
    \leq
    \bigl|(\Uset-\Uset)\setminus\{0\}\bigr|+1
    \leq
    |\Uset|^2-|\Uset|+1,
\]
a bound depending on $\Uset$ and on nothing else. The same corollary provides
the positional storage for the query-side color addresses $e_{c(z+\delta)}$,
the key-side own-color block, and a bound on how many causally visible cells
share one color. Construction~II uses all four in
\S\ref{app:con:onehead}.

\subsection{Target representation and residual-stream layout}
\label{app:con:forward}

Both Layer-1 constructions produce the same output: $|\Uset|$ state blocks. For
each $\delta\in\Uset$, the $\delta$-th block should represent the state of cell
$(t-1,z+\delta)$.

Because finite softmax attention may leave mass on non-target keys, we first formalize how a block approximates the desired state.

\begin{definition}[$\eblock$-approximate one-hot]
\label{def:eclose}
For $\eblock\in[0,1)$ and $s\in\States$, a width-$V$ block $P$ is an
\emph{$\eblock$-approximate one-hot} for $e(s)$ if it is a sub-probability
vector, meaning that it is non-negative and has total mass at most $1$, and it
places mass at least $1-\eblock$ on coordinate $s$. Consequently, at most
$\eblock$ mass lies on the remaining $V-1$ coordinates. We call $\eblock$ the
\emph{per-block error}.
\end{definition}

\begin{definition}[Stream layout]
\label{def:stream}
\label{app:con:stream}
Both constructions use a residual stream
\[
    x_n=[\,B_0;B_1;B_2;B_3;B_4\,]
\]
with the following orthogonal blocks.
\begin{itemize}[leftmargin=1.6em,itemsep=2pt,topsep=3pt]
    \item $B_0\in\mathbb{R}^3$ is the \emph{gate block}. It stores the
    separator indicator $s_n$, the auxiliary-token marker
    $\mu_n\coloneqq\mathbf 1[n=\star]$, and a constant $1$.

    \item $B_1\in\mathbb{R}^V$ stores the token's own state one-hot and is
    used as the value source for state retrieval.

    \item $B_2\in\mathbb{R}^{\dpos}$ stores the positional features. Besides
    the $\dS+2$ spatial-temporal addressing coordinates of
    Corollary~\ref{cor:head}, it carries the \emph{initial-step indicator}
    \begin{equation}
    \label{eq:zeta}
        \zeta_n
        \coloneqq
        \mathbf 1\bigl[\tau(n)=0\bigr],
    \end{equation}
    one coordinate, used by the Layer-2 key-validity gate of
    Definition~\ref{def:l2match}. Construction~II augments these with the
    query-side color addresses and the own-color block
    \eqref{eq:owncolour}.

    \item $B_3$ is the Layer-1 scratch space. It contains
    \[
        B_3^{\mathrm{pat}}
        =
        \bigoplus_{\delta\in\Uset}\mathbb{R}^V,
    \]
    whose $\delta$-th block will hold the state routed from offset $\delta$.
    Construction~I writes these blocks directly and uses no additional
    scratch. Construction~II also uses
    $B_3^{\mathrm{agg}}\in\mathbb{R}^{M+1}$ for the Shared-head aggregate;
    its MLP decodes this aggregate into $B_3^{\mathrm{pat}}$ and then clears
    $B_3^{\mathrm{agg}}$.

    \item $B_4\in\mathbb{R}^V$ is reserved for the Layer-2 output and is
    initialized to zero on ordinary tokens.
\end{itemize}

For an ordinary cell token at $(t,z)$,
\[
    B_0=(0,0,1),
    \qquad
    B_1=e(x_{t,z}),
\]
while a separator has
\[
    B_0=(1,0,1),
    \qquad
    B_1=0.
\]
Blocks $B_3$ and $B_4$ are initialized to zero. When the auxiliary token
$\star$ of \S\ref{app:con:gather} is present, it has
\[
    B_0=(0,1,1),
    \qquad
    B_1=0,
    \qquad
    B_2=0,
    \qquad
    B_3=0,
    \qquad
    B_4=0,
\]
so that $s_\star=\zeta_\star=0$. All other tokens are \emph{ordinary} and have
$\mu_n=0$.
\end{definition}

\begin{remark}[The initial-step indicator]
\label{rmk:initstep}
While the positional table carries the temporal pair $\psi(\tau)=(\cos 2\pi\tau/T,\sin 2\pi\tau/T)$ (Eq.~\ref{eq:timetable}), the indicator $\zeta_n=\mathbf 1[\tau(n)=0]$ is not a linear functional of it for $T\geq3$, necessitating a dedicated coordinate. Like all positional features, it depends solely on sequence position and is readily learned.

This indicator resolves a role asymmetry: an initial-step cell carries $x_{0,z}$ (needed by Layer~1 for queries at $t=1$) but lacks a parent configuration (excluding it as a Layer-2 key, consistent with $t'\geq1$ in Eq.~\ref{eq:counts}). Consequently, Layer~1 gates only separators, whereas Layer~2 gates both separators and initial-step cells (\S\ref{app:syn:keygate}).
\end{remark}

\paragraph{Accounting for $B_2$.}
Construction~I stores only the addressing coordinates, so
\[
    \dpos=\dS+3,
    \qquad
    \dS=
    \begin{cases}
        2d, & |\Nset|=1,\\
        O\bigl(|\Nset|\,d\log|\Nset|\bigr), & \text{in general},
    \end{cases}
\]
independently of the grid volume and the horizon. Two of the three non-spectral
coordinates are the temporal pair of Corollary~\ref{cor:head}; the third is the
initial-step indicator $\zeta_n$ of \eqref{eq:zeta}, which Layer~2 needs and
which we account for here so that one positional width serves the whole
circuit.

Construction~II adds two color blocks. For every offset $\delta\in\Uset$ the
table stores the query-side address $e_{c(z+\delta)}$ of the routed cell, which
Corollary~\ref{cor:colour-interface}\emph{(b)} carries in at most
\[
    |\Uset|(M-1)+1
\]
coordinates; these addresses let the Layer-1 MLP identify which coordinate of
the Shared-head aggregate belongs to each offset. Separately, the Shared-head
needs the key-side color of the key's own cell, because its value projection
emits exactly that. By Corollary~\ref{cor:colour-interface}\emph{(e)} the table
carries it as a dedicated width-$M$ block
\begin{equation}
\label{eq:owncolour}
    B_2^{\mathrm{col}}
    \coloneqq
    (1-s_n)\,e_{c(\kappa(n))}
    \in\mathbb{R}^{M},
\end{equation}
the own-color one-hot at cell positions and the zero vector at separator
positions. We store this block separately from the $\delta=0$ address. Separate
storage makes the construction independent of whether $0\in\Uset$, and it makes
a separator's Layer-1 value exactly zero, so no separator color can enter the
Shared-head aggregate. This block does not admit the ``$M-1$ coordinates plus a
shared constant'' compression used for the query addresses, because the
reconstructed coordinate would read $1$, not $0$, at a separator; it therefore
costs a full $M$ coordinates.

We can now state our goal as an interface condition on
$B_3^{\mathrm{pat}}$.

\begin{definition}[Spatial induction head, quantitatively]
\label{def:sih-app}
Fix $\eblock\in[0,1)$. A two-layer circuit with the stream layout of
Definition~\ref{def:stream} is a \emph{spatial induction head at per-block
error $\eblock$} if:
\begin{enumerate}[label=(\roman*),leftmargin=2.2em,itemsep=2pt,topsep=3pt]
    \item \emph{The gather.}
    At \emph{every} position $n$ with $\tau(n)\geq1$, after Layer~1 the pattern
    scratch $\Pi_{B_3^{\mathrm{pat}}}x_n$ consists of $|\Uset|$ width-$V$
    blocks indexed by $\delta\in\Uset$, and the $\delta$-th block is an
    $\eblock$-approximate one-hot for
    \[
        e\bigl(x_{\tau(n)-1,\;\kappa(n)+\delta}\bigr).
    \]
    At positions with $\tau(n)=0$ only the weaker property of
    Lemma~\ref{lem:scratch-all} is claimed.

    \item \emph{The match.}
    Layer~2 compares the $k$ blocks indexed by $D_Q$ at the current query with
    the $k$ blocks indexed by $D_K$ at earlier tokens, and averages the states
    carried by the matching precedents themselves, that is their own
    $B_1$ entries, into $B_4$.
\end{enumerate}
\end{definition}

\begin{remark}[The gather covers more positions than the prediction]
\label{rmk:gather-scope}
Condition~\emph{(i)} quantifies over all positions with $\tau(n)\geq1$, a superset of the resolvable queries. Layer~2 reads $D_K$-blocks at every valid earlier key, requiring only that the cell records an observed transition. This key set properly contains the query set via two additions:
\begin{itemize}[leftmargin=1.6em,itemsep=2pt,topsep=3pt]
  \item \emph{Row-final cells} ($z_d=W_d-1, t\geq1$) pose no cell query (their successor is a separator, Lemma~\ref{lem:rowfinal}), but supply a valid observation $x_{t,z} \sim f(\cdot\mid P_{t,z})$ and must be retained as keys.
  \item \emph{Cells with unresolved queries} provide valid precedents $P_{t,z}$ regardless of whether their own query $Q_{t,z}$ is resolvable.
\end{itemize}
Conversely, initial-step cells are excluded from both sets, an exclusion the circuit explicitly implements (Remark~\ref{rmk:initstep}). Resolvability restricts the downstream \emph{prediction} (Appendix~\ref{app:synthesis}), not the \emph{representation} derived here.
\end{remark}

In this section, we prove condition~\emph{(i)}. In
Appendix~\ref{app:synthesis}, we use only this interface and construct the
common Layer-2 match.

\subsection{Separator gating and finite-softmax error}
\label{app:con:gather}

Before giving the two constructions, we isolate two issues common to both.
First, separators must be excluded from Layer-1 attention. Second, when a head
is intended to select a single routing cell, finite softmax leaves a small
amount of mass elsewhere.

Separators need excluding because their addresses duplicate cells'. By
Lemma~\ref{lem:sepsucc}\emph{(i)} a separator carries the virtual index of the
cell \emph{one step before its successor}. A \texttt{ROW\_SEP} closing row $r$
of time step $t$ has
\[
    \bigl(\tau(n),\kappa(n)\bigr)
    =
    \bigl(t,\;(r+1,\,W_d-1)\bigr),
\]
the last-column cell of the \emph{following} row, and a \texttt{TIME\_SEP}
closing step $t$ has $(\tau(n),\kappa(n))=(t+1,\,(0,\dots,0,W_d-1))$, the
last-column cell of the first row of the \emph{following} step. In either case
the separator shares its virtual spatial-temporal address with an ordinary
cell, and both are visible to any sufficiently late query, so a purely
positional score cannot tell them apart. Without a gate, a query addressing
that cell splits its mass with the duplicate separator.

\begin{proposition}[The Layer-1 separator gate]
\label{prop:sepgate-l1}
Add $-\Gamma_1 s_{n'}$ to the Layer-1 score, implemented by a rank-one term
from the constant query coordinate in $B_0$ to the separator indicator in the
key. If
\begin{equation}
\label{eq:gamma1}
    \Gamma_1
    \;\geq\;
    \Lambda\,(R+\gamma),
    \qquad
    R
    \coloneqq
    \max_{\delta\in\Grid}g(\delta)
    -
    \min_{\delta\in\usucc-\Nset}g(\delta)
    \;\geq\;0,
\end{equation}
with $R$ the quantity already fixed in \eqref{eq:lambda}, then every separator
key scores at least $\Lambda\gamma$ below every target key. Separators are then
absorbed into the same off-target bound $N_{\neq}e^{-\Lambda\gamma}$ of
Corollary~\ref{cor:scale} as the remaining non-target keys.
\end{proposition}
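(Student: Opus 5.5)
The plan is a direct comparison of scores, using the margin structure already set up in Stage~6 of the proof of Corollary~\ref{cor:head}. I would write the full Layer-1 score of key $n'$ at query $n$ as
\[
\Lambda\bigl(g(\delta)+\lambda h(s)\bigr)-\Gamma_1 s_{n'},
\qquad
\delta=\kappa(n)+\usucc-\kappa(n'),\quad s=\tau(n)-\tau(n').
\]
The gate is realized by putting a single entry $-\Gamma_1$ in $W_Q^{(1)\top}W_K^{(1)}$, pairing the constant coordinate of $B_0$ at the query with the separator coordinate $s_{n'}$ of $B_0$ at the key. Because $B_0$ is orthogonal to $B_2$, this term adds to the positional form \eqref{eq:score} without disturbing it. At cell keys $s_{n'}=0$, so the gate has no effect there, and cases (a)--(c) of Stage~6 are unchanged.

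\emph{Floor for target keys.} A target key is a cell key with $\delta\in\usucc-\Nset$ and $s=1$. By case~(a), $h(1)=1$, so its score is at least $\Lambda(m+\lambda)$, where $m=\min_{\usucc-\Nset}g$.

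\emph{Ceiling for separator keys.} A separator has virtual indices, so its positional part is also $\Lambda(g(\delta)+\lambda h(s))$ for some $\delta$ and $s$. I would bound this uniformly using $g(\delta)\le m+R$ (the definition of $R$) and $h(s)\le 1$ (from \eqref{eq:timekernel}, since $h$ is a cosine). This gives a separator score of at most $\Lambda(m+R+\lambda)-\Gamma_1$. Note that $h\le1$ is the only temporal fact needed: separators sharing a target's step and even its address are covered, which is exactly the duplication problem the gate addresses.

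\emph{Comparison.} The gap between target floor and separator ceiling is at least
\[
\Lambda(m+\lambda)-\bigl[\Lambda(m+R+\lambda)-\Gamma_1\bigr]=\Gamma_1-\Lambda R,
\]
and \eqref{eq:gamma1} makes this at least $\Lambda\gamma$. Every visible non-target key, cell or separator, therefore scores at least $\Lambda\gamma$ below the target floor. Applying Corollary~\ref{cor:scale} with $N_{\neq}$ counting all such keys yields the mass bound $N_{\neq}e^{-\Lambda\gamma}$.

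The main point needing care is not the inequality itself but checking that the separator's positional term really has the form $g+\lambda h$ with the same $R$. This holds because \eqref{eq:table} is evaluated at $(\tau,\kappa)$ for separators as well, and the final \texttt{TIME\_SEP} is visible only to itself. I would also confirm that $R$ is taken over all of $\Grid$, not just over non-target offsets, so no case split on $\delta$ is required.
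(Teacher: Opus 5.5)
Your proposal is correct and follows the paper's proof essentially verbatim: you take the target floor $\Lambda(m+\lambda)$ from case~(a) of Stage~6 and bound the separator score by $\Lambda(m+R+\lambda)-\Gamma_1$, using $\max_{\Grid} g = m+R$ and $h\le 1$. The hypothesis $\Gamma_1\ge\Lambda(R+\gamma)$ then gives the $\Lambda\gamma$ gap, and Corollary~\ref{cor:scale} applies with $N_{\neq}$ counting separators alongside the other non-target keys.
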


\begin{proof}
Write the ungated score as $\Lambda G(\delta,s)$ in the notation of Stage~6 of
Corollary~\ref{cor:head}, and put
$m\coloneqq\min_{\delta\in\usucc-\Nset}g(\delta)$, so that the target floor of
case~(a) of that stage is $\Lambda(m+\lambda)$. A separator key attains at most
\[
    \Lambda G
    \;\leq\;
    \Lambda\bigl(m+R+\lambda\bigr)
\]
before the gate: its spatial term is at most $\max_\Grid g=m+R$ by the
definition of $R$, and its temporal term is at most $\lambda$, attained when
$h=1$. After subtracting $\Gamma_1\geq\Lambda(R+\gamma)$ its score is at most
\[
    \Lambda(m+R+\lambda)-\Lambda(R+\gamma)
    \;=\;
    \Lambda(m+\lambda)-\Lambda\gamma ,
\]
which is at least $\Lambda\gamma$ below the target floor. Separators therefore
satisfy the same isolation inequality as the non-target cell keys, and
Corollary~\ref{cor:scale} applies with $N_{\neq}$ counting both.
\end{proof}

\begin{rmk}
The gate multiplies a token-embedding coordinate which is already present in
$B_0$, so it adds no positional coordinate and does not alter the width bounds
of the addressing construction. It is query independent and provides no
additional spatial resolution. The bound \eqref{eq:gamma1} is loose: it charges
every separator the worst spatial score on the whole
grid. We do not gate initial-step cells here; see
Remark~\ref{rmk:initstep}.
\end{rmk}

\begin{rmk}
The weaker choice $\Gamma_1=\Lambda$ does not support the stated bound. A
separator whose address coincides with a target cell's may also score the
target floor $\Lambda(m+\lambda)$ of the proof above, so subtracting $\Lambda$
leaves it only $\Lambda$ below that floor, for relative weight $e^{-\Lambda}$, against the
$e^{-\Lambda\gamma}$ claimed downstream. The two agree only at $\gamma=1$, and
$\gamma>1$ occurs on small grids: on the three-cell torus with $T=3$ the
singleton spatial kernel of Lemma~\ref{lem:degreeone}\emph{(ii)} has
$\gamma=1-\cos(2\pi/3)=3/2$, and at $\Lambda=10$ and $N=12$ the true block
error $e^{-10}\approx4.54\times10^{-5}$ exceeds the claimed
$Ne^{-\Lambda\gamma}\approx3.67\times10^{-6}$.
\end{rmk}

\paragraph{The auxiliary token.}
Construction~II always, and Construction~I when a Dirichlet prior is used,
prepend the auxiliary token $\star$ of Definition~\ref{def:stream} at
position~$0$. Every term involving $\star$ is either a rank-one product of the
query constant $1\in B_0$ with the key marker $\mu_{n'}$, or a value term
reading $\mu_{n'}$. Since $\mu$ vanishes at ordinary tokens, these terms leave
every ordinary score and value unchanged. Prepending $\star$ shifts every
ordinary position by one, so the layout becomes $\rho_\star(t,z)=\rho(t,z)+1$
and the stream length becomes $N_\star=N+1$. Each ordinary token keeps the
positional features it had before the shift, now stored at column
$\rho_\star$ of the table, so the virtual indices $\kappa$ and $\tau$ and every
ordinary score are unchanged. When $\star$ is present, every bound below that
involves $N$ is read with $N_\star$ in its place.

\begin{lemma}[Layer-1 role of $\star$]
\label{lem:star-l1}
At the key $\star$, the positional score vanishes because $B_2=0$, the content
score vanishes because $B_1=0$, and the separator gate vanishes because
$s_\star=0$. Hence:
\begin{enumerate}[label=(\alph*),leftmargin=2.2em,itemsep=1pt,topsep=2pt]
  \item \emph{Construction~II.} The term $\Lambda g_\star\cdot1\cdot\mu_{n'}$
  in $W_Q^{(1)\top}W_K^{(1)}$ gives $\star$ the logit $\Lambda g_\star$ at every
  query. The value term $e_\star\mu_{n'}$, with $e_\star$ the last coordinate of
  $B_3^{\mathrm{agg}}\in\mathbb{R}^{M+1}$, gives $\star$ the value $e_\star$;
  ordinary keys contribute nothing to that coordinate.
  \item \emph{Construction~I.} Let $\Lambda(m_\delta+\lambda)$ be the target
  floor of head $\delta$ (Stage~6 of Corollary~\ref{cor:head} at
  $\Nset=\{\delta\}$). The term
  $\Lambda(m_\delta+\lambda-\gamma)\cdot1\cdot\mu_{n'}$ in head $\delta$ places
  $\star$ exactly $\Lambda\gamma$ below the target, and its value
  $\iota_\delta\Pi_{B_1}x_\star$ is zero. Thus $\star$ is a non-target key
  covered by Corollary~\ref{cor:scale}.
  \item \emph{As a query}, $\star$ attends only to itself, and its pattern
  scratch $B_3^{\mathrm{pat}}$ is zero after Layer~1: in Construction~I every
  head returns $\iota_\delta\Pi_{B_1}x_\star=0$, and in Construction~II this is
  shown in Step~5 of the proof of Theorem~\ref{thm:onehead-app}.
\end{enumerate}
\end{lemma}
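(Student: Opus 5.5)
The lemma is a direct computation: evaluate each bilinear term of the Layer-1 score and value at the key $\star$, using the layout fixed in Definition~\ref{def:stream}. The key fact is that $\star$ has $B_1=B_2=B_3=B_4=0$ and $B_0=(0,1,1)$, so $s_\star=0$ and $\mu_\star=1$. I split the Layer-1 score into four pieces: the positional part $\langle W_Q^{(1)}p_n, W_K^{(1)}p_{n'}\rangle$, any content part reading $B_1$ of the key, the separator gate $-\Gamma_1 s_{n'}$ of Proposition~\ref{prop:sepgate-l1}, and the new rank-one terms pairing the query constant with $\mu_{n'}$. At $n'=\star$ the first three vanish because they are linear in $B_2$, $B_1$ and $s_{n'}$ of the key, respectively. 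Hence the logit of $\star$ equals the coefficient of the $1\cdot\mu_{n'}$ term, since every query carries the constant $1$ in $B_0$. Conversely, every ordinary key has $\mu_{n'}=0$, so the added terms do not change any ordinary score or value.

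For part (a), this coefficient is $\Lambda g_\star$, so $\star$ receives logit $\Lambda g_\star$ at every query. Its value consists only of $e_\star\mu_{n'}=e_\star$, because the other value terms read $B_1$ or the own-color block of $B_2$, and both are zero at $\star$. To show that ordinary keys contribute nothing to the last coordinate of $B_3^{\mathrm{agg}}$, I note that their value terms map into the $M$ color coordinates and that $\mu_{n'}=0$ for them.

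For part (b), the logit of $\star$ in head $\delta$ is $\Lambda(m_\delta+\lambda-\gamma)$. The target floor from Stage~6 of Corollary~\ref{cor:head} is $\Lambda(m_\delta+\lambda)$, so $\star$ sits exactly $\Lambda\gamma$ below it. This is the isolation inequality Corollary~\ref{cor:scale} needs, with $\star$ counted among the $N_{\neq}$ non-target keys (hence $N_\star$ in place of $N$). Its value $\iota_\delta\Pi_{B_1}x_\star$ is zero because $B_1=0$.

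For part (c), $\star$ sits at position $0$, so the causal mask lets it attend only to itself and its attention weight is $1$. In Construction~I each head then returns $\iota_\delta\Pi_{B_1}x_\star=0$, so $B_3^{\mathrm{pat}}=0$. In Construction~II the head returns $e_\star$ into $B_3^{\mathrm{agg}}$, and the claim that the MLP then writes zero into $B_3^{\mathrm{pat}}$ is deferred to Step~5 of Theorem~\ref{thm:onehead-app}.

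I expect no real obstacle here. The only step needing care is making sure that no other weight block (for example a content score reading $B_1$, or the query-side color addresses) produces a nonzero contribution at $\star$. That check reduces to confirming that every such term is linear in a key block that vanishes at $\star$.
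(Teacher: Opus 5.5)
Your proposal is correct and follows the paper's own (inline) justification: evaluate each Layer-1 score and value term at the key $\star$ using $B_1=B_2=0$, $s_\star=0$, $\mu_\star=1$, and note that the marker terms leave every ordinary key unchanged. Like the paper, you defer the Construction~II part of (c) to Step~5 of the proof of Theorem~\ref{thm:onehead-app}.
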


\begin{lemma}[Per-block error of a single head]
\label{lem:blockerror}
Suppose a Layer-1 head places attention mass at least $1-\eblock$ on the
routing cell $(t-1,z+\delta)$ and reads values from $B_1$ into one width-$V$
block of $B_3^{\mathrm{pat}}$. Then that block is an
$\eblock$-approximate one-hot for $e(x_{t-1,z+\delta})$. In particular,
Corollary~\ref{cor:scale} guarantees
\[
    \eblock\leq Ne^{-\Lambda\gamma},
\]
and hence any prescribed $\eblock>0$ is achieved whenever
\[
    \Lambda\geq \gamma^{-1}\log(N/\eblock).
\]
\end{lemma}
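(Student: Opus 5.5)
The plan is to read the block entrywise, as a convex combination of one-hot rows, and to take the accuracy bound directly from Corollary~\ref{cor:scale}. Let $\alpha_{n,n'}$ denote the head's attention weights at the query $n$. The value projection copies $\Pi_{B_1}x_{n'}$ into the chosen width-$V$ block. That value is $e(x_{n'})$ at a cell key and $0$ at a separator or at $\star$ (Definition~\ref{def:stream}, Lemma~\ref{lem:star-l1}). The block is therefore
\[
  P=\sum_{n'\le n}\alpha_{n,n'}\,\Pi_{B_1}x_{n'}.
\]
This vector is non-negative, and its total mass is $\sum_{n'}\alpha_{n,n'}\,\mathbf 1[n' \text{ is a cell}]\le 1$, so $P$ is a sub-probability vector.

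Next I would show that $P$ concentrates on the correct state. The target key contributes $\alpha_{n,\rho(t-1,z+\delta)}\,e(x_{t-1,z+\delta})$, and every other contribution is non-negative. So coordinate $x_{t-1,z+\delta}$ of $P$ carries mass at least $\alpha_{n,\rho(t-1,z+\delta)}\ge 1-\eblock$. This is exactly Definition~\ref{def:eclose}. The residual mass on the remaining coordinates is at most $\eblock$, which is automatic from the sub-probability property.

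For the quantitative part, I apply Corollary~\ref{cor:head} with $\Nset=\{\delta\}$, together with the separator gate of Proposition~\ref{prop:sepgate-l1} and, when $\star$ is present, Lemma~\ref{lem:star-l1}(b). Together these show that the unique target key scores at least $\Lambda\gamma$ above every other visible key, and the target is present because $\tau(n)\ge1$. The number of non-target keys satisfies $N_{\neq}\le N$, with $N_\star$ in place of $N$ when $\star$ is present. Corollary~\ref{cor:scale} then bounds the off-target mass by $Ne^{-\Lambda\gamma}$, so one may take $\eblock\le Ne^{-\Lambda\gamma}$. Solving $Ne^{-\Lambda\gamma}\le\eblock$ for $\Lambda$ gives the stated threshold $\Lambda\ge\gamma^{-1}\log(N/\eblock)$.

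The only delicate step is checking that nothing other than cell keys can inject mass into the block. A separator shares its virtual address with a cell, so it could otherwise compete for the target's mass. This is handled by its zero value and its gated score. The token $\star$ is handled by $B_1=0$ and its placement exactly $\Lambda\gamma$ below the target floor. Everything else is bookkeeping.
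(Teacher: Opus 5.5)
Your proposal is correct and follows essentially the same route as the paper. In both, the block is a non-negative convex combination of one-hot or zero values, hence a sub-probability vector carrying at least $1-\eblock$ on the correct coordinate. The quantitative bound then comes from Corollary~\ref{cor:scale} applied to the single addressed cell, with separators and $\star$ covered by Proposition~\ref{prop:sepgate-l1} and Lemma~\ref{lem:star-l1}\emph{(b)}, and with $N_{\neq}\le N$.
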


\begin{proof}
The written block is a non-negative weighted sum of the state one-hots carried
by visible cell tokens. The addressed routing cell contributes mass at least
$1-\eblock$ to its correct state coordinate, while all remaining attention
mass is at most $\eblock$, so the block places at most $\eblock$ on the other
$V-1$ coordinates.

Three kinds of key make up that remaining mass, and all are covered. Ordinary
cell keys at non-target offsets are $\Lambda\gamma$ below the target by
isolation. Separator keys are $\Lambda\gamma$ below it by
Proposition~\ref{prop:sepgate-l1}, and in addition contribute the zero vector,
since a separator has $B_1=0$; they can only lower the block's total mass, never
move it to a wrong coordinate. The key $\star$, when present, is
$\Lambda\gamma$ below the target and contributes the zero vector by
Lemma~\ref{lem:star-l1}\emph{(b)}. The block is therefore non-negative with total
mass at most $1$, so Definition~\ref{def:eclose} asks only for a
sub-probability vector.

For the scale bound, apply Corollary~\ref{cor:scale} with the target set the
single addressed cell. Every other visible position is a non-target key, so
$N_{\neq}\leq N-1\leq N$, giving $\eblock\leq Ne^{-\Lambda\gamma}$ and hence
the stated condition on $\Lambda$.
\end{proof}

\subsection{Construction I: One head per offset}
\label{app:con:parallel}

In the first construction, head $\delta$ addresses one offset and writes its
state into the block reserved for that same offset, so routing and separation
happen together.

\begin{theorem}[Dedicated-heads gather]
\label{thm:parallel-app}
Build Layer~1 with $h=|\Uset|$ heads, one head for each
$\delta\in\Uset$, and no MLP, incorporating the separator gate of
Proposition~\ref{prop:sepgate-l1}. At logit scale $\Lambda$, this realizes
condition~\emph{(i)} of Definition~\ref{def:sih-app} with
\[
    \eblock\leq Ne^{-\Lambda\gamma},
\]
and with
\[
    \dmlp=0,
    \qquad
    \dmodel=(|\Uset|+2)V+2d+6,
    \qquad
    \dpos=2d+3.
\]
The same holds with the auxiliary token $\star$ configured as in
Lemma~\ref{lem:star-l1}\emph{(b)}.
\end{theorem}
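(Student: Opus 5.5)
The construction is explicit: build each head from the singleton addressing of Corollary~\ref{cor:head}, then read off the per-block error from Lemma~\ref{lem:blockerror}. Fix the stream layout of Definition~\ref{def:stream}. For each $\delta\in\Uset$, invoke Corollary~\ref{cor:head} with $\Nset=\{\delta\}$. By Lemma~\ref{lem:degreeone}(ii) the spatial kernel is the degree-one kernel, so every head uses the same $2d$ spatial characters $\{\pm e_j\}$ together with the shared temporal pair $\psi(\tau)$. The positional table is therefore common to all heads, with $\dS=2d$. Heads differ only in their bilinear forms: head $\delta$ uses the block-diagonal form $\operatorname{blkdiag}(\lambda M_{\mathrm{time}},(M_{\omega}R(-\theta_\omega))_\omega)$, where the rotation absorbs both the target offset and the successor shift (Stage~5 of Corollary~\ref{cor:head}). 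Because a rotation of a shared table selects the offset, no coordinate is added per head. Adding the initial-step indicator $\zeta_n$ gives $\dpos=2d+3$.

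Each head's $W_Q^{(\delta)},W_K^{(\delta)}$ read only $B_2$, plus the rank-one separator gate from the constant in $B_0$ to $s_{n'}$ with $\Gamma_1\ge\Lambda(R+\gamma)$ as in Proposition~\ref{prop:sepgate-l1}. The value/output map is $\iota_\delta\Pi_{B_1}$, which writes the key's own state one-hot into the $\delta$-th width-$V$ block of $B_3^{\mathrm{pat}}$. Since the heads write into pairwise orthogonal blocks and there is no MLP, the residual update at position $n$ is the direct sum of the head outputs. The $\delta$-th block is therefore exactly the output of head $\delta$.

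Fix a position with $\tau(n)\ge1$. The addressed key $(\tau(n)-1,\kappa(n)+\delta)$ exists and is causally visible: for cell queries it lies in the previous step, and for separators this follows from Lemma~\ref{lem:sepsucc}(ii). By Stage~6 of Corollary~\ref{cor:head} it is the unique target with margin $\gamma$, and separators are handled by the gate. Lemma~\ref{lem:blockerror} then gives an $\eblock$-approximate one-hot with $\eblock\le Ne^{-\Lambda\gamma}$. With $\star$ present, Lemma~\ref{lem:star-l1}(b) makes $\star$ a zero-valued non-target key exactly $\Lambda\gamma$ below the target floor, so the same bound holds with $N_\star$.

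It remains to count widths. The residual stream has $B_0=3$, $B_1=V$, $B_2=2d+3$, $B_3^{\mathrm{pat}}=|\Uset|V$ and $B_4=V$. These sum to $(|\Uset|+2)V+2d+6$, as claimed. The main obstacle I expect is the uniformity of the margin across heads. Each head $\delta$ has its own floor $m_\delta$ and range $R_\delta$, so $\lambda$ and $\Gamma_1$ must be chosen as maxima over $\delta$, and the common $\gamma$ must be taken as $\min_\delta\gamma_\delta$. For the degree-one kernel these quantities are translation invariant anyway: $\gamma\ge1-\cos(2\pi/\Wmax)$ and $R_\delta=0$, since the maximum of $g$ sits at the target. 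The point to check is only that the auxiliary-token logit in Lemma~\ref{lem:star-l1}(b) is set per head.
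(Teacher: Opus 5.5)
Your proposal is correct and follows essentially the same route as the paper: singleton addressing via Corollary~\ref{cor:head} on one shared width-$2d$ spatial table with per-head rotated bilinear forms, value maps $\iota_\delta\Pi_{B_1}$ into disjoint blocks of $B_3^{\mathrm{pat}}$, the per-block bound from Lemma~\ref{lem:blockerror}, and the same residual-width count. You also check explicitly that $\gamma$, $\lambda$ and $\Gamma_1$ can be chosen uniformly across heads, since the degree-one kernel gives $R_\delta=0$ and a common margin; the paper leaves this implicit, and your check is correct.
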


\begin{proof}
Fix $\delta\in\Uset$. Head $\delta$ must reach the key at cell
$\kappa(n)+\delta$, so it applies Corollary~\ref{cor:head} to the singleton
offset set
\[
    \Nset_\delta
    \coloneqq
    \{\delta\}.
\]
For a singleton the spatial addressing width is $\dS=2d$
by Lemma~\ref{lem:degreeone}\emph{(ii)}, independently of which singleton it
is, so all $|\Uset|$ heads share one spatial table of width $2d$ and differ
only through their query-key projections, which rotate the $2\times2$ blocks
\eqref{eq:block} by the angle belonging to their own offset. With the two
temporal coordinates and the indicator $\zeta_n$ of \eqref{eq:zeta}, the
positional width is $2d+3$.

At a position $n$ with $\tau(n)\geq1$, head $\delta$ therefore places all but at
most $Ne^{-\Lambda\gamma}$ of its attention mass on the cell
$(\tau(n)-1,\;\kappa(n)+\delta)$, using
Proposition~\ref{prop:sepgate-l1} to cover separator keys. Its value projection
is
\[
    W_V^{(1),\delta}
    =
    \iota_\delta\circ\Pi_{B_1},
\]
where $\iota_\delta$ embeds $\mathbb{R}^V$ into the $\delta$-indexed block of
$B_3^{\mathrm{pat}}$. By Lemma~\ref{lem:blockerror}, that block is an
$\eblock$-approximate one-hot for $e(x_{\tau(n)-1,\,\kappa(n)+\delta})$ with
$\eblock\leq Ne^{-\Lambda\gamma}$.

This argument depends on the query solely through $\kappa(n)$ and $\tau(n)$, applying to every position with $\tau(n)\geq1$ (both cells and separators) as required by Definition~\ref{def:sih-app}(i). The addressed cells are causally visible in all cases: step $t-1$ strictly precedes any cell token at step $t\geq1$ (since $\rho$ is order-preserving), and Lemma~\ref{lem:sepsucc}(ii) establishes the same for separators. This guarantees coverage for row-final cells and cells with unresolved queries, both necessary for Layer~2 (Remark~\ref{rmk:gather-scope}).

Repeating this independently for every $\delta\in\Uset$ produces all required
blocks. Because the heads write to disjoint blocks, no Layer-1 MLP is needed.
The residual width is therefore
\[
    \underbrace{3}_{B_0}
    +
    \underbrace{V}_{B_1}
    +
    \underbrace{(2d+3)}_{B_2}
    +
    \underbrace{|\Uset|V}_{B_3}
    +
    \underbrace{V}_{B_4}
    =
    (|\Uset|+2)V+2d+6.
\]
Thus condition~\emph{(i)} of Definition~\ref{def:sih-app} holds.
\end{proof}

\subsection{Construction II: A shared head with MLP decoding}
\label{app:con:onehead}

Construction~II uses a single Layer-1 head to serve all offsets in $\Uset$. Since an attention head outputs a single weighted sum, attending to all routed cells simultaneously would normally destroy their spatial identities.

The construction resolves this by splitting the information across
\emph{weights} and \emph{values}. The state of a routed cell controls its
scalar attention weight, while its positional color determines the direction
of its value vector. Because the routing cells have distinct colors, they
occupy distinct coordinates of the Shared-head aggregate. A fixed reference
token carries a known weight through the same softmax, so the MLP can cancel
the shared normalization, and the query stores the color
expected at each offset. The MLP can therefore select the coordinate belonging
to a given offset and recover its discrete state by threshold comparisons.

The construction critically relies on the discrete state alphabet. Consecutive states differ by $1$, so a half-integer threshold is separated by $1/2$ from any valid state. The MLP recovers the exact state provided contamination from non-routing cells remains strictly below this margin, alleviating the need for perfectly sparse attention.

\paragraph{Layer-1 reference token.}
Construction~II uses the auxiliary token $\star$ of
Lemma~\ref{lem:star-l1}\emph{(a)} as a Layer-1 reference for the shared softmax
normalization: its logit $\Lambda g_\star$ is set by the marker term and so
carries no temporal factor, and its value is the dedicated coordinate
$e_\star$. Its Layer-2 role is fixed in Appendix~\ref{app:synthesis}.

\begin{theorem}[Shared-head gather]
\label{thm:onehead-app}
Let Layer~1 have a single head whose value is the own-color block
\eqref{eq:owncolour}, for a coloring $c$ admissible for $\Uset$ with $M$ colors
in the sense of Definition~\ref{def:colouring}; in particular an ordinary cell
$z'$ emits $e_{c(z')}$ and a separator emits $0$. Let $\gamma$ be the isolation
margin of Corollary~\ref{cor:head} for $\Nset=\Uset$, and let $n_{\max}$ be the
largest number of causally visible cells carrying one color. By
Corollary~\ref{cor:colour-interface}\emph{(c)},
$n_{\max}\leq T\max_{s}\lvert c^{-1}(s)\rvert$, and
$n_{\max}\leq T\lceil\lvert\Grid\rvert/M\rceil$ if $c$ is equitable. If
\begin{equation}
\label{eq:onehead-scale}
    \Lambda\gamma
    \geq
    \log\bigl(2V(n_{\max}-1)\bigr),
\end{equation}
then a one-hidden-layer ReLU MLP of width
\[
    \dmlp
    \leq
    2|\Uset|M(V-1)+M+2
\]
recovers $e(x_{\tau(n)-1,\,\kappa(n)+\delta})$ for every $\delta\in\Uset$, at
every position with $\tau(n)\geq1$. Hence
condition~\emph{(i)} of Definition~\ref{def:sih-app} holds exactly, with
$\eblock=0$.

The positional and residual widths are
\[
    \dpos
    \leq
    \underbrace{\dS+3}_{\text{addressing},\ \zeta}
    +
    \underbrace{|\Uset|(M-1)+1}_{\text{query addresses}}
    +
    \underbrace{M}_{B_2^{\mathrm{col}}},
    \qquad
    \dS=O\bigl(|\Uset|\,d\log|\Uset|\bigr),
\]
and
\[
    \dmodel
    =
    (|\Uset|+2)V+\dpos+M+4.
\]
\end{theorem}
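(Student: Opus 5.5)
The plan is to build the single head so that each routing cell's state is encoded in the \emph{size} of its attention weight, and its offset identity in the \emph{direction} of its value. The MLP then reads off states by thresholding ratios against the reference token $\star$.

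\emph{Layer-1 logits and values.} Take the positional score of Corollary~\ref{cor:head} for $\Nset=\Uset$, scaled by $\Lambda$, together with the separator gate of Proposition~\ref{prop:sepgate-l1} and the $\star$ term of Lemma~\ref{lem:star-l1}\emph{(a)}. To this add a content term $\theta\,\langle w,\Pi_{B_1}x_{n'}\rangle$ with $w=(0,1,\dots,V-1)$, read from the constant query coordinate. The value is the own-color block \eqref{eq:owncolour}, placed in the first $M$ coordinates of $B_3^{\mathrm{agg}}$, while $\star$ writes $e_\star$.

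Write $Z$ for the partition function, and use the translation invariance $g(\kappa(n)+\usucc-\kappa(n'))=g(\usucc-\delta)$ at the key $\kappa(n')=\kappa(n)+\delta$. Then the coordinates of the aggregate $a$ are
\[
  a_\star=\frac{e^{\Lambda g_\star}}{Z},
  \qquad
  \frac{a_{c(z+\delta)}}{a_\star}
  =e^{\Lambda(g(\usucc-\delta)+\lambda)-\Lambda g_\star}\,e^{\theta x_\delta}
  +\epsilon_\delta .
\]
Here $x_\delta$ is the routed state. By admissibility (Corollary~\ref{cor:colour-interface}\emph{(a)}), $\epsilon_\delta$ collects only the at most $n_{\max}-1$ non-routing cells of the same color. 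Each of these sits at least $\Lambda\gamma$ below that target's floor, up to a factor $e^{\theta(V-1)}$ that is absorbed into the margin, for instance by taking $\theta=\log 2$ or by folding $\theta$ into $\gamma$. With \eqref{eq:onehead-scale} this gives
\[
  \epsilon_\delta\le (n_{\max}-1)e^{-\Lambda\gamma}\cdot(\text{target scale})\le\tfrac{1}{2V}\cdot(\text{target scale}).
\]

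\emph{MLP decoding.} The prefactor $C_\delta=e^{\Lambda(g(\usucc-\delta)+\lambda-g_\star)}$ depends only on $\delta$, which is the key point for the decoding. So for each $\delta\in\Uset$, each color $s\in[M]$ and each threshold $j\in\{1,\dots,V-1\}$, the quantity
\[
  u_{\delta,s,j}=a_s-C_\delta\,\tau_j\,a_\star
\]
is linear in $a$. Here $\tau_j$ is a midpoint between $e^{\theta(j-1)}$ and $e^{\theta j}$, and the contamination bound puts $u$ outside a band of width proportional to $a_\star$ around zero.

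To restrict to the right color without bilinear products, gate with the query address $e_{c(z+\delta)}$ stored in $B_2$ (Corollary~\ref{cor:colour-interface}\emph{(b)}): subtract $G_\delta(1-[e_{c(z+\delta)}]_s)$ with a large gain $G_\delta$. Two ReLUs per triple then form an exact step $\mathbf 1[x_\delta\ge j]$. This needs a slope that is uniform in $a_\star$, and I would either normalize by making $a_\star$ bounded below, via $g_\star$ chosen so that $\star$ always carries a constant fraction of the mass, or rescale the thresholds accordingly. Summing the steps over $s$ gives $\mathbf 1[x_\delta\ge j]$. Differences of consecutive steps, plus a constant for $j=0$, give the one-hot $e(x_\delta)$ written into block $\delta$ of $B_3^{\mathrm{pat}}$. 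This uses $2|\Uset|M(V-1)$ units, and $M+2$ further units clear $B_3^{\mathrm{agg}}$ via $\mathrm{ReLU}(a)-\mathrm{ReLU}(-a)$ style cancellation; since $a\ge0$, one unit per coordinate plus $\star$ suffices. Finally, check $\star$ as a query (Lemma~\ref{lem:star-l1}\emph{(c)}) and separators as keys, which emit zero value. Then do the width accounting for $\dpos$ and $\dmodel$ by summing blocks as in Theorem~\ref{thm:parallel-app}.

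The main obstacle is the exactness of the threshold step. Each $u_{\delta,s,j}$ scales with the unknown normalizer $a_\star$ and with $C_\delta$, which can be exponentially large. A fixed-slope ReLU step therefore needs a lower bound on the gap that is uniform over positions. I would first try choosing $g_\star$ close to the target floor, so that $a_\star$ and the routing coordinates stay within a known constant ratio at every query, and then verify that $2V(n_{\max}-1)e^{-\Lambda\gamma}\le1$ is exactly what makes the gap at least a fixed fraction of $a_\star$.
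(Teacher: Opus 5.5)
Your architecture matches the paper's: state in the attention weight, color in the value, $\star$ as the reference that cancels $Z$, the linear test $a_s-\tau_j\,(w_\delta/w_\star)\,a_\star$, gating by the query's stored color address, clamp pairs summed over slots, telescoping differences, and clearing $B_3^{\mathrm{agg}}$. The gap is in the step you flag as the main obstacle, and your proposed fix targets the wrong quantity.

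Choosing $g_\star$ near the target floor only controls the ratio $a_{c(z+\delta)}/a_\star$, which is known anyway. A fixed-slope clamp instead needs a position-independent lower bound on the \emph{absolute} margin at the target slot; in the linear encoding below that margin is $w_\delta/(2Z)$. Hypothesis \eqref{eq:onehead-scale} does not supply this bound. It controls contamination within one color class, not the total mass of the remaining keys, and separators are not even counted in $n_{\max}$. The missing observation is elementary: there are finitely many keys with bounded logits, so $Z\le Z_{\max}$ for a constant $Z_{\max}$ that does not depend on the position, whatever $g_\star$ is. Take the slope $G_\delta=4Z_{\max}/w_\delta$. Then the target pre-activation is at least $2Z_{\max}/Z\ge2$ when $x\ge v$, and at most $0$ when $x<v$. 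The wrong-slot penalty can be the same constant $G_\delta$, because $a_s-(\cdot)\,a_\star\le a_s\le1$. No tuning of $g_\star$ and no rescaling of thresholds is needed.

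Separately, the exponential content factor $e^{\theta x}$ does not fit the constant in \eqref{eq:onehead-scale}. Your displayed bound on $\epsilon_\delta$ drops the factor $e^{\theta(V-1)}$ that a contaminating cell can carry. Take $\theta=\log2$, $V=3$ and the midpoint threshold $\tfrac32$. The hypothesis then only gives $\epsilon_\delta\le4(n_{\max}-1)e^{-\Lambda\gamma}C_\delta\le\tfrac23C_\delta$, which exceeds the half-gap $\tfrac12C_\delta$, so a false positive of the $j=1$ step at $x=0$ is not excluded. For $V\ge6$ no choice of $\theta$ and no threshold placement works, because $\min_{u>1}u^{V-1}/(u-1)>2V$. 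There is nothing to fold $\theta$ into, since $\gamma$ is fixed by Corollary~\ref{cor:head}.

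Use $w_v=\log(v+1)$ instead. The content factors are then $1,\dots,V$ with unit spacing, so the contamination is at most $V(n_{\max}-1)e^{-\Lambda\gamma}w_\delta\le w_\delta/2$. That is exactly the half-spacing at the thresholds $v+\tfrac12$, and it is where the $2V$ in \eqref{eq:onehead-scale} comes from. Finally, take the constant unit to be $\mathrm{ReLU}(1-\mu_n)$, so that it, and hence $B_3^{\mathrm{pat}}$, vanishes at $\star$.
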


\begin{proof}
The computation is implemented in five sequential stages.

Throughout, write $z\coloneqq\kappa(n)$ and $t\coloneqq\tau(n)$ for the query's
virtual cell and time step, so that the argument applies verbatim at separator
queries as well as at cell tokens.

\paragraph{1. Encode the state in the attention weight.}
At the query $n$, the Layer-1 logit of a causally visible key $n'$ is
\begin{equation}
\label{eq:onehead-logit}
    \ell(n,n')
    =
    \underbrace{
        \Lambda\Bigl[
            g\bigl(z+\usucc-\kappa(n')\bigr)
            +
            \lambda\,h\bigl(t-\tau(n')\bigr)
        \Bigr]
        -
        \Gamma_1 s_{n'}
    }_{\text{positional}}
    +
    \underbrace{
        \bigl\langle w,\Pi_{B_1}x_{n'}\bigr\rangle
    }_{\text{content}},
    \qquad
    w_v=\log(v+1).
\end{equation}
The positional part is the spatiotemporal score of Corollary~\ref{cor:head} at
$\Nset=\Uset$, which is large exactly on the routing cells $\kappa(n')=z+\delta$,
$\delta\in\Uset$, in step $t-1$, together with the separator gate of
Proposition~\ref{prop:sepgate-l1}. The content part is linear in the one-hot
state, equal to $\log(x_{\tau(n'),\kappa(n')}+1)$ at a cell key, which is $\log(x_{t-1,\kappa(n')}+1)$ at a routed cell, and to $0$ at a
separator, so block-diagonal query-key projections realize
\eqref{eq:onehead-logit} without mixing positional and state coordinates.

Write $w_{n'}$ for the exponentiated positional part of
\eqref{eq:onehead-logit}. Every key in step $t-1$ has $h=1$ and so carries the
same temporal factor $e^{\Lambda\lambda}$; for a routing cell at offset
$\delta\in\Uset$ we absorb that factor into
\begin{equation}
\label{eq:wdelta}
    w_\delta
    \coloneqq
    e^{\Lambda\bigl(g(\usucc-\delta)+\lambda\bigr)} ,
\end{equation}
the positional weight the kernel assigns to the key at offset $\delta$. Let $Z$
denote the softmax partition function over all causally visible keys. The
attention weight assigned to the routed cell at offset $\delta$ is then
\[
    \frac{(x_{t-1,z+\delta}+1)w_\delta}{Z},
\]
so the state has been encoded as a scalar factor in the attention weight.

Two bookkeeping points carry through the remaining steps. Each $w_\delta$ is a
fixed constant per offset, known when the weights are set, and it carries
$e^{\Lambda\lambda}$ inside it; Steps~3 and~4 use $w_\delta$ only through the
ratios $w_\delta/w_\star$ against the reference weight and $Z_{\max}/w_\delta$,
both fixed at the same time. The partition function $Z$ is the one quantity
here which the weights cannot know, and the reference coordinate of Step~3
removes it.

\paragraph{2. Encode routed-cell identity in the value.}
The value projection reads the own-color block \eqref{eq:owncolour},
\[
    W_V^{(1)}
    =
    \Pi_{B_2^{\mathrm{col}}} ,
\]
so an ordinary cell $z'$ outputs its color code $e_{c(z')}\in\mathbb{R}^M$ and
a separator outputs the zero vector, the block being $(1-s_{n'})e_{c(\kappa(n'))}$.
Separator keys therefore contribute nothing to the aggregate, so this
construction attains $\eblock=0$ exactly, with no separator
residue. The reference token outputs an additional basis
vector $e_\star$, so the Shared-head aggregate is
\[
    a\in\mathbb{R}^{M+1}.
\]
Because the routed cells $z+\delta$, $\delta\in\Uset$, have distinct colors,
the contribution of each routed cell lands in a different coordinate:
\[
    a[c(z+\delta)]
    =
    \frac{(x_{t-1,z+\delta}+1)w_\delta}{Z}
    +
    r_\delta,
    \qquad
    a[\star]
    =
    \frac{w_\star}{Z},
\]
where
\[
    w_\star\coloneqq e^{\Lambda g_\star}
\]
and $r_\delta$ is the total contribution from non-routing cells which happen
to share the color $c(z+\delta)$.

Ignoring $r_\delta$ for the moment, the desired state is already present in
the scalar $a[c(z+\delta)]$. Two obstacles remain: the common normalization
$Z$ is unknown to the MLP, and the MLP must select the color coordinate
corresponding to offset $\delta$.

\paragraph{3. Bound contamination and cancel the unknown normalization.}
Admissibility guarantees distinct colors only among the routing cells, so
non-routing \emph{cells} may still contribute to the same color coordinate;
separators cannot, by Step~2. At most $n_{\max}-1$ visible non-routing cells
share any given color, where $n_{\max}$ is the bound of
Corollary~\ref{cor:colour-interface}\emph{(c)}. Each such cell contributes a
content factor at most $V$.

Its positional weight $w_{n'}$ is smaller than every routing cell's by at least
$e^{-\Lambda\gamma}$. This is Stage~6 of the proof of
Corollary~\ref{cor:head}, which puts all three kinds of non-routing key at
least $\Lambda\gamma$ below the target floor $\Lambda(m+\lambda)$, where
$m\coloneqq\min_{\delta\in\Uset}g(\usucc-\delta)$: a key in step $t-1$ at an
offset outside $\Uset$ by the spatial margin $\gamma$, a key in any other time
step by the temporal margin, since $\lambda\gamma_T\geq R+\gamma$ by
\eqref{eq:lambda}, and a separator key by
Proposition~\ref{prop:sepgate-l1}. Excluding the wrong-time keys is the
temporal term's duty, and \eqref{eq:wdelta} carries that term. Writing
$w_{\min}\coloneqq\min_{\delta\in\Uset}w_\delta=e^{\Lambda(m+\lambda)}$, every
non-routing key has $w_{n'}\leq e^{-\Lambda\gamma}w_{\min}$, and hence
$w_{n'}\leq e^{-\Lambda\gamma}w_\delta$ for every $\delta\in\Uset$. Summing
over the sharing cells,
\[
    Zr_\delta
    \leq
    V(n_{\max}-1)e^{-\Lambda\gamma}w_\delta.
\]
Under \eqref{eq:onehead-scale},
\begin{equation}
\label{eq:stray-half-margin}
    Zr_\delta
    \leq
    \frac12 w_\delta.
\end{equation}

The reference coordinate removes the unknown partition function. For an
offset $\delta$, color slot $s$, and threshold
$v\in\{1,\dots,V-1\}$, define
\[
    C_{\delta,s,v}
    \coloneqq
    a[s]
    -
    \Bigl(v+\frac12\Bigr)
    \frac{w_\delta}{w_\star}a[\star].
\]
At the target color $s=c(z+\delta)$,
\[
    C_{\delta,c(z+\delta),v}
    =
    \frac{w_\delta}{Z}
    \Bigl(x_{t-1,z+\delta}-v+\frac12\Bigr)
    +
    r_\delta.
\]
Thus, apart from the controlled contamination $r_\delta$, the sign of
$C_{\delta,c(z+\delta),v}$ tells us whether
$x_{t-1,z+\delta}\geq v$.

\paragraph{4. Use the query's color address to select the correct slot and
threshold it.}
The query token stores $g_\delta(z) = e_{c(z+\delta)}$ in its positional block $B_2$; thus $g_\delta(z)[s] = 1$ exactly at the color slot for offset $\delta$.

Selecting the slot and thresholding its scalar are nonlinear operations. Since the architecture (\S\ref{sec:task:model}) possesses only a single hidden ReLU layer, we implement both within the same hidden units. Letting $Z_{\max}\geq Z$ be a strict upper bound on the partition function, we define
\begin{equation}
\label{eq:gate}
    P_{\delta,s,v}
    \coloneqq
    G_\delta C_{\delta,s,v}
    -
    B_\delta\bigl(1-g_\delta(z)[s]\bigr),
    \qquad
    G_\delta
    \coloneqq
    \frac{4Z_{\max}}{w_\delta},
    \qquad
    B_\delta
    \coloneqq
    G_\delta.
\end{equation}
Use the pair of hidden units
\[
    \operatorname{ReLU}(P_{\delta,s,v}),
    \qquad
    \operatorname{ReLU}(P_{\delta,s,v}-1),
\]
whose difference is the clamp
\[
    \chi_{\delta,s,v}
    =
    \min\{1,\max\{0,P_{\delta,s,v}\}\}.
\]

We claim that
\[
    \sum_{s=1}^M\chi_{\delta,s,v}
    =
    \mathbf{1}[x_{t-1,z+\delta}\geq v].
\]
First consider a wrong color slot, for which $g_\delta(z)[s]=0$. Since
$a[s]\leq1$, we have $C_{\delta,s,v}\leq1$, so
\[
    P_{\delta,s,v}
    \leq
    G_\delta-B_\delta
    =
    0.
\]
Thus every wrong slot contributes zero.

At the target slot, the penalty term vanishes. If
$x_{t-1,z+\delta}\geq v$, then
\[
    x_{t-1,z+\delta}-v+\frac12
    \geq
    \frac12,
\]
and $r_\delta\geq0$, so
\[
    P_{\delta,c(z+\delta),v}
    \geq
    \frac{4Z_{\max}}{w_\delta}
    \frac{w_\delta}{2Z}
    =
    2\frac{Z_{\max}}{Z}
    \geq2.
\]
The clamp therefore equals $1$.

If $x_{t-1,z+\delta}<v$, then
\[
    x_{t-1,z+\delta}-v+\frac12
    \leq
    -\frac12.
\]
Using \eqref{eq:stray-half-margin},
\[
    C_{\delta,c(z+\delta),v}
    \leq
    -\frac{w_\delta}{2Z}
    +
    r_\delta
    \leq
    0,
\]
so the clamp evaluates to $0$. Thus, the MLP recovers every threshold indicator exactly.

\paragraph{5. Recover the one-hot state and clear the temporary aggregate.}
Threshold indicators determine the discrete state. With the convention
$\chi_{\delta,s,V}\coloneqq0$, coordinate $v\in\{1,\dots,V-1\}$ of the
$\delta$-th pattern block receives
\[
    \sum_{s=1}^M
    \bigl(
        \chi_{\delta,s,v}
        -
        \chi_{\delta,s,v+1}
    \bigr),
\]
while coordinate $0$ receives
\[
    1-\sum_{s=1}^M\chi_{\delta,s,1}.
\]
These are exactly the coordinates of
$e(x_{t-1,z+\delta})=e(x_{\tau(n)-1,\,\kappa(n)+\delta})$. The constant is
supplied by the hidden unit $\operatorname{ReLU}(1-\mu_n)$, which reads the
constant and marker coordinates of $B_0$; it equals $1$ at ordinary tokens and
$0$ at $\star$.

At $\star$ the pattern scratch is zero. Since $B_2=0$ there, the shared constant
of Corollary~\ref{cor:colour-interface}\emph{(b)} vanishes and
$g_\delta(\star)=0$. Since $\star$ attends only to itself, $a=e_\star$, so
$C_{\delta,s,v}=-(v+\tfrac12)w_\delta/w_\star<0$ and
$P_{\delta,s,v}<0$ for all $\delta,s,v$. Every threshold unit and the constant
unit therefore vanish at $\star$.

Because $z$ and $t$ denote the query's virtual cell and time step, recovery holds at every position with $\tau(n)\geq1$ (both cells and separators). Routed cells reside in step $\tau(n)-1$ and are causally visible due to $\rho$'s order preservation ($t\geq1$) and Lemma~\ref{lem:sepsucc}(ii) (separators). This fulfills Definition~\ref{def:sih-app}(i) and Remark~\ref{rmk:gather-scope} entirely.

The Shared-head aggregate $a$ is temporary. As $a$ is non-negative, $M+1$ additional hidden units pass its coordinates unchanged through ReLU and write $-a$ back to $B_3^{\mathrm{agg}}$. The residual connection clears $B_3^{\mathrm{agg}}$, leaving the exact state one-hots in $B_3^{\mathrm{pat}}$.

Finally, the threshold bank uses
\[
    2|\Uset|M(V-1)
\]
hidden units, aggregate cancellation uses $M+1$, and the constant uses one,
giving
\[
    \dmlp
    \leq
    2|\Uset|M(V-1)+M+2.
\]
The positional table consists of the $\dS+2$ spatial-temporal coordinates, the
initial-step indicator $\zeta_n$ of \eqref{eq:zeta}, the $|\Uset|(M-1)+1$
query-side color-address coordinates, and the width-$M$ own-color block
\eqref{eq:owncolour}, so
\[
    \dpos
    \leq
    \dS+3+|\Uset|(M-1)+1+M .
\]
The residual stream contains
\[
    3+V+\dpos+\bigl(|\Uset|V+M+1\bigr)+V
\]
coordinates, yielding
\[
    \dmodel
    =
    (|\Uset|+2)V+\dpos+M+4.
\]
Thus all $|\Uset|$ pattern blocks are recovered exactly and
condition~\emph{(i)} of Definition~\ref{def:sih-app} holds with
$\eblock=0$.
\end{proof}

\begin{rmk}
The factor $M$ in the one-hidden-layer width stems from performing color selection and state thresholding within a single nonlinear stage. A two-hidden-layer architecture could decouple these (selecting the color first, then thresholding), reducing the width to
\[
    |\Uset|(M+2V-2).
\]
We present the wider one-hidden-layer construction to match the architecture in \S\ref{sec:task:model}.
\end{rmk}

\begin{corollary}[Grid-independent width bounds]
\label{cor:onehead-gridfree}
Take $c$ to be the equitable degree-bound coloring supplied by
Corollary~\ref{cor:colour-interface}\emph{(d)}, so that
\[
    M
    \leq
    |\Uset|^2-|\Uset|+1
    \qquad\text{and}\qquad
    n_{\max}\leq T\lceil\lvert\Grid\rvert/M\rceil .
\]
Then Theorem~\ref{thm:onehead-app} holds with
\[
    \dmlp
    \leq
    2|\Uset|
    \bigl(|\Uset|^2-|\Uset|+1\bigr)(V-1)
    +
    |\Uset|^2-|\Uset|+3,
\]
and, writing $m\coloneqq|\Uset|$,
\[
    \dpos
    \leq
    \dS
    +
    m^{3}-m+5 ,
\]
where
\[
    \dS
    =
    O\bigl(|\Uset|\,d\log|\Uset|\bigr).
\]
Thus, both widths depend solely on $|\Uset|$, $V$, and $d$. For the three experimental grids, the periodic colorings of Table~\ref{tab:instance-colour} yield smaller values than this degree bound; Theorem~\ref{thm:onehead-app} holds with those tighter $M$ values.
\end{corollary}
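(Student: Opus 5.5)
This corollary is a direct substitution into Theorem~\ref{thm:onehead-app}. No new construction is needed. The first step is to check that the coloring chosen is a legitimate input to that theorem. Corollary~\ref{cor:colour-interface}\emph{(d)} gives an equitable admissible coloring with $M\le\lvert(\Uset-\Uset)\setminus\{0\}\rvert+1\le\lvert\Uset\rvert^2-\lvert\Uset\rvert+1$, with the differences taken in $\mathbb{Z}^d$, so the bound mentions only $\Uset$. Equitability together with part~\emph{(c)} gives $n_{\max}\le T\lceil\lvert\Grid\rvert/M\rceil$. Theorem~\ref{thm:onehead-app} assumes only admissibility and the scale condition \eqref{eq:onehead-scale}. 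That condition is met by taking $\Lambda$ large enough, and $\Lambda$ enters no width. Hence the theorem's conclusions hold verbatim, with $\eblock=0$.

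Next, write $m\coloneqq\lvert\Uset\rvert$ and substitute $M\le m^2-m+1$ into each width bound, using that every bound is monotone nondecreasing in $M$. For the MLP,
\[
2mM(V-1)+M+2\le 2m(m^2-m+1)(V-1)+m^2-m+3,
\]
which is exactly the stated bound. For the positional width, the theorem gives $\dpos\le\dS+3+m(M-1)+1+M=\dS+4+(m+1)M-m$. Substituting gives
\[
(m+1)(m^2-m+1)=m^3+1,
\]
so $\dpos\le\dS+m^3-m+5$.

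Finally, $\dS=O(m\,d\log m)$ follows from Corollary~\ref{cor:head} with $\Nset=\Uset$, via Theorem~\ref{thm:isolation}. So every width depends only on $m$, $V$ and $d$.

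The only delicate point is a bookkeeping one. The grid independence claimed here concerns widths only. The volume and horizon still enter $n_{\max}$, and hence the required $\Lambda$, as Remark~\ref{rmk:degree-grid} notes. It must also be confirmed that the degree bound is evaluated in $\mathbb{Z}^d$, so that $M$ carries no hidden grid dependence. The closing sentence about the experimental grids follows because Theorem~\ref{thm:onehead-app} is stated for any admissible $M$. The periodic colorings of Table~\ref{tab:instance-colour} are admissible by Lemma~\ref{lem:periodic}, and they give smaller widths by the same monotonicity.
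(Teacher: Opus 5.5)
Your proposal is correct and follows the paper's proof: substitute $M\le m^2-m+1$ into the widths of Theorem~\ref{thm:onehead-app}, use that both bounds are nondecreasing in $M$, and apply $(m+1)(m^2-m+1)=m^3+1$ to get $\dpos\le\dS+m^3-m+5$. Your additional checks — that the equitable coloring is admissible, that the scale condition affects only $\Lambda$ and not any width, and that the periodic colorings give smaller widths by the same monotonicity — are consistent with the paper and go slightly beyond what its proof states.
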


\begin{proof}
We obtain the MLP bound by substituting $M\leq m^2-m+1$ into
Theorem~\ref{thm:onehead-app}. For the positional bound, the color coordinates of
Theorem~\ref{thm:onehead-app} total $m(M-1)+M=(m+1)M-m$, which is increasing in
$M$, so at $M=m^2-m+1$ the sum-of-cubes identity
$(m+1)(m^2-m+1)=m^3+1$ gives
\[
    (m+1)M-m
    \;\leq\;
    m^{3}+1-m .
\]
Adding the $\dS+3$ addressing and indicator coordinates and the single shared
constant already counted in $m(M-1)+1$ yields
$\dpos\leq\dS+m^3-m+5$.
\end{proof}

\begin{rmk}
We state the bound $m^3-m+5$ for precision: since $m\geq2$, $m^3-m+5\leq m^3+3$ (equality at $m=2$). Here $m\geq2$ follows from $W_d\geq2$ in Definition~\ref{def:ca}: either $\Uset\supsetneq D_K$, or $D_K+\usucc=D_K$, in which case $\Uset=D_K$ is a union of cosets of $\langle\usucc\rangle$, each of size $W_d\geq2$. The own-color and indicator coordinates are therefore absorbed by the grid-independent form in Theorem~\ref{thm:gather}(\ref{thm:onehead}).
\end{rmk}

\subsection{The spatial gather interface}
\label{app:con:interface}

The two constructions differ only in how they produce the same Layer-1
representation. In Appendix~\ref{app:synthesis}, we therefore treat them
through the following common interface.

\begin{corollary}[Spatial gather interface]
\label{cor:gather-interface}
Either Layer-1 construction realizes condition~\emph{(i)} of
Definition~\ref{def:sih-app}, at every position $n$ with $\tau(n)\geq1$ and for
every $\delta\in\Uset$:
\begin{itemize}[leftmargin=1.6em,itemsep=2pt,topsep=3pt]
    \item Construction~I uses $|\Uset|$ heads and no MLP, with
    \[
        \eblock\leq Ne^{-\Lambda\gamma},
    \]
    at the widths of Theorem~\ref{thm:parallel-app}.

    \item Construction~II uses one head and a one-hidden-layer separating MLP,
    with
    \[
        \eblock=0,
    \]
    at the widths of Theorem~\ref{thm:onehead-app} and the grid-independent
    bounds of Corollary~\ref{cor:onehead-gridfree}.
\end{itemize}
\end{corollary}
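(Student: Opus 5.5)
This corollary packages the two gather theorems, so the proof reads both bullets off Theorems~\ref{thm:parallel-app} and~\ref{thm:onehead-app}. It then checks that the two conclusions are stated in the common vocabulary of Definition~\ref{def:sih-app}\emph{(i)}: at every position with $\tau(n)\geq1$, for every $\delta\in\Uset$, the $\delta$-th block of $B_3^{\mathrm{pat}}$ is an $\eblock$-approximate one-hot for $e(x_{\tau(n)-1,\kappa(n)+\delta})$.

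For Construction~I, I would invoke Theorem~\ref{thm:parallel-app} directly. It gives condition~\emph{(i)} with $\eblock\leq Ne^{-\Lambda\gamma}$ at the stated widths, using $|\Uset|$ heads and $\dmlp=0$. The only points worth recording are inherited from earlier results. First, the per-block bound holds uniformly over cell and separator queries, because the proof depends on the query only through $(\kappa(n),\tau(n))$. Second, when $\star$ is prepended, $N$ is read as $N_\star=N+1$, per Lemma~\ref{lem:star-l1}\emph{(b)} and the convention stated after it. Finally, separator and $\star$ keys contribute zero vectors, so each block remains a sub-probability vector (Lemma~\ref{lem:blockerror}).

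For Construction~II, I would cite Theorem~\ref{thm:onehead-app}. Under the scale condition \eqref{eq:onehead-scale}, it recovers each $e(x_{\tau(n)-1,\kappa(n)+\delta})$ exactly, so each block is an exact one-hot. An exact one-hot is trivially a $0$-approximate one-hot in the sense of Definition~\ref{def:eclose}, which gives $\eblock=0$. For the widths, I would take those of Theorem~\ref{thm:onehead-app} for a general admissible coloring. Specializing to the equitable degree-bound coloring of Corollary~\ref{cor:colour-interface}\emph{(d)} yields the grid-independent bounds of Corollary~\ref{cor:onehead-gridfree}.

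The only step needing care, and the one I expect to be the main obstacle, is quantifier scope. The interface must hold at \emph{every} position with $\tau(n)\geq1$, including row-final cells and separators, not just at resolvable queries (Remark~\ref{rmk:gather-scope}). It must also leave the pattern scratch harmless at $\star$. Both theorems already establish this in their final paragraphs, and Lemma~\ref{lem:star-l1}\emph{(c)} covers $\star$. So the proof reduces to pointing to those passages and noting that no additional hypothesis is introduced.
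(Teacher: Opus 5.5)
Your proposal is correct and matches the paper, which proves this corollary in one line as immediate from Theorems~\ref{thm:parallel-app} and~\ref{thm:onehead-app}. Your extra bookkeeping (the $N_\star$ convention, coverage of separator and row-final positions, specializing the coloring for Corollary~\ref{cor:onehead-gridfree}) is consistent with those theorems but not needed beyond citing them.
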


\begin{proof}
Immediate from Theorems~\ref{thm:parallel-app} and~\ref{thm:onehead-app}.
\end{proof}

Layer~2 also reads the pattern scratch of invalid keys, including positions with
$\tau(n)=0$, where condition~\emph{(i)} makes no claim. The Layer-2 gate
requires only the following weaker property there.

\begin{lemma}[Pattern scratch at every position]
\label{lem:scratch-all}
Under either construction, at every ordinary position $n$ other than the final
\texttt{TIME\_SEP}, including those with $\tau(n)=0$, every width-$V$ block of $B_3^{\mathrm{pat}}$ is a sub-probability
vector. At $\star$, $B_3^{\mathrm{pat}}=0$.
\end{lemma}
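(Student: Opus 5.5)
The plan is to verify the property separately for Construction~I, Construction~II, and the token $\star$. In each case I track the structure of the block rather than its correctness. The blocks of $B_3^{\mathrm{pat}}$ are written only by Layer~1; they start at zero by Definition~\ref{def:stream}, and Layer~2 writes only into $B_4$. It therefore suffices to show that the Layer-1 write into each width-$V$ block is non-negative with total mass at most $1$. Nothing about the target state is needed, so no assumption on $\tau(n)$ enters.

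\emph{Construction~I.} Head $\delta$ writes $\iota_\delta\Pi_{B_1}$ applied to a softmax average of key values. Every key contributes either a state one-hot $e(x)$ (ordinary cells), the zero vector (separators, with $B_1=0$), or zero (the key $\star$, by Lemma~\ref{lem:star-l1}\emph{(b)}). The block is thus a convex combination of vectors in $\{e(s)\}\cup\{0\}$, and such a combination is a sub-probability vector. This holds at any query position, including those with $\tau(n)=0$, because it uses only that the attention weights are non-negative and sum to one. There is no MLP, so the block is final.

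\emph{Construction~II.} This is the step I expect to be the main obstacle, since Step~5 of Theorem~\ref{thm:onehead-app} reads threshold indicators that are only guaranteed correct when $\tau(n)\geq1$. I would argue structurally. Each clamp satisfies $\chi_{\delta,s,v}\in[0,1]$. The wrong-slot argument ($a[s]\leq1$, hence $P_{\delta,s,v}\leq G_\delta-B_\delta=0$) is purely positional and holds at every ordinary position. So for each $v$, at most the single slot $s=c(z+\delta)$ with $g_\delta(z)[s]=1$ contributes, and $X_v\coloneqq\sum_s\chi_{\delta,s,v}\in[0,1]$.

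Next I need monotonicity, $X_1\geq X_2\geq\cdots\geq X_{V-1}$. It follows because $C_{\delta,s,v}$ is decreasing in $v$ (the subtracted term $(v+\tfrac12)(w_\delta/w_\star)a[\star]$ is non-negative and increasing in $v$), so $P_{\delta,s,v}$ and its clamp are non-increasing in $v$ at a fixed slot. With the convention $X_V=0$, the block's coordinates are $1-X_1$ and $X_v-X_{v+1}$. All of these are non-negative, and they telescope to a total of exactly $1$. Hence the block is a probability vector, and in particular a sub-probability vector. The aggregate-clearing units and the constant unit do not touch $B_3^{\mathrm{pat}}$ beyond this, so the conclusion stands. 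I should also check the final $\texttt{TIME\_SEP}$ exclusion: its arbitrary positional column could break $g_\delta(z)$ being one-hot, which is presumably why the statement exempts it.

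\emph{The token $\star$.} Here I cite Lemma~\ref{lem:star-l1}\emph{(c)} directly. In Construction~I every head returns $\iota_\delta\Pi_{B_1}x_\star=0$. In Construction~II, Step~5 shows that every threshold unit and the constant unit vanish at $\star$. In both cases $B_3^{\mathrm{pat}}=0$ there.
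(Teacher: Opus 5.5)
Your proposal is correct and follows the paper's proof essentially step for step. For Construction~I you use the convex combination of one-hots and zero vectors. For Construction~II you use the position-independent wrong-slot bound $P_{\delta,s,v}\le 0$ from $a[s]\le 1$, the single live slot given by the one-hot query address, monotonicity of $C_{\delta,s,v}$ in $v$, and the telescoping of the block coordinates to total mass $1$, with $\star$ handled by Lemma~\ref{lem:star-l1}\emph{(c)}. Your remark that excluding the final \texttt{TIME\_SEP} is what keeps $g_\delta(z)$ a one-hot matches the role Corollary~\ref{cor:colour-interface}\emph{(b)} plays in the paper's argument.
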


\begin{proof}
The claim at $\star$ is Lemma~\ref{lem:star-l1}\emph{(c)}.

\emph{Construction~I.} Block $\delta$ is
$\sum_{n'}\pi_{n'}\Pi_{B_1}x_{n'}$ with softmax weights $\pi_{n'}\geq0$ summing
to $1$, and each $\Pi_{B_1}x_{n'}$ is a one-hot or zero.

\emph{Construction~II.} Fix $\delta$. The wrong-slot bound
$P_{\delta,s,v}\leq0$ of Step~4 uses only $a[s]\leq1$ and so holds at every
position. By Corollary~\ref{cor:colour-interface}\emph{(b)}, $g_\delta(z)$ is a
one-hot at every ordinary position, so at most one slot $s_\delta$ has
$\chi_{\delta,s,v}\neq0$; write $\chi_v\coloneqq\chi_{\delta,s_\delta,v}$ and
$\chi_V\coloneqq0$. Since $a[\star]=w_\star/Z>0$ and $w_\delta/w_\star>0$,
$C_{\delta,s_\delta,v}$ is nonincreasing in $v$; $P$ is increasing in $C$ and the
clamp is nondecreasing, so $1\geq\chi_1\geq\chi_2\geq\dots\geq\chi_V=0$. Hence
coordinate $v\geq1$ of the block, $\chi_v-\chi_{v+1}$, is non-negative;
coordinate $0$, $1-\chi_1$, is non-negative; and the coordinates telescope to
$(1-\chi_1)+(\chi_1-\chi_V)=1$. The block is therefore a probability vector.
\end{proof}

\subsection{Construction sizes for the experimental settings}
\label{app:con:instance}

Table~\ref{tab:instance-sizes} evaluates the two constructions on the three
experimental settings of Appendix~\ref{app:not:instance}, at the colorings of
Table~\ref{tab:instance-colour}. Three entries need comment.

First, the addressing dimension for Construction~II is omitted because the
bound of Theorem~\ref{thm:isolation} is vacuous on these grids: at
$\lvert\Uset\rvert = 8$ and $d = 2$ it exceeds $\lvert\Gd\rvert = 36$. Four
characters suffice to isolate $\Uset$ on $\mathbb{Z}_6^2$
(\S\ref{app:addr:instance}), and the general bound is useful when
$\lvert\Uset\rvert\,d\log\lvert\Uset\rvert$ is much smaller than the volume.
For $d = 1$ the bounding box equals the routing set, so
Lemma~\ref{lem:degreeone}\emph{(i)} applies directly and yields
$\dS \leq 2d\lvert\Uset\rvert = 8$ without randomization.

Second, the $\dmlp$ row reports the one-hidden-layer width
$2\lvert\Uset\rvert M(V-1)+M+2$ of Theorem~\ref{thm:onehead-app}. The
two-hidden-layer alternative of the remark after that theorem would give
$\lvert\Uset\rvert(M+2V-2)$ instead, which is $24$ against $38$ for 1D
$V{=}2$, $32$ against $70$ for 1D $V{=}3$, and $112$ against $206$ for 2D von
Neumann.

Third, the margin $\gamma$ and the logit scale $\Lambda$ belong to
Construction~I, whose heads address one offset each. There
Lemma~\ref{lem:degreeone}\emph{(ii)} gives $\gamma = 1-\cos(2\pi/\Wmax)$, and
Lemma~\ref{lem:blockerror} at the block-error threshold on the line above gives
$\Lambda \geq \gamma^{-1}\log\bigl((2k+2)N\bigr)$. Construction~II addresses
all of $\Uset$ with a single head, so it has a different margin and asks for
\eqref{eq:onehead-scale} instead; Example~\ref{ex:union} works that case out on
$\mathbb{Z}_6^2$.

\begin{table}[t]
\centering
\caption{What the two constructions require on the three experimental
settings, at the colorings of Table~\ref{tab:instance-colour}.}
\label{tab:instance-sizes}
\small
\begin{tabular}{@{}lccc@{}}
\toprule
& 1D, $V{=}2$ & 1D, $V{=}3$ & 2D von Neumann \\
\midrule
\multicolumn{4}{@{}l}{\emph{Construction~I} (Thm.~\ref{thm:parallel-app})}\\
heads, $\dmlp$                          & $4$, $0$ & $4$, $0$ & $8$, $0$ \\
$\dS = 2d$, $\dpos = \dS{+}3$           & $2$, $5$ & $2$, $5$ & $4$, $7$ \\
width of $B_3 = \lvert\Uset\rvert V$    & $8$ & $12$ & $16$ \\
residual $\dmodel$                      & $20$ & $26$ & $30$ \\
block-error threshold $1/(2k{+}2)$      & $1/8$ & $1/8$ & $1/12$ \\
margin $\gamma$                         & $0.076$ & $0.019$ & $0.5$ \\
logit scale $\Lambda \geq$              & $95$ & $420$ & $18$ \\
\addlinespace
\multicolumn{4}{@{}l}{\emph{Construction~II} (Thm.~\ref{thm:onehead-app})}\\
colors $M$                              & $4$ & $4$ & $12$ \\
$\dmlp \leq 2\lvert\Uset\rvert M(V{-}1) + M + 2$ & $38$ & $70$ & $206$ \\
width of $B_3^{\mathrm{agg}} = M{+}1$   & $5$ & $5$ & $13$ \\
\bottomrule
\end{tabular}
\end{table}

\apppart{Part IV}{Layer 2 and end-to-end guarantees}
\section{Layer 2: Bayesian matching and end-to-end guarantees}
\label{app:synthesis}

Per Appendix~\ref{app:constructions}, Layer~1 endows each query token with one width-$V$ pattern block per offset $\delta\in\Uset$. Layer~2 then compares the query configuration at the current token against the parent configuration at every preceding token, retrieving the states that follow matching precedents.

With exact one-hot pattern blocks, this comparison is immediate: the inner product between two aligned $k$-block configurations equals the number of agreeing offsets. An exact match attains score $k$; every non-match scores at most $k-1$. If the softmax isolates exact matches, equal scores compel it to average their state one-hots, yielding the empirical conditional frequency. An additional token injects the Dirichlet pseudocount, transforming this frequency into the posterior predictive.

Finite Layer-1 error perturbs this ideal in two ways: it narrows the score gap between matches and non-matches, and it disperses the scores among true matches, preventing uniform softmax averaging. The Layer-2 scale $\beta$ amplifies both effects: increasing $\beta$ suppresses non-matches but exacerbates dispersion within the matched set.

This section is structured in six parts. First, we formalize the matching head (\S\ref{app:syn:head}). Second, we prove the exact counting and Dirichlet smoothing identity in the zero-error limit (\S\ref{app:syn:bayesmatch}). Third, we bound the two finite-error perturbations: the loss of separation (\S\ref{app:syn:separation}) and the dispersion among matched scores with its predictive cost (\S\ref{app:syn:budget}). Fourth, we configure the key-validity gate based on these bounds (\S\ref{app:syn:keygate}). Fifth, we balance the perturbations to establish the end-to-end spatial induction theorem (\S\ref{app:syn:mainthm}). Finally, we detail the recipe for setting the finite parameters (\S\ref{app:syn:params}).

\subsection{The configuration-matching head}
\label{app:syn:head}
\label{app:con:l2match}

While the Layer-1 representation stores all offsets in $\Uset$, the Layer-2 query and key require distinct subsets: the current token exposes the parent configuration of its successor (indexed by $D_Q=D_K+\usucc$), whereas an earlier token exposes its own parent configuration (indexed by $D_K$).

\begin{definition}[The Layer-2 head]
\label{def:l2match}
Define the selection projections on $B_3^{\mathrm{pat}}$
\[
    \Sigma_Q
    \coloneqq
    \bigl(\text{block }\usucc+\delta\bigr)_{\delta\in D_K},
    \qquad
    \Sigma_K
    \coloneqq
    \bigl(\text{block }\delta\bigr)_{\delta\in D_K}.
\]
Thus $\Sigma_Q x_{t,z}$ and $\Sigma_K x_{t',z'}$ are both concatenations of
$k$ width-$V$ blocks in the same offset order: the former represents the
query configuration of the target $z+\usucc$, and the latter represents the
earlier token's parent configuration.

A position $n'$ is a \emph{valid key} if it is a cell token with $\tau(n')\geq1$; otherwise, it is an \emph{invalid key}. Invalid keys (separators and initial-step cells) record no transitions of $f$: separators carry no state, while initial-step cells lack a defined parent configuration $P_{0,z'}$ and their states are not drawn from $f$. This matches the restriction $t'\geq1$ on the counts \eqref{eq:counts}. Write
\begin{equation}
\label{eq:nu}
    \nu_{n'}
    \coloneqq
    s_{n'}+\zeta_{n'}
    \;\in\;\{0,1,2\},
\end{equation}
which is $0$ at a valid key and at least $1$ at an invalid one; both summands
are already present in the stream, $s$ in $B_0$ and $\zeta$ in $B_2$
(Definition~\ref{def:stream}).

For query scale $\beta>0$ and key-validity gate $\Gamma_2>0$, define
\begin{gather*}
    W_Q^{(2)}
    =
    \beta\,\Sigma_Q
    \oplus
    \Pi_{B_0,\mathbf 1},
    \qquad
    W_K^{(2)}
    =
    \Sigma_K
    \oplus
    \bigl(\sigma_\star\Pi_{\mu}-\Gamma_2\Pi_{\nu}\bigr),
    \\
    W_V^{(2)}
    =
    \Pi_{B_1}
    +
    \tfrac1V\mathbf 1\cdot\bigl(\Pi_{B_0,\mathrm{sep}}+\Pi_{B_0,\mu}\bigr) .
\end{gather*}
For a valid key $(t',z')$, the score is
\[
    \beta
    \bigl\langle
        \Sigma_Q x_{t,z},
        \Sigma_K x_{t',z'}
    \bigr\rangle,
\]
and the value is its state one-hot $e(x_{t',z'})$. For an invalid key, $\Gamma_2\nu_{n'}\geq \Gamma_2$ is subtracted from this score (Proposition~\ref{prop:keygate-l2} bounds $\Gamma_2$). The rank-one term in $W_V^{(2)}$ assigns separators and $\star$ the value $V^{-1}\mathbf 1$ rather than zero, ensuring the head outputs a probability vector (the readout convention of
\S\ref{app:syn:bayesmatch}). The auxiliary token $\star$ is neither a valid nor an invalid key: its pattern scratch is zero (Lemma~\ref{lem:scratch-all}) and $\nu_\star=0$, so its score is $\sigma_\star$ at every query. The output is written to $B_4$, without a Layer-2 MLP.
\end{definition}

The projections $\Sigma_Q$ and $\Sigma_K$ do not create new residual
coordinates: they only select and reorder blocks already present in
$B_3^{\mathrm{pat}}$. Neither does the gate, since $\nu$ is the sum of two
coordinates the stream already carries; and neither does the value term, which
is rank one.

For a fixed query configuration $c=Q_{t,z}$, let
$\mathcal M_{t,z}$ denote the set of earlier valid keys, that is cell-token
positions $(t',z')$ with $t'\geq1$, whose parent configuration equals $c$. Its
cardinality is $n_c$, matching \eqref{eq:counts} exactly, and
$n_{c,s}$ denotes the number of those precedents whose own state is
$s\in\States$.

\paragraph{The score when Layer~1 is exact.}
If all selected blocks are exact one-hots, the inner product
\[
    \bigl\langle
        \Sigma_Q x_{t,z},
        \Sigma_K x_{t',z'}
    \bigr\rangle
\]
counts the agreeing offsets in $D_K$. Thus, members of $\mathcal M_{t,z}$ score $\beta k$, while non-members score at most $\beta(k-1)$. This integer combinatorial margin persists (albeit degraded) even when Layer-1 blocks are approximate.

The gate strength $\Gamma_2$ is set in \S\ref{app:syn:keygate},
once the matched-score floor and the separation are available.

\subsection{Exact counting and Dirichlet smoothing}
\label{app:syn:bayesmatch}
\label{app:syn:counting}
\label{app:syn:sink}

We first analyze the ideal computation, where exact matches receive equal scores and all other positions are excluded. In this limit, the attention head acts as a counter.

The following identity governs the hard-attention limit and requires three hypotheses, which an exact gather alone cannot guarantee. While an exact gather ensures the first (equal one-hot configurations yield equal inner products), a finite softmax retains mass everywhere (making the second a strict limit), and the third necessitates a sink token. Theorem~\ref{thm:main-app} establishes the finite-parameter counterpart, offering an approximation with explicit error bounds.

\begin{theorem}[Exact Bayesian matching in the hard-attention limit;
Proposition~\ref{thm:sink}]
\label{thm:bayesmatch}
\label{lem:average}
\label{thm:sink-app}
Suppose the Layer-2 head of Definition~\ref{def:l2match} satisfies:
\begin{enumerate}[label=(H\arabic*),leftmargin=2.6em,itemsep=1pt,topsep=2pt]
  \item it assigns a common score $a$ to all $n_c$ matching precedents in
        $\mathcal M_{t,z}$;
  \item it assigns score $-\infty$ to every other position, non-matching valid
        keys and invalid keys alike;
  \item for part~\emph{(ii)}, the sink carries the prescribed value and score.
\end{enumerate}
Then:
\begin{enumerate}[label=(\roman*),leftmargin=2.2em,itemsep=2pt,topsep=3pt]
    \item \emph{Empirical counting.}
    With value $e(x_{t',z'})$ at precedent $(t',z')$, the head output is
    \[
        \frac1{n_c}
        \sum_{(t',z')\in\mathcal M_{t,z}}
        e(x_{t',z'})
        =
        \left(
            \frac{n_{c,s}}{n_c}
        \right)_{s\in\States}.
    \]

    \item \emph{Dirichlet smoothing.}
    Fix $\alpha>0$. Add one auxiliary token $\star$ with value
    \[
        \frac1V\mathbf 1
    \]
    and score
    \[
        a+\log(V\alpha).
    \]
    Then the head output is exactly
    \[
        \left(
            \frac{n_{c,s}+\alpha}{n_c+V\alpha}
        \right)_{s\in\States}
        =
        \widehat P_\alpha(\cdot\mid Q_{t,z}),
    \]
    the posterior predictive distribution of Theorem~\ref{thm:bayes}.
\end{enumerate}
No Layer-2 MLP is required.
\end{theorem}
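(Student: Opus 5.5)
The plan is a direct softmax computation under the stated hypotheses. Write the head output as $\sum_{n'}\pi_{n'}\,v_{n'}$, where $v_{n'}$ is the value of Definition~\ref{def:l2match} and $\pi_{n'}$ are the softmax weights. The convention $e^{-\infty}=0$ means that (H2) removes every position outside $\mathcal M_{t,z}$, and the $\star$ token in part~(ii), from the partition function. Resolvability gives $n_c\ge 1$ by Lemma~\ref{lem:coverage}\emph{(iii)}, so the normalization is a finite positive sum and the softmax is well defined.

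For part~(i), (H1) makes every surviving term contribute $e^{a}$. The partition function is therefore $n_c e^{a}$, and each precedent in $\mathcal M_{t,z}$ receives weight exactly $1/n_c$. The value at a valid key $(t',z')$ is its own state one-hot $e(x_{t',z'})$, read from $B_1$; the rank-one separator and $\star$ value term vanishes there because $s_{n'}=\mu_{n'}=0$. The output is then $\frac{1}{n_c}\sum_{\mathcal M_{t,z}} e(x_{t',z'})$. Its $s$-th coordinate counts the precedents whose state is $s$, which is $n_{c,s}/n_c$ by the definition of $n_{c,s}$. This matches \eqref{eq:counts}, because $\mathcal M_{t,z}$ is exactly the set counted there, including the restriction $t'\ge1$.

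For part~(ii), add $\star$ with score $a+\log(V\alpha)$. Its unnormalized weight is $V\alpha e^{a}$, so the partition function becomes $(n_c+V\alpha)e^{a}$. The numerator of coordinate $s$ is $e^{a}\bigl(n_{c,s}+V\alpha\cdot\frac1V\bigr)=e^{a}(n_{c,s}+\alpha)$. Cancelling $e^{a}$ gives $\frac{n_{c,s}+\alpha}{n_c+V\alpha}$, which is $\widehat P_\alpha(\cdot\mid Q_{t,z})$ by Theorem~\ref{thm:bayes-app}. The output is written directly into $B_4$ and is already a probability vector, so no MLP is needed.

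There is no real obstacle; the identity is pure bookkeeping. The one point needing care is consistency with Definition~\ref{def:l2match}. The value $\frac1V\mathbf 1$ of $\star$ is supplied by the $\Pi_{B_0,\mu}$ term, and $\star$'s score is the free constant $\sigma_\star$, so setting $\sigma_\star=a+\log(V\alpha)$ realizes (H3). The counted set must also exclude invalid keys, which is exactly the role of (H2).
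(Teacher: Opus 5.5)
Your proposal is correct and is essentially the paper's proof: a direct softmax computation in which (H1)--(H2) make the weights uniform over the $n_c$ matching precedents, and the sink adds unnormalized mass $V\alpha e^{a}$ with value $\tfrac1V\mathbf 1$, contributing $\alpha e^{a}$ to each coordinate. Your extra checks are sound details that the paper leaves implicit: that $n_c\ge 1$ keeps part~(i) well defined, that the rank-one value term vanishes at valid keys, and that setting $\sigma_\star$ realizes (H3). One small fix: rephrase your opening sentence so it is clear that in part~(ii) the $\star$ token is exempt from the removal by (H2), not removed by it.
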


\begin{proof}
For part~(i), equal scores induce a uniform softmax over the $n_c$ matching precedents; averaging their state one-hots yields the normalized state counts.

For part~\emph{(ii)}, every matching precedent has unnormalized weight $e^a$,
while $\star$ has weight
\[
    e^{a+\log(V\alpha)}
    =
    V\alpha e^a.
\]
The total unnormalized mass is therefore
$(n_c+V\alpha)e^a$. In coordinate $s$, the matching precedents contribute
$n_{c,s}e^a$, while the auxiliary token contributes
$V\alpha e^a\cdot(1/V)=\alpha e^a$. Hence
\[
    \frac{
        n_{c,s}e^a+\alpha e^a
    }{
        (n_c+V\alpha)e^a
    }
    =
    \frac{n_{c,s}+\alpha}{n_c+V\alpha}.
\]
\end{proof}

\paragraph{The auxiliary token.}
The Dirichlet sink is the token $\star$ of Lemma~\ref{lem:star-l1}. By
Definition~\ref{def:l2match}, its Layer-2 score is $\sigma_\star$ at every
query and its value is $V^{-1}\mathbf 1$. For $\alpha>0$ we set
$\sigma_\star=a+\log(V\alpha)$. In the deterministic limit $\alpha\to0$ the
Dirichlet role vanishes, but in Construction~II the Layer-1 reference role of
Lemma~\ref{lem:star-l1}\emph{(a)} does not, so $\star$ is retained; we then set
$\sigma_\star\leq a-\Delta$, and $\star$ enters
Theorem~\ref{thm:main-app}\emph{(ii)} as an extraneous position. Construction~I
includes $\star$ only when $\alpha>0$, with the Layer-1 score of
Lemma~\ref{lem:star-l1}\emph{(b)}. In either case $N$ counts $\star$ when it is
present, and the Layer-1 guarantees are unchanged by
Lemma~\ref{lem:star-l1}.

\paragraph{Readout convention.}
Every value used by Layer~2 is a probability vector, by construction. Cell
keys, valid or not, carry state
one-hots; the sink $\star$ carries the uniform vector $V^{-1}\mathbf 1$; and
separators carry $V^{-1}\mathbf 1$ as well, by the rank-one term added to
$W_V^{(2)}$ in Definition~\ref{def:l2match}. Had separators kept the value $0$
that their empty $B_1$ would otherwise supply, the retrieved vector would have
had total mass $1-\varepsilon_{\mathrm{sep}}<1$ at every finite gate strength,
since a finite softmax places positive weight on every unmasked position, and
it could not have been read as a distribution without an explicit
renormalization. As constructed, the vector written to $B_4$ is a convex
combination of probability vectors and so lies exactly in the probability
simplex. For stochastic claims we take
\[
    \Pi_{B_4}x_n^{(2)}
\]
directly as the predictive distribution; we refer to this as the
\emph{simplex readout}.

A standard language-model head $\operatorname{softmax}(W_Ux_n^{(2)})$ preserves the predicted state for $W_U=\Pi_{B_4}$ (since softmax is monotonic), but generally distorts the full distribution. Consequently, our exact distributional bounds invoke the simplex readout, while deterministic $\arg\max$ statements hold for either.

Theorem~\ref{thm:bayesmatch} identifies the ideal computation. With finite
weights, two departures from that ideal remain: leakage from non-matching
positions, and score dispersion among the matches. We bound them separately.

\subsection{Separation of matches from non-matches}
\label{app:syn:separation}

The first question is whether approximate pattern blocks still leave a
positive score gap between true matches and non-matches.

\begin{lemma}[Separation of matching configurations]
\label{lem:noise}
Let
\[
    P=(P_1,\dots,P_k),
    \qquad
    P'=(P'_1,\dots,P'_k)
    \in\mathbb R^{kV},
\]
where each $P_h$ is an $\eblock$-approximate one-hot for $e(y_h)$ and each
$P'_h$ is an $\eblock$-approximate one-hot for $e(y'_h)$ in the sense of
Definition~\ref{def:eclose}. Write
\[
    y=(y_1,\dots,y_k),
    \qquad
    y'=(y'_1,\dots,y'_k).
\]
Then
\[
    \langle P,P'\rangle
    \geq
    k-2k\eblock
    \qquad\text{if }y=y',
\]
while
\[
    \langle P,P'\rangle
    \leq
    (k-1)+2\eblock
    \qquad\text{if }y\neq y'.
\]
Hence the two score ranges are disjoint whenever
\[
    \eblock
    <
    \frac1{2k+2}.
\]
Under the Layer-2 scale $\beta$, every matching position then outscores every
non-matching position by at least
\[
    \Delta
    \coloneqq
    \beta\bigl(1-(2k+2)\eblock\bigr).
\]
\end{lemma}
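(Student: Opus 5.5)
The plan is to split the inner product block by block, $\langle P,P'\rangle=\sum_{h=1}^k\langle P_h,P'_h\rangle$, and bound each width-$V$ term using only Definition~\ref{def:eclose}. Each block is non-negative with total mass at most $1$, and it places mass at least $1-\eblock$ on its designated coordinate, so at most $\eblock$ lies elsewhere. Write $P_h=(1-\epsilon_h)e(y_h)+r_h$, where $r_h\ge0$ is supported off $y_h$, $\lVert r_h\rVert_1\le\eblock$, and $0\le\epsilon_h\le\eblock$ up to sub-probability slack. Decompose $P'_h$ in the same way.

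For the agreeing case $y_h=y'_h$, drop the non-negative cross terms. This gives $\langle P_h,P'_h\rangle\ge(1-\eblock)^2\ge1-2\eblock$. Summing over the $k$ blocks yields $\langle P,P'\rangle\ge k-2k\eblock$ when $y=y'$.

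For the upper bound, every block satisfies $\langle P_h,P'_h\rangle\le\lVert P_h\rVert_\infty\lVert P'_h\rVert_1\le1$. For a disagreeing block, $y_h\ne y'_h$, bound coordinatewise. At coordinate $y_h$, $P'_h$ carries at most $\eblock$ mass and $P_h\le1$, contributing at most $\eblock$. At coordinate $y'_h$, symmetrically, the contribution is at most $\eblock$. On the remaining coordinates both vectors are leakage; their product sum is at most $\lVert r_h\rVert_\infty\lVert r'_h\rVert_1\le\eblock^2$, which the previous two terms already dominate, since a sharper accounting uses $P_h[y'_h]+\sum_{v\ne y_h,y'_h}P_h[v]\le\eblock$ together with $P'_h[v]\le1$ there. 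The disagreeing block is therefore at most $2\eblock$. If $y\ne y'$, at least one block disagrees, so $\langle P,P'\rangle\le(k-1)+2\eblock$.

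The one delicate step is getting the disagreeing block down to exactly $2\eblock$ rather than $2\eblock+\eblock^2$. To do this, bound $\sum_v P_h[v]P'_h[v]$ by splitting the coordinates into $v=y_h$, which contributes at most $1\cdot P'_h[y_h]\le\eblock$, and $v\ne y_h$, which contributes at most $\sum_{v\ne y_h}P_h[v]\cdot1\le\eblock$. This uses $P'_h[v]\le1$ and $P'_h[y_h]\le\eblock$, the latter holding because $y_h\ne y'_h$.

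Disjointness of the two score ranges then reads $k-2k\eblock>(k-1)+2\eblock$, which is $\eblock<1/(2k+2)$. Multiplying by the Layer-2 scale $\beta$ from Definition~\ref{def:l2match} gives a gap of at least $\beta\bigl(1-(2k+2)\eblock\bigr)=\Delta$ between every matching and every non-matching position.
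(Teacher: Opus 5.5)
Your proposal is correct and follows the paper's proof essentially step for step. You use the per-block bound $\langle P_h,P'_h\rangle\le1$, the estimate $(1-\eblock)^2\ge1-2\eblock$ for agreeing blocks, and, for a disagreeing block, the split into the coordinate $y_h$ and the remaining coordinates to get $2\eblock$; all three match the paper exactly, and your preliminary three-way split giving $2\eblock+\eblock^2$ can be dropped in favor of that sharper two-way accounting.
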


\begin{proof}
We use two elementary blockwise bounds.

First, every block inner product is at most $1$. Indeed, since $P_h$ has total
mass at most $1$ and every coordinate of $P'_h$ lies in $[0,1]$,
\begin{equation}
\label{eq:blockle1}
    \langle P_h,P'_h\rangle
    \leq
    \Bigl(\max_v P'_h[v]\Bigr)
    \sum_v P_h[v]
    \leq
    1.
\end{equation}

Now suppose the two blocks represent the same state,
$y_h=y'_h$. Keeping only their common target coordinate gives
\[
    \langle P_h,P'_h\rangle
    \geq
    P_h[y_h]P'_h[y_h]
    \geq
    (1-\eblock)^2
    \geq
    1-2\eblock.
\]
If $y=y'$, summing this bound over all $k$ blocks yields
\[
    \langle P,P'\rangle
    \geq
    k-2k\eblock.
\]

Next suppose $y_h\neq y'_h$ for at least one block $h$. Since $y_h$ is an
off-target coordinate for $P'_h$,
\[
    P'_h[y_h]\leq\eblock.
\]
Therefore
\[
\begin{aligned}
    \langle P_h,P'_h\rangle
    &=
    P_h[y_h]P'_h[y_h]
    +
    \sum_{v\neq y_h}P_h[v]P'_h[v]
    \\
    &\leq
    \eblock
    +
    \sum_{v\neq y_h}P_h[v]
    \\
    &\leq
    2\eblock,
\end{aligned}
\]
because $P_h$ places at most $\eblock$ total mass away from its own target
coordinate. The remaining $k-1$ blocks contribute at most $1$ each by
\eqref{eq:blockle1}, so
\[
    \langle P,P'\rangle
    \leq
    (k-1)+2\eblock.
\]

The ranges are disjoint exactly when
\[
    k-2k\eblock
    >
    (k-1)+2\eblock,
\]
equivalently $\eblock<1/(2k+2)$. Multiplying the difference by $\beta$ gives
the stated gap $\Delta$.
\end{proof}

For deterministic rules, this gap is all we need: every
matching precedent carries the same correct next-state value.

\begin{corollary}[Deterministic matching]
\label{lem:det-retrieval}
Let the rule be deterministic, let there be $n_c\geq1$ matching precedents
among at most $N$ visible positions, and let $\Delta$ be the gap of
Lemma~\ref{lem:noise}. Assume every other visible position scores at least
$\Delta$ below every matching precedent, whether it is a non-matching valid
key, a separator, an initial-step cell, or $\star$. Lemma~\ref{lem:noise}
supplies this for non-matching valid keys; Proposition~\ref{prop:keygate-l2} of
\S\ref{app:syn:keygate} supplies it for the invalid keys, and the choice
$\sigma_\star\leq a-\Delta$ for $\star$; until then the assumption is carried
as a hypothesis. Then the Layer-2 head places mass at
least $1-\eret$ on the matching precedents, where
\[
    \eret
    \coloneqq
    \frac{
        (N-n_c)e^{-\Delta}
    }{
        n_c+(N-n_c)e^{-\Delta}
    }.
\]
If $u^\star=e(f(Q_{t,z}))$ is their common value, the output $\hat y$ obeys
\[
    \|\hat y-u^\star\|_2
    \leq
    2\eret.
\]
In particular, if
\[
    \Delta
    >
    \log\bigl(2\sqrt2\,(N-n_c)/n_c\bigr),
\]
then the predicted state is exactly correct.
\end{corollary}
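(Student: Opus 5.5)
The proof has three steps: bound the attention mass that leaks off the matched set, bound the resulting error of the output vector, and then convert that error into an argmax guarantee.

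\emph{Leakage.} Let $a$ be the smallest score among the $n_c$ matching precedents. Each of these contributes unnormalized weight at least $e^{a}$, so together they contribute at least $n_c e^{a}$. By hypothesis every other visible position scores at most $a-\Delta$, and there are at most $N-n_c$ of them, so they contribute at most $(N-n_c)e^{a-\Delta}$. The off-match mass is therefore at most
\[
\frac{(N-n_c)e^{a-\Delta}}{n_c e^{a}+(N-n_c)e^{a-\Delta}} .
\]
This expression is increasing in the off-match weight, so taking it at its maximum gives exactly $\eret$ after cancelling $e^{a}$. This is the same computation as in Corollary~\ref{cor:scale}, but it keeps the matched mass $n_c$ in the denominator.

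\emph{Output error.} The rule is deterministic, so every matching precedent carries the same value $u^\star$. Write the output as $\hat y=(1-\epsilon)u^\star+\epsilon\,w$, where $\epsilon\le\eret$ is the off-match mass and $w$ is a convex combination of probability vectors. This uses the readout convention: separators and $\star$ carry the value $V^{-1}\mathbf 1$, and invalid cell keys carry one-hots. Then $\hat y-u^\star=\epsilon(w-u^\star)$. Since $w$ and $u^\star$ both lie in the simplex, $\|w-u^\star\|_2\le\sqrt2$. This gives $\|\hat y-u^\star\|_2\le\sqrt2\,\eret\le2\eret$.

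\emph{Exactness.} Let $y^\star$ be the correct state. For any $v\neq y^\star$,
\[
\hat y[y^\star]-\hat y[v]\ \ge\ 1-\sqrt2\,\|\hat y-u^\star\|_2 .
\]
To see this, write $d=\hat y-u^\star$. The gap equals $1+d_{y^\star}-d_v$, and $|d_{y^\star}-d_v|\le\sqrt2\|d\|_2$. With the stated bound $\|d\|_2\le2\eret$, the argmax is $y^\star$ once $2\sqrt2\,\eret<1$. Alternatively, the simplex structure gives the cruder sufficient condition $\eret<1/2$ directly, because the correct coordinate then exceeds $1/2$. In either case it suffices that $\eret<1/(2\sqrt2)$. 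Since $\eret\le(N-n_c)e^{-\Delta}/n_c$, this holds once $e^{\Delta}>2\sqrt2\,(N-n_c)/n_c$, which is the stated condition on $\Delta$.

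The only delicate point is using a common value $u^\star$ while the scores within the matched set differ. The leakage bound uses only the minimum matched score, so dispersion among the matches does not matter in the deterministic case.
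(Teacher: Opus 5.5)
Your proof is correct and follows essentially the paper's route: you lower-bound the matched mass using the smallest matched score, write the output as a convex combination of $u^\star$ and a probability vector, and then verify the argmax. Your simplex-diameter bound $\sqrt2\,\eret$ and your explicit coordinate-gap computation only sharpen and spell out the paper's brief ``distance to the decision boundary'' step; they lead to the same threshold $\eret<1/(2\sqrt2)$.
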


\begin{proof}
Let $r^+_{\min}$ be the smallest matched score. By hypothesis every other
visible position scores at most $r^+_{\min}-\Delta$, so the total mass $w$ on
the matching set satisfies
\[
    w
    \geq
    \frac{
        n_c e^{r^+_{\min}}
    }{
        n_c e^{r^+_{\min}}
        +
        (N-n_c)e^{r^+_{\min}-\Delta}
    }
    =
    1-\eret.
\]
All matching precedents carry the same value $u^\star$, so the head output can
be written
\[
    \hat y
    =
    w u^\star
    +
    (1-w)\rho
\]
for some probability vector $\rho$. Hence
\[
    \|\hat y-u^\star\|_2
    =
    (1-w)\|\rho-u^\star\|_2
    \leq
    2(1-w)
    \leq
    2\eret.
\]
The stated condition makes this perturbation smaller than the distance from
$u^\star$ to any decision boundary between two basis vectors, and therefore
preserves its $\arg\max$.
\end{proof}

\begin{rmk}
The displayed threshold is only a sufficient condition and is not tight. Its
role is to provide a finite explicit scale compatible with the statement of
Theorem~\ref{thm:main-app}\emph{(ii)}.
\end{rmk}

\subsection{Matched-score dispersion and predictive error}
\label{app:syn:budget}

For stochastic rules, mere separation is insufficient. Since matching precedents may carry distinct next states, recovering the true counts requires the softmax to remain nearly uniform \emph{within} the matched set. Approximate Layer-1 blocks perturb these matched scores.

\begin{lemma}[Score dispersion among matches]
\label{lem:dispersion}
In the setting of Lemma~\ref{lem:noise}, define
\[
    a
    \coloneqq
    \beta(k-2k\eblock).
\]
Every matched score can be written
\[
    a+\eta_j,
    \qquad
    0\leq\eta_j\leq\eta,
    \qquad
    \eta
    \coloneqq
    2\beta k\eblock,
\]
and every non-matching score is at most
\[
    a-\Delta,
    \qquad
    \Delta
    =
    \beta\bigl(1-(2k+2)\eblock\bigr).
\]
\end{lemma}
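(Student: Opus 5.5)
The plan is to read this off directly from the two bounds already proved in Lemma~\ref{lem:noise}, reparametrizing the matched scores around their common floor. Let $P=\Sigma_Q x_{t,z}$ be the query configuration and $P'=\Sigma_K x_{t',z'}$ the configuration of an earlier valid key. The gather interface, Corollary~\ref{cor:gather-interface}, guarantees that each selected block is an $\eblock$-approximate one-hot. The score of such a key is therefore $\beta\langle P,P'\rangle$.

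\emph{Matched scores.} For a matching precedent ($y=y'$), Lemma~\ref{lem:noise} gives $\langle P,P'\rangle\ge k-2k\eblock$. Multiplying by $\beta$ gives a score at least $a=\beta(k-2k\eblock)$, so $\eta_j\ge0$. For the upper end I will use the blockwise bound \eqref{eq:blockle1}: every block inner product is at most $1$, hence $\langle P,P'\rangle\le k$. This bounds the score by $\beta k=a+2\beta k\eblock=a+\eta$, so $\eta_j\le\eta$.

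\emph{Non-matching scores.} For $y\ne y'$, Lemma~\ref{lem:noise} gives $\langle P,P'\rangle\le(k-1)+2\eblock$. The score is then at most
\[
  \beta(k-1+2\eblock)=\beta(k-2k\eblock)-\beta\bigl(1-(2k+2)\eblock\bigr)=a-\Delta,
\]
which is the arithmetic identity to check.

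No step is a real obstacle. The only subtlety is the upper bound on matched scores, which needs the sub-probability property, i.e.\ total mass at most $1$ with coordinates in $[0,1]$, rather than just the lower bound on the target coordinate. That property is exactly what Definition~\ref{def:eclose} provides. The statement is also restricted to valid keys; invalid keys and $\star$ are handled separately by the gate and by the choice of $\sigma_\star$.
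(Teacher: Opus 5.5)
Your proposal is correct and matches the paper's proof: the matched-score floor $\beta(k-2k\eblock)$ comes from Lemma~\ref{lem:noise}, the matched-score ceiling $\beta k$ from the blockwise bound \eqref{eq:blockle1}, and the non-match ceiling from the same arithmetic identity $\beta(k-1+2\eblock)=a-\Delta$. You also correctly identify that the upper end uses the sub-probability property of the blocks rather than only the lower bound on the target coordinate.
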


\begin{proof}
For a matching pair, Lemma~\ref{lem:noise} gives the lower bound
$k-2k\eblock$, while \eqref{eq:blockle1} gives the upper bound $k$. Hence the
matched Layer-2 scores lie in
\[
    \bigl[
        \beta(k-2k\eblock),
        \beta k
    \bigr],
\]
an interval of width $2\beta k\eblock$. Taking the lower endpoint as $a$
gives the claimed representation $a+\eta_j$.

For a non-match, Lemma~\ref{lem:noise} gives score at most
\[
    \beta\bigl((k-1)+2\eblock\bigr)
    =
    \beta(k-2k\eblock)
    -
    \beta\bigl(1-(2k+2)\eblock\bigr)
    =
    a-\Delta.
\]
\end{proof}

Layer-1 error thus introduces a tradeoff between:
\[
    \Delta
    =
    \beta\bigl(1-(2k+2)\eblock\bigr),
    \qquad
    \eta
    =
    2\beta k\eblock.
\]
A large $\Delta$ suppresses non-matches, while a small $\eta$ maintains uniform weighting among matches. Increasing $\beta$ scales both simultaneously.

We next quantify the cost of the second effect.

\begin{lemma}[KL error from matched-score dispersion]
\label{lem:biasbudget}
In the setting of Theorem~\ref{thm:bayesmatch}\emph{(ii)}, suppose the matching
scores are $a+\eta_j$ with
$|\eta_j|\leq\eta$, while the sink remains at
$a+\log(V\alpha)$. Let $p_\eta$ denote the output after restricting attention
to the matching precedents and sink. Then, for every $s\in\States$,
\[
    p_\eta(s)
    \geq
    e^{-2\eta}
    \widehat P_\alpha(s\mid Q_{t,z}),
\]
and therefore
\[
    \KL\!\left(
        \widehat P_\alpha(\cdot\mid Q_{t,z})
        \,\middle\|\,
        p_\eta
    \right)
    \leq
    2\eta.
\]
No small-$\eta$ assumption is required.
\end{lemma}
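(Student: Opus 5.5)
The proof will compare the perturbed softmax with the ideal one of Theorem~\ref{thm:bayesmatch}\emph{(ii)} directly, through their unnormalized weights. Restricting attention to the matching precedents and the sink, give precedent $j$ the weight $e^{a+\eta_j}$ and the sink the weight $V\alpha e^{a}$. The common factor $e^a$ cancels in the softmax, so it can be dropped. The output in coordinate $s$ is then
\[
  p_\eta(s)=\frac{\sum_{j:\,x_j=s}e^{\eta_j}+\alpha}{\sum_j e^{\eta_j}+V\alpha},
\]
where the numerator uses that the sink contributes $V\alpha\cdot\tfrac1V=\alpha$ to every coordinate. The ideal output is the same expression with every $\eta_j=0$, namely $(n_{c,s}+\alpha)/(n_c+V\alpha)$.

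The pointwise bound follows by estimating the numerator and the denominator separately using $|\eta_j|\le\eta$. For the numerator, each $e^{\eta_j}\ge e^{-\eta}$, and $\alpha\ge e^{-\eta}\alpha$, so the numerator is at least $e^{-\eta}(n_{c,s}+\alpha)$. For the denominator, each $e^{\eta_j}\le e^{\eta}$, and $V\alpha\le e^{\eta}V\alpha$, so the denominator is at most $e^{\eta}(n_c+V\alpha)$. Dividing the two estimates gives $p_\eta(s)\ge e^{-2\eta}\widehat P_\alpha(s\mid Q_{t,z})$ for every $s$. The sink's weight is unperturbed, so bounding it by the same factors $e^{\pm\eta}$ only loosens the estimate and costs nothing. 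This step uses no smallness of $\eta$, which accounts for the statement's remark that none is required.

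For the KL bound, write
\[
\KL\bigl(\widehat P_\alpha\,\|\,p_\eta\bigr)=\sum_s\widehat P_\alpha(s)\log\frac{\widehat P_\alpha(s)}{p_\eta(s)}.
\]
By the pointwise bound, each log-ratio is at most $2\eta$. Since $\widehat P_\alpha$ sums to $1$, the divergence is at most $2\eta$. Both distributions have full support because $\alpha>0$, so every term is finite.

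There is no real obstacle here. The one point needing care is to bound the numerator and the denominator in opposite directions, and to let the sink term absorb the same multiplicative factors, rather than attempting a first-order linearization in $\eta$. Doing it this way gives the clean factor $e^{-2\eta}$ and is what makes the bound hold for all $\eta$.
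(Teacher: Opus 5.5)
Your proposal is correct and follows the same argument as the paper. You write the restricted output explicitly and bound the numerator below by $e^{-\eta}(n_{c,s}+\alpha)$ and the denominator above by $e^{\eta}(n_c+V\alpha)$, so the sink is charged the same factors; you then average the resulting log-ratio bound under $\widehat P_\alpha$.
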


\begin{proof}
The $s$-th coordinate of the dispersed matched-plus-sink output is
\[
    p_\eta(s)
    =
    \frac{
        \sum_{j:x_j=s}e^{\eta_j}
        +
        \alpha
    }{
        \sum_j e^{\eta_j}
        +
        V\alpha
    }.
\]
Because $e^{-\eta}\leq e^{\eta_j}\leq e^\eta$ and
$e^{-\eta}\leq1\leq e^\eta$, the numerator lies in
\[
    \bigl[
        e^{-\eta}(n_{c,s}+\alpha),
        \;
        e^\eta(n_{c,s}+\alpha)
    \bigr],
\]
while the denominator lies in
\[
    \bigl[
        e^{-\eta}(n_c+V\alpha),
        \;
        e^\eta(n_c+V\alpha)
    \bigr].
\]
Thus
\[
    p_\eta(s)
    \geq
    e^{-2\eta}
    \frac{n_{c,s}+\alpha}{n_c+V\alpha}
    =
    e^{-2\eta}
    \widehat P_\alpha(s\mid Q_{t,z}).
\]
Taking logarithms and averaging under $\widehat P_\alpha$ gives the KL bound.
\end{proof}

\begin{rmk}
Lemma~\ref{lem:dispersion} actually gives $\eta_j\geq0$, in which case the
constant can be improved. We keep the symmetric $2\eta$ form because it makes
the perturbation lemma independent of how the matched-score interval is
anchored and matches the statement used below.
\end{rmk}

\subsection{The key-validity gate}
\label{app:syn:keygate}

With the matched-score floor $a$ and the separation $\Delta$ now fixed, we
can set the gate strength $\Gamma_2$ which puts every invalid key under the
same ceiling as a non-matching valid key.

\begin{proposition}[The Layer-2 key-validity gate]
\label{prop:keygate-l2}
In the notation of Lemma~\ref{lem:dispersion}, let
$a\coloneqq\beta(k-2k\eblock)$ be the matched-score floor and $\Delta$ the
separation. If
\begin{equation}
\label{eq:gamma2}
    \Gamma_2
    \;\geq\;
    \beta k-a+\Delta
    \;=\;
    2\beta k\eblock+\Delta ,
\end{equation}
then every invalid key scores at most $a-\Delta$, the same ceiling that
Lemma~\ref{lem:dispersion} gives for non-matching valid keys. Moreover every
key's Layer-2 value is a probability vector.
\end{proposition}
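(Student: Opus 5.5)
The plan is to bound the ungated Layer-2 score of an arbitrary invalid key from above, and then let the gate term $-\Gamma_2\nu_{n'}$ push it below the ceiling $a-\Delta$. Take an invalid key $n'$, which is either a separator or an initial-step cell. Its pattern scratch $\Sigma_K x_{n'}$ is made of $k$ width-$V$ blocks. By Lemma~\ref{lem:scratch-all}, each block is a sub-probability vector, even at positions with $\tau(n')=0$, where condition~\emph{(i)} of Definition~\ref{def:sih-app} makes no claim. The query-side blocks $\Sigma_Q x_n$ are $\eblock$-approximate one-hots, so they too are sub-probability vectors with coordinates in $[0,1]$. The blockwise bound \eqref{eq:blockle1} was proved using only these two properties. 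Applying it to each of the $k$ aligned blocks gives $\langle \Sigma_Q x_n,\Sigma_K x_{n'}\rangle\le k$, so the content part of the score is at most $\beta k$.

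Next I collect the remaining terms of $W_K^{(2)}$ at $n'$. The $\sigma_\star\Pi_\mu$ term vanishes because $\mu_{n'}=0$ at ordinary tokens. The gate term contributes $-\Gamma_2\nu_{n'}$ with $\nu_{n'}\ge1$, and since $\Gamma_2>0$ this is at most $-\Gamma_2$. The total score is therefore at most $\beta k-\Gamma_2$. Under \eqref{eq:gamma2} this is at most $\beta k-(\beta k-a+\Delta)=a-\Delta$, which matches the non-match ceiling of Lemma~\ref{lem:dispersion}. The identity $\beta k-a=2\beta k\eblock$ follows directly from the definition $a=\beta(k-2k\eblock)$.

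For the value claim I go through the key types in Definition~\ref{def:l2match}. A cell key, valid or initial-step, has $B_1=e(x)$ and $s=\mu=0$, so its value is a one-hot. A separator has $B_1=0$ and $s=1$, so its value is $V^{-1}\mathbf 1$. The token $\star$ has $B_1=0$ and $\mu=1$, so its value is also $V^{-1}\mathbf 1$. No token has both $s$ and $\mu$ equal to $1$, so the rank-one term never doubles.

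The only real obstacle is the first step. Invalid keys, especially initial-step cells, carry no accuracy guarantee on their pattern scratch. The proof therefore has to rest on the weaker sub-probability property of Lemma~\ref{lem:scratch-all}, not on Lemma~\ref{lem:noise}. Once that is in place, the rest is bookkeeping.
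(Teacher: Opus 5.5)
Your proposal is correct and follows the paper's proof essentially step for step. You bound the ungated content score of every key by $\beta k$ using \eqref{eq:blockle1} and the sub-probability property of Lemma~\ref{lem:scratch-all}, subtract $\Gamma_2\nu_{n'}\ge\Gamma_2$ at an invalid key to reach $a-\Delta$, and then check the value types (valid cell, initial-step cell, separator, $\star$) case by case, adding only the small bookkeeping points the paper leaves implicit: the $\sigma_\star\Pi_\mu$ term vanishes at ordinary keys, and no token has $s=\mu=1$.
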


\begin{proof}
\emph{The gate.}
An invalid key may carry nonzero pattern scratch, meaning its ungated score is not necessarily small (separators inherit virtual cells and receive Layer-1 gathers, while initial-step cells assemble gathers from the cells they can see). However, \eqref{eq:blockle1} bounds the score: every block of $B_3^{\mathrm{pat}}$ is a sub-probability vector at every ordinary position, including those with $\tau(n')=0$ (Lemma~\ref{lem:scratch-all}), so each of the $k$ inner products is at most $1$, yielding
\[
    \beta\bigl\langle\Sigma_Q x_{t,z},\Sigma_K x_{n'}\bigr\rangle
    \;\leq\;
    \beta k
    \eqqcolon
    S_{\max}
\]
for every key, valid or not. Since $\nu_{n'}\geq1$ at an invalid key, its
gated score is at most $S_{\max}-\Gamma_2\leq a-\Delta$ by \eqref{eq:gamma2}.

\emph{The values.}
A valid key has $s_{n'}=0$, so its value is $\Pi_{B_1}x_{n'}=e(x_{t',z'})$, a
one-hot. A separator has $s_{n'}=1$ and $\Pi_{B_1}x_{n'}=0$, so its value is
$V^{-1}\mathbf 1$. An initial-step cell has $s_{n'}=0$ and carries its own state
one-hot $e(x_{0,z'})$. The token $\star$ has $\mu_\star=1$ and
$\Pi_{B_1}x_\star=0$, so its value is $V^{-1}\mathbf 1$. All four are
probability vectors.
\end{proof}

\begin{rmk}
\label{rmk:gate-consequences}
Two consequences simplify the remainder of this appendix.

First, invalid keys require no separate error budget. Gated below $a-\Delta$, they become indistinguishable from non-matching valid keys and are absorbed into the $N$ visible positions. Theorem~\ref{thm:main-app} bounds their mass collectively, eliminating separate $\varepsilon_{\mathrm{sep}}$ bookkeeping.

Second, because all values are probability vectors, the head output (a convex combination) lies exactly in the simplex. The readout convention of \S\ref{app:syn:bayesmatch} dictates why separators cannot retain the value $0$.

Like the Layer-1 gate, this mechanism uses a rank-one term over existing stream coordinates, avoiding any width increase.
\end{rmk}

\begin{remark}[The initial-step half of the gate is not optional]
\label{rmk:initstep-counterexample}
Gating separators alone falsifies the end-to-end theorem. Take
$D_K=\{-1,0,1\}$, $\Uset=\{-1,0,1,2\}$, $V=2$, eight cells, and the constant
rule $f\equiv0$, on the trajectory $x_0=11111111$, $x_1=x_2=00000000$. Query
the first cell of $x_1$: its query configuration is $111$, the transition
$111\mapsto0$ has been observed, so the query is resolvable with $n_c=1$ and
the answer is $0$. Every causally visible cell of the initial time step has
state $1$, so every Layer-1 head returns $e(1)$ there, and each of the eight
initial-step tokens presents the spurious parent configuration $111$ while
carrying its own state $1$. Layer~2 sees eight spurious matches valued $1$
against one genuine match valued $0$, all at the same score, and returns
$p(1)=8/9$. This survives at finite parameters meeting every other condition:
at $\Lambda=100$ and $\beta=20$ one has $\eblock\leq5.15\times10^{-12}$ and
$\Delta\approx20$, far above the threshold
$\log(2\sqrt2(N-n_c)/n_c)\approx4.30$ of
Theorem~\ref{thm:main-app}\emph{(ii)}. The $\zeta$ half of $\nu$ in
\eqref{eq:nu} removes the eight spurious keys.
\end{remark}

\subsection{End-to-end spatial induction}
\label{app:syn:mainthm}

We now have all the pieces of the finite-parameter argument; Layer~1 fixes the
per-block error $\eblock$, which in turn sets the separation gap $\Delta$
between matches and non-matches and the dispersion $\eta$ within the matched
set. The sink token supplies the Dirichlet smoothing. It remains to choose one
scale $\beta$ which suppresses the non-matching mass and bounds the
matched-set dispersion at the same time.

\begin{theorem}[End-to-end spatial induction; Theorem~\ref{thm:main} of the
body]
\label{thm:main-app}
\label{cor:deterministic}
Let Layer~1 be either construction of
Corollary~\ref{cor:gather-interface}, at per-block error $\eblock$, with the
separator gate of Proposition~\ref{prop:sepgate-l1} at strength
$\Gamma_1\geq\Lambda(R+\gamma)$. Let Layer~2 be the head of
Definition~\ref{def:l2match} with scale $\beta$ and key-validity gate
$\Gamma_2\geq2\beta k\eblock+\Delta$, as in
Proposition~\ref{prop:keygate-l2}. Write
\[
    \Delta
    \coloneqq
    \beta\bigl(1-(2k+2)\eblock\bigr),
    \qquad
    a\coloneqq\beta(k-2k\eblock),
\]
and, whenever $\star$ is present, let it be configured as in
Lemma~\ref{lem:star-l1} with Layer-2 score $\sigma_\star$.

\begin{enumerate}[label=(\roman*),leftmargin=2.2em,itemsep=2pt,topsep=3pt]
    \item \emph{Stochastic rules.}
    Fix $\kappa\in(0,1]$ and $\alpha>0$, assume $N\geq V\alpha$, and include
    $\star$ with $\sigma_\star=a+\log(V\alpha)$. If
    \[
        \Delta
        \geq
        \log
        \frac{
            4N
        }{
            (n_c+V\alpha)\kappa
        }
        \qquad\text{and}\qquad
        \eblock
        \leq
        \frac{\kappa}{16k\Delta},
    \]
    then at every resolvable query, cell token or separator,
    under the simplex readout, the Layer-2 output $p$ satisfies
    \[
        \KL\!\left(
            \widehat P_\alpha(\cdot\mid Q_n)
            \,\middle\|\,
            p
        \right)
        \leq
        \kappa.
    \]

    \item \emph{Deterministic rules.}
    Let the rule be deterministic and take $\alpha\to0$, with
    $\sigma_\star\leq a-\Delta$ whenever $\star$ is present (always in
    Construction~II). If
    \[
        \eblock
        <
        \frac1{2k+2}
    \]
    and
    \[
        \Delta
        >
        \log\bigl(
            2\sqrt2\,(N-n_c)/n_c
        \bigr),
    \]
    then at every resolvable query, cell token or separator, the predicted
    state is exactly correct. This conclusion holds for either the simplex
    readout or the standard softmax readout.
\end{enumerate}
\end{theorem}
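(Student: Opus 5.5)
The proof assembles the Layer-2 lemmas in the order the statement lists its hypotheses. Fix a resolvable query $n$, a cell token or a separator with $\tau(n)\geq1$. By Corollary~\ref{cor:gather-interface}, every pattern block at $n$ and at every valid key is an $\eblock$-approximate one-hot for the correct routed state. This is where the separator case comes for free: the gather is stated for every position with $\tau(n)\geq1$, and $\Sigma_Q$ reads the $D_Q$ blocks, which by Lemma~\ref{lem:sepsucc}(iii) are the target's parents. Lemma~\ref{lem:dispersion} then places every matched score in $[a,a+\eta]$ with $\eta=2\beta k\eblock$, and every non-matching valid key at or below $a-\Delta$. Proposition~\ref{prop:keygate-l2} puts every invalid key below the same ceiling $a-\Delta$ and makes every value a probability vector. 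The three regimes left to handle are the matched set $\mathcal M$ with $\lvert\mathcal M\rvert=n_c$, the sink $\star$, and at most $N$ extraneous positions scoring $\leq a-\Delta$.

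\emph{Part (ii).} Here $\sigma_\star\leq a-\Delta$, so $\star$ is just one more extraneous position. The hypothesis $\eblock<1/(2k+2)$ gives $\Delta>0$ through Lemma~\ref{lem:noise}. The hypothesis of Corollary~\ref{lem:det-retrieval} is then fully discharged: non-matching valid keys by Lemma~\ref{lem:noise}, invalid keys by Proposition~\ref{prop:keygate-l2}, and $\star$ by the choice of $\sigma_\star$. Its threshold on $\Delta$ yields the correct $\arg\max$. Because softmax is monotone, the conclusion holds for either readout.

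\emph{Part (i).} I would write the output as $p=(1-\varepsilon)p_\eta+\varepsilon\rho$. Here $p_\eta$ is the attention output restricted to $\mathcal M\cup\{\star\}$, $\rho$ is some probability vector, and $\varepsilon$ is the mass on extraneous positions. The restricted partition function is at least $(n_c+V\alpha)e^{a}$, because matched scores are $\geq a$ and the sink has weight $V\alpha e^a$. Extraneous mass is at most $Ne^{a-\Delta}$. Hence
\[
\varepsilon\leq \frac{Ne^{-\Delta}}{n_c+V\alpha}\leq \frac{\kappa}{4}
\]
by the hypothesis on $\Delta$. Coordinatewise $p(s)\geq(1-\varepsilon)p_\eta(s)$, so
\[
\KL(\widehat P_\alpha\|p)\leq \KL(\widehat P_\alpha\|p_\eta)-\log(1-\varepsilon)\leq 2\eta+2\varepsilon,
\]
using Lemma~\ref{lem:biasbudget} for the first term and $-\log(1-\varepsilon)\leq2\varepsilon$ for $\varepsilon\leq1/2$. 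This gives $2\varepsilon\leq\kappa/2$.

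The remaining step, which I expect to be the main technical point, is bounding $2\eta=4\beta k\eblock$ by $\kappa/2$ in terms of $\Delta$ rather than $\beta$. Note that $\beta=\Delta/(1-(2k+2)\eblock)$. I would show $(2k+2)\eblock\leq1/2$, which gives $\beta\leq2\Delta$. This follows from $\eblock\leq\kappa/(16k\Delta)$ together with $\kappa\leq1$ and $\Delta\geq\log 4>1$. The hypothesis $N\geq V\alpha$ enters here: since $n_c\leq N$, it gives $(n_c+V\alpha)\kappa\leq 2N$, hence $\Delta\geq\log 2$ at least; the constants may need checking against the worst case $k=1$. Then
\[
2\eta\leq 8k\Delta\eblock\leq \frac{\kappa}{2},
\]
and summing the two contributions gives $\KL\leq\kappa$.
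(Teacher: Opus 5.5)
Your proposal is correct and follows the paper's proof essentially step for step. It uses the same split of $p$ into the matched-plus-sink output $p_\eta$ (handled by Lemma~\ref{lem:biasbudget}) plus extraneous leakage $w\leq Ne^{-\Delta}/(n_c+V\alpha)\leq\kappa/4$. It uses the same bound $\beta\leq 2\Delta$ to turn $2\eta$ into $8k\Delta\bar\varepsilon\leq\kappa/2$, and the same reduction of part~(ii) to Corollary~\ref{lem:det-retrieval}.

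The only slip is the aside $\Delta\geq\log 4>1$, which the hypotheses do not give. However, $\Delta\geq\log 2$ already suffices, since then $\bar\varepsilon\leq 1/(16k\log 2)\leq 1/(4k+4)$ for every $k\geq 1$ (at $k=1$, $16\log 2\approx 11.1\geq 8$). This is exactly the check the paper performs.
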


\begin{rmk}
Both gate hypotheses are indispensable. Dropping $\Gamma_1$ to the weaker
choice $\Gamma_1=\Lambda$ breaks the per-block error bound
whenever $\gamma>1$; dropping the $\zeta$ half of $\Gamma_2$ makes part~(ii)
outright false, by Remark~\ref{rmk:initstep-counterexample}.
\end{rmk}

\begin{proof}
Throughout, call a visible position \emph{extraneous} if it is neither a
matching precedent nor the sink. Three kinds of position are extraneous:
non-matching valid keys, separators, and initial-step cells; in part~(ii),
where $\star$ plays no Dirichlet role, $\star$ is extraneous as well. We treat
them together, and the gate hypotheses enter only here.

\emph{Part~(i).}
We separate the error into two terms: bias within the matched-plus-sink set and
contamination from the extraneous positions.

By Lemma~\ref{lem:dispersion}, the matched scores have the form
\[
    a+\eta_j,
    \qquad
    a=\beta(k-2k\eblock),
    \qquad
    0\leq\eta_j\leq\eta
    \leq
    2\beta k\eblock,
\]
while every non-matching valid key scores at most $a-\Delta$. By
Proposition~\ref{prop:keygate-l2}, the choice of $\Gamma_2$ puts every invalid
key under the same ceiling $a-\Delta$. Hence \emph{every} extraneous position
scores at most $a-\Delta$, and every one of them carries a probability vector
as its value. Place the sink at
\[
    a+\log(V\alpha).
\]

First remove the extraneous positions and renormalize. Let $p_\eta$ denote the
resulting matched-plus-sink output. By Lemma~\ref{lem:biasbudget},
\[
    p_\eta
    \geq
    e^{-2\eta}\widehat P_\alpha
\]
coordinatewise.

Now restore them. Let $w$ be their total attention mass and
let $R$ be the normalized average of their value vectors, which is a
probability vector because each of those values is one. Then
\[
    p
    =
    (1-w)p_\eta+wR
    \geq
    (1-w)e^{-2\eta}\widehat P_\alpha.
\]
Therefore
\begin{equation}
\label{eq:kl-compose}
    \KL\!\left(
        \widehat P_\alpha
        \,\middle\|\,
        p
    \right)
    \leq
    2\eta-\log(1-w).
\end{equation}

We bound the two terms separately.

With at most $N$ visible positions in total, the unnormalized mass of extraneous positions cannot exceed $Ne^{a-\Delta}$. Gating invalid keys proves crucial here: separators and initial-step cells (already counted in $N$) are forced below this $e^{a-\Delta}$ ceiling. Meanwhile, the $n_c$ matching keys each carry weight at least $e^a$, and the sink carries $V\alpha e^a$.
\[
    w
    \leq
    \frac{
        Ne^{-\Delta}
    }{
        n_c+V\alpha
    }
    \leq
    \frac\kappa4
\]
by the first hypothesis. Since $\kappa\leq1$, we have $w\leq1/4$ and therefore
\[
    -\log(1-w)
    \leq
    2w
    \leq
    \frac\kappa2.
\]

For the dispersion term,
\[
    \beta
    =
    \frac{
        \Delta
    }{
        1-(2k+2)\eblock
    }.
\]
The hypotheses imply $\eblock\leq1/(4k+4)$, so the denominator is at least
$1/2$ and therefore $\beta\leq2\Delta$. Consequently
\[
    \eta
    \leq
    2\beta k\eblock
    \leq
    4k\Delta\eblock
    \leq
    \frac\kappa4,
\]
using the second hypothesis. Thus $2\eta\leq\kappa/2$. Substituting the two
bounds into \eqref{eq:kl-compose} gives the claim.

For completeness, the implication
$\eblock\leq1/(4k+4)$ follows from the stated assumptions. Since
$n_c\leq N$, $V\alpha\leq N$, and $\kappa\leq1$, the first hypothesis gives
$\Delta\geq\log2$. Hence
\[
    \eblock
    \leq
    \frac1{16k\Delta}
    \leq
    \frac1{16k\log2}
    \leq
    \frac1{4k+4}.
\]

\emph{Part~(ii).}
Resolvability gives $n_c\geq1$, and by the previous paragraphs every extraneous
position, invalid keys included, scores at most $a-\Delta$; so does $\star$,
since $\sigma_\star\leq a-\Delta$, and its value $V^{-1}\mathbf 1$ is a
probability vector. So
Corollary~\ref{lem:det-retrieval} applies directly with the gap $\Delta$, with
its $N-n_c$ non-matching positions read as the extraneous ones, and
returns the correct $\arg\max$. For Construction~I, choosing the Layer-1
logit scale so that
\[
    \Lambda\gamma
    >
    \log\bigl((2k+2)N\bigr)
\]
ensures
$\eblock<1/(2k+2)$; Construction~II has $\eblock=0$ exactly.
\end{proof}

\paragraph{Separator queries.}
The theorem covers separator queries under the same hypotheses. Two conditions
are needed. On the query side, Definition~\ref{def:sih-app}\emph{(i)} requires
the gather at every position with $\tau(n)\geq1$, separators included, and
Corollary~\ref{cor:gather-interface} discharges it for both constructions; the
underlying facts are Lemma~\ref{lem:sepsucc}\emph{(i)}--\emph{(iii)}, which
place the separator's target cell at $\kappa(n)+\usucc$, its $k$ parents in
step $\tau(n)-1$, and those parents at offsets $D_Q\subseteq\Uset$ from
$\kappa(n)$. On the query side, Layer~2 reads only the $D_Q$ blocks and never the
query token's own state, so a separator query needs nothing further, while a
separator \emph{key} is gated by $\nu$ \eqref{eq:nu} and enters the theorem as
an extraneous position. A separator is therefore a legitimate query and an
illegitimate key, and the single gate $\nu$ separates the two roles. This is
the separator entry of Table~\ref{tab:duty}: the same successor rule,
positional table and attention heads predict the first cell of a row, so no
separate routing construction is required.

\subsection{Choosing finite parameters}
\label{app:syn:params}

The hypotheses of Theorem~\ref{thm:main-app} couple $\eblock$, $\Delta$ and
$\beta$, and one order of choice satisfies them all. Fix the divergence budget
$\kappa\in(0,1]$ and the pseudocount $\alpha>0$.
\begin{enumerate}[label=(\arabic*),leftmargin=2.2em,itemsep=1pt,topsep=2pt]
  \item Take $\Delta=\log\bigl(4N/((n_c+V\alpha)\kappa)\bigr)$, the smallest
        value the first hypothesis allows. It depends on the stream length
        $N$, the smoothing $V\alpha$, the budget $\kappa$ and the match count
        $n_c$; the paragraph on one circuit below removes the dependence on
        $n_c$.
  \item Take $\eblock=\kappa/(16k\Delta)$, the largest per-block error the
        second hypothesis allows at that $\Delta$.
  \item Take $\beta=\Delta/(1-(2k+2)\eblock)$, which realizes the prescribed
        $\Delta$ and, by the proof above, satisfies $\beta\leq2\Delta$, so the
        dispersion obeys $\eta\leq\kappa/4$.
  \item Take $\Gamma_1\geq\Lambda(R+\gamma)$ by
        Proposition~\ref{prop:sepgate-l1} and
        $\Gamma_2\geq2\beta k\eblock+\Delta$ by
        Proposition~\ref{prop:keygate-l2}, and
        $\sigma_\star=a+\log(V\alpha)$.
\end{enumerate}
The Layer-1 logit scale comes last, and it is where the two constructions
differ. Construction~II attains $\eblock=0$ exactly, so $\eta=0$, step~(2) is
vacuous, the dispersion term drops out of \eqref{eq:kl-compose}, and only the
leakage term $-\log(1-w)$ survives; any $\Lambda$ meeting
\eqref{eq:onehead-scale} serves. Construction~I attains
$\eblock\leq Ne^{-\Lambda\gamma}$, so step~(2) becomes a condition on
$\Lambda$, namely $\Lambda\geq\gamma^{-1}\log\bigl(16kN\Delta/\kappa\bigr)$ by
Lemma~\ref{lem:blockerror}. For deterministic rules, take $\alpha\to0$ and
replace step~(1) by $\Delta>\log\bigl(2\sqrt2(N-n_c)/n_c\bigr)$, step~(2) by
$\eblock<1/(2k+2)$, and $\sigma_\star$ by any value at most $a-\Delta$;
Construction~I then needs
$\Lambda\gamma>\log\bigl((2k+2)N\bigr)$, as in part~\emph{(ii)} of the proof.

\paragraph{One query, or one circuit.}
The recipe above reads $n_c$ as the count at the query being bounded, and
step~(1) gives the smallest $\Delta$ meeting the first hypothesis at that
query. The weights of a circuit cannot depend on $n_c$, which is a property of
the context and is observed at run time. For one fixed circuit which works
uniformly over every resolvable query, run the same recipe at the worst case
resolvability permits, $n_c\geq1$. Step~(1) becomes
\[
    \Delta
    =
    \log\frac{4N}{(1+V\alpha)\kappa} ,
\]
and the deterministic replacement becomes
$\Delta>\log\bigl(2\sqrt2\,(N-1)\bigr)$; steps~(2)--(4) and the Layer-1 logit
scale then follow from that $\Delta$ unchanged. The count enters only the
first hypothesis, and it enters decreasingly, so meeting that hypothesis at
$n_c=1$ meets it at every resolvable query. No parameter of the model changes
with the observed count.

\begin{rmk}
The stochastic inequality characterizes a tradeoff between routing and retrieval. Increasing $\beta$ widens both the separation gap $\Delta$ and the matched-score dispersion $\eta$. Consequently, we cannot lower the softmax temperature arbitrarily. For stochastic rules, an excessive logit scale concentrates mass on a strict subset of matching precedents, destroying the uniform weighting required for empirical counting.
\end{rmk}

\apppart{Part V}{Experiments}
\section{Experimental methods and additional results}
\label{app:expdetails}

\subsection{Data generation and rule splits}
\label{app:data}

\paragraph{Trajectory generation.}
Each trajectory fixes one rule table across all $T$ time steps.
Initial cells ($t{=}0$) are drawn i.i.d.\ from $\{0, \ldots, V{-}1\}$, and all spatial axes use periodic boundary conditions.
The neighborhood configuration (left--center--right for 1D; north--west--center--east--south for 2D von Neumann) is base-$V$ encoded to index the rule table.
Deterministic rules return the next cell value directly, whereas stochastic rules return a categorical distribution for sampling.

\paragraph{Serialization and masks.}
In 1D, each time step contributes $\lvert\Grid\rvert$ cell tokens followed by a \texttt{TIME\_SEP} token (omitting the final separator).
In 2D, the grid is serialized row by row, with \texttt{ROW\_SEP} tokens separating rows and \texttt{TIME\_SEP} tokens separating time steps.
This stream length is one token shorter than $N$ (Definition~\ref{def:serialization}), but this final omission affects no query cell.
Neither the rule table nor rule ID is provided in the input.
Training loss spans the full next-token sequence.
Validation and test masks isolate the prediction region ($t = T_0, \ldots, T{-}1$), and reported metrics evaluate only cell-value targets, excluding separators.

\paragraph{Coverage filtering.}
Select evaluation splits apply a trajectory-level coverage filter: candidate trajectories are rejection-sampled until all $V^k$ configurations appear within the $T_0{-}1$ observed transitions of the context ($t = 0, \ldots, T_0{-}2$).
This inclusion criterion guarantees that every scored configuration has been observed in context.
Table~\ref{tab:coverage_filter} details filter usage across splits.

\begin{table}[h]
\centering
\caption{Trajectory coverage filter by split.
``Yes'' means the split rejection-samples trajectories until all $V^k$ neighborhood configurations appear in the context; ``No'' means no filter.
Deterministic runs use five splits (train, val-in without filter, val-in with filter, val-out, test); stochastic runs use three (train, validation, test).
The main-text deterministic validation refers to val-out; val-in is diagnostic only and not used for checkpoint selection.}
\label{tab:coverage_filter}
\small
\begin{tabular}{@{}lccccc@{}}
\toprule
Dataset type & Train & Val-in (no filter) & Val-in (filter) & Val-out / Val & Test \\
\midrule
Deterministic & No & No  & Yes & Yes & Yes \\
Stochastic    & No & --- & --- & Yes & Yes \\
\bottomrule
\end{tabular}
\end{table}

\paragraph{Deterministic rule pools.}
Each deterministic trajectory draws one rule uniformly at random (with replacement across trajectories) from the corresponding split's rule pool.
Training, validation-out, and test rule pools are disjoint at the canonical-rule level.
The split is drawn on canonical rules first, then each canonical rule is expanded through its symmetry orbit, so equivalent rules never appear in different splits.

\subparagraph{ECA $V{=}2, k{=}3$.}
The $256$ elementary rules form $88$ canonical classes under reflection, complementation, and their composition.
Rules are stratified by Wolfram behavioral classes I--IV before splitting.
The canonical-level split is $70 / 9 / 9$ (train / val-out / test); expanding each class through its symmetry orbit gives $203 / 28 / 25$ element rules.
The val-in split draws rules from the training pool with new initial conditions; the val-in-filter variant restricts to training rules that can generate a full-coverage context.

\subparagraph{1D $V{=}3, k{=}3$.}
The rule space has $3^{27} \approx 7.6 \times 10^{12}$ entries and cannot be enumerated.
The equivalence relation uses $S_3$ state permutations; a canonical rule is the lexicographically smallest of its six state-permutation variants.
Rules are stratified by sorted output histogram, and the entropy of each histogram defines an entropy bin.
Rules are sampled to make the resulting canonical set approximately uniform over discrete entropy values.
Pool sizes: 3{,}621{,}300 training / 122{,}640 val-out / 122{,}640 test rules.
For low-entropy strata that rarely admit full-coverage contexts, the val-out and test rules are drawn symmetrically from the coverage-feasible subset; the three canonical pools remain disjoint.

\subparagraph{2D von Neumann $V{=}2, k{=}5$.}
The rule space has $2^{32} \approx 4.3 \times 10^{9}$ entries.
The equivalence relation uses state complementation only.
Rules are stratified by folded Langton $\lambda$-strata (indexed by $\min(\#1, 32-\#1)$).
Small strata are enumerated; large strata are sampled at fixed number-of-ones per rule.
The canonical pools for training, val-out, and test are disjoint.
Near-constant strata contribute val-out and test rules symmetrically from the coverage-feasible subset.
After orbit expansion and deduplication, the pools contain 315{,}859 training / 10{,}904 val-out / 10{,}904 test rules.

\paragraph{Stochastic rule sampling.}
Each stochastic trajectory uses a fresh rule table fixed across all $T$ steps.
For each configuration $c \in [V]^k$, the next-cell categorical distribution $p_c$ is drawn independently from $\text{Dirichlet}(1, \ldots, 1)$ over $[V]$.
As the Dirichlet distribution is continuous, independent tables coincide with probability zero.
Consequently, trajectories almost surely use distinct rules, ensuring validation and test trajectories present rules unobserved during training.
Rule-table shapes are $8 \times 2$ (1D $V{=}2, k{=}3$), $27 \times 3$ (1D $V{=}3, k{=}3$), and $32 \times 2$ (2D VN $V{=}2, k{=}5$).

\paragraph{Dataset sizes.}
All settings use 50M training trajectories, 500k validation, and 500k test trajectories.
The same counts apply to both deterministic and stochastic datasets.
Table~\ref{tab:settings} summarizes the grid, horizon, and context length for each setting.

\begin{table}[h]
\centering
\caption{Grid and trajectory parameters for each experimental setting.
Values shown as ``det / stoch'' when they differ between the two regimes.}
\label{tab:settings}
\small
\begin{tabular}{@{}llcccc@{}}
\toprule
Task & Rule type & Grid $\Grid$ & $T$ & $T_0$ & $V^k$ configs \\
\midrule
1D $V{=}2,\;k{=}3$      & det / stoch & 16          & 10 / 20 & 4 / 10 & 8  \\
1D $V{=}3,\;k{=}3$      & det / stoch & 32 / 16     & 12 / 20 & 8 / 10 & 27 \\
2D VN $V{=}2,\;k{=}5$   & det / stoch & $6 \times 6$ & 12 / 20 & 8 / 10 & 32 \\
\bottomrule
\end{tabular}
\end{table}

\subsection{Models and training}
\label{app:exp:training}

Tables~\ref{tab:arch_det} and~\ref{tab:arch_stoch} give the architecture and training hyperparameters for each setting.
All models use a two-layer transformer with learned absolute positional embeddings (\texttt{nn.Embedding}).
The main text refers to the two first-layer designs as \textbf{Shared-head} (heads$=[1,1]$, mlp$=[1,1]$) and \textbf{Dedicated-heads} (heads$=[H,1]$, mlp$=[0,1]$).

\begin{table}[h]
\centering
\caption{Architecture and training hyperparameters: deterministic settings.}
\label{tab:arch_det}
\small
\begin{tabular}{@{}ll cc cc cccc@{}}
\toprule
Task & Arch.
  & heads & mlp
  & $d_{\text{model}}$ & $d_{\text{ffn}}$
  & lr & wd & $\beta_2$ & ep \\
\midrule
1D $V{=}2,k{=}3$
  & Shared-head & $[1,1]$ & $[1,1]$ & 128  & 512  & $10^{-3}$          & 0.2 & 0.999 & 3 \\
  & Dedicated-heads  & $[4,1]$ & $[0,1]$ & 128  & 512  & $10^{-3}$          & 0.2 & 0.999 & 3 \\
\midrule
1D $V{=}3,k{=}3$
  & Shared-head & $[1,1]$ & $[1,1]$ & 256  & 1024 & $5{\times}10^{-4}$ & 0.2 & 0.95  & 8 \\
  & Dedicated-heads  & $[4,1]$ & $[0,1]$ & 256  & 1024 & $5{\times}10^{-4}$ & 0.2 & 0.95  & 8 \\
\midrule
2D VN $V{=}2,k{=}5$
  & Shared-head & $[1,1]$ & $[1,1]$ & 1024 & 4096 & $10^{-3}$          & 0.2 & 0.95  & 5 \\
  & Dedicated-heads  & $[8,1]$ & $[0,1]$ & 1024 & 4096 & $10^{-3}$          & 0.2 & 0.95  & 5 \\
\bottomrule
\end{tabular}
\end{table}

\begin{table}[h]
\centering
\caption{Architecture and training hyperparameters: stochastic settings.}
\label{tab:arch_stoch}
\small
\begin{tabular}{@{}ll cc cc cccc@{}}
\toprule
Task & Arch.
  & heads & mlp
  & $d_{\text{model}}$ & $d_{\text{ffn}}$
  & lr & wd & $\beta_2$ & ep \\
\midrule
1D $V{=}2,k{=}3$
  & Shared-head & $[1,1]$ & $[1,1]$ & 256  & 1024 & $10^{-3}$          & 0.2 & 0.999 & 15 \\
  & Dedicated-heads  & $[4,1]$ & $[0,1]$ & 128  & 512  & $10^{-3}$          & 0.2 & 0.999 & 8  \\
\midrule
1D $V{=}3,k{=}3$
  & Shared-head & $[1,1]$ & $[1,1]$ & 256  & 1024 & $5{\times}10^{-4}$ & 0.2 & 0.95  & 8  \\
  & Dedicated-heads  & $[4,1]$ & $[0,1]$ & 256  & 1024 & $5{\times}10^{-4}$ & 0.2 & 0.95  & 8  \\
\midrule
2D VN $V{=}2,k{=}5$
  & Shared-head & $[1,1]$ & $[1,1]$ & 1024 & 4096 & $5{\times}10^{-4}$ & 0.2 & 0.95  & 10 \\
  & Dedicated-heads  & $[8,1]$ & $[0,1]$ & 512  & 2048 & $10^{-3}$          & 0.2 & 0.95  & 5  \\
\bottomrule
\end{tabular}
\end{table}

\paragraph{Optimizer and schedule.}
All models are trained with AdamW ($\beta_1 = 0.9$; $\beta_2$ as listed).
The learning-rate schedule is cosine decay with a linear warmup over 0.4\% of total steps.
Gradient norms are clipped to 1.0 at every step.

\paragraph{Precision and hardware.}
Training uses \texttt{bfloat16} automatic mixed precision on 1--4 GPUs; wall-clock time per run ranges from a few hours to about a day.

\paragraph{Training objective.}
The training loss is the average cross-entropy of the next-token prediction over all sequence positions:
\begin{equation*}
\mathcal{L} \;=\; \frac{\sum_{b,j} m_{b,j} \bigl[-\log q_\theta(y_{b,j} \mid x_{b,\le j})\bigr]}{\sum_{b,j} m_{b,j}}.
\end{equation*}
The targets $y_{b,j}$ include both cell-state tokens ($\{0, \ldots, V{-}1\}$) and row/time separators; the model's softmax $q_\theta$ is taken over the full vocabulary.
The loss mask $m_{b,j} \in \{0, 1\}$ is supplied by the dataset and equals 1 at every position on the training split, so no token is excluded.
On the filtered evaluation splits (val-in-filter, val-out, test) the mask is 1 only at positions belonging to the prediction time steps ($t \ge T_0$) and 0 elsewhere, restricting the monitored loss to the region on which the model is scored.

\paragraph{Checkpoint selection.}
Deterministic checkpoints are selected on val-out sequence accuracy (maximized); stochastic checkpoints are selected on $\KL(q_{\text{true}} \| q_{\text{model}})$ evaluated on the val split (minimized).
Early stopping uses patience 999 and min\_delta 0.0005.
The best checkpoint is the one where the selection metric last improved by at least min\_delta.

\subsection{Evaluation protocol and metrics}
\label{app:exp:eval}

All evaluation metrics are computed only over cell-value targets at scored time steps; separator targets are excluded.

\paragraph{Deterministic teacher-forced (TF) cell accuracy.}
At each cell position from $t = T_0$ through $t = T{-}1$, the model predicts the next cell value conditioned on the ground-truth history up to that point.
TF cell accuracy is the fraction of correctly predicted cells (argmax of the model's output versus the ground-truth value).

\paragraph{Deterministic teacher-forced (TF) sequence accuracy.}
The fraction of trajectories in which every scored cell is predicted correctly, under the same teacher forcing as TF cell accuracy.
A trajectory counts only if it has no error at any scored position.

\paragraph{Deterministic autoregressive (AR) cell accuracy.}
Starting from the ground-truth context (time steps $0$ through $T_0{-}1$), the model predicts each subsequent cell via greedy decoding.
Each predicted cell is fed back as input for subsequent predictions.
AR cell accuracy is the fraction of generated cells that match the ground-truth trajectory.

\paragraph{Stochastic AR cell accuracy.}
The model generates a single sampled rollout from step $T_0$ onward at temperature 1 (unscaled softmax); cell accuracy is the fraction of sampled cells matching the stored ground-truth trajectory.
The ground truth is one realization of a stochastic process, so a correct predictive distribution does not yield accuracy~1.
The parenthesized $k$-gram baseline accuracy comes from an independent posterior-predictive sampling rollout. It matches the sampling procedure of the model's rollout, and it does not maximize accuracy (Appendix~\ref{app:bayes:optimal}).
The $k$-gram predictor starts from the same $T_0$-step ground-truth prefix and samples each cell from the empirical conditional distribution given by causal $k$-gram counts with add-one smoothing.
Both rollouts are scored against the same stored ground-truth trajectory.

\paragraph{Stochastic AR $\KL(q_{\text{kgram}} \| q_{\text{model}})$.}
At each cell in the model's own autoregressive rollout, before sampling that cell's value:
(1) $q_{\text{model}}$ is the softmax over the first $V$ output logits, corresponding to the cell states;
(2) $q_{\text{kgram}}$ is computed from causal counts in the model-generated history up to that point, with add-one smoothing.
The KL divergence is micro-averaged over all scored cells and trajectories.
This $k$-gram reference is conditioned on the model's own history and is not taken from the independent $k$-gram rollout.
Taking the softmax over the first $V$ logits is equivalent to taking the full-vocabulary softmax and renormalizing its cell-state probabilities.
During autoregressive generation, separator tokens are inserted deterministically at their prescribed positions rather than sampled from the model.

\paragraph{Stochastic TF cell accuracy.}
Teacher-forced cell accuracy for stochastic rules uses argmax of the model's softmax over the first $V$ logits, compared against the ground-truth cell value.
It is reported with the $k$-gram baseline in parentheses.

\paragraph{Stochastic TF $\KL$.}
Two divergences are reported under teacher forcing, both micro-averaged over scored cells.
$\KL(q_{\text{kgram}} \| q_{\text{model}})$ measures how closely the model's distribution matches the $k$-gram estimate when both are conditioned on the ground-truth history.
$\KL(q_{\text{true}} \| q_{\text{model}})$ compares the model against the conditional distribution of the rule which generated the trajectory.

\paragraph{Seeds and rollouts.}
All evaluations use random seed 42.
One AR rollout is generated per trajectory (both for the model and for the $k$-gram baseline).
The reported values are micro-averages over 500k test trajectories.

\subsection{Full predictive results}
\label{app:exp:fullresults}

Tables~\ref{tab:det_full} and~\ref{tab:stoch_full} expand the compact main-text Table~\ref{tab:results} with additional metrics.

\begin{table}[h]
\centering
\caption{Deterministic evaluation: full metrics on held-out test rules.}
\label{tab:det_full}
\small
\begin{tabular}{@{}ll ccc@{}}
\toprule
Task & Arch. & TF seq acc & TF cell acc & AR cell acc \\
\midrule
1D $V{=}2,\;k{=}3$
  & Shared-head & 0.9990 & 1.0000 & 0.9999 \\
  & Dedicated-heads  & 0.9994 & 1.0000 & 1.0000 \\
\midrule
1D $V{=}3,\;k{=}3$
  & Shared-head & 0.9994 & 1.0000 & 1.0000 \\
  & Dedicated-heads  & 0.9989 & 1.0000 & 1.0000 \\
\midrule
2D VN $V{=}2,\;k{=}5$
  & Shared-head & 1.0000 & 1.0000 & 1.0000 \\
  & Dedicated-heads  & 0.9989 & 1.0000 & 0.9999 \\
\bottomrule
\end{tabular}
\end{table}

\begin{table}[h]
\centering
\caption{Stochastic evaluation: full metrics on held-out test trajectories.
TF KL columns report teacher-forced divergence against the $k$-gram and true-rule baselines.}
\label{tab:stoch_full}
\small
\begin{tabular}{@{}ll cc c cc@{}}
\toprule
& & \multicolumn{2}{c}{TF KL} & TF cell acc & \multicolumn{2}{c}{AR} \\
\cmidrule(lr){3-4} \cmidrule(lr){6-7}
Task & Arch.
  & kg$\|$m & true$\|$m
  & (kgram)
  & cell acc (kgram) & KL kg$\|$m \\
\midrule
1D $V{=}2,k{=}3$
  & Shared-head & 0.0020 & 0.0185 & 0.7491\;(0.7499) & 0.5496\;(0.5485) & 0.0019 \\
  & Dedicated-heads  & 0.0046 & 0.0211 & 0.7469\;(0.7499) & 0.5488\;(0.5485) & 0.0044 \\
\midrule
1D $V{=}3,k{=}3$
  & Shared-head & 0.0026 & 0.0881 & 0.5669\;(0.5674) & 0.3476\;(0.3475) & 0.0026 \\
  & Dedicated-heads  & 0.0004 & 0.0859 & 0.5675\;(0.5674) & 0.3475\;(0.3475) & 0.0004 \\
\midrule
2D VN $V{=}2,k{=}5$
  & Shared-head & 0.0014 & 0.0300 & 0.7345\;(0.7351) & 0.5201\;(0.5197) & 0.0013 \\
  & Dedicated-heads  & 0.0033 & 0.0319 & 0.7333\;(0.7351) & 0.5191\;(0.5197) & 0.0038 \\
\bottomrule
\end{tabular}
\end{table}

All deterministic models exceed 0.998 TF sequence accuracy.
Among the stochastic metrics, the gap between $\KL(q_{\text{kgram}} \| q_{\text{model}})$ and $\KL(q_{\text{true}} \| q_{\text{model}})$ reflects the estimation error of the $k$-gram approximation: the first measures the model against an estimate built from the same context the model sees, the second against the rule which generated it.

\subsection{Attention-alignment analysis}
\label{app:attention}

For each scored query cell, we measure the fraction of attention mass assigned to the positions predicted by the corresponding construction.
For L1, the target set consists of the routing cells that carry the query configuration.
For L2, the target set consists of the matching precedents, the earlier cells whose own parent configuration is a matching configuration for the query.

\paragraph{Aggregation.}
For a single query, the L1 score is computed per head as the sum of attention weights over the target positions, and the per-head scores are then averaged across heads.
The L2 layer uses a single head, so the L2 score is the sum of attention weights over the target positions.
The reported values are micro-averages of these per-query scores across all scored queries and trajectories.

\paragraph{Trajectories.}
Attention weights are extracted under teacher forcing from 20{,}000 test trajectories sampled uniformly without replacement from the test split.

\subsection{Attention ablation}
\label{app:attn_ablation}

We test whether the routing cells and matched precedents identified by the construction are functionally used by the model.
For each scored query, we intervene on one attention row and measure the effect on next-token prediction.
Four intervention conditions are compared against the unablated (\emph{No ablation}) baseline:
(1) \emph{L1 routing cells}: the attention weights of the L1 head at the query's routing cells are set to zero and the row is renormalized;
(2) \emph{L1 random}: the same number of L1 attention weights are zeroed at random positions in the same time step, as a matched control;
(3) \emph{L2 matched precedents}: the L2 attention weights at earlier cells with a matching parent configuration are zeroed and the row is renormalized;
(4) \emph{L2 random}: the same number of L2 attention weights are zeroed at random non-matching completed cells.
For architectures with multiple L1 heads, the L1 intervention is applied per head.
Random controls are averaged over three seeds; queries for which a disjoint random set cannot be formed are left unmodified.

Interventions are applied on the same 20{,}000 test trajectories as the alignment analysis, and we report TF cell accuracy under each condition.

\subsection{Layerwise probe protocol}
\label{app:probes}

\paragraph{Probe targets.}
We train linear probes on frozen post-residual representations to test what information is linearly accessible at each network stage.
Two targets are probed:

\begin{enumerate}[nosep]
  \item \emph{Neighborhood configuration}: a classification target.
  For a scored target cell $x_{t,z}$ at serialized position $m$, we extract the representation at the preceding query position $n=m-1$ and pair it with the target cell's parent configuration $P_{t,z}$.
  For a first-column target, the query position is the preceding separator.
  There are $V^k$ possible configurations.
  The probe is a linear classifier (logistic regression) and we report balanced accuracy (macro-averaged over configurations).

  \item \emph{$k$-gram predictive distribution}: a distributional target.
  The target distribution over the next cell value is computed from causal $k$-gram counts in the trajectory prefix with add-one smoothing.
  The probe is a multinomial logistic regression trained to minimize $\KL(q_{\text{kgram}} \| q_{\text{probe}})$.
  The plotted metric is $1 - \KL(q_{\text{kgram}} \| q_{\text{probe}}) / \KL(q_{\text{kgram}} \| \text{Uniform}([V]))$:
  a score of 0 means the probe predicts no better than uniform; 1 means exact recovery of $q_{\text{kgram}}$.
\end{enumerate}

\paragraph{Extraction points.}
Representations are extracted at five post-residual stages: after embedding, after L1 attention, after L1 MLP, after L2 attention, and after L2 MLP.
Each extraction is the full residual stream after the corresponding sub-layer (after the residual connection).

\paragraph{Probe model and training.}
The configuration probe uses \texttt{sklearn.linear\_model.LogisticRegression} with hard configuration labels and L2 regularization, solved by L-BFGS.
The predictive-distribution probe uses a custom multinomial logistic regression fit with PyTorch L-BFGS, soft-label cross-entropy, and L2 regularization; its data-fit term minimizes $\KL(q_{\text{kgram}} \| q_{\text{probe}})$ up to a constant independent of the probe.
For both probes, the regularization strength $C$ is selected on the probe validation fold from a grid search.
The maximum iteration count is 500 for hyperparameter search and 2500 for the final refit; convergence is determined by the L-BFGS tolerance.

\paragraph{ECA probe split (rotating 5-fold).}
The ECA test rule pool contains 25 expanded rules from 9 canonical Wolfram classes.
A fixed 60/20/20 split would place only 5 rules in the probe test fold, and the reported probe score would depend heavily on which 5 rules are picked.
To eliminate this rule-level nuisance variability, ECA uses a rotating 5-fold split: the 25 test rules are partitioned into 5 groups of 5, and each fold uses 15 rules for training, 5 for validation, and 5 for testing.
Every rule appears once as a test rule and once as a validation rule across the 5 folds.
Reported scores are averaged across folds.

\paragraph{Other deterministic settings (V=3, 2D VN).}
The 1D $V{=}3, k{=}3$ and 2D VN test pools contain thousands of rules, so a single fixed 60/20/20 rule-level split is already well-averaged and no rotation is used.

\paragraph{Stochastic probe splits.}
Because each stochastic trajectory has a unique rule, the probe split is a fixed 60/20/20 split over trajectories.
The rule table bytes are hashed to assign a rule ID per trajectory, and the same splitting logic as for deterministic probes is applied.

\paragraph{Balancing.}
For deterministic probes, cells are balanced across neighborhood configurations by per-label stratified subsampling.
For stochastic probes, the target is the causal $k$-gram predictive distribution on the trajectory prefix.

\subsection{Representation visualizations with t-SNE}
\label{app:exp:tsne}

Figure~\ref{fig:tsne} shows t-SNE embeddings of frozen residual-stream representations from the Shared-head model on deterministic 1D $V{=}2, k{=}3$ (checkpoint from Table~\ref{tab:arch_det}: heads $[1,1]$, mlp $[1,1]$, $d_{\text{model}}=128$).

\paragraph{Activation collection.}
Representations are extracted from 6000 test-split trajectories on the ECA test-rule pool.
For each of the five post-residual stages (embedding, L1 attention, L1 MLP, L2 attention, L2 MLP), we retain 800 points per neighborhood configuration, giving $8 \times 800 = 6400$ points per stage.
Within each configuration, points are stratified by target value (400 per configuration $\times$ target).
Sampling uses seed 42.

\paragraph{Embedding.}
Each stage is embedded independently.
The $(6400, 128)$ activation matrix for a stage is first reduced to 50 dimensions by PCA, then to 2 dimensions by t-SNE with perplexity 30, PCA initialization, automatic learning rate, 1000 iterations, and random seed 42; other t-SNE options are the scikit-learn defaults (Euclidean metric, Barnes--Hut, angle 0.5, early exaggeration 12).

\paragraph{Figure layout.}
Each of the five stages occupies one column of a $2 \times 5$ panel grid; the top row colors points by neighborhood configuration and the bottom row colors by output value.
Point plotting order is shuffled with a separate seed to avoid systematic overplotting.

\paragraph{Additional t-SNE plots.}
We extend the main-text t-SNE to three further settings that cover the axes not shown in the body: Dedicated-heads deterministic 1D (architecture comparison, Figure~\ref{fig:tsne_po_det_1d}), Shared-head stochastic 1D (stochastic regime, Figure~\ref{fig:tsne_sh_stoch_1d}), and Shared-head 2D VN (multidimensional regime, Figure~\ref{fig:tsne_sh_det_2dvn}).
These t-SNE plots are intended as qualitative visualizations.
Each stage is projected from a high-dimensional representation into two dimensions, so visible distances and clusters need not faithfully reflect the original representation.
In particular, a lack of clear clusters does not imply that the relevant information is absent.
We rely on the linear probes for quantitative evidence that the configuration and predictive distribution are decodable at each stage.

\begin{figure}[htb]
\centering
\includegraphics[width=0.95\linewidth]{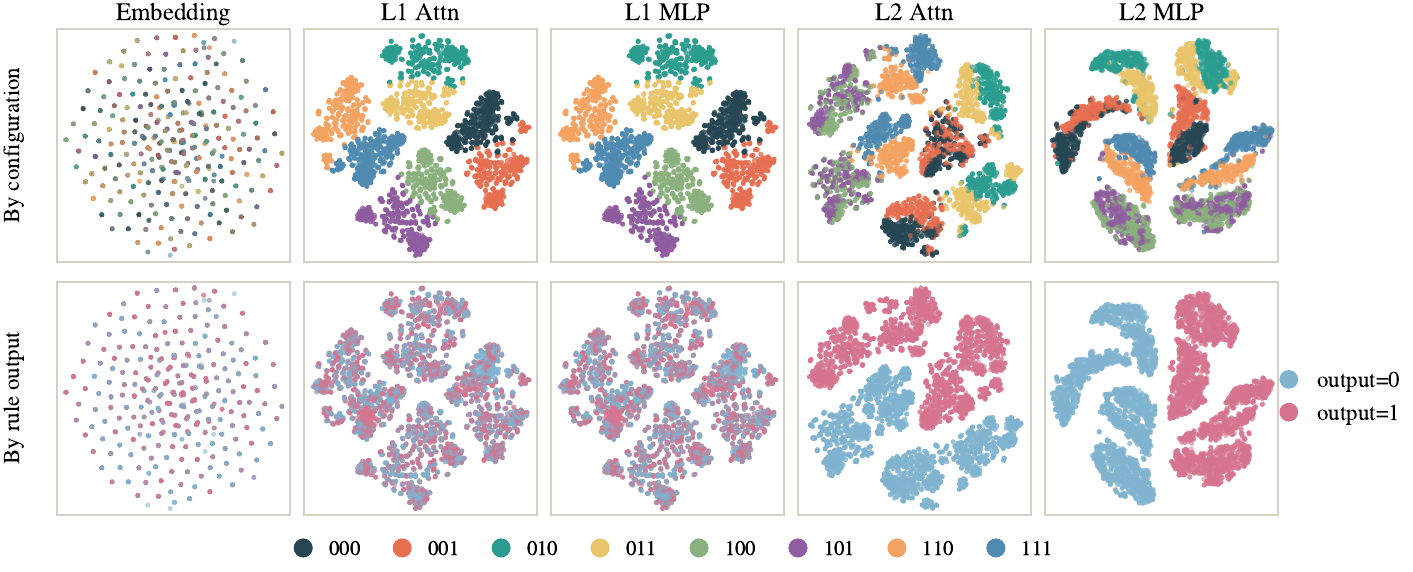}
\caption{t-SNE of frozen residual-stream representations from the Dedicated-heads model on deterministic 1D $V{=}2, k{=}3$.
Top row: colored by neighborhood configuration. Bottom row: colored by the ground-truth output value.
Under the Dedicated-heads construction, the configuration should become accessible after L1 attention rather than after the L1 MLP.}
\label{fig:tsne_po_det_1d}
\end{figure}

\begin{figure}[htb]
\centering
\includegraphics[width=0.95\linewidth]{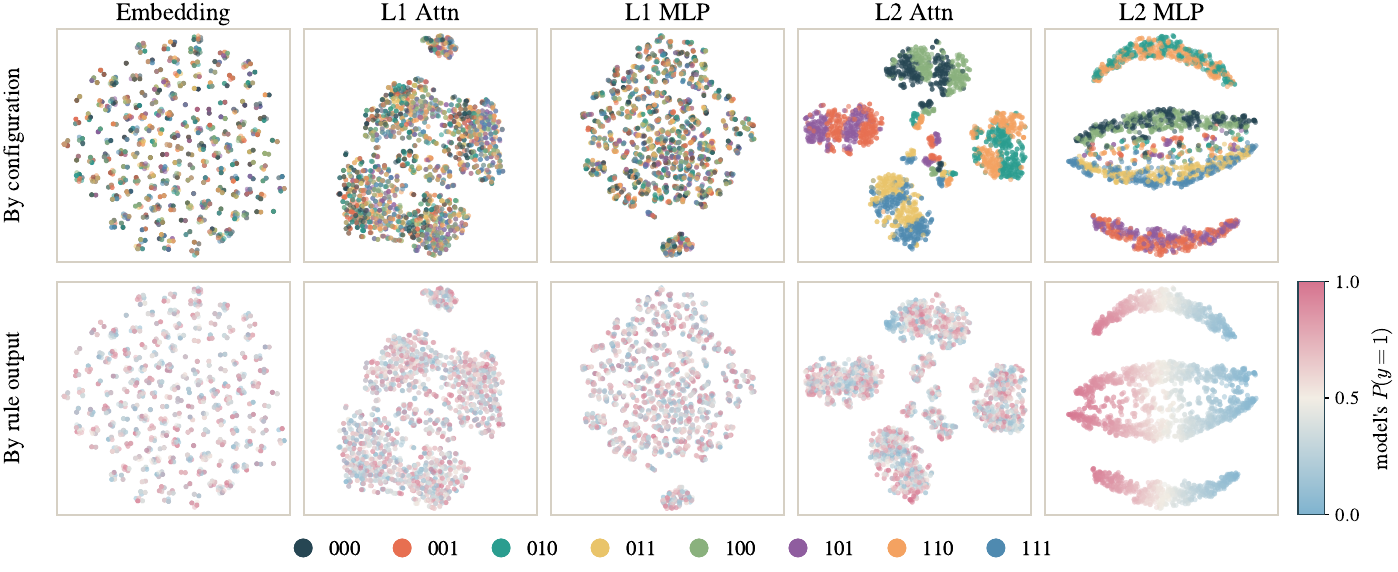}
\caption{t-SNE of frozen representations from the Shared-head model on stochastic 1D $V{=}2, k{=}3$.
Top row: colored by neighborhood configuration. Bottom row: colored by the model's predicted $P(y{=}1)$ at the corresponding query position.
Because rules are drawn from a continuous Dirichlet prior, the output value at a fixed configuration is a sample from the true conditional distribution, so the model's predictive distribution varies continuously across queries with the same configuration.}
\label{fig:tsne_sh_stoch_1d}
\end{figure}

\begin{figure}[htb]
\centering
\includegraphics[width=0.95\linewidth]{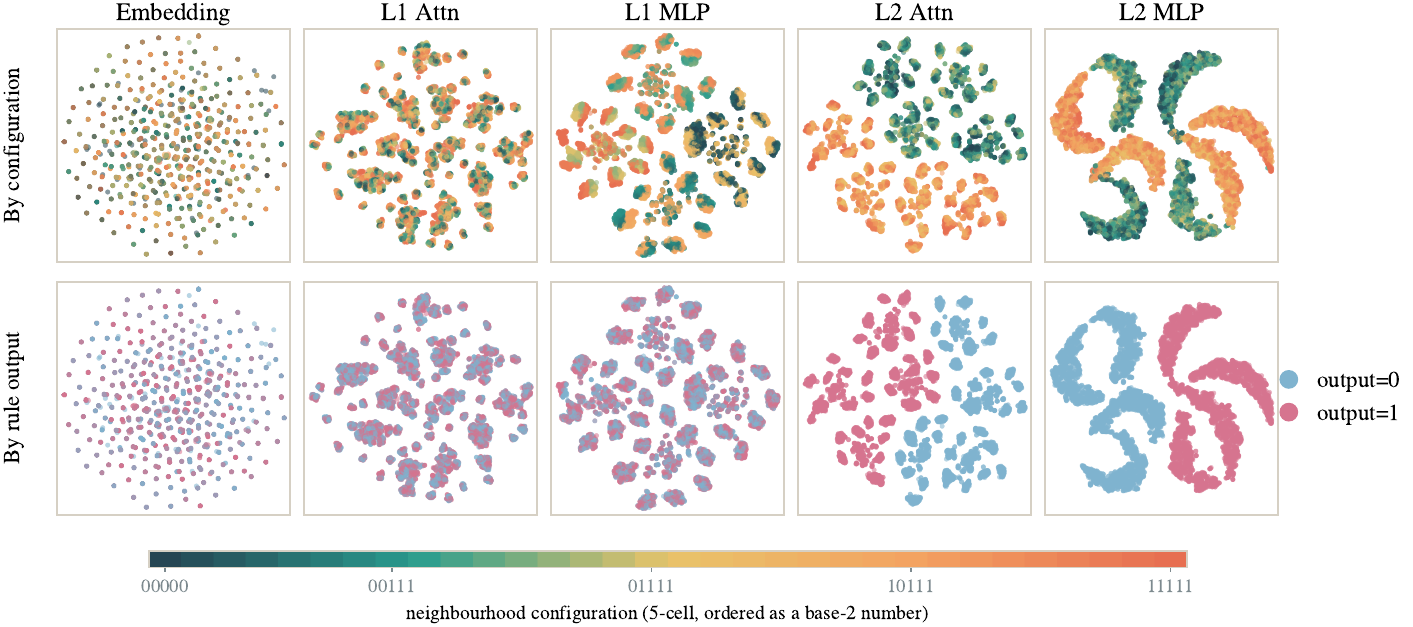}
\caption{t-SNE of frozen representations from the Shared-head model on deterministic 2D von Neumann $V{=}2, k{=}5$.
Top row: colored by neighborhood configuration. Bottom row: colored by the ground-truth output value.
The larger neighborhood ($k=5$, $V^k = 32$ configurations) is harder to display in two dimensions than the ECA case; per-configuration clusters are correspondingly less separated.}
\label{fig:tsne_sh_det_2dvn}
\end{figure}

\subsection{Qualitative rollout setup for Figure~\ref{fig:rollout}}
\label{app:rollout_setup}

Figure~\ref{fig:rollout} uses two checkpoints trained separately for visualization purposes, one on the deterministic 1D $V{=}2, k{=}3$ setting and one on the stochastic 1D $V{=}2, k{=}3$ setting.
Both checkpoints are trained on longer trajectories and a wider grid than the main-text models in order to display long rollouts at a legible scale.
Wolfram Rule~110 and the Stavskaya rule are excluded from the training rule pools of the two respective checkpoints, so the trajectories shown in Figure~\ref{fig:rollout} come from rules the models have never seen.
These checkpoints are used only for Figure~\ref{fig:rollout}: none of the predictive results in Table~\ref{tab:results}, the attention-alignment analysis, or the linear-probe experiments use them.
Table~\ref{tab:rollout_setup} reports their training and evaluation configurations.

\begin{table}[h]
\centering
\caption{Configuration of the visualization checkpoints used for Figure~\ref{fig:rollout}.
The deterministic panel uses greedy decoding; the stochastic panel uses temperature-1 sampling, and the displayed values are per-cell probabilities of state~1.}
\label{tab:rollout_setup}
\small
\begin{tabular}{@{}l cc@{}}
\toprule
 & Deterministic (Rule~110) & Stochastic (Stavskaya) \\
\midrule
Task           & 1D $V{=}2, k{=}3$ & 1D $V{=}2, k{=}3$ \\
Training $\lvert\Grid\rvert, T, T_0$ & $64, 64, 12$ & $64, 64, 12$ \\
Plotted $\lvert\Grid\rvert, T, T_0$  & $64, 64, 12$ & $64, 64, 12$ \\
heads / mlp    & $[4,1] / [0,1]$ & $[1,1] / [1,1]$ \\
$d_{\text{model}}$ / $d_{\text{ffn}}$ & $512 / 2048$ & $512 / 2048$ \\
lr, wd, epochs & $5{\times}10^{-4}$, 0.2, 10 & $10^{-3}$, 0.2, 8 \\
Batch (gbs), hardware & 1024, 2$\times$A6000 DDP & 1024, 2$\times$B300 DDP \\
Checkpoint selection & val-out seq acc (max) & $\KL(q_{\text{true}} \| q_{\text{model}})$ (min) \\
Decoding at plotted step & greedy argmax & Bernoulli sampling at temperature 1 \\
\bottomrule
\end{tabular}
\end{table}

\paragraph{Interpreting the stochastic panel.}
The Ground Truth column displays one sampled trajectory. The Autoregressive prediction column conditions on the model's own sampled history, precluding exact per-cell reproduction even for a Bayes-optimal predictor.
The Teacher-forced prediction column conditions on the ground-truth history, plotting the predictive probabilities against which the reference trajectory can be evaluated.

\end{document}